\newcommand{\mpmargin}[2]{{\color{cyan}#1}\marginpar{\color{cyan}\raggedright\footnotesize [MP]:
#2}}
\newcommand{\edit}[1]{{\color{black}#1}}
\newtheorem{theorem}{Theorem}[section]
\newtheorem{lemma}[theorem]{Lemma}
\newtheorem{remark}[theorem]{Remark}
\newcommand{\real}{{\mathbb{R}}}
\newcommand{\reals}{\real}
\renewcommand{\natural}{{\mathbb{N}}}
\newcommand{\naturals}{\natural}
\newcommand{\xfree}{\mathcal X_{\text{free}}}
\newcommand{\xobs}{\mathcal X_{\text{obs}}}
\newcommand{\xgoal}{\mathcal X_{\text{goal}}}
\newcommand{\xinit}{x_{\mathrm{init}}}
\newcommand{\FMT}{$\text{FMT}^*\, $}
\newcommand{\card}{\operatorname{card}}
\newcommand{\p}[1]{\mbox{$\mathbb{P}\left(#1\right)$}} 
\newcommand{\probcond}[2]{\mbox{$\mathbb{P}\left(#1 \,| \, #2\right)$}}
\newcommand{\figWidth}{0.38}
\newcommand{\dist}{\texttt{dist}}
\newcommand{\RRTstar}{RRT$^{\ast}\,$}
\newcommand{\RRTsharp}{RRT$^{\#}\,$}
\newcommand{\PRMstar}{PRM$^\ast\,$}
\newcommand{\kFMT}{$k$-nearest \FMT}
\newcommand{\knFMT}{$k_n$-nearest \FMT}
\newcommand{\Hset}{V_{\mathrm{open}}}
\newcommand{\Wset}{V_{\mathrm{unvisited}}}
\newcommand{\Vc}{V_{\mathrm{closed}}}
\newcommand{\xfinal}{x_{\mathrm{terminal}}}
\newcommand{\Hsetnew}{V_{\mathrm{open, \, new}}}
\begin{document}

\title{ Fast Marching Tree: a Fast Marching\\ Sampling-Based Method for \\Optimal Motion Planning in Many Dimensions\thanks{This work was originally presented at the 16th International Symposium on Robotics Research, ISRR 2013. This revised version includes an extended description of the \FMT algorithm, proofs of all results, extended discussions about convergence rate and computational complexity, extensions to non-uniform sampling distributions and general costs, a $k$-nearest version of \FMT\!, and a larger set of numerical experiments.}}

\author{\large Lucas Janson\\ 
\normalsize Department of Statistics, Stanford University\\
 \normalsize \url{ljanson@stanford.edu}
\and
Edward Schmerling\\ 
\normalsize Institute for Computational \& Mathematical Engineering, Stanford University\\
 \normalsize \url{schmrlng@stanford.edu}
 \and
Ashley Clark \\ 
 \normalsize Department of Aeronautics and Astronautics, Stanford University\\
 \normalsize  \url{aaclark@stanford.edu}
\and
Marco Pavone\\
  \normalsize Department of Aeronautics and Astronautics, Stanford University\\
 \normalsize  \url{pavone@stanford.edu}
}

%
%
\maketitle

\begin{abstract}
In this paper we present a novel probabilistic sampling-based motion
planning algorithm called the Fast Marching Tree algorithm
(\FMT$\!$). The algorithm is specifically aimed at solving complex
motion planning problems in high-dimensional configuration
spaces. This algorithm is proven to be asymptotically optimal and is
shown to converge to an optimal solution faster than its
state-of-the-art counterparts, chiefly \PRMstar and \RRTstar\!. The \FMT algorithm performs a
``lazy" dynamic programming recursion on a predetermined number of 
probabilistically-drawn samples to grow a tree of paths, which moves
steadily outward in cost-to-{\color{black}arrive} space. As such, this algorithm
combines features of both single-query algorithms (chiefly RRT) and
multiple-query algorithms (chiefly PRM), and is reminiscent of
the Fast Marching Method for the solution of Eikonal equations. As
a departure from previous analysis approaches that are based on the
notion of almost sure convergence, the \FMT algorithm is analyzed
under the notion of convergence in probability: the extra mathematical
flexibility of this approach allows for convergence rate bounds---the
first in the field of optimal sampling-based motion planning.
Specifically, for a certain selection of tuning
  parameters and configuration {\color{black}spaces}, we obtain a convergence rate
  bound of order $O(n^{-1/d + \rho})$, where $n$ is the number of
  sampled points, $d$ is the dimension of the configuration space, and
  $\rho$ is an arbitrarily small constant. We go on to demonstrate
asymptotic optimality for a number of variations on \FMT\!, namely
when the configuration space is sampled non-uniformly, when the cost
is not arc length, and when connections are made based on {\color{black}the number of} nearest
neighbors instead of a fixed connection radius. Numerical experiments over
a range of dimensions and obstacle configurations confirm our
theoretical and heuristic arguments by showing that \FMT\!, for a
given execution time, returns substantially better solutions than either \PRMstar or \RRTstar\!, especially in high-dimensional configuration spaces and in scenarios where collision-checking is expensive.
\end{abstract}

\section{Introduction}\label{sec:intro}

Probabilistic sampling-based algorithms represent a particularly successful approach to robotic motion planning problems in high-dimensional configuration spaces, which naturally arise, e.g., when controlling the motion of high degree-of-freedom robots or planning under uncertainty \citep{Thrun.et.al:05, Lavalle:06}. Accordingly, the design of rapidly converging sampling-based algorithms with sound performance guarantees has emerged as a central topic in robotic motion planning and represents the main thrust of this paper.

Specifically, the key idea behind probabilistic sampling-based
algorithms is to avoid the explicit construction of the configuration
space (which can be prohibitive in complex planning problems) and
instead conduct a search that probabilistically probes the
configuration space with a sampling scheme. This probing is enabled by
a collision detection module, which the motion planning algorithm
considers as a ``black box" \citep{Lavalle:06}.  Probabilistic
sampling-based algorithms \edit{may be classified into two categories:} multiple-query and
single-query. Multiple-query algorithms construct a topological graph
called a roadmap, which allows a user to efficiently solve multiple
initial-state/goal-state queries. This family of algorithms includes
the probabilistic roadmap algorithm (PRM)  \citep{Kavraki.ea:TRA96}
and its variants, e.g., Lazy-PRM  \citep{Bohlin.Kavraki:ICRA00},
dynamic PRM  \citep{Jaillet.Simeon:04},  and \PRMstar
\citep{Karaman.Frazzoli:IJRR2011}. In single-query algorithms, on the
other hand, a single initial-state/goal-state pair is given, and the
algorithm must search until it finds a solution{\color{black},} or it may report
early failure.  This family of algorithms includes the rapidly
exploring random trees algorithm (RRT)  \citep{LaValle.ea:IJRR01}, the
rapidly exploring dense trees algorithm (RDT) \citep{Lavalle:06}, and
their variants, e.g., \RRTstar
\citep{Karaman.Frazzoli:IJRR2011}. Other notable sampling-based
planners include expansive space trees (EST) \citep{Hsu:IGCGA99,
  Phillips.ea:04}, sampling-based roadmap of trees (SRT)
\citep{Plaku.et.al:TR05}, rapidly-exploring roadmap (RRM) \citep{Alterovitz.et.al:ICRA2011}, and the ``cross-entropy" planner in \citep{Kobilarov:IJRR12}. Analysis in terms of convergence to feasible or even optimal solutions for multiple-query and single-query algorithms is provided in \citep{LK.MK.ea:96, Hsu:IGCGA99, Barraquand.et.al:IJRR00, Ladd.et.al:TRA2004, Hsu.et.al:IJRR06, Karaman.Frazzoli:IJRR2011}. A central result is that these algorithms provide \emph{probabilistic completeness} guarantees in the sense that the probability that the planner fails to return a solution, if one exists, decays to zero as the number of samples approaches infinity \citep{Barraquand.et.al:IJRR00}. Recently, it has been proven that both \RRTstar and \PRMstar are asymptotically optimal, i.e., the cost of the returned solution converges almost surely to the optimum \citep{Karaman.Frazzoli:IJRR2011}. Building upon the results in \citep{Karaman.Frazzoli:IJRR2011}, the work in \citep{Marble.Bekris:ICRA12} presents an algorithm with provable ``sub-optimality" guarantees, which ``trades" optimality with faster computation, while the work in \citep{OA.PT:13} presents a variant of \RRTstar\!, named \RRTsharp\!, that is also asymptotically optimal and aims \edit{to mitigate} the ``greediness" of \RRTstar\!.

\emph{Statement of Contributions}: The objective of this paper is to
propose and analyze a novel probabilistic motion planning algorithm
that is asymptotically optimal and improves upon state-of-the-art
asymptotically-optimal algorithms{\color{black},} namely \RRTstar and \PRMstar{\color{black}. 
Improvement is measured }in
terms of the convergence rate to the optimal solution, where
convergence rate is interpreted with respect to execution time. The
algorithm, named the Fast Marching Tree algorithm (\FMT \!), is
designed to {\color{black}reduce} the number of obstacle
  collision-checks and is particularly efficient in high-dimensional
environments cluttered with obstacles. \FMT essentially performs a forward dynamic programming recursion on a
  predetermined number  of probabilistically-drawn samples in the
configuration space, see Figure
  \ref{fig:tree_growth}. The recursion is characterized
  by three key features, namely (1) it is \emph{tailored} to
  disk-connected graphs, (2) it \emph{concurrently} performs  graph
  construction and graph search, and (3) it \emph{lazily} skips
  collision-checks when evaluating local connections. This lazy
  collision-checking strategy may introduce suboptimal
  connections---the crucial property of \FMT is that such
    suboptimal connections become vanishingly rare as the
    number of samples goes to infinity.

\FMT combines features of PRM and SRT (which is similar to RRM) and grows a
tree of trajectories like RRT. Additionally, \FMT is
  reminiscent of the Fast Marching Method, one of the main methods
for solving stationary Eikonal equations
\citep{Sethian:NAS96}. We refer the reader to
  \citep{Gomez:RAM13} and references therein for a recent overview of
path planning algorithms inspired by the Fast Marching Method. As in
the Fast Marching Method, the main idea is to exploit a heapsort
technique to systematically locate the proper sample point to update
and to incrementally build the solution in an ``outward" direction, so
that {\color{black}the algorithm} needs never backtrack over previously evaluated sample
points. Such a \emph{one-pass} property is what makes both the Fast
Marching Method and \FMT (in addition to its lazy
  strategy) particularly efficient\footnote{We note,
    however, that the Fast Marching Method and \FMT differ in a number
    of important aspects. Chiefly, the Fast Marching Method hinges
    upon upwind approximation schemes for the solution to the Eikonal
    equation over orthogonal grids or triangulated domains, while \FMT
    hinges upon the application of the Bellman principle of optimality over a randomized grid {\color{black}within a sampling-based framework.}}.

The end product of the \FMT algorithm is a tree, which, together with the connection to the Fast Marching Method, gives the algorithm its name. 
Our simulations across a variety of problem instances, ranging in
obstacle clutter and in dimension {\color{black}from 2D to 7D}, show that \FMT
outperforms state-of-the-art algorithms such as \PRMstar and \RRTstar\!, often by a significant margin. 
The speedups are particularly prominent in higher
  dimensions and in scenarios where collision-checking is expensive,
  which is exactly the regime in which sampling-based algorithms \edit{excel.}
  \FMT also presents a number of ``structural"
  advantages{\color{black}, such as} maintaining a tree structure at all times and
  expanding in cost-to-{\color{black}arrive} space, which have been recently leveraged
  to include differential constraints \citep{ES-LJ-MP:14, ES-LJ-MP:14b}, to provide a bidirectional implementation \citep{JS-ES-LJ-MP:14}, and to speed up the convergence rate even further via the inclusion of lower bounds on cost \citep{OS-DH:14} and heuristics \citep{JG-ES-SS-TB:14}.

It is important to note that in this paper we use a notion of asymptotic optimality (AO) different from the one used  in \citep{Karaman.Frazzoli:IJRR2011}.  In \citep{Karaman.Frazzoli:IJRR2011}, AO is defined through the notion of convergence almost everywhere (a.e.).  Explicitly, in \citep{Karaman.Frazzoli:IJRR2011}, an algorithm is considered AO if the cost of the solution it returns converges a.e. to the optimal cost as the number of samples $n$ approaches infinity. 
This definition is \edit{apt} when the algorithm is
sequential in $n$, such as \RRTstar \citep{Karaman.Frazzoli:IJRR2011},
in the sense that it requires that with probability 1 the sequence of
solutions converges to an optimal one, with the solution at $n+1$
heavily related to that at $n$.  However, for non-sequential
algorithms such as \PRMstar and \FMT\!, there is no connection between
the solutions at $n$ and $n+1$.  Since these algorithms process all
the samples at once, the solution at $n+1$ is based on $n+1$ new samples,
sampled independently of those used in the solution at $n$.  This
motivates the definition of AO used in this paper, which is that the
cost of the solution returned by an algorithm must converge \emph{in
  probability} to the optimal cost.  Although
convergence in probability is a {\color{black}mathematically} weaker notion than convergence
a.e. (the latter implies the former), in practice there is no
distinction when an algorithm is only run on a predetermined, fixed
number of samples.  In this case, all that matters is that the
probability that the cost of the solution returned by the algorithm is
less than an $\varepsilon$ fraction greater than the optimal cost goes
to 1 as $n \rightarrow \infty$, for any $\varepsilon > 0$, which is
exactly the statement of convergence in probability. Since this
convergence is a
mathematically weaker, but practically identical condition, we sought
to capitalize on the extra mathematical flexibility, and indeed find
that our proof of AO for \FMT allows for a tighter theoretical lower
bound on the search radius of  \PRMstar {\color{black}than was found} in \citep{Karaman.Frazzoli:IJRR2011}. In this regard, an additional important contribution of this paper is the analysis of AO under the notion of convergence in probability, which is of independent interest and could enable the design and analysis of other AO sampling-based algorithms.

Most importantly, our proof of AO gives a \emph{convergence rate bound} with respect to the number of sampled points both for \FMT and \PRMstar---the first in the field of optimal sampling-based motion planning.  Specifically, for a certain selection of tuning parameters and configuration space, we derive a convergence rate bound of $O(n^{-1/d+\rho})$, where $n$ is the number of sampled points, $d$ is the dimension of the configuration space, and $\rho$ is an arbitrarily small constant. {\color{black}While the algorithms exhibit the slow convergence rate typical of sampling-based algorithms, the rate is} at least a power of $n$.

\begin{figure}
  \centering
  \subfigure{
  \includegraphics[width=50mm]{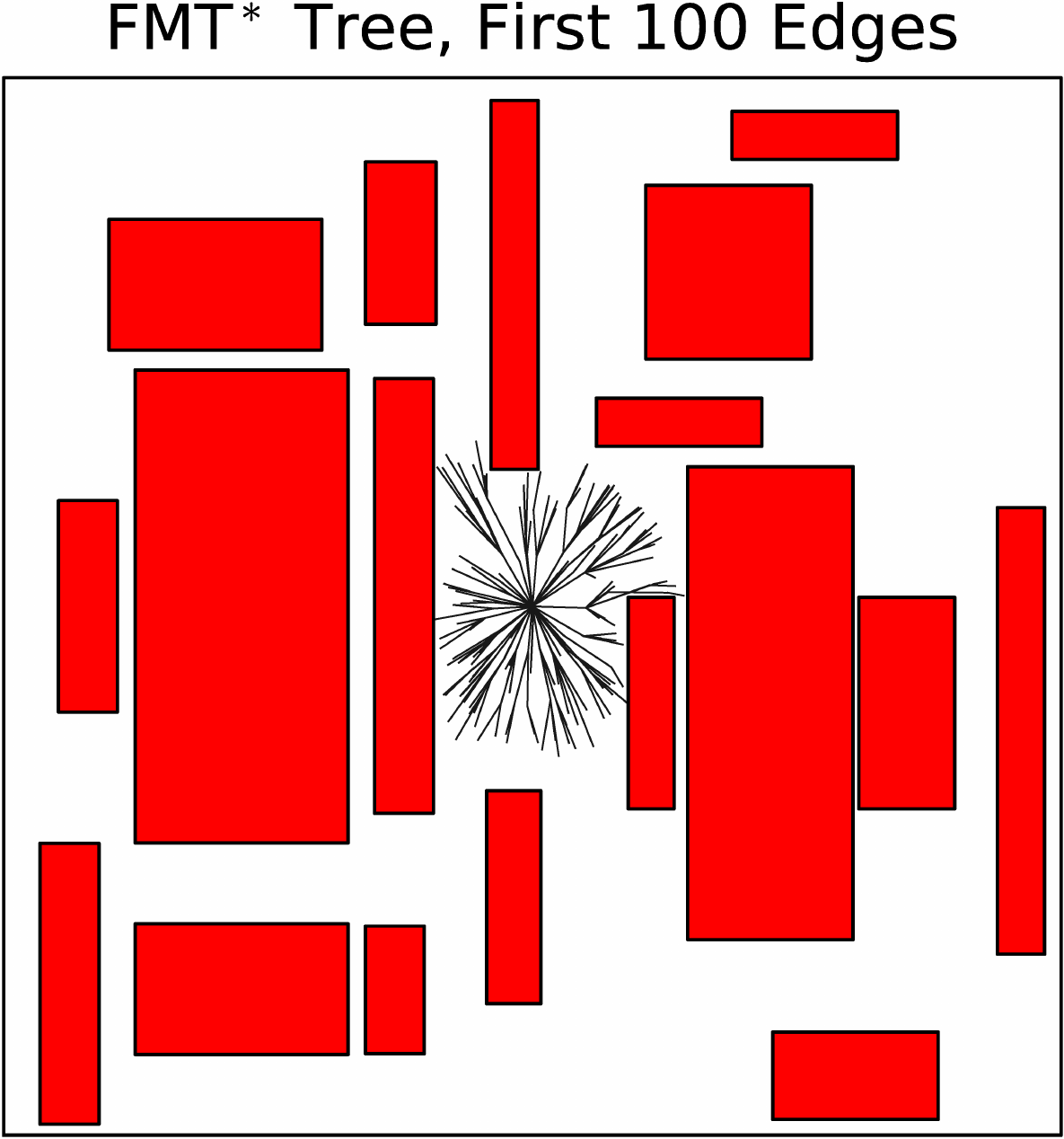}
  }
  \subfigure{
  \includegraphics[width=50mm]{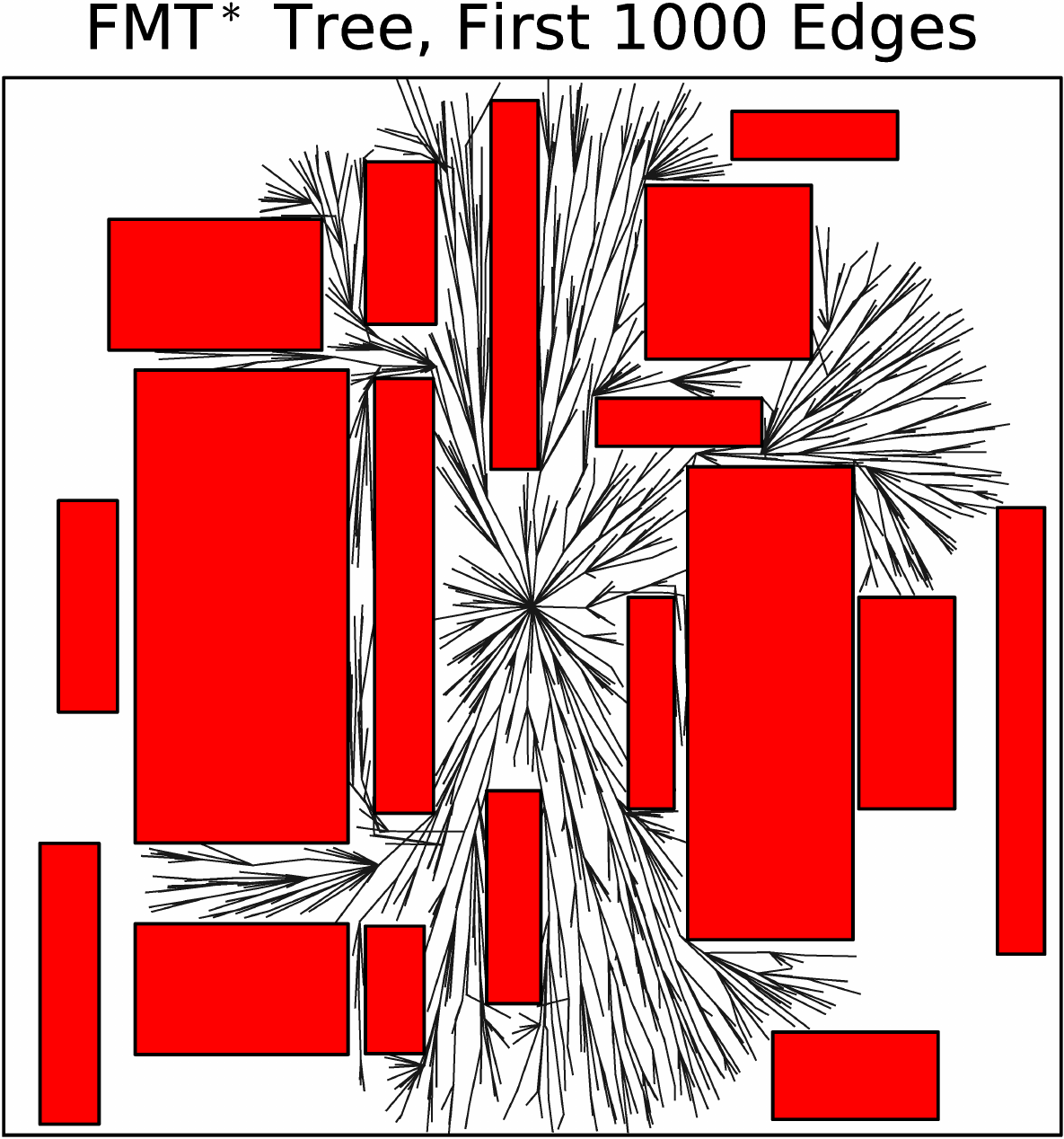}
  }
  \subfigure{
  \includegraphics[width=50mm]{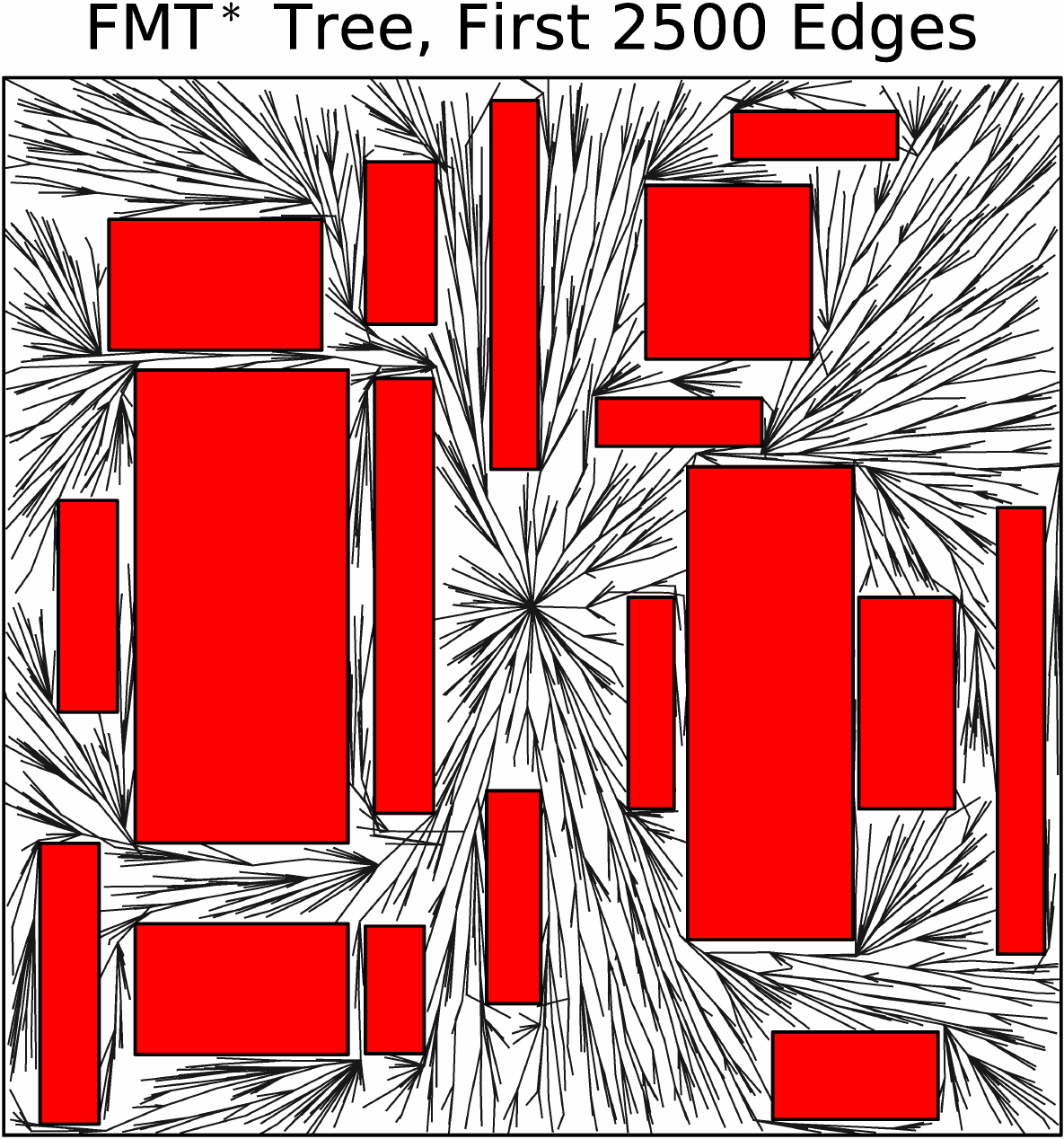}
  }

  \caption{The \FMT algorithm generates a tree by moving steadily
    outward in cost-to-{\color{black}arrive} space. This figure portrays the growth of
    the tree in a 2D environment with 2,500 samples (only edges are shown).}
  \label{fig:tree_growth}
\end{figure}

\emph{Organization}: This paper is structured as follows. In Section
\ref{sec:setup} we formally define the optimal path planning
problem. In Section \ref{prtintro} we present a
  high-level description of \FMT$\!$, describe the main intuition
  behind its correctness, conceptually compare it  to existing AO
  algorithms, and discuss its implementation details.  In Section
\ref{sec:AO} we prove the asymptotic optimality of \FMT\!, derive
convergence rate bounds, and characterize its computational complexity. In Section \ref{sec:extension} we extend \FMT along three main directions, namely non-uniform sampling strategies, general cost functions, and a variant of the algorithm that relies on $k$-nearest-neighbor computations.
In Section \ref{sec:sims}  we present results from numerical experiments supporting our statements. Finally, in Section \ref{sec:conc}, we draw some conclusions and discuss directions for future work.

\emph{Notation}: Consider the Euclidean space in $d$ dimensions, i.e.,
$\reals^d$. A ball of radius $r>0$ centered at $\bar x \in \reals^d$
is defined as $B(\bar  x;\ r):=\{x\in \reals^d \,  | \, \|x - \bar x
\|< r\}$. Given a subset $\mathcal X$ of $\reals^d$, its boundary is
denoted by $\partial \mathcal X$ and its closure is denoted by $\mathrm{cl}(\mathcal X)$. Given two points $x$ and $y$ in
$\reals^d$, the line connecting them is denoted by
$\overline{xy}$. Let $\zeta_d$ denote the volume of the unit ball in
$d$-dimensional Euclidean space. The cardinality of a set $S$ is
written as $\card{S}$. Given a set $\mathcal X \subseteq \reals^d$,
$\mu(\mathcal X)$ denotes its $d$-dimensional Lebesgue measure. Finally, the complement of a probabilistic event $A$ is denoted by $A^c$.

\section{Problem Setup}\label{sec:setup}
The problem formulation follows closely the problem formulation in
\citep{Karaman.Frazzoli:IJRR2011}, with two subtle, yet
  important differences, namely a notion of regularity for goal
  regions and a refined definition of path clearance. Specifically,
let $\mathcal X =[0,\, 1]^d$ be the configuration space, where
the dimension, $d$, is an integer larger than or equal to
  two. Let $\xobs$ be the obstacle region, such that $\mathcal{X}
\setminus \xobs$ is an open set (we consider $\partial \mathcal{X}
\subset \xobs$). The obstacle-free space is defined as
$\xfree = \text{cl}(\mathcal X \setminus \mathcal \xobs)$. The initial
condition $\xinit$ is an element of $\xfree$, and the goal region
$\xgoal$ is an open subset of $\xfree$. A path planning problem is
denoted by a triplet $(\xfree, \xinit, \xgoal)$. A function $\sigma : [0, 1] \to  \reals^d$ is called a
\emph{path} if it {\color{black} is continuous and} has 
\emph{bounded variation}{\color{black}, see
\cite[Section 2.1]{Karaman.Frazzoli:IJRR2011} for a formal
definition. In the setup of this paper, namely, for continuous functions
on a bounded, one-dimensional domain, bounded variation is exactly
equivalent to finite length.} A path is said to be \emph{collision-free} if $\sigma(\tau)\in \xfree$ for all $\tau\in [0,\, 1]$. A path is said to be a \emph{feasible path} for the planning problem $(\xfree, \xinit, \xgoal)$ if it is collision-free, $\sigma(0) = \xinit$, and $\sigma(1)\in \mathrm{cl}(\xgoal)$.

A goal region $\mathcal{X}_{\text{goal}}$ is said to be \emph{regular}
if there exists $ \xi > 0$ such that $\forall x \in \partial
\mathcal{X}_{\text{goal}}$, there exists a ball in the goal region, say $B(\bar x; \xi) \subseteq
\mathcal{X}_{\text{goal}}$, such that $x$ is on the boundary of the ball,
i.e., $x \in \partial B(\bar x; \xi)$. In other words, a regular goal region
is a ``well-behaved" set where the boundary has bounded curvature. We
will say $\mathcal{X}_{\text{goal}}$ is $\xi$-regular if
$\mathcal{X}_{\text{goal}}$ is regular for the parameter
$\xi$. Such a notion of regularity, not present in
  \citep{Karaman.Frazzoli:IJRR2011}, is needed because to return a
  feasible solution, there must be samples in
  $\mathcal{X}_{\text{goal}}$, and for that solution to be
  near-optimal, some samples must be near the edge of
  $\mathcal{X}_{\text{goal}}$ where the optimal path meets
  it. The notion of $\xi$-regularity essentially formalizes the notion of
  $\mathcal{X}_{\text{goal}}$ having enough measure near this edge to
  ensure that points are sampled near it.

Let $\Sigma$ be the set of all paths. A cost function for the planning
problem $(\xfree, \xinit, \xgoal)$ is a function $c:\Sigma \to
\reals_{\geq 0}$ from the set of paths to the set of nonnegative real
numbers; in this paper we will mainly consider cost functions $c(\sigma)$
that are the \emph{arc length} of $\sigma$ with respect to the
Euclidean metric in $\mathcal X$ (recall that $\sigma$ is, by
definition, rectifiable). Extension to more general cost functions, potentially not satisfying the triangle inequality are discussed in Section \ref{subsec:cost}. The optimal path planning problem is then defined as follows:

\begin{quote}{\bf Optimal path planning problem}: 
Given a path planning problem $(\xfree, \xinit, \xgoal)$ with a regular goal region and an arc length function $c:~\Sigma \to \reals_{\geq 0}$, find a feasible path $\sigma^{*}$ such that $c(\sigma^{*} )= \min\{c(\sigma):\sigma \text{ is feasible}\}$. If no such path exists, report failure.
\end{quote}

Finally, we introduce some definitions concerning the \emph{clearance} of a path, i.e., its ``distance" from $\xobs$  \citep{Karaman.Frazzoli:IJRR2011}. For a given $\delta>0$, the $\delta$-interior of $\xfree$ is defined as 
 the set of all points that are at least a distance $\delta$ away from
any point in $\xobs$. A collision-free path $\sigma$  is said to have
strong $\delta$-clearance if it lies entirely inside the
$\delta$-interior of $\xfree$.
A path planning problem with optimal path cost $c^*$ is called
$\delta$-robustly feasible 
  if there exists a strictly positive
  sequence $\delta_n \rightarrow 0$, with $\delta_n \le \delta \;\, \forall n \in \mathbb{N}$, and a sequence $\{\sigma_n\}_{n=1}^{\infty}$ of
  feasible paths such that $\lim_{n \rightarrow \infty} c(\sigma_n) =
  c^*$ and for all $n \in \mathbb{N}$, $\sigma_n$ has strong
  $\delta_n$-clearance, 
  $\sigma_n(1) \in \partial \mathcal{X}_{\text{goal}}$,
  $\sigma_n(\tau) \notin \mathcal{X}_{\text{goal}}$ for all $\tau \in
  (0,1)$, and $\sigma_n(0) = x_{\text{init}}$. Note this definition is slightly
different mathematically than admitting a \emph{robustly optimal solution} as in
\citep{Karaman.Frazzoli:IJRR2011}, but the two are nearly identical in
practice. Briefly, the difference is necessitated by the
  definition of a homotopy class only involving pointwise limits, as
  opposed to limits in bounded variation norm, making the conditions of a
  robustly optimal solution potentially vacuously satisfied.

\section{The Fast Marching Tree Algorithm (\FMT\!)}
\label{prtintro}

In this section we present the Fast Marching Tree algorithm (\FMT\!). In Section \ref{subsec:high_lev} we provide a high-level description. In Section \ref{subsec:basic} we present some basic properties and discuss the main intuition behind \FMT\!'s design. In Section \ref{subsec:comp} we conceptually compare \FMT to existing AO algorithms and discuss its structural advantages. Finally, in Section \ref{subsec:detail_des} we provide a detailed description of \FMT together with implementation details, which will be instrumental to the computational complexity analysis given in Section \ref{subsec:complexity}.


\subsection{High-Level Description}\label{subsec:high_lev}

The \FMT algorithm performs a forward dynamic programming
recursion over a predetermined number of sampled points and correspondingly generates a \emph{tree of paths} by moving steadily outward in cost-to-{\color{black}arrive} space (see Figure \ref{fig:tree_growth}). The dynamic programming recursion performed by  \FMT is characterized by three key features:
\begin{itemize}\itemsep-1pt
\item It is \emph{tailored} to disk-connected graphs, where two
samples are considered \emph{neighbors}{\color{black}, and }hence connectable{\color{black},} if their
distance is below a given bound, referred to as the \emph{connection
radius}. 
\item It performs graph construction and graph search
  \emph{concurrently}.
\item For the evaluation of the immediate cost in the dynamic
programming recursion, \edit{the algorithm} ``lazily" ignores the presence of
obstacles, and whenever a locally-optimal (assuming no obstacles)
connection to a new sample intersects an obstacle, that sample is
simply skipped and left for later as opposed to looking for other
connections in the neighborhood. 
\end{itemize}

{\color{black} The first feature concerns the fact that \FMT exploits the structure of disk-connected graphs to run dynamic programming for shortest path computation, in contrast with successive approximation schemes (as employed, e.g., by label-correcting methods). This aspect of the algorithm  is illustrated in Section \ref{subsec:basic}, in particular, in Theorem \ref{thrm:sp} and Remark \ref{remark:spa}. An extension of \FMT to $k$-nearest-neighbor graphs, which are structurally very similar to disk-connected graphs, is studied in Section \ref{subsec:FMTkNN} and numerically evaluated in Section \ref{sec:sims}.} The last feature, which makes the algorithm ``lazy" {\color{black} and represents the key innovation}, {\color{black} dramatically reduces the number of costly collision-check computations. However,} it may cause
\emph{suboptimal} connections. A central property of \FMT is that the
cases where a suboptimal connection is made become vanishingly rare as
the number of samples goes to infinity, which \edit{is} key in proving that the
algorithm is AO {\color{black} (Sections  \ref{subsec:basic} and \ref{sec:AO})}. 

\begin{algorithm}
\caption{{\color{black} Fast Marching Tree Algorithm (\FMT\!): Basics}}
\label{pseudofmt}
\begin{algorithmic}[1]
{\small 
\REQUIRE sample set $V$ comprising of $\xinit$ and $n$ samples in $\xfree$, at least one of which is also in $\xgoal$ \label{line:sample}
\STATE Place $\xinit$ in $\Hset$ and all other samples in $\Wset$; initialize tree with root node $\xinit$\label{line:setSetup}
\STATE Find lowest-cost node $z$ in  $\Hset$ \label{line:extract}
\STATE $\qquad$For each of {\color{black}$z$'s} neighbors $x$ in $\Wset$: \label{line:neighSearch}
\STATE $\qquad\qquad$Find neighbor nodes $y$ in $\Hset$ \label{line:findYnear}
\STATE $\qquad\qquad$Find locally-optimal one-step connection to $x$
from among nodes $y$ \label{line:findPath}
\STATE $\qquad\qquad$If that connection is collision-free, add edge to tree of paths \label{line:ins1}
\STATE $\qquad$Remove successfully connected \edit{nodes $x$} from $\Wset$ and add them to $\Hset$ \label{line:ins3}
\STATE $\qquad$Remove $z$ from $\Hset$ and add it to $\Vc$ \label{line:removeZ}
\STATE $\qquad$Repeat until either: \\ \label{line:terminate}
$\qquad\qquad$(1) $\Hset$ is empty $\Rightarrow$ report failure\\
$\qquad\qquad$(2) Lowest-cost node $z$ in $\Hset$ is in $\xgoal$ $\Rightarrow$ return unique path to $z$ and\\
$\qquad\qquad\qquad\qquad$ report success
}
\end{algorithmic}
\end{algorithm}

A basic pseudocode description of \FMT is given in Algorithm \ref{pseudofmt}.
The input to the algorithm, besides the path planning problem
definition, i.e., ($\xfree, \xinit, \xgoal$){\color{black},} is a sample set $V$ comprising $\xinit$ and $n$ samples in $\xfree$ (line \ref{line:sample}). {\color{black}We refer to samples added to the tree of paths as nodes.} 
Two samples $u,v \in V$ are considered \emph{neighbors} if their Euclidean distance is smaller than
\[
r_n = \gamma \, \biggl(\frac{\log(n)}{n}\biggr)^{1/d},
\]
where $ \gamma > 2 \, \Bigl (1/d \Bigr)^{1/d} \, \Bigl (\mu(\mathcal{X}_{\text{free}})/\zeta_d \Bigr)^{1/d}$ is a tuning parameter.  The algorithm makes use of a partition of $V$ into three subsets, namely $\Wset$, $\Hset$, and $\Vc$.
The set $\Wset$ consists of all of the samples that have not yet been
considered for addition to the \edit{incrementally
grown tree of paths}.  The set  $\Hset$ contains samples that are currently active, in
the sense that they have already been added to the tree (i.e., a
collision-free path from $\xinit$ with a given cost-to-{\color{black}arrive} has been
found) and are candidates for further connections to samples in
$\Wset$. The set $\Vc$ contains samples that have been added to the
tree and are no longer considered for any new connections. Intuitively, these samples are not near enough to the edge of the expanding tree to actually have any new connections made with $\Wset$. 
{\color{black}Removing them from $\Hset$ reduces the number of nodes that need to be considered as neighbors for sample $x$.}
The \FMT algorithm initially places $\xinit$ into $\Hset$ and all
other samples in $\Wset$, while $\Vc$ is initially empty (line
\ref{line:setSetup}). The algorithm then progresses by extracting the
node with the lowest cost-to-{\color{black}arrive} in $\Hset$ (line \ref{line:extract},
Figure \ref{fig:FMT_1}), call it $z$, and finds all its neighbors
within $\Wset$, call them $x$ samples (line \ref{line:neighSearch},
Figure \ref{fig:FMT_1}). For each sample $x$, \FMT finds all its
neighbors within $\Hset$, call them $y$ nodes (line
\ref{line:findYnear}, Figure \ref{fig:FMT_2}). The algorithm then
evaluates the cost of all paths to $x$ obtained by
concatenating previously computed paths to nodes $y$  with straight
lines connecting them to $x$, referred to as ``local one-step"
connections. Note that this step \emph{lazily} ignores the presence
of obstacles. \FMT then picks the path with lowest cost-to-{\color{black}arrive} to $x$
(line \ref{line:findPath}, Figure \ref{fig:FMT_2}). If the last edge
of this path, i.e., the one connecting $x$ with one of its neighbors
in  $\Hset$, is collision-free, then it is added to the tree (line \ref{line:ins1}, Figure  \ref{fig:FMT_3}).
When all samples $x$ have been considered, the ones that have
been successfully connected to the tree are added to $\Hset$ and removed from $\Wset$ (line \ref{line:ins3}, Figure \ref{fig:FMT_4}), while the \edit{others} remain in $\Wset$ until a further iteration of the
algorithm\footnote{In this paper we consider {\color{black}a batch} implementation, whereby all
successfully connected $x$ are added to $\Hset$ {\color{black}in batch} \emph{after} all the
samples $x$ have been considered. {\color{black} It is easy to show that if, instead, each sample $x$ were added
to $\Hset$ as soon as its obstacle-free connection was found, then with probability 1, the algorithm would make all the
same connections as in the batch setting, regardless of what 
order the $x$ were considered in. Thus, since adding the
samples $x$ serially or in batch makes no difference to the
algorithm's output, we prefer the batch implementation for its
simplicity and parallelizability.}}. 
Additionally, node $z$
is inserted into $\Vc$ (line \ref{line:removeZ}, Figure
\ref{fig:FMT_4}), and \FMT moves to the next iteration (an iteration
comprises lines \ref{line:extract}--\ref{line:removeZ}). \edit{The algorithm terminates when the lowest-cost node
in $\Hset$ is also in the goal region or when $\Hset$ becomes empty.} Note that at the beginning of each iteration every sample in $V$ is either in $\Hset$ \emph{or} in $\Wset$ \emph{or} in $\Vc$.

%
%
%
A few comments are in order. First, the choice of the connection radius relies
on a trade-off between computational complexity (roughly speaking,
more neighbors lead to more computation) and quality of the computed
path (roughly speaking, more neighbors lead to  more paths
{\color{black} to optimize} over), and is \edit{an important parameter in the analysis and implementation} of \FMT
$\!$. This \edit{choice} will be studied theoretically in Section \ref{sec:AO} and numerically in Section \ref{subsubsec:rmtuning}. Second, as shown in Figure \ref{fig:algoBasic}, \FMT concurrently performs graph construction and graph search, which is carried out via a dynamic programming recursion tailored to disk graphs (see Section \ref{subsec:basic}). 
This recursion lazily skips collision-checks and may indeed introduce
\emph{suboptimal} connections. In Section \ref{subsec:basic} we will
intuitively discuss why such suboptimal connections are very
rare and still allow the algorithm to asymptotically approach an
optimal solution (Theorem \ref{thrm:AO}). Third, the lazy
collision-checking strategy employed by \FMT is fundamentally
different from the one proposed in the past within the probabilistic
roadmap framework  \citep{Bohlin.Kavraki:ICRA00},
\citep{GS.JL:03}. Specifically, the lazy $\mathrm{PRM}$ algorithm
presented in  \citep{Bohlin.Kavraki:ICRA00} first constructs a graph
assuming that all connections are collision-free (refer to this graph
as the \emph{optimistic graph}).  Then, it searches for a shortest
\emph{collision-free} path by repeatedly searching for a shortest path
over the optimistic graph and then checking whether it is
collision-free or not. Each time a collision is found, the corresponding edge is removed from the optimistic graph and a new shortest path is computed.  The ``Single-query,
Bi-directional, Lazy in collision-checking" algorithm, $\mathrm{SBL}$ \citep{GS.JL:03}, implements a similar idea within the context of bidirectional search. In contrast to lazy $\mathrm{PRM}$ and $\mathrm{SBL}$,  \FMT 
\emph{concurrently} performs graph construction and graph search, and
as soon as a shortest path to the goal region is found, that path is
guaranteed to be collision-free. This approach \edit{provides}
computational savings in especially cluttered environments, wherein
lazy PRM-like algorithms will require a large number of attempts to
find a collision-free shortest path.

\begin{figure}
  \centering
  \subfigure[Lines \ref{line:extract}--\ref{line:neighSearch}: \FMT
  selects {\color{black}the} lowest-cost node $z$ from set $\Hset$ and finds its neighbors within $\Wset$.]{
  \includegraphics[width=75mm]{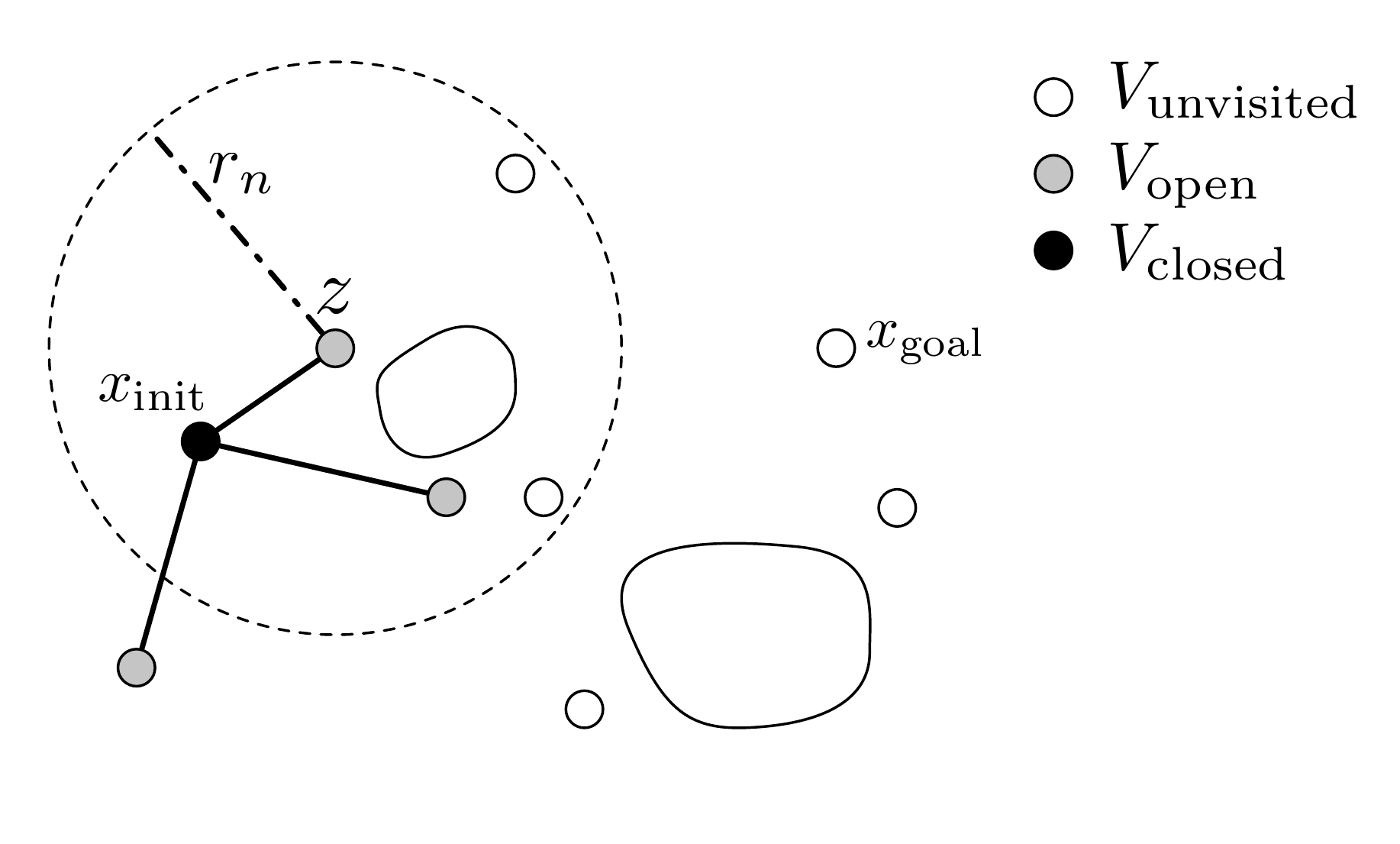}\label{fig:FMT_1}
  }\quad
  \subfigure[Lines \ref{line:findYnear}--\ref{line:findPath}: given
  a neighboring node $x$, \FMT finds the neighbors of $x$ within
  $\Hset$ and searches for a locally-optimal one-step connection. Note that paths intersecting obstacles are also lazily considered.]{
  \includegraphics[width=75mm]{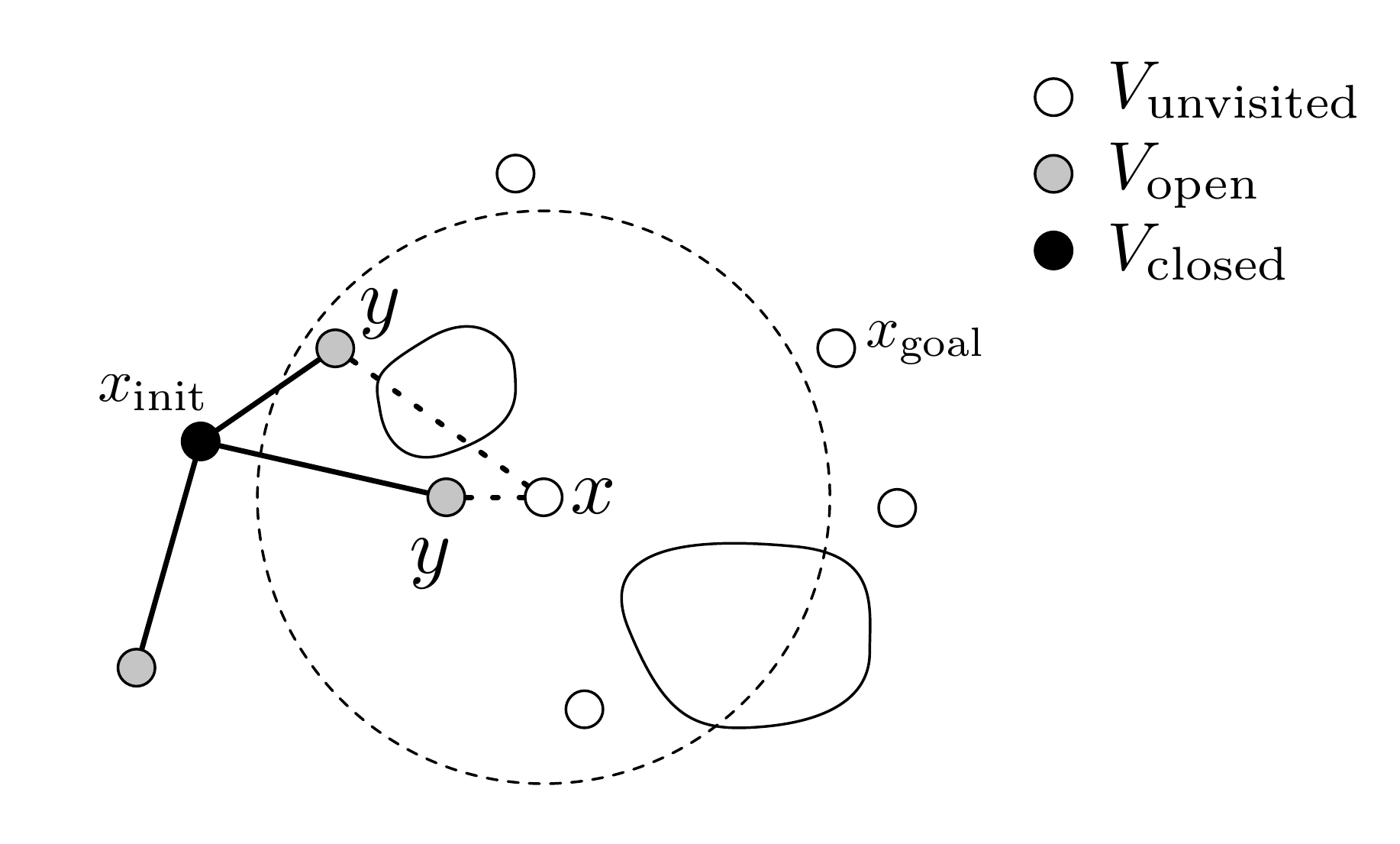}\label{fig:FMT_2}
  }
  \subfigure[Line \ref{line:ins1}: \FMT selects the locally-optimal one-step connection to $x$ ignoring obstacles, and adds that connection to the tree if it is collision-free.]{
  \includegraphics[width=75mm]{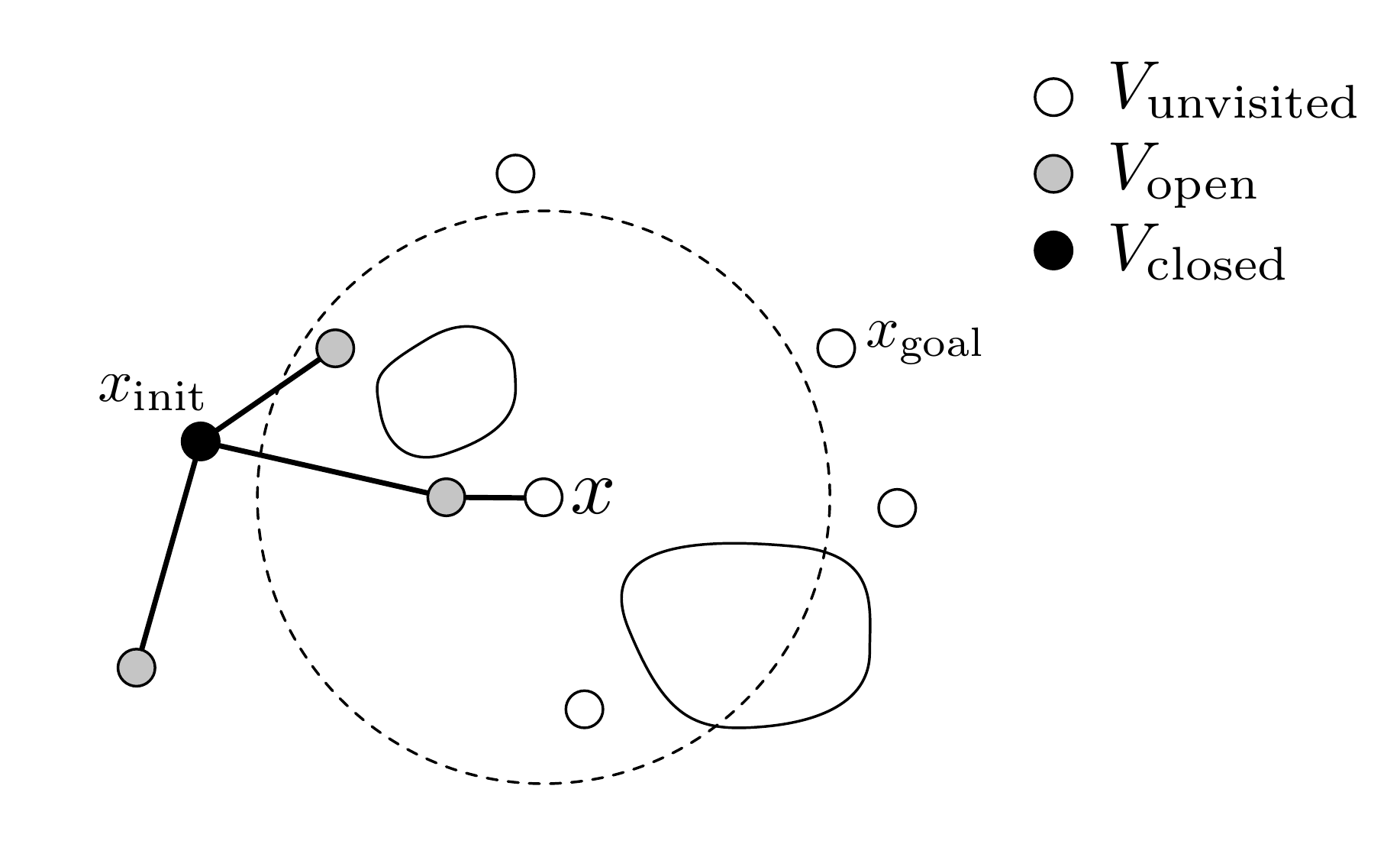}\label{fig:FMT_3}
  }\quad
    \subfigure[Lines \ref{line:ins3}--\ref{line:removeZ}: After all neighbors of $z$ in $\Wset$ have been explored,  \FMT adds successfully connected nodes to $\Hset$, places $z$ in $\Vc$, and moves to the next iteration.]{
  \includegraphics[width=75mm]{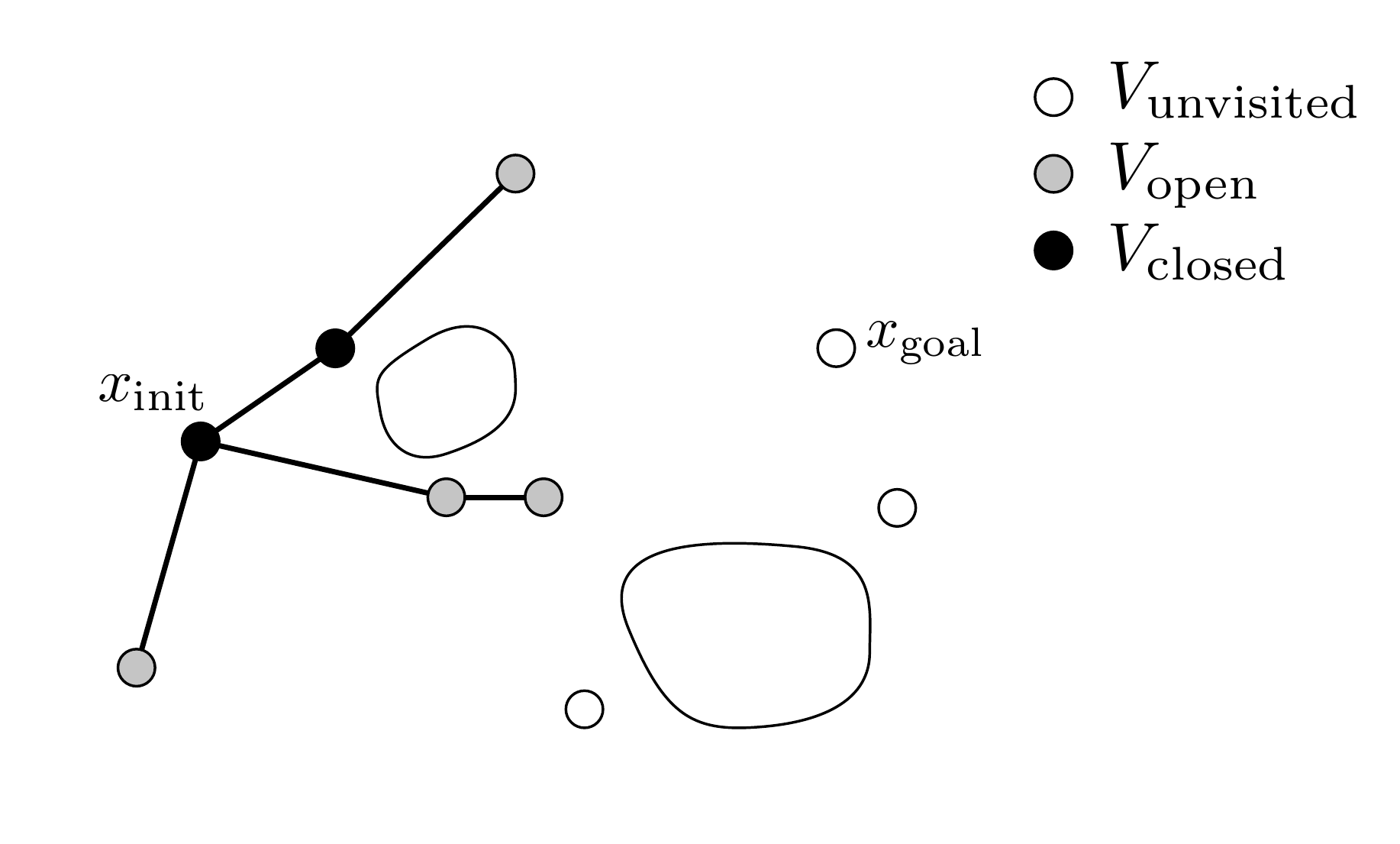}\label{fig:FMT_4}
  }
  \caption{An iteration of the \FMT algorithm.  \FMT \emph{lazily} and \emph{concurrently} performs  graph construction and graph search. Line references are with respect to Algorithm \ref{pseudofmt}. In panel (b), node $z$ is re-labeled as node $y$ since it is one of the neighbors of node $x$.}
  \label{fig:algoBasic}
\end{figure}

\subsection{Basic Properties and Intuition}\label{subsec:basic}
This section discusses basic properties of the \FMT algorithm and provides intuitive reasoning about its correctness and effectiveness. We start by showing that the algorithm terminates in at most $n$ steps, where $n$ is the number of samples.
\begin{theorem}[Termination]
\label{termination}
Consider a path planning problem $(\mathcal{X}_{\text{free}}, x_{\text{init}}, \mathcal{X}_{\text{goal}})$ and any $n \in \naturals$. The \FMT algorithm always terminates in at most $n$ iterations (i.e., in $n$ loops through Algorithm \ref{pseudofmt} lines \ref{line:extract}--\ref{line:removeZ}).
\end{theorem}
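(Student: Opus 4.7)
The proof will be a direct bookkeeping argument based on tracking the size of $\Vc$ across iterations. The plan is to establish that $|\Vc|$ strictly increases by one per iteration and is bounded above by the total number of samples, which immediately bounds the iteration count.

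First, I would establish an invariant: at the start of every iteration, the sets $\Wset$, $\Hset$, and $\Vc$ partition $V$. This follows from the initialization (line \ref{line:setSetup}, where $\xinit \in \Hset$, the remaining $n$ samples lie in $\Wset$, and $\Vc = \emptyset$), together with the fact that the only movements of nodes between sets are: (i) from $\Wset$ to $\Hset$ on line \ref{line:ins3}, and (ii) from $\Hset$ to $\Vc$ on line \ref{line:removeZ}. No node ever returns to $\Wset$ once removed, and no node ever leaves $\Vc$.

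Next, I would observe that line \ref{line:removeZ} is executed exactly once per iteration and inserts one new element (namely the chosen $z$) into $\Vc$. Combined with the invariant above, this means $|\Vc|$ grows by exactly one at each iteration. Since $\Vc \subseteq V$, the total number of iterations cannot exceed $|V|$. To recover the bound of $n$ stated in the theorem, I would note that the algorithm cannot begin an $(|V|+1)$-th iteration: before line \ref{line:extract} is even reached on such an iteration, the termination check on line \ref{line:terminate} would find $\Hset = \emptyset$ (as $\Wset \cup \Hset \cup \Vc = V$ and $\Vc = V$ would force $\Hset = \emptyset$). A careful accounting then shrinks the bound by one, since the initial iteration processes the node $\xinit$ that was not among the $n$ sampled points, so only $n$ iterations are actually needed in the worst case to exhaust $\Hset$ of $\xinit$ plus successfully connected nodes from $\Wset$.

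There is essentially no mathematical obstacle; the entire content is verifying the partition invariant and reading off the consequence. The only subtle point is that the bound must be read against the precise definition of an ``iteration'' (a single pass through lines \ref{line:extract}--\ref{line:removeZ}) and against the initial composition of $V$ as $\xinit$ plus $n$ additional samples, so that the constant in the bound matches the statement $n$ rather than $n+1$.
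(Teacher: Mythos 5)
Your approach is essentially the same as the paper's: both arguments are a bookkeeping count, observing that each iteration permanently retires exactly one sample (the paper phrases this as ``the lowest-cost node is removed from $\Hset$ each iteration and can never re-enter $\Hset$''; you phrase it as ``$|\Vc|$ strictly increases by one per iteration''), and both derive the termination bound from the finiteness of $V$. The partition invariant you state is a clean formalization of what the paper's invariant (``a sample added to $\Hset$ can never be added again'') implicitly rests on, and the two formulations are interchangeable since line \ref{line:removeZ} moves $z$ from $\Hset$ to $\Vc$ in lockstep.

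However, your handling of the off-by-one is not correct. You argue that because $\xinit$ is ``not among the $n$ sampled points,'' the bound shrinks from $|V|$ to $n$. This is backwards: $\xinit$ \emph{is} an element of $V$, so $|V| = n+1$, and the naive count of ``one new element of $\Vc$ per iteration, $\Vc \subseteq V$'' gives $n+1$, not $n$. The correct reason the bound is $n$ is the hypothesis on line \ref{line:sample} that $V$ contains at least one sample in $\xgoal$, together with the observation that such a goal sample is never moved to $\Vc$: if it ever becomes the lowest-cost element of $\Hset$, the termination check (line \ref{line:terminate}) fires \emph{before} it is processed and moved to $\Vc$; and if it never enters $\Hset$ at all, it certainly never enters $\Vc$. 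Hence $|\Vc| \le |V|-1 = n$ at all times, giving at most $n$ iterations. (In fairness, the paper's own proof is also silent on this distinction and, read literally, only establishes the $n+1$ bound.)
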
 
\begin{proof}
Note two key facts: (i) \FMT terminates and reports failure if $\Hset$ is ever empty, and (ii) the lowest-cost node in $\Hset$ is removed from $\Hset$ at each iteration. 
Therefore, to prove the theorem it suffices to prove the invariant
that any sample that has ever been added to $\Hset$ can never be added
again.  To establish the invariant, observe that at a given
iteration, only samples in $\Wset$ can be added to $\Hset$, and each time
a sample is added, it is removed from $\Wset$. Finally, since $\Wset$
never has samples added to it, a sample can only be added to $\Hset$
once. Thus the invariant is proved, and, in turn, the theorem.
\end{proof}
To {\color{black}understand the correctness} of the algorithm, consider
first the case without obstacles and where there is only one sample in
$\xgoal$, denoted by $\xfinal$. In this case \FMT uses dynamic
programming to find the shortest path from $\xinit$ to $\xfinal${\color{black},} if
one exists{\color{black},} over the $r_n$-disk graph induced by $V$, i.e., over the
graph where there exists an edge between two samples $u, v\in V$ if
and only if $\|u  -v \|\ < r_n$. This fact is proven in the following
theorem{\color{black}, the proof of which} highlights how \FMT applies dynamic programming over an $r_n$-disk graph.

\begin{theorem}[\FMT in obstacle-free environments]
\label{thrm:sp}
Consider a path planning problem $(\xfree, \xinit, \xgoal)$, where $\xfree = \mathcal X$ (i.e., there are no obstacles) and $\xgoal = \{\xfinal\}$ (i.e., there is a single node in $\xgoal$). Then, \FMT computes a shortest path from $\xinit$ to $\xfinal$ (if one exists)  over the $r_n$-disk graph induced by $V$.
\end{theorem}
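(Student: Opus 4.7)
My plan is to show that, in the obstacle-free single-goal setting, \FMT is equivalent to running Dijkstra's algorithm on the $r_n$-disk graph $G$ induced by $V$, and therefore returns a shortest path. Let $c^*(v)$ denote the $G$-shortest-path cost from $\xinit$ to $v$. The proof proceeds by induction on the iteration index $m$, maintaining three joint invariants at the start of iteration $m$: (a) every $v \in \Vc \cup \Hset$ carries a stored cost equal to $c^*(v)$; (b) the samples extracted in iterations $1,\dots,m-1$ are precisely those $m-1$ samples of $V$ with smallest $c^*$ values; and (c) a sample lies in $\Wset$ if and only if none of its $G$-neighbors lies in $\Vc$---this holds because, with no obstacles, every attempted connection succeeds, so a sample is removed from $\Wset$ as soon as one of its neighbors is extracted.

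The base case $m=1$ is immediate. For the inductive step I would first verify that the node $z$ extracted at iteration $m$ satisfies $\mathrm{cost}(z) = c^*(z) = \min\{c^*(v) : v \notin \Vc\}$. The second equality uses (c): any shortest path in $G$ from $\xinit$ to a non-extracted $v$ must transition from $\Vc$ into $\Hset$ at some node $w$---by (c) it cannot jump straight from $\Vc$ into $\Wset$---so (a) yields $\mathrm{cost}(z) \le \mathrm{cost}(w) = c^*(w) \le c^*(v)$.

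The main obstacle is to show that every $x \in N(z) \cap \Wset$ is assigned the correct cost by line \ref{line:findPath} of Algorithm \ref{pseudofmt}, i.e., that $\min_{y \in \Hset \cap N(x)} (c^*(y) + \|y - x\|) = c^*(x)$. The subtlety is that an optimal predecessor of $x$ on the shortest path could a priori lie in $\Wset$ rather than $\Hset$, and so be absent from the minimization. I would circumvent this by a walk-back argument along a shortest path $v_0 = \xinit, v_1, \ldots, v_k = x$ in $G$: let $v_i$ be the largest index for which $v_i \in \Hset$. Such an $i$ exists because $v_0 \in \Vc$ while $v_k \in \Wset$, and (c) forbids any $\Vc$-node from neighboring a $\Wset$-node, so the backward walk from $v_{k-1}$ cannot reach $\Vc$ without first entering $\Hset$. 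The critical geometric estimate is that $v_i$ is still a $G$-neighbor of $x$: since $v_i, v_{i+1},\dots,v_{k-1}$ all lie outside $\Vc$, the minimality just established gives $c^*(v_j) \ge c^*(z)$ for $j = i,\dots,k-1$, and telescoping yields
\begin{equation*}
\|v_i - x\| \,\le\, \sum_{j=i}^{k-1}\|v_j - v_{j+1}\| \,=\, c^*(x) - c^*(v_i) \,\le\, c^*(x) - c^*(z) \,\le\, \|z-x\| \,<\, r_n,
\end{equation*}
where the penultimate inequality uses $c^*(x) \le c^*(z) + \|z-x\|$ (valid since $(z,x)$ is an edge of $G$) and the last uses $z \in N(x)$. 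The same telescoping also gives $c^*(v_i) + \|v_i - x\| \le c^*(x)$, so $v_i \in \Hset \cap N(x)$ is an optimal predecessor that does appear in the \FMT minimization, making $\mathrm{cost}(x) = c^*(x)$ as required.

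The invariants propagate to iteration $m+1$, and the conclusion follows: when $\xfinal$ eventually becomes the minimum-cost node of $\Hset$ and is extracted, invariant (a) gives its stored cost as $c^*(\xfinal)$, so the tree path returned by \FMT is a shortest path in the $r_n$-disk graph; if no such path exists, $\xfinal$ is never added to $\Hset$ and extraction continues until $\Hset$ empties (which happens in at most $n$ iterations by Theorem \ref{termination}), whereupon the algorithm reports failure.
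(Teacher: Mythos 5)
Your proposal is correct, and it follows the same essential strategy as the paper: an induction on iterations showing that \FMT behaves like Dijkstra's algorithm on the $r_n$-disk graph, with the crux being that the Bellman minimization restricted to $\Hset \cap N(x)$ still attains $c^*(x)$. Where you differ is in how that crux is handled. The paper's proof (via its Invariants 1 and 2) takes a generic candidate predecessor $u \in N(x)$, splits by whether $u$ lies in $\Vc$, $\Hset$, or $\Wset$, and in the $\Wset$ case invokes an auxiliary node $w \in \Hset$ on a shortest path to $u$, splitting again on whether $w \in N(x)$; in each subcase it shows $u$ can be discarded. You instead walk backward along a shortest path $v_0=\xinit,\dots,v_k=x$ to the last $\Hset$-node $v_i$, and prove in one shot, via the telescoping bound $\|v_i - x\| \le c^*(x) - c^*(v_i) \le c^*(x) - c^*(z) \le \|z-x\| < r_n$, that $v_i$ is simultaneously a $G$-neighbor of $x$ and an optimal predecessor. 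Both arguments use the same ingredients (the fact that $\Vc$-nodes cannot neighbor $\Wset$-nodes in the obstacle-free setting, the triangle inequality, and minimality of $z$'s cost-to-arrive), but your direct construction of $v_i$ collapses the paper's case analysis into a single estimate and makes the Dijkstra analogy explicit through your invariant (b). One cosmetic note: your invariant (c) is stated as an ``if and only if,'' but at iteration~1 (and for $\xinit$ generally) the ``if'' direction fails; since only the forward direction is ever used, this does not affect correctness, but the statement should be weakened to an implication or restricted to $v \notin \Vc \cup \{\xinit\}$.
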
 
\begin{proof}
For a sample $v\in V$, let $c(v)$ be the length of a shortest path to
$v$ from $\xinit$ over the $r_n$-disk graph induced by $V${\color{black}, where
}$c(v)=\infty$ if no path to $v$ exists. Furthermore, let
$\texttt{Cost}(u,v)$ be the length of the edge connecting samples $u$
and $v$ (i.e., its Euclidean distance). It is well known that shortest path distances satisfy the Bellman principle of optimality \citep[Chapter 24]{Cromen.ea:01}, namely
\begin{equation}\label{eq:Bellman}
c(v) = \min_{u: \|u-v\| <  r_n} \, {\color{black}\{}c(u) + \texttt{Cost}(u,v){\color{black}\}}.
\end{equation}
\FMT repeatedly applies this relation in a way that exploits the geometry of $r_n$-disk graphs. Specifically, \FMT maintains two loop invariants:
\begin{quote}
{\bf Invariant 1}: At the beginning of each iteration, the shortest
path in the $r_n$-disk graph to a sample $v\in \Wset$ must pass through a node $u \in \Hset$.
\end{quote}
To prove Invariant 1, assume \edit{for} contradiction that the invariant is
not true, that is there exists a sample $v\in \Wset$ {\color{black} with
  a shortest path that}
does not contain any node in $\Hset$. At the first iteration this
condition is clearly false, as $\xinit$ is in $\Hset$. For
subsequent iterations, the contradiction assumption implies that along
the shortest path there is at least one edge $(u, w)$ where $u\in \Vc$
and $w \in \Wset$. This situation is, however, impossible as before
$u$ is placed in $\Vc$, all its neighbors, including $v$, must have
been extracted from $\Wset$ and inserted into $\Hset${\color{black}, since }insertion into
$\Hset$ is ensured {\color{black}when} there are no obstacles{\color{black}. Thus, we have }a contradiction. 

The second invariant is:
\begin{quote}
{\bf Invariant 2}: At the end of each iteration, all neighbors of $z$ in $\Wset$ are placed in $\Hset$ with their shortest paths computed.
\end{quote}
To see this, let us \edit{induct} on the number of iterations. At the
first iteration, Invariant 2 is trivially true. Consider, then,
iteration $i+1$ and let $x \in \Wset$ be a neighbor of $z$. In line
\ref{line:findPath} of Algorithm \ref{pseudofmt}, \FMT computes a path
to $x$ {\color{black} with cost $\tilde c(x)$} given by
\[
\tilde c(x) = \min_{u \in \Hset : \,\|u - x\| <  r_n} \, {\color{black}\{}c(u) +  \texttt{Cost}(u,x){\color{black}\}},
\]
where by the inductive hypothesis the shortest paths to nodes in
$\Hset$ are all known{\color{black}, since} all nodes placed in $\Hset$ before or at
iteration $i$ have had their shortest paths computed.
To prove that $\tilde c(x)$ is indeed equal to the cost of a shortest path to $x$, i.e., $c(x)$, {\color{black}we} need to prove that the Bellman principle of optimality is satisfied, that is
\begin{equation}\label{eq:dp}
 \min_{u \in \Hset : \,\|u - x \| <  r_n} \, {\color{black}\{}c(u) +  \texttt{Cost}(u,x){\color{black}\}} = \min_{u: \, \|u - x \| < r_n} \, {\color{black}\{}c(u) +  \texttt{Cost}(u,x){\color{black}\}}.
\end{equation}
To prove the above equality, note first that there are no nodes $u\in
\Vc$ such that $\|u - x \|< r_n$, otherwise $x$ could not be in
$\Wset$ (by using the same argument \edit{from} the proof of
Invariant 1). Consider, then, samples  $u \in \Wset$ such that $\|u -
v \|< r_n$.  From Invariant 1 we know that a shortest path to $u$ must
pass through a node $w \in \Hset$. If $w$ is within a distance $r_n$
from $x$, then, by the triangle inequality, {\color{black}we} obtain a shorter path
by concatenating a shortest path to $w$ with the edge connecting $w$
and $x$---hence, $u$ can be discarded when looking for a shortest path
to $x$. If, instead, $w$ is farther than a distance $r_n$ from $x$,
{\color{black}we} can write by repeatedly applying the triangle inequality:
\[
c(u) + \texttt{Cost}(u,x) \geq c(w) +  \texttt{Cost}(w,x)  \geq c(w) + r_n.
\]
Since $c(w) \geq c(z)$ due to the fact that nodes are extracted from $\Hset$ in order of their cost-to-{\color{black}arrive}, and since $\texttt{Cost}(z,x) <  r_n$, {\color{black}we} obtain
\[
c(u) + \texttt{Cost}(u,x) >  c(z) + \texttt{Cost}(z,x),
\]
which implies that, again, $u$ can be discarded when looking for
a shortest path to $x$. Thus, equality \eqref{eq:dp} is proved and, in turn, Invariant 2.

Given Invariant 2, the theorem is proven by showing that, if there
exists a path from $\xinit$ to $\xfinal$, at some iteration the
lowest-cost node in $\Hset$ is $\xfinal$ and \FMT terminates{\color{black},} reporting
``success," see line \ref{line:terminate} in Algorithm \ref{pseudofmt}. We already know, by Theorem
\ref{termination}, that \FMT terminates in at most $n$
iterations. Assume \edit{by contradiction} that upon termination $\Hset$ is
empty, which implies that $\xfinal$ never entered $\Hset$ and hence is
in $\Wset$. This situation is impossible, since the shortest
path to $\xfinal$ would contain at least one edge $(u, w)$ with $u\in
\Vc$ and $w \in \Wset$, which as argued in the proof of Invariant 1
cannot happen. Thus the theorem is proved.
\end{proof}

\begin{remark}[{\color{black} \FMT\!\!, dynamic programming, and disk-graphs}]\label{remark:spa}
{\color{black} The functional equation \eqref{eq:Bellman} does not constitute an algorithm, it only stipulates an optimality condition. \FMT implements equation \eqref{eq:Bellman} by exploiting the structure of disk-connected graphs. Specifically, in the obstacle-free case, the disk\hyp{}connectivity structure ensures that \FMT visits nodes in a ordering compatible with directly computing \eqref{eq:Bellman}, that is, while computing the left hand side of equation \eqref{eq:Bellman} (i.e., the shortest path value $c(v)$), all the  relevant shortest path values on the right hand
side (i.e., the values $c(u)$) have already been computed (see proof
of Invariant 2). In this sense,} \FMT computes {\color{black} shortest}
paths by running direct dynamic programming, as opposed to
performing successive approximations as done {\color{black} by
  label-setting or label-correcting algorithms, e.g., Dijkstra's
  algorithm or the Bellman--Ford algorithm} \citep[Chapter 2]{DB:05}.
{\color{black} We refer the reader to \cite{MS:06} for an in-depth
  discussion of the differences between  direct dynamic programming
  methods (such as \FMT\!) and successive approximation methods (such
  as Dijkstra's algorithm) for shortest path computation.}
{\color{black} Such a direct approach is desirable since the
  cost-to-arrive value for each node is updated only once, and thus only
  one collision check is required per node in the obstacle-free case. When obstacles are
  introduced, \FMT sacrifices the ability to return an exact solution
  on the obstacle-free disk graph in order to retain the computational
  efficiency of the direct approach. The suboptimality introduced in
  this way is slight, as we prove in Section~\ref{sec:AO}, and
  only one collision check is required for the majority of nodes.}
{\color{black} \FMT\!\!'s strategy} is reminiscent of the approach used for the
computation of shortest paths over acyclic graphs
\citep{MS:06}. Indeed, the idea of leveraging graph structure to
compute shortest paths over disk graphs is not new and was recently
investigated in \citep{LR.MS:11}---under the name of bounded leg
shortest path problem---and in \citep{SC.MJ:14}. Both works, however, do not use ``direct" dynamic programming arguments, but rather combine Dijkstra's algorithm with the concept of bichromatic closest pairs \citep{TC.AE:01}. 
\end{remark}

Theorem \ref{thrm:sp} shows that in the obstacle-free case \FMT
returns a shortest path{\color{black},} if one exists{\color{black},} over the $r_n$-disk graph
induced by the sample set $V$. This \edit{statement no longer holds, however,} when there are obstacles, as in this case \FMT might make connections that are suboptimal, i.e., that do not satisfy the Bellman principle of optimality. Specifically, 
\FMT will make a suboptimal connection  when exactly four conditions
are satisfied. Let $u_1$ be the optimal parent of $x$ with respect to
the $r_n$-disk graph where edges intersecting obstacles are removed{\color{black}. T}his graph is the ``correct'' graph \FMT should plan over if it were
not lazy. The sample $x$ will \edit{not} be connected to
$u_1$ by \FMT only if when $u_1$ is the
lowest-cost node in $\Hset$, there is another node
$u_2 \in \Hset$ such that (a) $u_2$ is
within a radius $r_n$ of $x$, (b) $u_2$ has
greater cost-to-{\color{black}arrive} than $u_1$, (c) obstacle-free
connection of $x$ to $u_2$ would have lower
cost-to-{\color{black}arrive} than connection to $u_1$, and (d) $u_2$ is blocked from
connecting to $x$ by an obstacle. These four conditions are
illustrated in Figure \ref{fig:fail_condition}. Condition (a) is
required because in order for $u_2$ to be connected to $x$, it must be within the connection radius of $x$.  Conditions (b), (c), and (d) combine as follows: condition (b) dictates that $u_1$ will be pulled from $\Hset$ before $u_2$ is. Due to (c), $u_2$ will be chosen as the potential parent of $x$. Condition (d) will cause the algorithm to discard the edge between them, and $u_1$ will be removed from $\Hset$, never to be evaluated again. Thus, in the future, the algorithm will never realize that $u_1$ was a better parent for $x$. If condition (b) were to fail, then $u_2$ would be pulled from $\Hset$ first, would unsuccessfully attempt to connect to $x$, and then would be removed from $\Hset$, leaving $x$ free to connect to $u_1$ in a future iteration. If condition (c) were to fail, the algorithm would attempt to connect $x$ to $u_1$ instead of $u_2$ and would therefore find the optimal connection. If condition (d) were to fail, then $u_2$ would indeed be the optimal parent of $x$, and so the optimal connection would be formed. Thus, if any of one these four conditions fail, then at some iteration (possibly not the first), $x$ will be connected optimally with respect to the ``correct" graph.  Note that the combination of conditions
 (a), (b), (c), and (d) make such suboptimal connections quite rare{\color{black}.
Additionally, samples must be within distance $r_n$ of an obstacle to achieve 
joint satisfaction of conditions (a), (b), (c), and (d), and Lemma \ref{considerations}
shows} that the fraction of samples which lie within $r_n$ of an
  obstacle goes to zero as $n \rightarrow \infty$. Furthermore,
  Theorem \ref{thrm:AO} shows that such suboptimal connections do not affect the
  AO of \FMT.
\begin{figure}
  \centering
  \includegraphics[width=140mm]{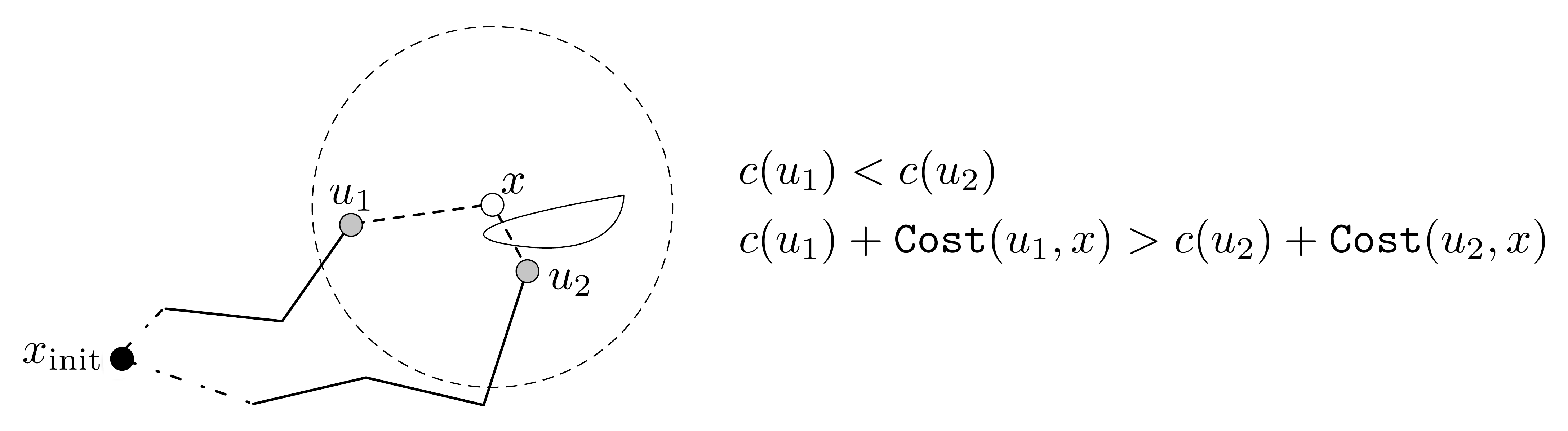}
  \caption{Illustration of a case where \FMT would make a suboptimal connection. 
  \FMT is designed so that suboptimal connections are ``rare" {\color{black}in general,
  and} vanishingly rare as $n \rightarrow \infty$.}
  \label{fig:fail_condition}
\end{figure}

\subsection{Conceptual Comparison with Existing AO Algorithms and Advantages of \FMT}\label{subsec:comp}

When there are no obstacles, \FMT reports the exact same solution or failure
as \PRMstar. This property follows from the fact that, without obstacles, \FMT is indeed
using dynamic programming to build the minimum-cost spanning tree,
as shown in Theorem \ref{thrm:sp}. With
  obstacles, for a given sample set, \FMT finds a path
  {\color{black}with a cost that is lower-bounded by, and does not
  substantially exceed,} the cost of the path found by \PRMstar\!, due
  to the suboptimal connections made by lazily ignoring  obstacles in
  the dynamic programming recursion. However, as will be shown in
  Theorem \ref{thrm:AO}, the cases where \FMT makes a suboptimal
  connection are rare enough that as $n \to \infty$, \FMT\!, like \PRMstar\!, converges to an optimal solution. While lazy collision-checking might introduce suboptimal connections,  it leads to a key computational advantage. By only checking for
collision on the locally-optimal (assuming no obstacles) one-step connection,
as opposed to every possible connection {\color{black}as} is done in
\PRMstar\!, \FMT saves a large number of costly collision-check
computations{\color{black}. I}ndeed, the ratio of the number of collision-check
computations in \FMT to those in \PRMstar goes to zero as the number
of samples goes to infinity.
Hence, we expect \FMT to outperform \PRMstar
  in terms of solution cost as a function of time.

A conceptual comparison {\color{black}to} \RRTstar is more difficult, given how
differently \RRTstar generates paths as compared {\color{black}with} \FMT\!.
The graph expansion
procedure of \RRTstar is fundamentally different from that of \FMT\!.
While \FMT samples points throughout the free space and makes connections
independently of the order in which the samples are drawn, at each iteration \RRTstar steers
towards a new sample only from the regions it has reached up until that time.
In problems where the solution path is necessarily long and winding it may take a long time for an ordered
set of points traversing the path to present steering targets for \RRTstar\!. In this case,
a lot of time can be wasted by steering in inaccessible directions before a feasible solution is found.
Additionally, even once the search trees for both algorithms have explored the whole space, one may expect \FMT
to show some improvement in solution quality per number of samples
placed.  This improvement comes from the fact that, for a given set of samples, \FMT creates
connections nearly optimally (exactly optimally when there are no
obstacles) within the radius constraint, while \RRTstar\!, even with its
rewiring step, is {\color{black}fundamentally} a greedy algorithm.
It is, however, hard
to conceptually assess  how long the algorithms might take to run on a
given set of samples, although in terms of
  collision-check computations, we will show in Lemma
  \ref{considerations} that \FMT performs $O(1)$ collision-checks per
  sample, while \RRTstar performs $O(\log(n))$ per sample. 
In Section \ref{advantages} we will present
results from numerical experiments to make these conceptual comparisons concrete
and assess the benefits of
\FMT over \RRTstar.

An effective approach to address the greedy behavior of \RRTstar is to
leverage \emph{relaxation methods} for the exploitation of new
connections  \citep{OA.PT:13}. This approach is the main
idea behind the \RRTsharp algorithm \citep{OA.PT:13}, which constructs
a spanning tree rooted at the initial condition and  containing
lowest-cost path information for nodes which have the potential to be
part of a shortest path to the goal region. This approach is also very
similar to what is done by \FMT\!. However, \RRTsharp grows {\color{black}the} tree
in a fundamentally different way, by interleaving the addition of new
nodes and corresponding edges to the graph with a Gauss--Seidel
relaxation of the Bellman equation \eqref{eq:Bellman}{\color{black}; it is} essentially the
same relaxation used in the $\mathrm{LPA}^{*}$ algorithm
\citep{SK.ML.ea:04}. This last step propagates the new information
gained with a node addition across  the \emph{whole} graph in order to
improve the cost-to-{\color{black}arrive} values  of ``promising" nodes
\citep{OA.PT:13}. In contrast, \FMT directly implements the Bellman
equation \eqref{eq:Bellman} and, whenever a new node is added to the
tree, considers only \emph{local}, {\color{black}i.e.} within a neighborhood,
connections. Furthermore, and perhaps most importantly, \FMT
implements a lazy collision-checking strategy, which on the practical
side may significantly reduce the number of costly collision-checks,
while on the theoretical side requires a careful analysis of possible
suboptimal local connections (see Section \ref{subsec:basic} and
Theorem \ref{thrm:AO}). It is also worth mentioning that over $n$
samples \FMT has a computational complexity that is $O(n \, \log n)$
(Theorem \ref{thrm:CC}), while \RRTsharp has a computational
complexity of $O(n^2 \, \log n)$ \citep{OA.PT:13}.

Besides providing fast convergence
to high quality solutions, \FMT has some ``structural" advantages with
respect to its state-of-the-art counterparts. First, \FMT\!{\color{black}, like}
\PRMstar\!{\color{black},} relies on the choice of two parameters, namely the number
of samples and the constant appearing in the connection radius in 
equation \eqref{radiusprt}. In contrast, \RRTstar requires the 
choice of four parameters, namely, {\color{black}the} number of samples or termination
time, {\color{black}the} steering radius, {\color{black}the} goal biasing, and {\color{black} the constant
  appearing in the} connection radius. An advantage of \FMT over \PRMstar\!, besides the
reduction in the number of collision-checks (see Section
\ref{subsec:high_lev}), is that \FMT builds and maintains paths in a tree
structure at \emph{all times}, which is advantageous 
when differential constraints are added to the paths. In particular, far fewer two-point boundary value problems need to be solved (see the recent work in \citep{ES-LJ-MP:14}). Also, the fact that
the tree grows in cost-to-{\color{black}arrive} space simplifies a bidirectional
implementation, as discussed in \citep{JS-ES-LJ-MP:14}. Finally, while
\FMT\!,  by running on a \emph{predetermined} number of samples, is \emph{not} an anytime algorithm (roughly speaking, an
algorithm is called anytime if, given extra time, it continues to run
and further improve its solution until time runs out---a \edit{notable}
example is \RRTstar\!), it can be cast into this framework by
repeatedly adding batches of samples and carefully reusing previous
computation until time runs out, as recently presented in \citep{OS-DH:14}.

\begin{algorithm}
\caption{Fast Marching Tree Algorithm (\FMT): Details}
\label{prtalg}
\algsetup{linenodelimiter=}
\begin{algorithmic}[1]
\STATE $V \leftarrow \{x_{\text{init}}\} \cup \texttt{SampleFree}(n)$; $E \leftarrow \emptyset$
\STATE $\Wset \leftarrow V \backslash \{x_{\text{init}}\}$; $\Hset \leftarrow \{x_{\text{init}}\}$, $\Vc \leftarrow \emptyset$
\STATE $z \leftarrow x_{\text{init}}$
\STATE $N_z \leftarrow \texttt{Near}(V \backslash \{z\}, z, r_n)$ \label{line:Nz}
\STATE $\texttt{Save}(N_z, z)$ \label{line"save_1}
\WHILE{$z \notin \mathcal{X}_{\text{goal}}$} \label{stoppingcond}
\STATE $\Hsetnew \leftarrow \emptyset$
\STATE $X_{\text{near}} = N_z{\color{black}\cap} \Wset$ \label{line:intersectW}
\FOR{$x \in X_{\text{near}}$} \label{line:forXnear}
\STATE $N_x \leftarrow \texttt{Near}(V \backslash \{x\}, x, r_n)$\label{save_2_0}
\STATE $ \texttt{Save}(N_x, x)$ \label{line"save_2}
\STATE $Y_{\text{near}} \leftarrow N_x {\color{black}\cap} \Hset)$ \label{line:intersect}
\STATE $y_{\text{min}} \leftarrow \arg\min_{y \in
  Y_{\text{near}}} \, {\color{black}\{}c(y) +
\texttt{Cost}(y,x){\color{black}\}}$ \COMMENT{\, // {\small dynamic programming equation}} \label{line:ymin}
\IF{$\texttt{CollisionFree}(y_{\text{min}}, x)$}  \label{line:collisionfreecheck}
\STATE $E \leftarrow E \cup \{(y_{\text{min}}, x)\}$  \COMMENT{\, // {\small straight line joining $y_{\text{min}}$ and $x$ is collision-free}}
\STATE $\Hsetnew \leftarrow \Hsetnew \cup \{x\}$ \label{alg:H_1}
\STATE $\Wset \leftarrow \Wset \backslash \{x\}$
\STATE $c(x) = c(y_{\text{min}}) + \texttt{Cost}(y_{\text{min}},x)$
\COMMENT{// {\small cost-to-{\color{black}arrive} from $\xinit$ in tree $T = (\Hset \cup \Vc, E)$}}
\ENDIF
\ENDFOR
\STATE $\Hset \leftarrow (\Hset \cup \Hsetnew) \backslash \{z\}$ \label{alg:H_2}
\STATE $\Vc \leftarrow \Vc \cup \{z\}$
\IF{$\Hset = \emptyset$}
\RETURN Failure
\ENDIF
\STATE $z \leftarrow \arg\min_{y \in \Hset} \, {\color{black}\{}c(y){\color{black}\}}$
\ENDWHILE
\RETURN $\texttt{Path}(z, T = (\Hset\cup \Vc, E))$
\end{algorithmic}
\end{algorithm}

\subsection{Detailed Description and Implementation Details}\label{subsec:detail_des}

This section provides a detailed pseudocode description of Algorithm \ref{pseudofmt}, which highlights a number of implementation details that will be instrumental to the computational complexity analysis given in Section  \ref{subsec:complexity}.

Let $\texttt{SampleFree}(n)$ be a function that returns a set of $n
\in \mathbb{N}$ points (samples) sampled independently and identically from the
uniform distribution on $\mathcal{X}_{\text{free}}$. We
  discuss the extension to non-uniform sampling distributions in
  Section \ref{subsec:nonunif}. Let $V$ be a set of
  samples containing the initial state $\xinit$ and a set of $n$
  points sampled according to $\texttt{SampleFree}(n)$.
  Given a subset $V^{\prime}\subseteq V$, and a sample
  $v\in V$, let $\texttt{Save}(V^{\prime}, v)$ be a
function that stores in memory a set of samples
  $V^{\prime}$ associated with sample $v$.  Given a set
of samples $V$, a sample $v\in V$, and a
positive number $r$, let $\texttt{Near}(V, v, r)$ be a
function that returns the set of samples 
  $\{u \in V: \|u-v\| < r\}$.  {\color{black}\texttt{Near} checks first
    to see if the required set of samples has already been computed
    and saved using $\texttt{Save}$, in which case it loads the set
    from memory, otherwise it computes the required set from scratch.} Paralleling the notation
  in the proof of Theorem \ref{thrm:sp}, given a tree $T =
(V^{\prime}, E)$, where the node set $V^{\prime}\subseteq V$ contains
$\xinit$ and $E$ is the edge set, and a node $v \in V'$,
  let $c(v)$ be the cost of the unique path in the graph $T$ from
  $x_{\text{init}}$ to $v$. Given two samples $u,v\in V$, let
$\texttt{Cost}(u,v)$ be the cost of the \emph{straight line} joining
$u$ and $v$ (in the current setup $\texttt{Cost}(u,v) = \| v-u\|$,
more general costs will be discussed in Section \ref{subsec:cost}). Note that $\texttt{Cost}(u,v)$ is well{\color{black}-}defined regardless of the line joining $u$ and $v$ being collision-free. Given two samples  $u, v\in V$, let $\texttt{CollisionFree}(u, v)$ denote the boolean function which is true if and only if the line joining $u$ and $v$ does not intersect an obstacle.
Given a tree $T = (V^{\prime}, E)$, where the node set
  $V^{\prime}\subseteq V$ contains $\xinit$ and $E$ is the edge set,
and a node $v\in V^{\prime}$, let
$\texttt{Path}(v, T)$ be the function returning the unique path in the
tree $T$ from $x_{\text{init}}$ to $v$. The detailed \edit{\FMT} algorithm is given in Algorithm \ref{prtalg}.


The set $\Hset$ should be implemented as a binary min heap, ordered by
cost-to-{\color{black}arrive}, with a parallel set of nodes that exactly tracks the
nodes in $\Hset$ in no particular order{\color{black},} and that is used
  to efficiently carry out the intersection operation in line
\ref{line:intersect} of the algorithm. Set $\Hsetnew$  contains successfully connected $x$ samples that will be added to $\Hset$ once all $x$ samples have been considered (compare with line \ref{line:ins3} in Algorithm \ref{pseudofmt}). At initialization
  (line \ref{line"save_1}) and during the main while loop (line
  \ref{line"save_2}), \FMT saves the information regarding the nearest
  neighbor set of a node $v$, that is $N_v$. This operation is needed
  to avoid unnecessary repeated {\color{black}computations of near
    neighbors by allowing the \texttt{Near} function to load from memory,}
  and will be important  for the characterization of the computational
  complexity of \FMT in Theorem \ref{thrm:CC}{\color{black}. S}ubstituting lines
  \ref{save_2_0}--\ref{line:intersect} with the line $Y_{\mathrm{near}}
  \leftarrow\texttt{Near}(\Hset, x, r_n)$, while algorithmically
  correct, would cause a larger number of {\color{black}unnecessary near neighbor computations}. Additionally,  for each node $u \in N_v$, one should also save the real value $\texttt{Cost}(u,v)$ and the boolean value $\texttt{CollisionFree}(u, v)$.  Saving both of these values whenever they are first computed guarantees that \FMT will never compute them more than once for a given pair of nodes.


\section{Analysis of \FMT}\label{sec:AO}
In this section we characterize the asymptotic optimality of \FMT
(Section \ref{subsec:AO}), provide a convergence rate to the optimal
solution (Section \ref{subsec:rate_bound}), and finally
  characterize its computational complexity (Section \ref{subsec:complexity}).
\subsection{Asymptotic Optimality}\label{subsec:AO}

The following theorem presents the main result of this paper.
\begin{theorem}[Asymptotic optimality of \FMT]
\label{thrm:AO}
Let $(\mathcal{X}_{\text{free}}, x_{\text{init}},
\mathcal{X}_{\text{goal}})$ be a $\delta$-robustly feasible path
planning problem in $d$ dimensions, with $\delta>0$ and
$\mathcal{X}_{\text{goal}}$ being $\xi$-regular. Let $c^*$ denote the arc
length of an optimal path $\sigma^*$,
and let $c_n$ denote the {\color{black} arc length} of the path returned by \FMT (or
$\infty$ if \FMT returns failure) with $n$ samples using the
following radius, 
\begin{equation}
\label{radiusprt}
r_n = (1+\eta) \,2 \, \biggl(\frac{1}{d}\biggr)^{1/d} \biggl(\frac{\mu(\mathcal{X}_{\text{free}})}{\zeta_d} \biggr)^{1/d} \biggl(\frac{\log(n)}{n}\biggr)^{1/d},
\end{equation}
for some $\eta > 0$.  Then  $\lim_{n \rightarrow \infty} \, \p{c_n > (1+\varepsilon)c^*} = 0 \,\,$ for all $ \varepsilon > 0$.
\end{theorem}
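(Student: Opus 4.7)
The plan is to reduce the asymptotic statement to a probabilistic covering argument along a near-optimal reference path, and then to show that \FMT traces this path up to a vanishing overhead, despite its lazy collision-checking. Fix $\varepsilon > 0$. The first step is to use $\delta$-robust feasibility to pick, for all sufficiently large $n$, a feasible path $\sigma$ with strong $\delta$-clearance (for some $\delta > 0$) satisfying $c(\sigma) \le (1 + \varepsilon/3)\, c^*$, and whose terminal point lies on $\partial \mathcal{X}_{\text{goal}}$. The $\xi$-regularity of $\mathcal{X}_{\text{goal}}$ will be used to guarantee that a ball of positive measure lies inside $\mathcal{X}_{\text{goal}}$ near $\sigma(1)$, so that with high probability some sample falls into $\mathcal{X}_{\text{goal}}$ close to the optimal entry point.

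The second step is the covering construction. I would parametrize $\sigma$ by arc length and place a sequence of waypoints $x_0 = x_{\text{init}}, x_1, \dots, x_M$ on it, with consecutive waypoints at distance approximately $\alpha r_n$ for some $\alpha \in (0,1)$ chosen so that $(1+\eta)$ slack in equation \eqref{radiusprt} suffices to keep two consecutive balls within the connection radius. Around each $x_m$ I would place a ball $B_m$ of radius $\beta r_n$, with $\beta$ small enough that $B_m \subseteq \mathcal{X}_{\text{free}}$ (using the $\delta$-clearance and $r_n \to 0$), and such that any two points in consecutive balls are within Euclidean distance $r_n$. The number of balls is $M = \Theta(1/r_n)$. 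The probability that a single ball fails to contain any of the $n$ i.i.d.\ uniform samples is $(1 - \zeta_d \beta^d r_n^d / \mu(\mathcal{X}_{\text{free}}))^n$; substituting $r_n^d = \Theta(\log(n)/n)$ and taking a union bound over the $\Theta(1/r_n)$ balls, the calibration of $\gamma$ in \eqref{radiusprt} (namely the factor $(1+\eta)\, 2\, (1/d)^{1/d}(\mu(\mathcal{X}_{\text{free}})/\zeta_d)^{1/d}$) is exactly what forces the failure probability to tend to zero. Let $A_n$ denote the event that every $B_m$ contains at least one sample, and let $y_m \in B_m$ be a fixed choice of witness.

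The third, and most delicate, step is to argue on $A_n$ that \FMT returns a path of cost at most $(1+\varepsilon) c^*$. Here I would induct on $m$ along the witness sequence, showing that in some iteration \FMT connects $y_{m}$ to a node of cost-to-arrive at most $c_T(y_{m-1}) + \|y_m - y_{m-1}\| + O(r_n)$, where $c_T(\cdot)$ denotes cost in the \FMT tree. Two facts drive the induction. First, consecutive witnesses are within $r_n$, and the straight segment $\overline{y_{m-1} y_m}$ lies in the thickened tube around $\sigma$, hence is collision-free by the $\delta$-clearance (for $n$ large enough that $2\beta r_n < \delta$). Second, by the cost-ordered expansion proved in Theorem \ref{thrm:sp} and the discussion in Section \ref{subsec:basic}, when \FMT selects $z = y_{m-1}$ (or a node with comparable cost-to-arrive) as the minimum-cost node in $\Hset$, the sample $y_m$ is still in $\Wset$ and lies in $z$'s neighborhood, so the lazy one-step connection step \ref{line:findPath} of Algorithm \ref{pseudofmt} will assign $y_m$ a parent of cost-to-arrive no worse than $c_T(y_{m-1})$. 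The possibility that \FMT makes a suboptimal connection, as catalogued by conditions (a)--(d) in Section \ref{subsec:basic}, only makes the chosen parent \emph{different} from $y_{m-1}$; it does not make it more costly, because \FMT selects the lazy argmin over all $\Hset$-neighbors. Telescoping over $m$ and using $\sum_m \|y_m - y_{m-1}\| \le c(\sigma) + O(M r_n) = (1+\varepsilon/3)c^* + o(1)$ yields $c_n \le (1+\varepsilon) c^*$ on $A_n$ for $n$ large.

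The remaining obstacle, and in my view the main one, is to make the telescoping argument genuinely rigorous in the presence of arbitrary suboptimal lazy connections. The subtlety is that even though each individual connection cost is bounded by the locally optimal lazy choice, the \emph{tree-cost} of the parent chosen for $y_m$ might in principle exceed $c_T(y_{m-1})$ if that parent's own lineage contains a cascade of suboptimal edges. To close this, I would either (i) strengthen the induction hypothesis to track the excess $c_T(y_m) - \sum_{k\le m}\|y_k - y_{k-1}\|$ and show it remains $O(M \cdot r_n) = o(1)$ by iterating the local lazy-argmin bound, or (ii) replace the single witness sequence by a slightly richer construction in which an auxiliary neighborhood of $y_m$ in $\Hset$ with cost close to $c_T(y_{m-1}) + \|y_m - y_{m-1}\|$ is guaranteed. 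Combining the probability bound $\mathbb{P}(A_n^c) \to 0$ with the deterministic cost bound on $A_n$ gives $\mathbb{P}(c_n > (1+\varepsilon)c^*) \le \mathbb{P}(A_n^c) \to 0$, completing the proof.
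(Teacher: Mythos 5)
Your broad strategy — a covering-ball argument along a near-optimal clearance path, a probabilistic bound showing the balls are occupied, and an induction on the \FMT tree cost over the witness sequence — is the right one and matches the paper's. However, there are two substantive gaps.

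\textbf{The single-scale ball construction does not calibrate.} With balls of radius $\beta r_n$ and centers spaced $\alpha r_n$ apart, the cost overhead of the witness path over $c(\sigma)$ is roughly a multiplicative factor $1 + 2\beta/\alpha$, since each step $\|y_m - y_{m-1}\|$ can exceed $\|x_m - x_{m-1}\|$ by up to $2\beta r_n$ and there are $M \approx c(\sigma)/(\alpha r_n)$ steps. (Note $M r_n = \Theta(1)$, so the term you write as $O(M r_n) = o(1)$ is actually a constant.) For the union bound over $M = \Theta((n/\log n)^{1/d})$ balls to drive the failure probability to zero with $\gamma$ as in \eqref{radiusprt}, you need roughly $\beta > \frac{1}{2(1+\eta)}$; the connectivity constraint forces $\alpha + 2\beta \le 1$. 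Together these give $2\beta/\alpha \ge 1/\eta$, so the overhead is at least $c^*/\eta$ and cannot be pushed below $\varepsilon c^*$ for $\varepsilon < 1/\eta$. This is why the paper's Lemma \ref{lemma:claim_1} uses a \emph{two-scale} construction: large balls $B_{n,m}$ of radius $\approx r_n/(2+\theta)$, which must \emph{all} contain samples (controlled by a union bound, Lemma \ref{lemma:claim_3}), and much smaller balls $B^\beta_{n,m}$ of radius $\approx \beta r_n/(2+\theta)$, which only need to be occupied for a $(1-\alpha)$ fraction of indices. That weaker requirement is established not by a union bound but by a Poissonization plus binomial concentration argument (Lemma \ref{lemma:claim_2}); nothing of this sort appears in your sketch, and without it the calibration cannot close.

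\textbf{The induction on the tree cost is acknowledged but not supplied.} You correctly flag that the lazy argmin could in principle pick a parent whose tree cost reflects a cascade of suboptimal earlier choices, and you propose two directions but complete neither. This is precisely the technical core of the paper's Lemma \ref{lemma:claim_1}: a four-case induction establishing $c(x_m) \le \sum_{k<m}\|x_{k+1}-x_k\|$, with the delicate case being when the witness-parent $x_{m-1}$ is not yet in $\Hset$ when $x_m$ is first considered. There the paper walks up the $\mathcal{P}$-ancestry of $x_{m-1}$ to its last member still in $\Hset$, and splits on whether that ancestor is inside or outside $B(x_m; r_n)$, using the minimum-cost property of the node being expanded plus the triangle inequality. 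Your proposal outlines what needs to be proved but does not give the argument, so this step remains open.
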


\begin{proof}
Note that $c^* = 0$ implies $x_{\text{init}} \in
\mathrm{cl}(\mathcal{X}_{\text{goal}})$, and the result is trivial, therefore
assume $c^* > 0$. Fix $\theta \in (0,1/4)$ and define the sequence of
paths $\sigma_n$ such that $\lim_{n \rightarrow \infty} c(\sigma_n) =
c^*$, $\sigma_n(1) \in \partial \mathcal{X}_{\text{goal}}$,
$\sigma_n(\tau) \notin \mathcal{X}_{\text{goal}}$ for all $\tau \in
(0,1)$, $\sigma_n(0) = \xinit$, and $\sigma_n$ has strong
$\delta_n$-clearance, where $\delta_n = \min\bigl\{\delta,
\frac{3+\theta}{2+\theta}r_n\bigr\}$. Such a sequence of paths must
exist by the $\delta$-robust feasibility of the path planning
problem. {\color{black} The parameter} $\theta$ will be used to
construct balls that cover a path of interest, and {\color{black}in
  particular will be the ratio of the separation of the ball centers
  to their radii (see Figure~\ref{fig:param} for an illustration).}

{\color{black}The path $\sigma_n$ ends at
$\partial\mathcal{X}_{\text{goal}}$; we will define $\sigma_n'$
as $\sigma_n$ with a short extension into 
the interior of $\mathcal{X}_{\text{goal}}$.} Specifically, $\sigma_n'$
is $\sigma_n$ concatenated with the line of length $\min\bigl
\{\xi, \frac{r_n}{2(2+\theta)}\bigr \}$ that
extends from $\sigma_n(1)$ into $\mathcal{X}_{\text{goal}}$, exactly perpendicular
to the tangent hyperplane of $\partial \mathcal{X}_{\text{goal}}$ at $\sigma_n(1)$. Note that this tangent
hyperplane is well-defined, since the regularity assumption for
$\mathcal{X}_{\text{goal}}$ ensures that its boundary is
differentiable.  Note that, trivially, $\lim_{n\rightarrow
  \infty}c(\sigma_n') = \lim_{n\rightarrow \infty}c(\sigma_n) = c^*$. 
 This line extension is needed because a path that only
  reaches the boundary of the goal region can be arbitrarily
  well-approximated in bounded variation norm by paths that are not
  actually feasible because they {\color{black}do not} reach the goal region, and we
  need to ensure that \FMT finds \emph{feasible} solution
  paths that approximate an optimal path.

Fix $\varepsilon \in (0,1)$, suppose $\alpha, \beta \in (0,\theta\varepsilon/8)$, and pick $n_0 \in \mathbb{N}$ such that for all $n \ge n_0$ the following conditions hold: (1)  $\frac{r_n}{2(2+\theta)} < \xi$, (2)
 $\frac{3+\theta}{2+\theta}r_n < \delta$, (3)
 $c(\sigma_n') < (1+\frac{\varepsilon}{4})\,c^*$, and (4)
 $\frac{r_n}{2+\theta} < \frac{\varepsilon}{8}\,c^*$. Both
   $\alpha$ and $\beta$ are parameters for controlling the
   {\color{black} smoothness}
   of \FMT\!'s solution, and will be used in the proofs of Lemmas \ref{lemma:claim_1}
   and \ref{lemma:claim_2}.

For the remainder of this proof, assume $n \ge n_0$.  From conditions
(1) and (2), $\sigma_n'$ has strong
$\frac{3+\theta}{2+\theta}r_n$-clearance.  For notational simplicity,
let $\kappa(\alpha, \beta, \theta) := 1+(2\alpha + 2\beta)/ \theta$,
in which case conditions (3) and (4) imply,
\begin{equation*}
\begin{split}
\kappa (\alpha,\beta,\theta) \, c(\sigma_n') + \frac{r_n}{2+\theta}&  \le \kappa (\alpha,\beta,\theta) \biggl (1+\frac{\varepsilon}{4} \biggr) \, c^* + \frac{\varepsilon}{8} \, c^* \\
& \le \Biggl ( \biggl (1+\frac{\varepsilon}{2}\biggr)\biggl(1+\frac{\varepsilon}{4} \biggr ) + \frac{\varepsilon}{8}\Biggl )c^*  \le (1+\varepsilon)c^*.
\end{split}
\end{equation*}
Therefore, 
\begin{equation}
\label{opt1}
\begin{split}
\p{c_n > (1+\varepsilon)c^*} & = 1 - \p{c_n \le (1+\varepsilon)c^*}  \leq 1 - \p{c_n \le \kappa(\alpha,\beta,\theta) \, c(\sigma_n') + \frac{r_n}{2+\theta}}.
\end{split}
\end{equation}

Define the sequence of balls $B_{n,1}, \dots, B_{n,M_n} \subseteq \mathcal{X}_{\text{free}}$ parameterized by $\theta$ as follows. For $m=1$ we define
$
 B_{n,1} := B\biggl(\sigma_n(\tau_{n,1});\,  \frac{r_n}{2+\theta}\biggr), \quad \text{with }  \tau_{n,1} = 0
 $.
For $m = 2, 3, \ldots$, let
$$
\Gamma_m = \biggl\{\tau \in (\tau_{n,m-1}, 1): \|\sigma_n(\tau) - \sigma_n(\tau_{n,m-1})\| = \frac{\theta r_n}{2+\theta}\biggr\};
$$ if $\Gamma_m \neq \emptyset$ we define
$ B_{n,m} := B\biggl (\sigma_n(\tau_{n,m}); \frac{r_n}{2+\theta}\biggr ),  \quad \text{with }   \tau_{n,m} = \min_{\tau} \, \Gamma_m
$.
Let $M_n$ be the first $m$ such that $\Gamma_m = \emptyset$, then, $ B_{n,M_n} := B\biggl(\sigma_n'(1); \frac{r_n}{2(2+\theta)}\biggr)$, 
and we stop the process, i.e., $B_{n,M_n}$ is the last ball placed
along the path $\sigma_n$ (note that the center of the last ball is
$\sigma_n'(1)$). Considering the construction of $\sigma_n'$ and
condition (1) above, we conclude that $B_{n,M_n} \subseteq
\mathcal{X}_{\text{goal}}$. See Figure \ref{fig:param}
  for an illustration of this construction.

\begin{figure}
  \centering
  \includegraphics[width=120mm]{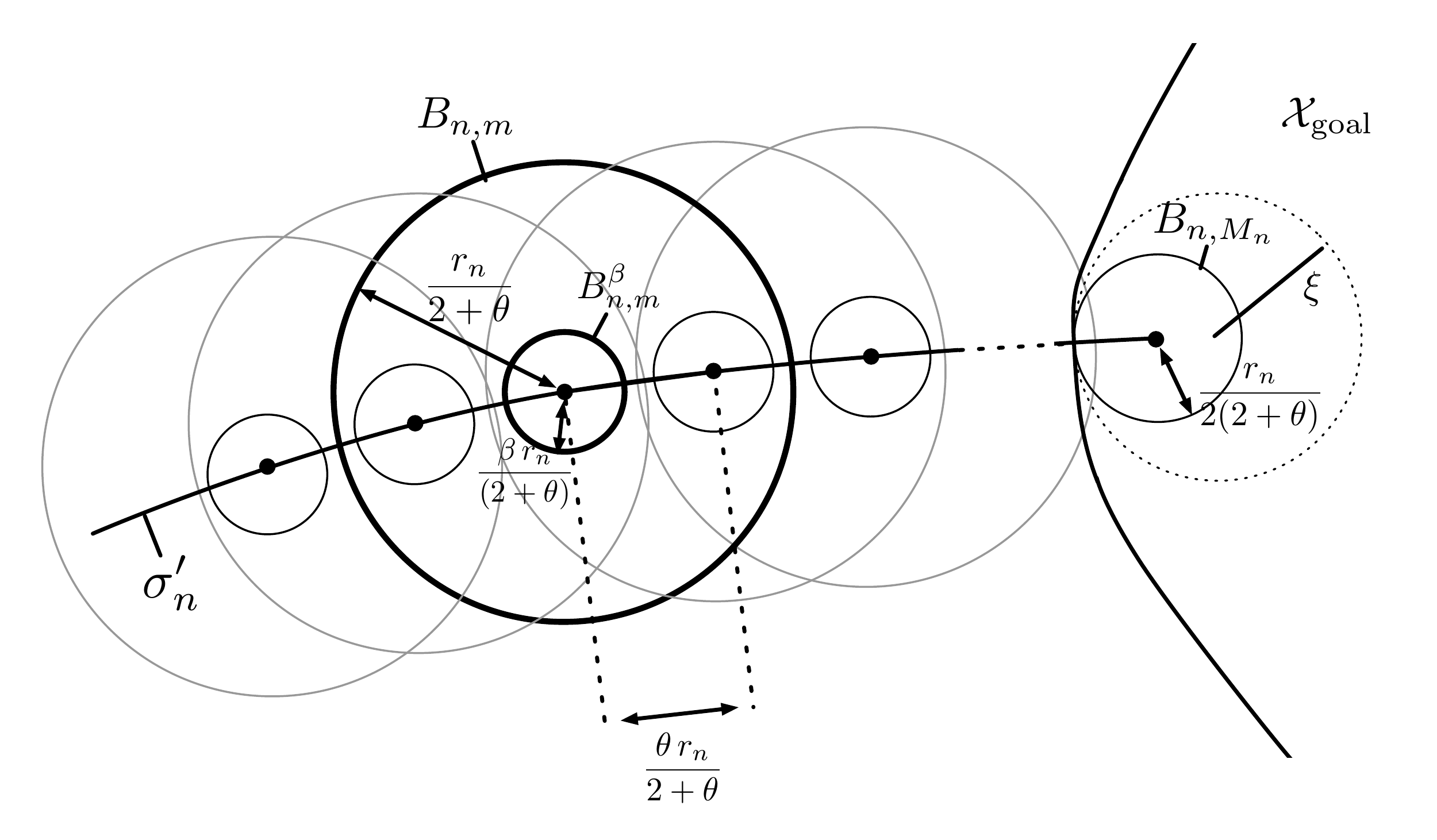}
  \caption{An illustration of the covering balls $B_{n,m}$ and
    associated smaller balls $B^{\beta}_{n,m}$. The figure also
    illustrates the role of $\xi$ in $\mathcal{X}_{\text{goal}}$ and
    the construction of $B_{n,M_n}$. Note that $\theta$ (the
    ratio of the separation of the centers of the $B_{n,m}$ to their
    radii) is depicted here as being around 2/3 for demonstration
    purposes only, as the proof requires $\theta < 1/4$.}
  \label{fig:param}
\end{figure}

Recall that $V$ is the set of samples available to algorithm \FMT (see line 1 in Algorithm \ref{prtalg}). We define the event
$
A_{n,\theta} := \bigcap_{m = 1}^{M_n} \{B_{n,m} \cap V \neq \emptyset\}$;
$A_{n,\theta}$ is the event that each ball contains at least one (not necessarily unique) sample in $V$. For clarity, we made the event's dependence on $\theta$, due to the dependence on $\theta$ of the balls, explicit.  Further, for all $m\in \{1, \dots, M_n-1\}$, let $B^{\beta}_{n,m}$ be the ball with the same center as $B_{n,m}$ and radius $\frac{\beta r_n}{2+\theta}$, where $0 \leq \beta \leq 1$, and let $K^{\beta}_n$ be the number of smaller balls $B^{\beta}_{n,m}$ not containing any of the samples in $V$, i.e.,
$K^{\beta}_n  := \card\{m \in \{1, \dots, M_n-1\}: B^{\beta}_{n,m}
\cap V = \emptyset\}$. We again point the reader to
  Figure \ref{fig:param} to see the $B^{\beta}_{n,m}$ depicted.

We now present three important lemmas{\color{black}; their} proofs can be found in
Appendix \ref{appA}.
\begin{lemma}[\FMT path quality]\label{lemma:claim_1} Under the assumptions of Theorem \ref{thrm:AO} and assuming $n \ge n_0$, the following inequality holds:
\[
\p{c_n \le \kappa(\alpha,\beta,\theta)\, c(\sigma_n') + \frac{r_n}{2+\theta}} \ge 1 \, - \, \mathbb{P}(K^{\beta}_n \ge \alpha (M_n-1)) - \mathbb{P}(A_{n,\theta}^c).
\]
\end{lemma}

\begin{lemma}[Tight approximation to most of the path]\label{lemma:claim_2} Under the assumptions of Theorem \ref{thrm:AO}, for all $\alpha\in (0,1)$ and $\beta \in (0,\theta/2)$, it holds that
\[
\lim_{n \rightarrow \infty} \mathbb{P}(K^{\beta}_n \ge \alpha (M_n-1)) = 0. 
\]
\end{lemma}

\begin{lemma}[Loose approximation to all of the path]\label{lemma:claim_3} Under the assumptions of Theorem \ref{thrm:AO}, 
assume that 
\[
r_n = \gamma \, \biggl (\frac{\log n}{n} \biggr)^{1/d},
\]
where 
\[\gamma = (1+\eta) \cdot 2 \,  \biggl(\frac{1}{d} \biggr)^{1/d}  \biggl(\frac{\mu (\mathcal{X}_{ \text{free}})}{\zeta_d}\biggr )^{1/d},
\]
and $\eta > 0$. Then for all $\theta < 2\eta$,
$ \lim_{n \rightarrow \infty} \mathbb{P}(A_{n,\theta}^c) = 0$. 
\end{lemma}

Essentially, Lemma \ref{lemma:claim_1} provides a lower bound for the
{\color{black} arc length} of the solution delivered by \FMT in terms of the probabilities that the ``big" balls and ``small" balls do not contain samples in $V$. Lemma \ref{lemma:claim_2} states that the probability that the fraction of small balls not containing samples in $V$ is larger than an $\alpha$ fraction of the total number of balls is asymptotically zero. Finally, Lemma \ref{lemma:claim_3} states that the probability that at least one ``big" ball does not contain any of the samples in $V$ is asymptotically zero.

The asymptotic optimality claim of the theorem then follows easily. Let $\varepsilon \in (0,1)$ and pick  $\theta \in (0,\min\{2\eta,1/4\})$ and $\alpha, \beta \in (0,\theta \varepsilon/8) \subset (0,\theta/2)$. From equation (\ref{opt1}) and Lemma \ref{lemma:claim_1}, {\color{black}we} can write
\[
\lim_{n \rightarrow \infty} \p{c_n > (1+\varepsilon)c^*} \leq \lim_{n \rightarrow \infty} \p{K^{\beta}_n \ge \alpha (M_n-1)} + \lim_{n \rightarrow \infty}\p{A_{n,\theta}^c}.
\]
The right-hand side of this equation equals zero by Lemmas \ref{lemma:claim_2} and \ref{lemma:claim_3}, and the claim is proven.  The case with general $\varepsilon$ follows by monotonicity in $\varepsilon$ of the above probability.
\end{proof}

\begin{remark}[Application of Theorem \ref{thrm:AO} to \PRMstar\!]
Since the solution returned by \FMT is never better than the one
returned by \PRMstar on a given set of nodes, the exact same result
holds for \PRMstar\!.  Note that this proof uses a $\gamma$ which is a
factor of $(d+1)^{1/d}$ smaller{\color{black},} and thus a $r_n$ which is
$(d+1)^{1/d}$ smaller{\color{black},} than that in
\cite{Karaman.Frazzoli:IJRR2011}. Since the number of cost
computations and collision-checks scales approximately as
$r_n^d$, this factor should reduce run time substantially for a given
number of nodes, especially in high dimensions.  This reduction is due to the difference in definitions of AO mentioned earlier which, again, makes no practical difference for \PRMstar or \FMT.
\end{remark}

\subsection{Convergence Rate}\label{subsec:rate_bound}
In this  section we provide a convergence rate bound for \FMT  (and
thus also for \PRMstar\!), \emph{assuming no obstacles}. As far as the
authors are aware, this bound is the first such
convergence rate result for an optimal sampling-based motion planning
algorithm and \edit{represents} an important step towards understanding the behavior
of this class of algorithms. The proof is deferred to Appendix \ref{appD}.

\begin{theorem}[Convergence rate of \FMT\!] 
\label{thrm:convrate}
Let the configuration space be $[0,1]^d$ with no obstacles and the
goal region be $[0,1]^d \cap B(\vec{1}; \xi)$, where $\vec{1} = (1,1,
\ldots, 1)$. Taking $x_{\text{init}}$ to be the center of the configuration space, the
shortest path has length $c^* = \sqrt{d}/2 - \xi$ and has clearance
$\delta = \xi \sqrt{(d-1)/d}$. Denote the {\color{black}arc length} of the path
returned by \FMT with $n$ samples {\color{black}as $c_n$}. For \FMT run using the radius given by
equation~\eqref{radiusprt}, namely,
\begin{equation*}
r_n = (1+\eta) \,2 \, \biggl(\frac{1}{d}\biggr)^{1/d} \biggl(\frac{\mu(\mathcal{X}_{\text{free}})}{\zeta_d} \biggr)^{1/d} \biggl(\frac{\log(n)}{n}\biggr)^{1/d},
\end{equation*}
for all $\varepsilon>0$, we have the following convergence rate bounds,
\begin{equation}\label{eq:rate}
\mathbb{P}(c_n > (1+\varepsilon)c^*) \in \left\{\begin{array}{lcl}
     O\left(\left(\log(n)\right)^{-\frac{1}{d}}n^{\frac{1}{d}\left(1-\left(1+\eta\right)^d\right)+\rho}\right)
    & \text{ if } & \eta \le \frac{2}{(2^d-1)^{1/d}} - 1,\\
    O\left(n^{-\frac{1}{d}\left(\frac{1+\eta}{2}\right)^d+\rho}\right)
    & \text{ if } & \eta > \frac{2}{(2^d-1)^{1/d}} - 1, \\
    \end{array} \right.
\end{equation}
as $n \rightarrow \infty$, where $\rho$ is an arbitrarily
small positive constant.

\end{theorem}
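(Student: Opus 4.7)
The plan is to follow the decomposition framework from the proof of Theorem~\ref{thrm:AO} but, instead of merely showing that the two error terms go to zero (as Lemmas \ref{lemma:claim_2} and \ref{lemma:claim_3} do), track the rates at which they decay and optimize over the free parameters. Lemma \ref{lemma:claim_1} gives, for parameters $\alpha, \beta, \theta$ satisfying the constraints imposed in Theorem \ref{thrm:AO} and for $n$ sufficiently large,
\[
\p{c_n > (1+\varepsilon)c^*} \;\le\; \p{K^{\beta}_n \ge \alpha(M_n-1)} \;+\; \p{A_{n,\theta}^c},
\]
so the task reduces to quantitatively bounding the right-hand side. I would first verify the stated geometric facts: the straight line from $(\tfrac12,\ldots,\tfrac12)$ to the nearest point of $\partial B(\vec{1};\xi)$ has length $c^* = \sqrt{d}/2-\xi$, and the problem is $\delta$-robustly feasible with the stated $\delta$, so that the hypotheses of Theorem \ref{thrm:AO} apply.

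For the big-ball term, a union bound gives $\p{A_{n,\theta}^c} \le M_n (1-p_n)^n$, where $p_n = \zeta_d (r_n/(2+\theta))^d / \mu(\xfree)$ is the probability that a single uniform sample lands in one covering ball. Substituting the formula for $r_n$ yields $np_n = (1+\eta)^d (2/(2+\theta))^d \log(n)/d$, hence $(1-p_n)^n \le n^{-(1+\eta)^d(2/(2+\theta))^d/d}$; combined with $M_n = \Theta((n/\log n)^{1/d})$,
\[
\p{A_{n,\theta}^c} = O\!\left((\log n)^{-1/d}\, n^{1/d - (1+\eta)^d(2/(2+\theta))^d/d}\right).
\]
For the small-ball term, Markov's inequality gives $\p{K^{\beta}_n \ge \alpha(M_n-1)} \le (1-q_n)^n/\alpha$ with $q_n = \beta^d p_n$, yielding
\[
\p{K^{\beta}_n \ge \alpha(M_n-1)} = O\!\left(n^{-\beta^d (1+\eta)^d (2/(2+\theta))^d / d}\right).
\]

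The final step is a joint optimization of $\alpha, \beta, \theta$. Taking $\theta$ arbitrarily small lets $(2/(2+\theta))^d \to 1$, with the resulting perturbation absorbed by the $\rho$ slack in the theorem. The two competing exponents then become $\tfrac{1}{d}(1-(1+\eta)^d)$ (big-ball) and $-\beta^d(1+\eta)^d/d$ (small-ball), with $\beta$ pushed as close to $\tfrac12$ as possible (the limiting value from the disjoint-ball requirement $\beta < \theta/2$ used in Lemma \ref{lemma:claim_2}). Equating these two exponents at $\beta = \tfrac12$ gives $(1+\eta)^d(1 - 2^{-d}) = 1$, i.e., $\eta = 2/(2^d-1)^{1/d} - 1$, which matches the case split in \eqref{eq:rate} exactly: below this threshold the big-ball term dominates, yielding the first case; above it, the small-ball term dominates, yielding $n^{-((1+\eta)/2)^d/d + \rho}$.

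The main obstacle I expect is the tension in parameter selection: the small-ball exponent is optimized by $\beta$ near $\tfrac12$, but Lemma \ref{lemma:claim_1}'s cost-overhead factor $\kappa = 1 + 2(\alpha+\beta)/\theta$ must stay close to $1$ for the bound $\kappa c(\sigma_n') + r_n/(2+\theta) \le (1+\varepsilon)c^*$ to hold, which forces $\beta/\theta$ small and, via $\beta < \theta\varepsilon/8$, ties $\beta$ to $\theta$. Threading this needle to recover an $\varepsilon$-independent exponent likely requires either sharpening Lemma \ref{lemma:claim_1}'s overhead estimate by exploiting the near-straight geometry of $\sigma_n'$ (whose overhead per segment is quadratic, rather than linear, in $\beta/\theta$), or choosing $\theta$ tending to its supremum and $\beta \uparrow \theta/2$ slowly with $n$, absorbing the residual slack into $\rho$. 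This parameter balancing is where I expect most of the technical work in the formal proof to lie.
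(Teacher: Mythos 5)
Your decomposition via Lemma~\ref{lemma:claim_1} is the right starting point, and your treatment of the first part of the big-ball bound and the exponent algebra leading to the threshold $\eta = 2/(2^d-1)^{1/d}-1$ are correct. However, there are three related errors that compound, and the coincidence that you still land on the right final answer should not be mistaken for a correct derivation.

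First, your bound $\p{A_{n,\theta}^c}\le M_n(1-p_n)^n$ drops the last covering ball $B_{n,M_n}$, whose radius is $\tfrac{r_n}{2(2+\theta)}$---\emph{half} the radius of the other $M_n-1$ balls---because it must fit inside the goal region (see equation~\eqref{ballintersections}). The correct union bound is
\[
\p{A_{n,\theta}^c}\;\le\;(M_n-1)(1-p_n)^n \;+\; \bigl(1-p_n/2^d\bigr)^n,
\]
and it is this \emph{second} term, $n^{-D/2^d}$ in the paper's notation, that produces the second case of the dichotomy in equation~\eqref{eq:rate}. The $2^d$ in the threshold is literally the ratio of ball volumes, not a manifestation of $\beta\to\tfrac12$.

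Second, your Markov bound $\p{K^\beta_n\ge\alpha(M_n-1)}\le(1-q_n)^n/\alpha$ is too weak to close the argument. Since Lemma~\ref{lemma:claim_1} forces $\beta<\theta\varepsilon/8<1/32$, the resulting exponent $\beta^d D$ is tiny and this term would in fact \emph{dominate} the big-ball terms, destroying the rate. Pushing $\beta\uparrow\tfrac12$ violates $\beta<\theta/2<1/8$, and trading off $\theta\uparrow 1$ to rescue $\beta$ degrades $(2/(2+\theta))^d$ in the big-ball exponent. The paper instead uses the binomial/Poisson tail bound (via the Poissonization in the proof of Lemma~\ref{lemma:claim_2} and the Penrose tail estimates), which gives
\[
\p{K^\beta_n\ge\alpha(M_n-1)} \;\lesssim\; e^{-\tfrac{\alpha}{2}(M_n-1)\bigl(\log(\alpha(M_n-1))+c\log n\bigr)},
\]
and since $M_n-1\sim(n/\log n)^{1/d}$, this is \emph{superpolynomially} small in $n$. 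It is therefore asymptotically negligible compared to both big-ball terms regardless of how small $\alpha,\beta$ are, which dissolves the tension you identify at the end. So the small-ball term never participates in the case split at all; the dichotomy arises entirely from comparing the two pieces of $\p{A_{n,\theta}^c}$. Your proposed resolutions (sharpening the $\kappa$ overhead estimate, or slowly growing $\beta$) are not needed and would not suffice; the actual fix is the sharper concentration bound on $K^\beta_n$.
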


In agreement with the common opinion about sampling-based motion
planning algorithms, our convergence rate bound converges to zero
slowly, especially in high dimensions. Although the rate is slow, 
it scales as a power of $n$ rather than, say, logarithmically. We have not, however, studied
how tight the bound is---studying this rate is a potential
area for future work. As expected, the rate of convergence increases
as $\eta$ increases. However, increasing $\eta$ increases the amount
of computation per sample, hence, to optimize convergence rate
\emph{with respect to time} one needs to properly balance these two
competing effects. Note that  if {\color{black}we} select $\eta = 1$, from equation
\eqref{eq:rate} {\color{black}we} obtain a remarkably simple form for the rate,
namely $O(n^{-1/d+\rho})$, which holds for \PRMstar as well (we recall
that for a given number of samples the solution returned by \PRMstar
is not worse than the one returned by \FMT using the same connection radius). Note that the rate of
convergence to a \emph{feasible} (as opposed to optimal) solution for
PRM and RRT is known to be exponential
\citep{LK.MK.ea:96,LaValle.ea:IJRR01}; unsurprisingly, our bound for
converging to an \emph{optimal} solution decreases more
slowly{\color{black}, as it is not exponential}.

We emphasize that our bound does not have a constant
multiplying the rate that {\color{black}approaches infinity} as the arbitrarily small parameter (in
our case $\rho$) approaches zero. In fact, the asymptotic constant
multiplying the rate is 1, {\color{black}independent of} the value of
$\rho$, but the earliest $n$ at which that rate holds
approaches $\infty$ as $\rho \rightarrow 0$. Furthermore, although our
bound reflects the asymptotically dominant term (see
equation~\eqref{longrate} in the proof), there are two other terms which may
contribute substantially or even dominate for finite $n$.

It is also of interest to bound $\mathbb{P}(c_n > (1+\varepsilon)c^*)$ by an
  asymptotic expression in $\varepsilon$, but unfortunately this
  cannot be gleaned from our results, since the closed-form bound we
  use in the proof (see again equation~\eqref{longrate}) only holds
  for $n \ge n_0$, and $n_0 \stackrel{\varepsilon \rightarrow 0}{\longrightarrow}
  \infty$. Therefore fixing $n$ and sending $\varepsilon \rightarrow
  0$ just causes this bound to return 1 on a set $(0,
  \varepsilon_0(n))$, which tells us nothing about the rate
  at which the true probability approaches 1 as $\varepsilon \rightarrow 0$.

\subsection{Computational Complexity}\label{subsec:complexity}

The following theorem, {\color{black}proved} in Appendix \ref{appB},
characterizes the computational complexity of \FMT with respect to the
number of samples. It shows that  \FMT \edit{requires $O(n\log(n))$ operations} in
expectation, the same as \PRMstar and \RRTstar\!. It also highlights the computational savings of \FMT
over \PRMstar\!, since in expectation \FMT checks for edge collisions just
$O(n)$ times, while \PRMstar does so $O(n\log(n))$ times. Ultimately, the most relevant \edit{complexity measure} is how long it takes
for an algorithm to return a solution of a certain quality. \edit{This measure}, partially characterized in Section \ref{subsec:rate_bound}, will be studied numerically in Section \ref{sec:sims}.

\begin{theorem}[Computational complexity of \FMT]\label{thrm:CC}
Consider a path planning problem $(\mathcal{X}_{\text{free}},
x_{\text{init}}, \mathcal{X}_{\text{goal}})$ and a set of samples $V$ in
$\mathcal{X}_{\text{free}}$ of cardinality $n$, and fix 
$$r_n = \gamma \, 
\biggl(\frac{\log(n)}{n}\biggr)^{1/d},$$ 
for some positive constant
$\gamma$. In expectation, \FMT takes $O(n\log(n))$ time to compute a solution on
$n$ samples, and in doing so, makes $O(n)$ calls to
$\emph{\texttt{CollisionFree}}$ (again in expectation). \FMT also takes
$O(n\log(n))$ space in expectation.
\end{theorem}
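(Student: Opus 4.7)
The plan is to prove the three bounds (time, collision-checks, space) in expectation by separately accounting for each class of operation \FMT performs. The starting point is the uniform near-neighbor estimate: conditioning on any sample $v \in V$ leaves the remaining $n-1$ samples i.i.d.\ uniform on $\xfree$, so
\[
\mathbb{E}[|N_v|] \le (n-1)\,\frac{\zeta_d\, r_n^d}{\mu(\xfree)} = O(\log n),
\]
and the binomial second moment gives $\mathbb{E}[|N_v|^2] = O(\log^2 n)$. Summing, $\sum_{v \in V} |N_v|$ has expectation $O(n \log n)$.

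For the time bound, equip $V$ with a spatial index such as a $k$-d tree built in $O(n \log n)$ time, so that each \texttt{Near} query on $v$ runs in time $O(\log n + |N_v|)$. The \texttt{Save} mechanism guarantees \texttt{Near} is invoked at most once per sample, contributing $O(n \log n)$ expected work in total. The binary min-heap on $\Hset$ sees at most $n$ insertions and $n$ extract-mins at $O(\log n)$ each, for another $O(n \log n)$. If $\Hset$ and $\Wset$ are backed by parallel hash sets, the intersections $N_z \cap \Wset$ and $N_x \cap \Hset$, and each $\argmin$, take time $O(|N_z|)$ or $O(|N_x|)$ respectively. Summing over iterations, the total inner work is $O\bigl(\sum_v p(v)\,|N_v|\bigr)$, where $p(v)$ counts the number of times $v$ appears in some $X_{\text{near}}$. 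In the obstacle-free case Theorem \ref{thrm:sp} forces $p(v) = 1$; in the general case, repeat processings occur only for samples whose earlier connection attempts were blocked, which by the discussion at the end of Section \ref{subsec:basic} and by Lemma \ref{considerations} form a vanishing fraction of $V$ concentrated within $r_n$ of $\xobs$. A probabilistic bound on this contribution closes the time estimate at $O(n \log n)$.

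For the $O(n)$ collision-check bound, one uses that \FMT calls \texttt{CollisionFree} on exactly one pair $(y_{\min}, x)$ per processing of $x$, and that results are cached so no unordered pair is ever tested twice. Successful connections account for at most $n$ calls (one per edge added to the tree). The failed calls correspond to blocked processings, which can arise only for samples within $r_n$ of $\xobs$; bounding the expected cardinality of this set via Lemma \ref{considerations} and the expected number of reprocessings per such sample shows the failed calls total $O(n)$ in expectation. The space bound follows immediately: the cached neighbor sets occupy $\sum_v |N_v| = O(n \log n)$ space in expectation, dominating the $O(n)$ used by the heap, hash sets, and edge set.

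The main obstacle is sharpening the easy $O(n \log n)$ collision-check bound to the claimed $O(n)$. This sharpening rests on the laziness of \FMT---only the single candidate $y_{\min}$ is ever tested per processing, never every $y \in Y_{\text{near}}$---together with the geometric fact that repeated processings are confined to the thin $r_n$-tube around $\partial \xobs$. Quantifying the expected number of samples in that tube, and the expected amount of repeat work per sample there, is the most delicate step and calls for probabilistic covering arguments similar in spirit to those used in the proof of Theorem \ref{thrm:AO}.
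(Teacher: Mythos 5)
Your decomposition matches the paper's structure closely — the $O(\log n)$ expected neighbor count, the once-per-sample \texttt{Near} computation via caching, the $O(n\log n)$ heap and intersection costs, the observation that \texttt{CollisionFree} is called exactly once per consideration of a sample $x$, and that reconsiderations are confined to samples within $r_n$ of $\partial\xobs$. But you defer exactly the step that carries the theorem's weight: establishing that the expected total number of considerations is $O(n)$. You cite Lemma~\ref{considerations} for this, which is circular — that lemma is stated and proved \emph{inside} the paper's proof of Theorem~\ref{thrm:CC} and is precisely the claim that needs establishing — and your closing paragraph concedes as much by flagging the tube estimate as "the most delicate step," to be handled by unspecified "probabilistic covering arguments."

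The paper closes this gap with a computation much more elementary than the covering arguments you anticipate. First, a sample $x$ can be considered at most once per distinct $z$ for which $x \in N_z$ (each sample is extracted as $z$ at most once), so the number of considerations of $x$ is bounded by $\card N_x$, which is $O(\log n)$ in expectation. Second, as $n \to \infty$ the expected number of samples lying within $r_n$ of $\xobs$ is asymptotically $n \cdot S_{\text{obs}} \cdot r_n$, the sample density times the volume of the thin $r_n$-shell around obstacles of fixed surface area $S_{\text{obs}}$. Multiplying gives $O\bigl(n\,(\log n / n)^{1/d}\log n\bigr) = O\bigl((\log n)^{1+1/d}n^{1-1/d}\bigr)$, which is $O(n)$ since $d \ge 2$; adding the at-most-$n$ first considerations of obstacle-clear samples completes the count. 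No covering-ball machinery from the proof of Theorem~\ref{thrm:AO} is needed — the bound is a direct surface-area-times-shell-thickness estimate, and you should supply it rather than gesture toward the AO proof.
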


\section{Extensions}\label{sec:extension}
This section presents three extensions to the setup
considered in the previous section, namely, (1) non-uniform sampling
strategies, (2) general cost functions instead of arc length, and
(3) a version of \FMT\!, named \kFMT\!, in which connections are sought to $k$ nearest-neighbor nodes, rather than to nodes within a given distance. 

For all three cases we discuss the changes needed to the baseline \FMT
algorithm presented in Algorithm \ref{prtalg} and then argue how \FMT
with these changes retains AO in Appendices
\ref{ao:nonunif}--\ref{ao:knn}{\color{black}. I}n the interest of brevity, we will
only discuss the required modifications to existing
  theorems, rather than proving
everything from scratch.

\subsection{Non-Uniform Sampling Strategies}
\label{subsec:nonunif}

\subsubsection{Overview}

Sampling nodes from a non-uniform
distribution can greatly help planning algorithms by incorporating outside
knowledge of the optimal path into the algorithm itself
\citep{Hsu.et.al:IJRR06}. (Of course if
no outside knowledge exists, the uniform distribution may be a natural
choice.) Specifically, we consider the setup whereby
$\texttt{SampleFree}(n)$ returns $n$ points sampled independently and
identically from a probability density function $\varphi$ supported
over $\xfree$. We assume that $\varphi$ is bounded below by a strictly positive
number $\ell$. This lower bound on $\varphi$ allows us to make a
connection between sampling from a non-uniform distribution and
sampling from a uniform distribution, for which the proof of AO
already exists (Theorem \ref{thrm:AO}). This argument is worked
through in Appendix \ref{ao:nonunif} to show that \FMT with
non-uniform sampling is AO.

\subsubsection{Changes to \FMT Implementation}

The only change that needs to be made to \edit{\FMT} is to multiply $r_n$ by $(1/\ell)^{1/d}$.
 
\subsection{General Costs}\label{subsec:cost}

Another extension of interest is when the cost function
  is not as simple as arc length. We may, for instance, want to
  consider some regions as more costly to move through than others, or
  a cost that weights/treats movement along different dimensions
  differently. In the following subsections, we explain how \FMT can be extended to
  other metric costs, as well as line integral costs, and why its AO still holds.

Broadly speaking, the main change that needs to happen to \FMT\!'s
implementation is that it needs to consider \emph{cost} instead of
Euclidean distance when searching for nearby points. For metric costs
besides Euclidean cost (Section \ref{subsubsec:metric}), a few
adjustments to the constants are all that is needed in order to ensure
AO. This is because the proof of AO in Theorem \ref{thrm:AO}
relies on the cost being additive and obeying the triangle
inequality. The same can be said for line integral costs \emph{if}
\FMT is changed to search along and connect points by cost-optimal
paths instead of straight lines (Section
\ref{subsubsec:internalSlow}). Since such an algorithm may be hard to
implement in practice, we lastly show in Section
\ref{subsubsec:internalFast} that by making some Lipschitz assumptions
on the cost function, {\color{black}we} get an approximate triangle inequality for
straight-line, cost-weighted connections. We
present an argument for why this approximation is sufficiently good to ensure that the
suboptimality introduced in how parent nodes are chosen and in the
edges themselves goes to zero asymptotically, and thus that AO is
retained. All arguments for AO in this subsection are deferred to
Appendix \ref{ao:cost}.

\subsubsection{Metric Costs}\label{subsubsec:metric}

\noindent{\bf Overview}:
Consider as cost function any metric on $\mathcal X$, denoted by
$\dist: \mathcal X \times \mathcal X \to \reals$. If the distance
between points in $\mathcal X$ is measured according to $\dist$, the
\FMT algorithm requires very minor modifications, namely just a
modified version of the $\texttt{Near}$ function. Generalized metric
costs allow one to account for, e.g., different weightings on different
dimensions, or an angular dimension which wraps around at $2\pi$. 

\vspace{1 mm}
\noindent {\bf Changes to \FMT\!'s implementation}: Given two samples
$u,v \in V$, $\texttt{Cost}(u, v) = \dist(u,v)$. Accordingly, given a set
of samples $V$, a sample $v \in V$, and a positive number $r$,
$\texttt{Near}(V, v, r)$ returns the set of samples $\{u \in V :
\texttt{Cost}(u,v) < r\}$. We refer to such sets as \emph{cost
  balls}. Formally, everything else in Algorithm \ref{prtalg} stays
the same, except $\zeta_d$ in the definition of $r_n$ needs to be defined as the
Lebesgue measure of the unit cost-ball. 

\subsubsection{Line Integral Costs with Optimal-Path Connections}\label{subsubsec:internalSlow}
\noindent{\bf Overview}: In some planning problems the cost function may not be a metric, i.e., it may not obey the triangle
inequality. Specifically,  consider the setup where $f:\mathcal{X}
\to \reals$ is such that $0 < f_{\text{lower}} \le f(x) \le
f_{\text{upper}} < \infty$ for all $x \in \mathcal{X}$, and the cost of a path $\sigma$ is given by
\[
\int_{\sigma} \, f(s)\, ds.
\]
Note that in this setup a straight line is not generally the
lowest-cost connection between two samples $u,v\in \mathcal X$. \FMT\!, however,
relies on straight lines in two ways: adjacent nodes in the \FMT tree are
connected with a straight line, and two samples are considered to be
within $r$ of one another if the straight line connecting them has
cost less than $r$. In this section we consider a version of \FMT whereby two adjacent nodes in the \FMT tree are
connected with the \emph{optimal} path between them, and two nodes are considered to
be within $r$ of one another if the \emph{optimal} path connecting
them has cost less than $r$. 

\vspace{1 mm}
\noindent{\bf Changes to \FMT\!'s implementation}: Given two nodes $u, v \in V$, 
\[
\texttt{Cost}(u, v) = \min_{\sigma^{\prime}} \, \int_{\sigma^{\prime}} \, f(s)\, ds,\]
where $\sigma^{\prime}$ denotes a path connecting $u$ and $v$. Given a
set of nodes $V$, a node $v \in V$, and a positive number $r$,
$\texttt{Near}(V, v, r)$ returns the set of nodes $\{u \in V :
\texttt{Cost}(u,v) < r\}$. Every time a node is added to a tree, its
cost-optimal connection to its parent is also stored. Lastly, the
definition of $r_n$ needs to be multiplied by a factor of $f_{\text{upper}}$.

\subsubsection{Line Integral Costs with Straight-Line Connections}\label{subsubsec:internalFast}

\noindent{\bf Overview}: Computing an optimal path for a line integral cost for every
connection, as considered in Section \ref{subsubsec:internalSlow}, may
represent an excessive computational bottleneck. Two strategies to
address this issue are (1) precompute such optimal paths {\color{black}since} their
computation does not require knowledge of the obstacle set{\color{black},} or (2)
approximate such paths with cost-weighted, straight line paths and
study the impact on AO. In this section we study the latter approach, and we argue that AO does indeed still hold, by appealing
to asymptotics to show that the triangle inequality approximately
holds, with this approximation going away as $n\rightarrow
\infty$. 


\vspace{1 mm}
\noindent{\bf Changes to \FMT\!'s implementation}: Given two samples $u, v \in V$, 
\[
\texttt{Cost}(u, v) = \int_{\overline{uv}} \, f(s)\, ds.
\]
Given a set of samples $V$, a sample $v \in V$, and a positive number
$r$, $\texttt{Near}(V, v, r)$ returns the set of samples $\{u \in V :
\texttt{Cost}(u,v) < r\}$.   Lastly, the definition of $r_n$ needs to again
be increased by a factor of $f_{\text{upper}}$.

\subsection{\FMT Using $k$-Nearest-Neighbors} \label{subsec:FMTkNN}
\subsubsection{Overview}

A last variant of interest is to have a version of \FMT which makes
connections based on $k$-nearest-neighbors instead of a fixed cost radius. \edit{This} variant{\color{black},} referred to as \kFMT\!{\color{black},} has the
advantage of being more adaptive to different obstacle spaces than its
cost-radius counterpart. This is because \FMT will consider about half as
many connections for a sample very near an obstacle
surface as for a sample far from obstacles, since about half the
measure of the obstacle-adjacent-sample's cost ball is inside the
obstacle. \kFMT\!, on the other hand, will consider $k$
connections for \emph{every} sample. To prove AO for \kFMT\! (in
Appendix \ref{ao:knn}), we will stray slightly
from our main proof exposition in this paper and use the
similarities between \FMT and \PRMstar to leverage a similar proof for
$k$-nearest \PRMstar from \citep{Karaman.Frazzoli:IJRR2011}. 

\subsubsection{Changes to \FMT\!'s Implementation}
Two parts need to change in Algorithm
\ref{prtalg}, both about how $\texttt{Near}$ works. The first is in lines \ref{line:Nz} and \ref{line:intersectW},
where $N_z$ should be all samples $v \in
V\setminus\{z\}$ such that \emph{both} $v$
is a $k_n$-nearest-neighbor of $z$ and $z$ is a $k_n$-nearest-neighbor
of $v$. We refer to this set as the \emph{mutual} $k_n$-nearest-neighbor set of
$z$. The second change is that in lines \ref{save_2_0} and \ref{line:intersect}, $N_x$ should be
the usual $k_n$-nearest-neighbor set of $x$, namely all samples $v \in
V\setminus\{x\}$ such that $v$ is a $k_n$-nearest-neighbor of $x$. Finally, $k_n$ should be chosen so that 
\begin{equation}\label{eqn:FMTkNNrbound}
k_n = k_0 \log(n), \qquad \text{where } k_0 > 3^d \, e\,(1+1/d).
\end{equation}
{\color{black} With these changes, \kFMT works by repeatedly applying
  Bellman's equation \eqref{eq:Bellman} over a $k$-nearest-neighbor
  graph, analogously to what  is done in  the disk-connected graph
  case (see Theorem \ref{thrm:sp}).} {\color{black}When we want to refer
  to the generic algorithm \kFMT using the specific sequence $k_n$,
  and we want to make this use explicit, we will say \knFMT\!.}

\section{Numerical Experiments and Discussion}\label{sec:sims}
In this section we numerically investigate the advantages of \FMT over previous AO sampling-based 
motion planning algorithms.  Specifically, we compare \FMT against \RRTstar and \PRMstar\!, as these two algorithms are state-of-the-art within the class of AO planners, span the main ideas (e.g., roadmaps versus trees) in the field of sampling-based planning, and have open-source, high quality implementations. We first
present  in Section \ref{subsec:simSet} a brief overview of the simulation setup. We then compare \FMT\!, \RRTstar\!, and \PRMstar in Section \ref{advantages}. Numerical experiments confirm our theoretical and heuristic arguments by showing that \FMT\!, for a given
execution time, returns substantially better solutions than  \RRTstar and
\PRMstar in a variety of problem settings. \FMT\!'s
main computational speedups come from performing fewer collision
checks---the more {\color{black}expensive} collision-checking is, the more \FMT will \edit{excel}.
Finally, in Section  \ref{subsec:FMTStudy}, we study in-depth \FMT and
its extensions (e.g., general costs). In particular, we provide
{\color{black}practical} guidelines about how to implement and tune \FMT\!. 

\begin{figure}[!t]
  \centering
    \subfigure[$\text{SE}(2)$ bug trap.]{\label{fig:bg2}
      \includegraphics[width=0.45\textwidth]{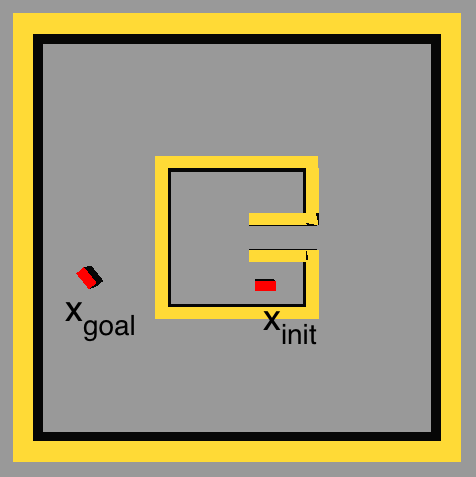}
    }
    \qquad
    \subfigure[ $\text{SE}(2)$ maze.]{\label{fig:s2m}
      \includegraphics[width=0.45\textwidth]{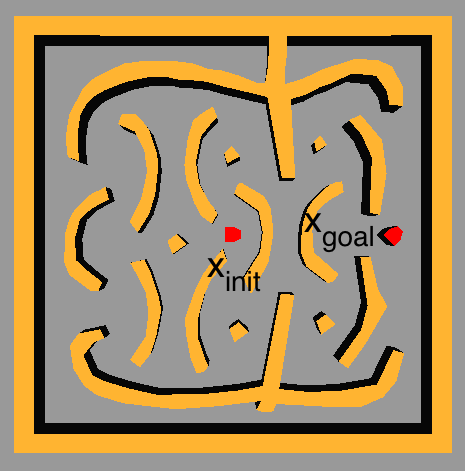}
    }
    \\
    \subfigure[ $\text{SE}(3)$ maze.]{\label{fig:s3m}
      \includegraphics[width=0.45\textwidth]{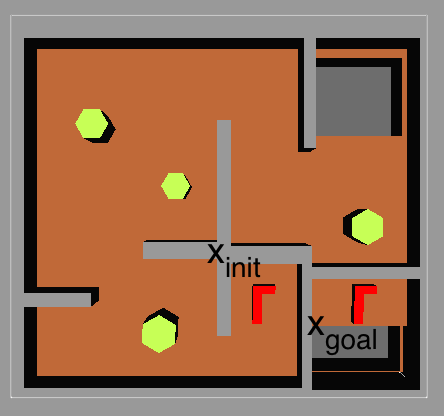}
    }
    \qquad
    \subfigure[$\text{SE}(3)$ Alpha puzzle.]{\label{fig:alphaP}
      \includegraphics[width=0.45\textwidth]{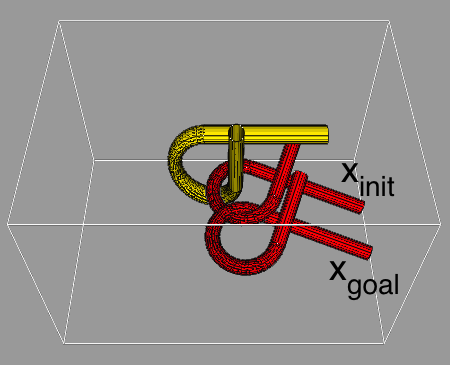}
    }
    \caption{Depictions of the OMPL.app $\text{SE}(2)$ and
            $\text{SE}(3)$ rigid body planning test problems. 
           }
\label{problempictures}
\end{figure}

\subsection{Simulation Setup}\label{subsec:simSet}
Simulations were written in a mix of C++ and Julia, and run using a Unix operating system with a
2.0 GHz processor and 8 GB of RAM. The C++ simulations were run through the Open Motion Planning Library (OMPL) \citep{Sucan.ea:RAM12}, from which the reference implementation of \RRTstar
was taken. We took the default values of \RRTstar parameters from OMPL (unless otherwise noted below), in particular a
steering parameter of 20\% of the maximum extent of the configuration
space, and a goal-bias probability of 5\%. Also, {\color{black} since
  the only OMPL implementation of \RRTstar is a}
$k$-nearest implementation, we adapted a $k$-nearest
version of \PRMstar and implemented a $k$-nearest version of \FMT\!,
both in OMPL{\color{black}; these are the versions used in
  Sections~\ref{subsec:simSet}--\ref{advantages}. In these
    two subsections, for notational simplicity, we will refer to the $k$-nearest versions of \FMT\!, \RRTstar\!, and \PRMstar simply as \FMT\!, \RRTstar\!, and \PRMstar\!, respectively.} The three algorithms were run on test problems drawn from the bank of standard
rigid body motion planning problems given in the OMPL.app graphical user interface. These problems, detailed below and depicted in Figure~\ref{problempictures}, are posed within the configuration spaces $\text{SE}(2)$ and $\text{SE}(3)$ which correspond to the kinematics (available translations and rotations) of a rigid body in 2D and 3D respectively. The dimension of the state space sampled by these planners is thus three in the case of $\text{SE}(2)$ problems, and six in the case of $\text{SE}(3)$ problems.

We chose the Julia programming language \citep{JB-SK-VS-AE:12} for the implementation of additional simulations because of its ease in accommodating the \FMT extensions studied in Section \ref{subsec:FMTStudy}.
We constructed experiments with a robot modeled as a union of hyperrectangles in high-dimensional Euclidean space moving amongst hyperrectangular obstacles. We note that for both simulation setups, \FMT\!, \RRTstar\!,
and \PRMstar used the \emph{exact same primitive routines}
(e.g., nearest-neighbor search, collision-checking, data handling,
etc.) to ensure a fair comparison. The choice of $k$ for the nearest-neighbor search phase of each of the planning algorithms is an important tuning parameter (discussed in detail for \FMT in Section \ref{subsubsec:rmtuning}). For the following simulations, unless otherwise noted, we used these coefficients for the nearest-neighbor count $k_n = k_0 \log(n)$: given a state space dimension $d$, for \RRTstar we used the OMPL default value $k_{0,\text{\RRTstar}} = e + e/d$, and for \FMT and \PRMstar we used the value $k_{0,\text{\FMT}} = k_{0,\text{\PRMstar}} = 2^d (e/d)$.
This latter coefficient differs from, and is indeed less than, the
lower bound in our mathematical guarantee of asymptotic optimality for
\kFMT\!, equation \eqref{eqn:FMTkNNrbound} (note that
$k_{0,\text{\RRTstar}}$ is also below the theoretical lower-bound presented
in \cite{Karaman.Frazzoli:IJRR2011}). We note, however, that for a fixed state space dimension $d$, the formula for $k_n$ differs only by a constant factor independent from the sample size $n$. 
Our choice of $k_{0,\text{\FMT}}$ in the experiments may be understood as a constant factor $e$ greater than the expected number of possible connections that would lie in an obstacle-free ball with radius specified by the lower bound in Theorem \ref{thrm:AO}, i.e., $\eta = e^{1/d} - 1 > 0$ in equation \eqref{radiusprt}. 
In practice we found that these coefficients for \RRTstar\!, \PRMstar\!, and \FMT worked well on the problem instances and sample size regimes of our experiments. Indeed, we note that the choice of $k_{0,\text{\RRTstar}}$\!, although taken directly from the OMPL reference implementation, stood up well against other values we tried when aiming to ensure a fair comparison. The implementation of \FMT and the
code used for algorithm comparison are available at: \url{http://www.stanford.edu/~pavone/code/fmt/}.

For each problem setup, we show a panel of six graphs. The first (top left) shows cost
versus time, with a point on the graph for each simulation run. These
simulations come in groups of 50, and within each group are run on the
same number of samples.
Note that this sample size is not necessarily the number of nodes in
the graph constructed by each algorithm; it indicates iteration count
in the case of \RRTstar\!, and free space sample count in the cases of
\FMT and \PRMstar\!. To be precise, \RRTstar
only keeps samples for which initial steering is collision-free. \PRMstar does use all of the sampled
points in constructing its roadmap, and while \FMT nominally constructs a tree as a subgraph of this roadmap, it may
terminate early if it finds a solution before all samples are considered.
There is also a line on the first plot tracing the
mean solution cost of \emph{successful} algorithm runs on a particular
sample count (1-standard-error of the mean error-bars are given
in both time and cost). Note that for a given algorithm, a group of simulations for a
given sample count is only plotted if it is at least 50\% successful
at finding a feasible solution. The plot below this one (middle left)
shows success rate as a function of time, with each point representing
a set of simulation{\color{black}s} grouped again by algorithm and node count. In this
plot, all sample counts are plotted for all algorithms, which is why the
curves may start farther to the left than those in the first plot. The
top right and middle right plots are the analogous plots to the first
two, but with sample count on the $x$-axis. Finally, the bottom left plot shows
execution time as a function of  sample count, and the bottom right plot shows
the number of collision-checks as a function of sample count. Note
that every plot shows vertical error bars, and horizontal error bars
where appropriate, {\color{black}of} length one standard-error of the mean,
although they are often too small to be distinguished from points.

\begin{figure}[!t]
  \centering
  \subfigure[]{
    \includegraphics[width=\figWidth\textwidth]{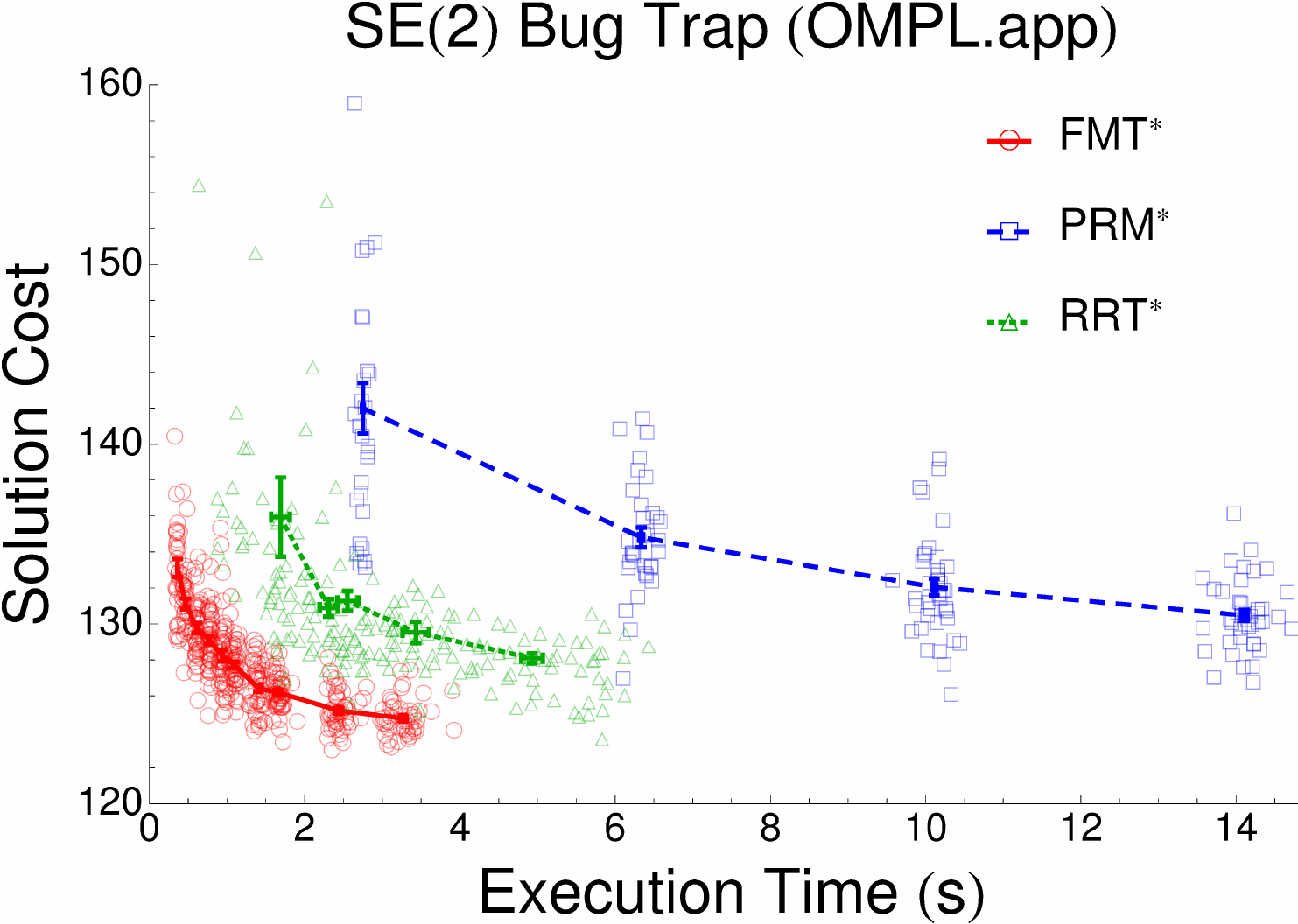}
  }
  \qquad \qquad
  \subfigure[]{
    \includegraphics[width=\figWidth\textwidth]{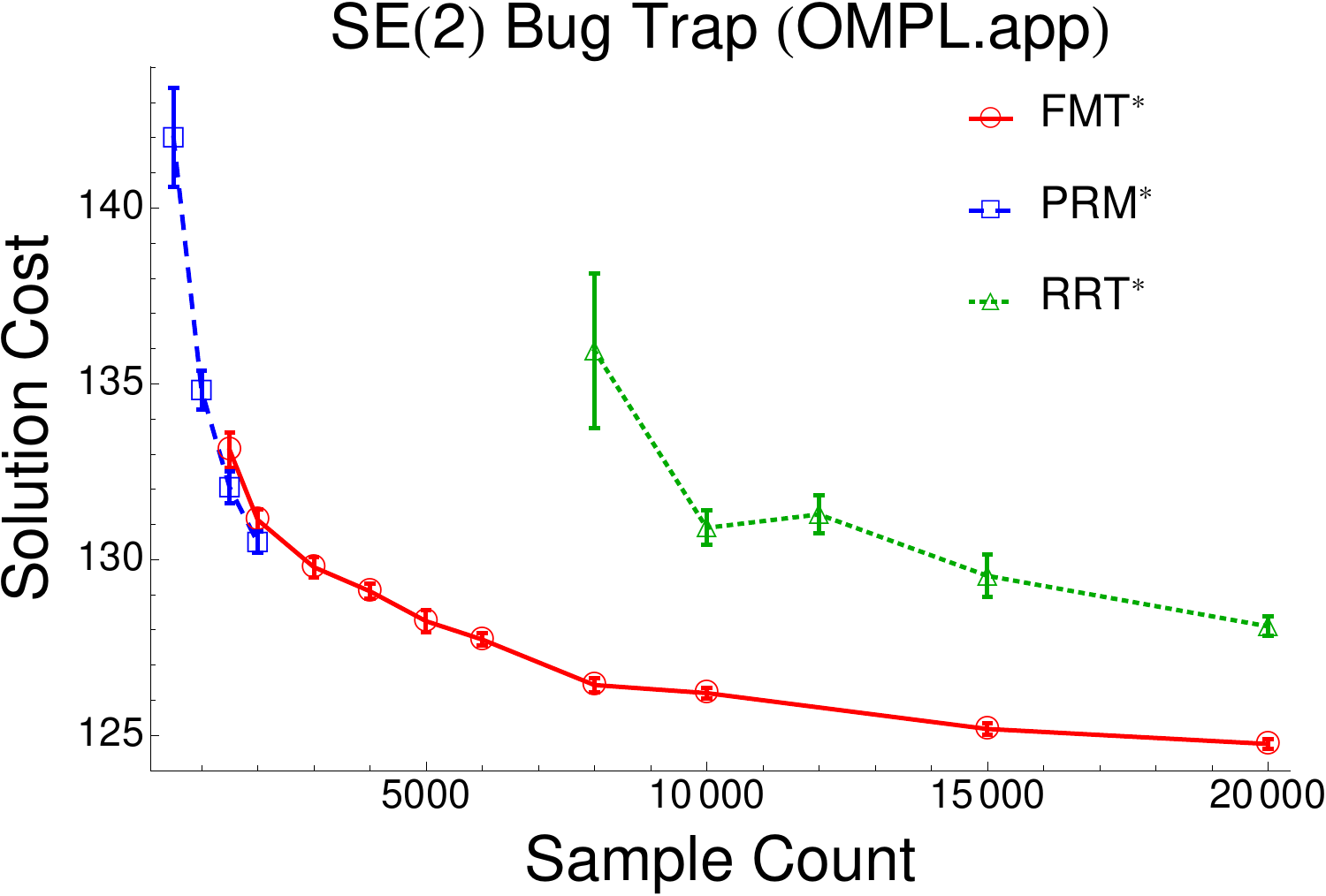}
  }
  \\
  \subfigure[]{
    \includegraphics[width=\figWidth\textwidth]{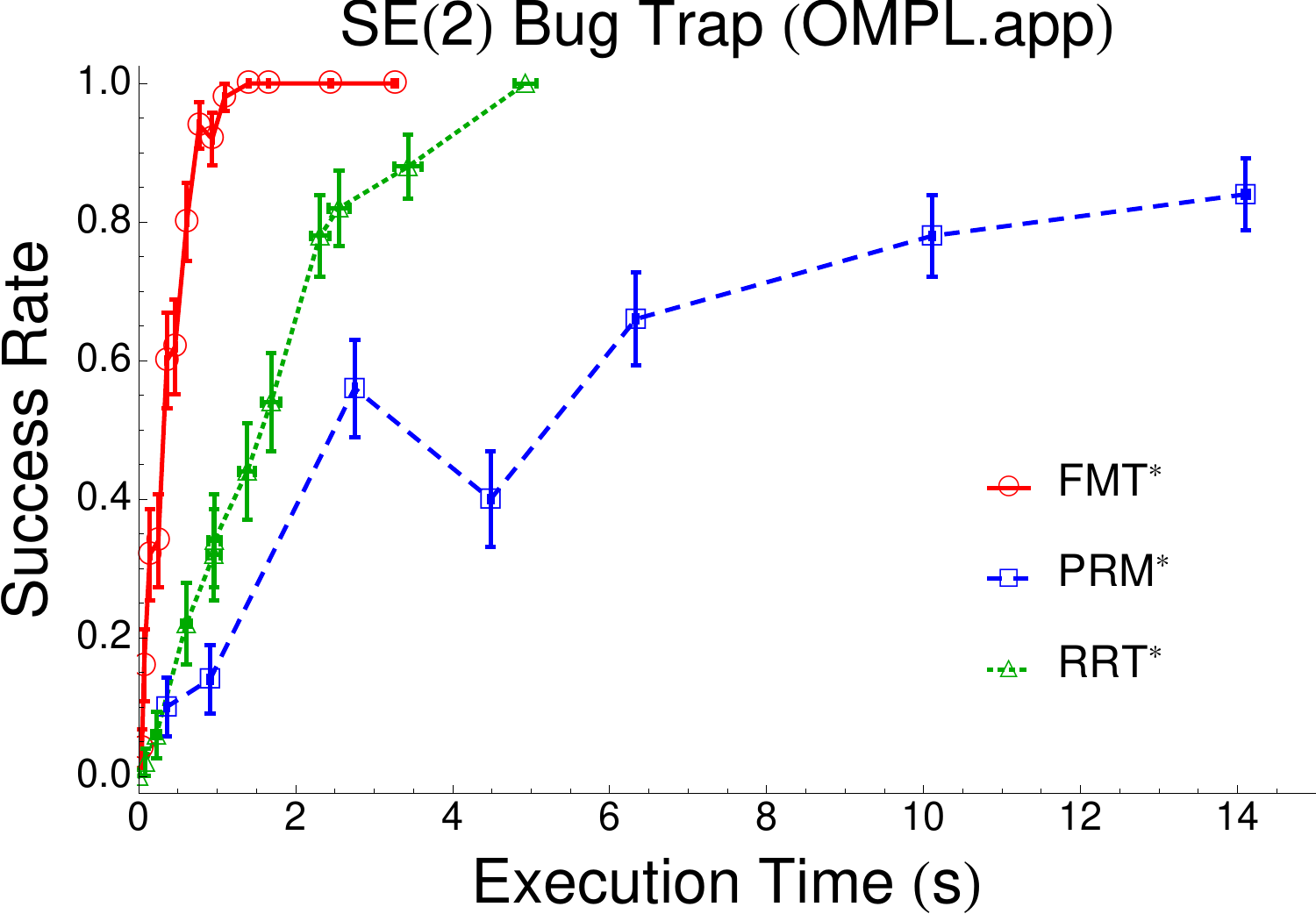}
  } 
  \qquad \qquad
  \subfigure[]{
    \includegraphics[width=\figWidth\textwidth]{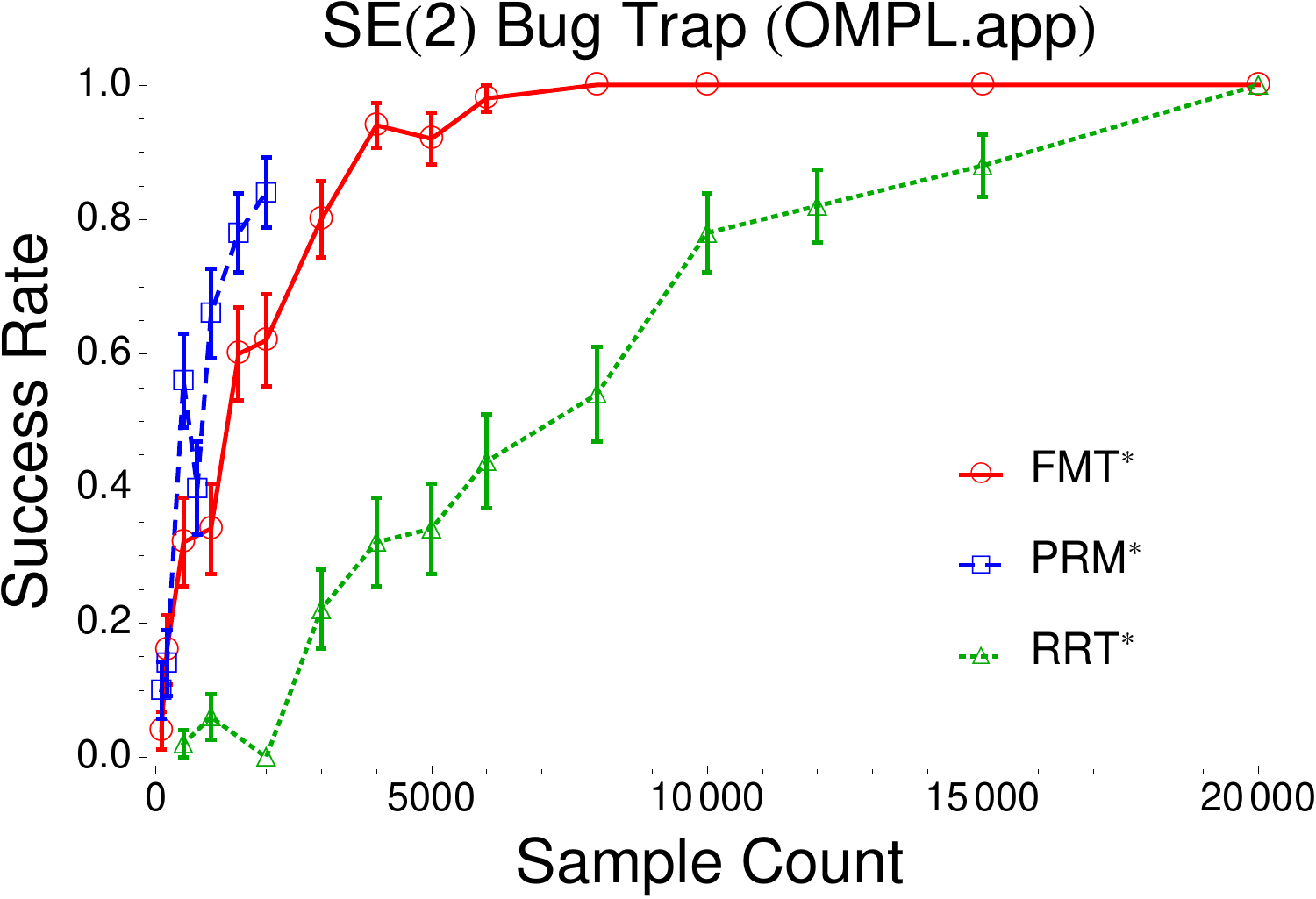}
  }
  \\
  \subfigure[]{
    \includegraphics[width=\figWidth\textwidth]{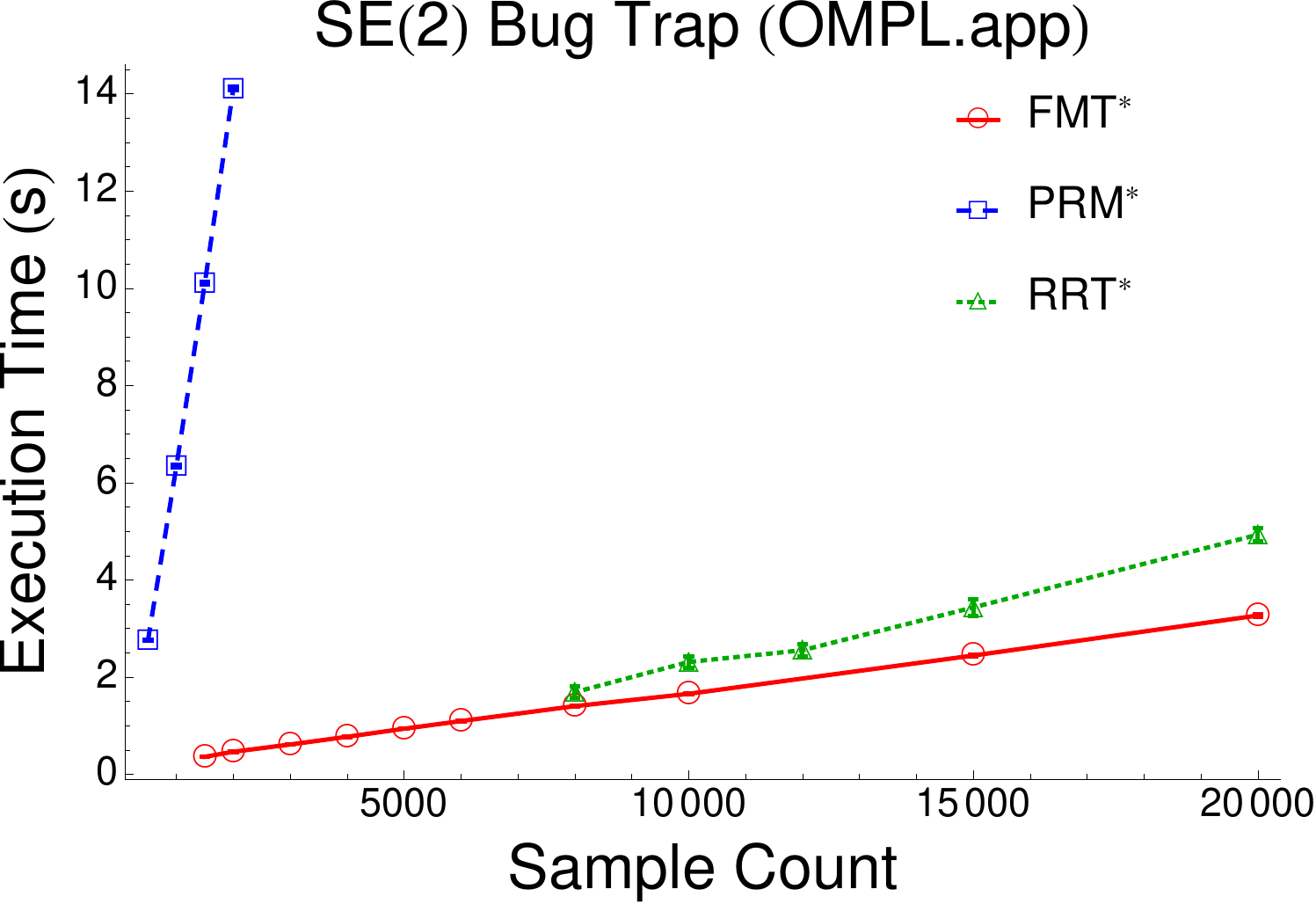}
  }
  \qquad \qquad
  \subfigure[]{
    \includegraphics[width=\figWidth\textwidth]{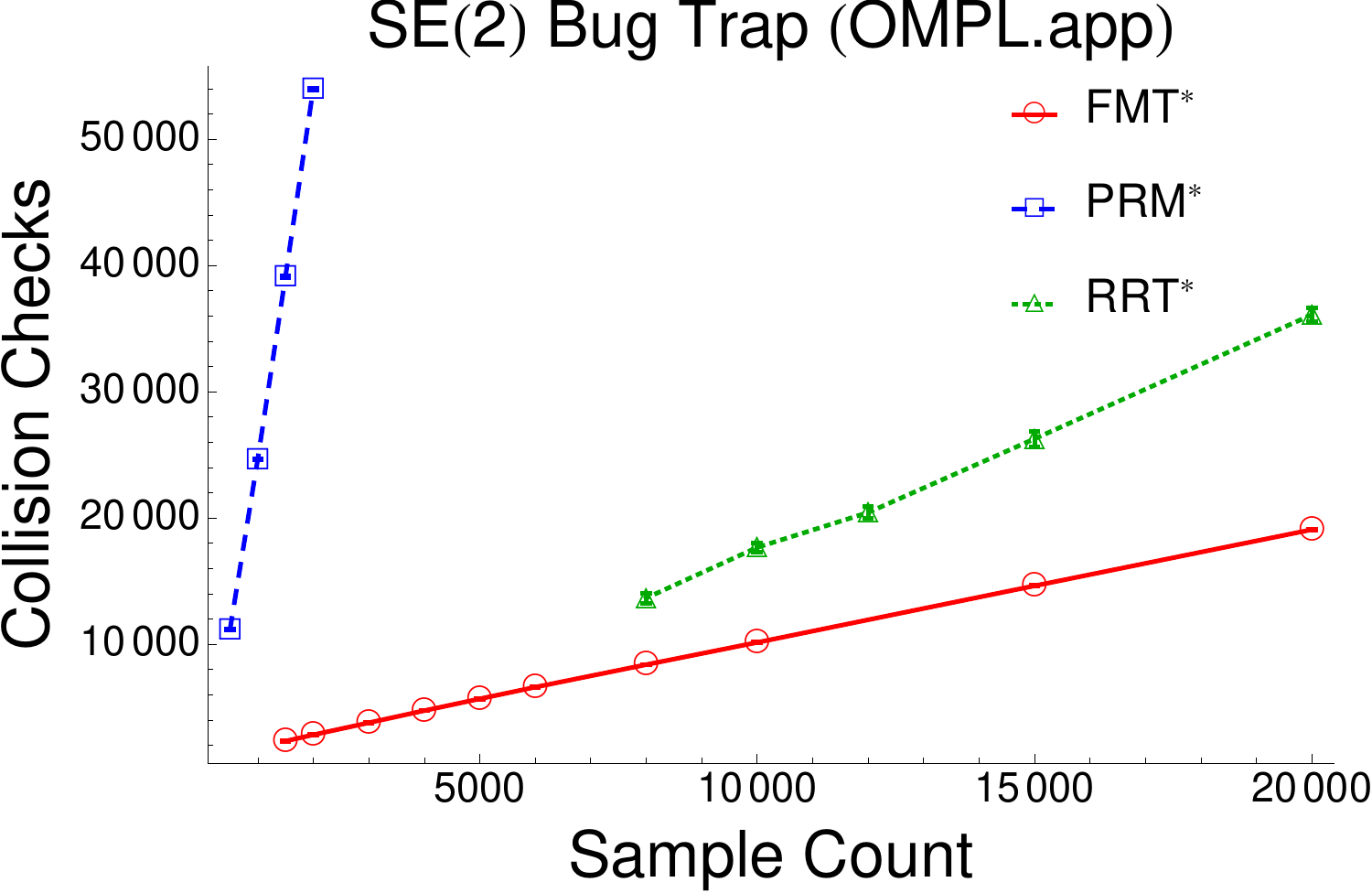}
  }
  \caption{Simulation results for a bug trap environment in 2D space.}
  \label{fig:se2bug}
\end{figure}

\begin{figure}[!t]
  \centering
  \subfigure[]{
    \includegraphics[width=\figWidth\textwidth]{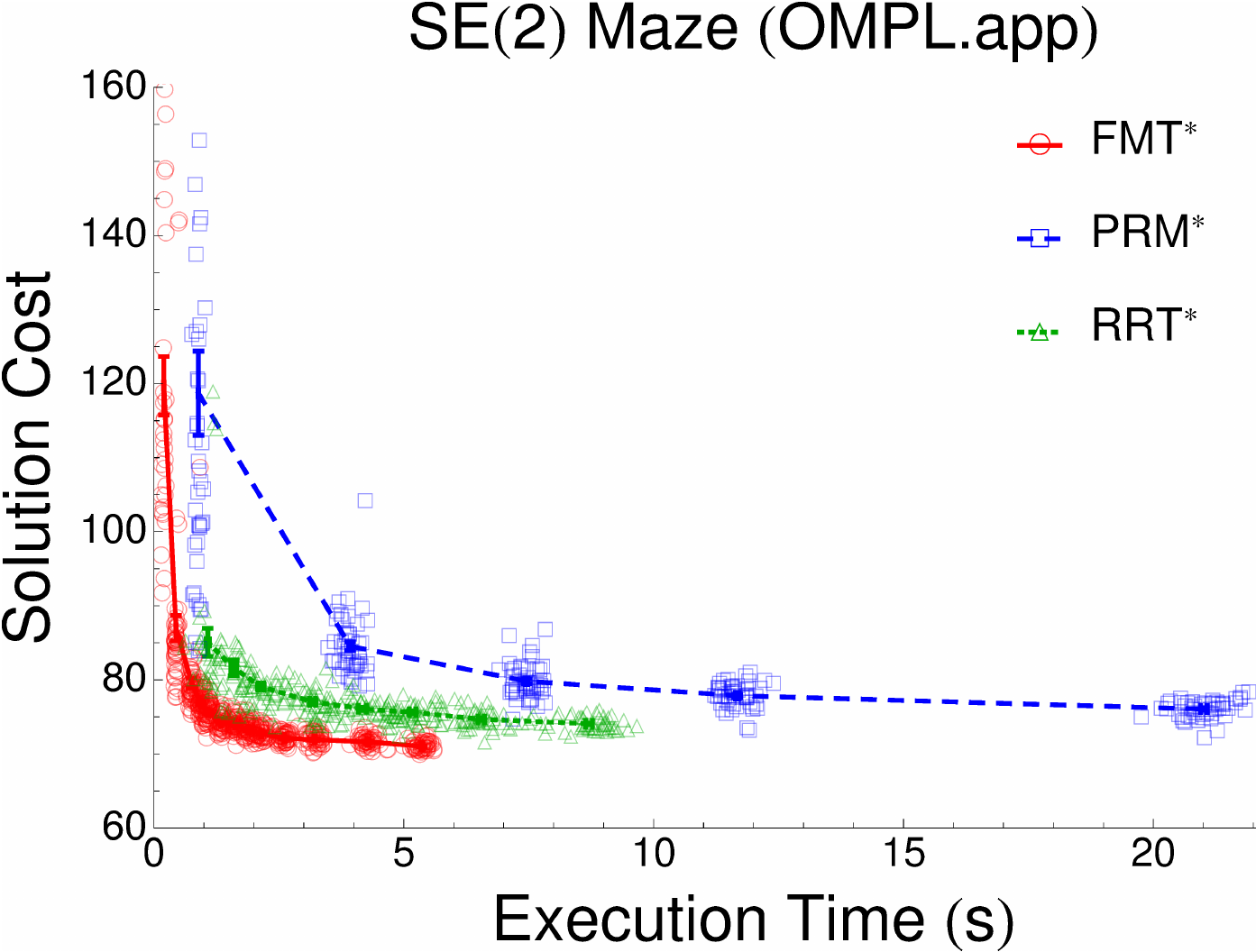}
  }
  \qquad \qquad
  \subfigure[]{
    \includegraphics[width=\figWidth\textwidth]{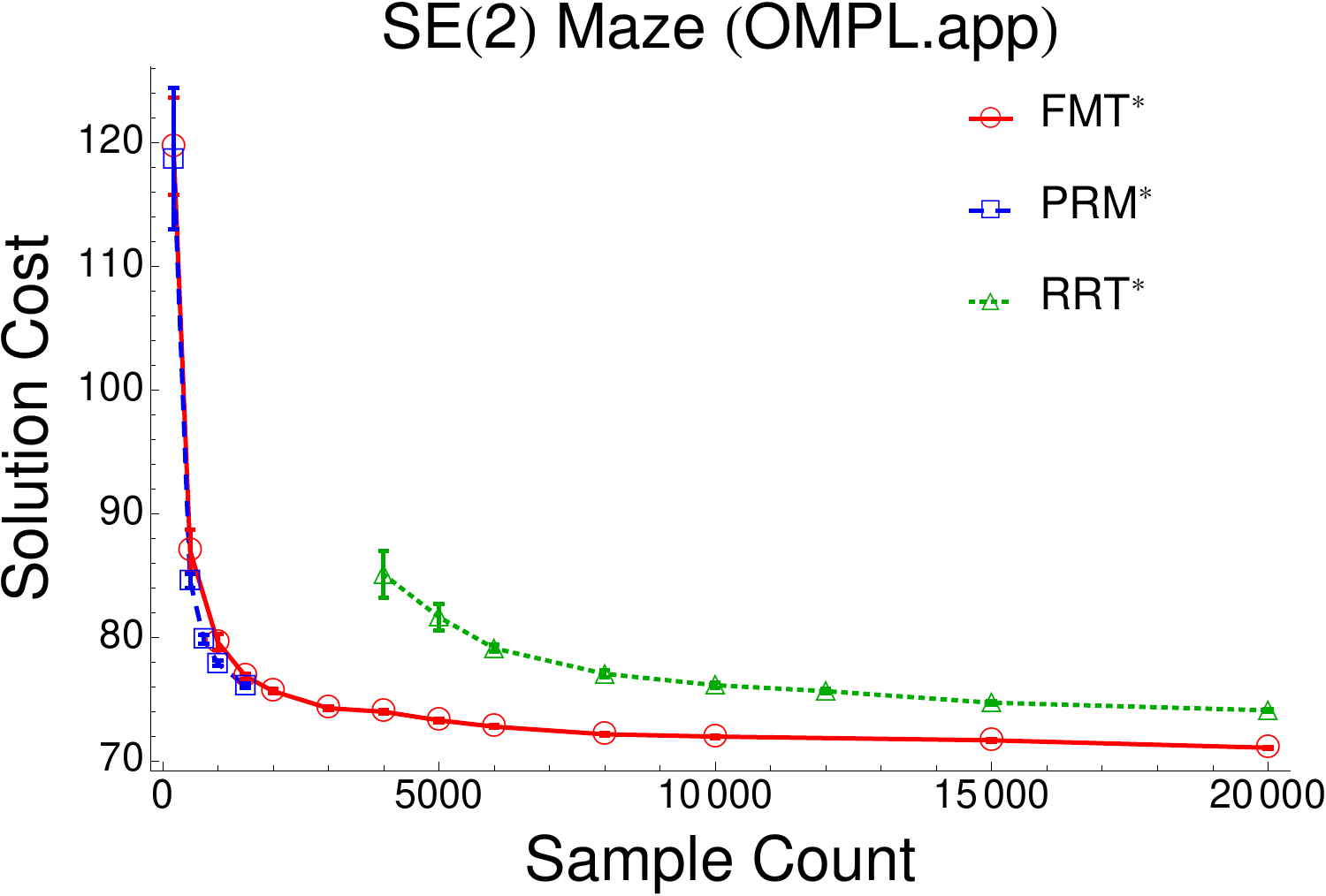}
  }
  \\
  \subfigure[]{
    \includegraphics[width=\figWidth\textwidth]{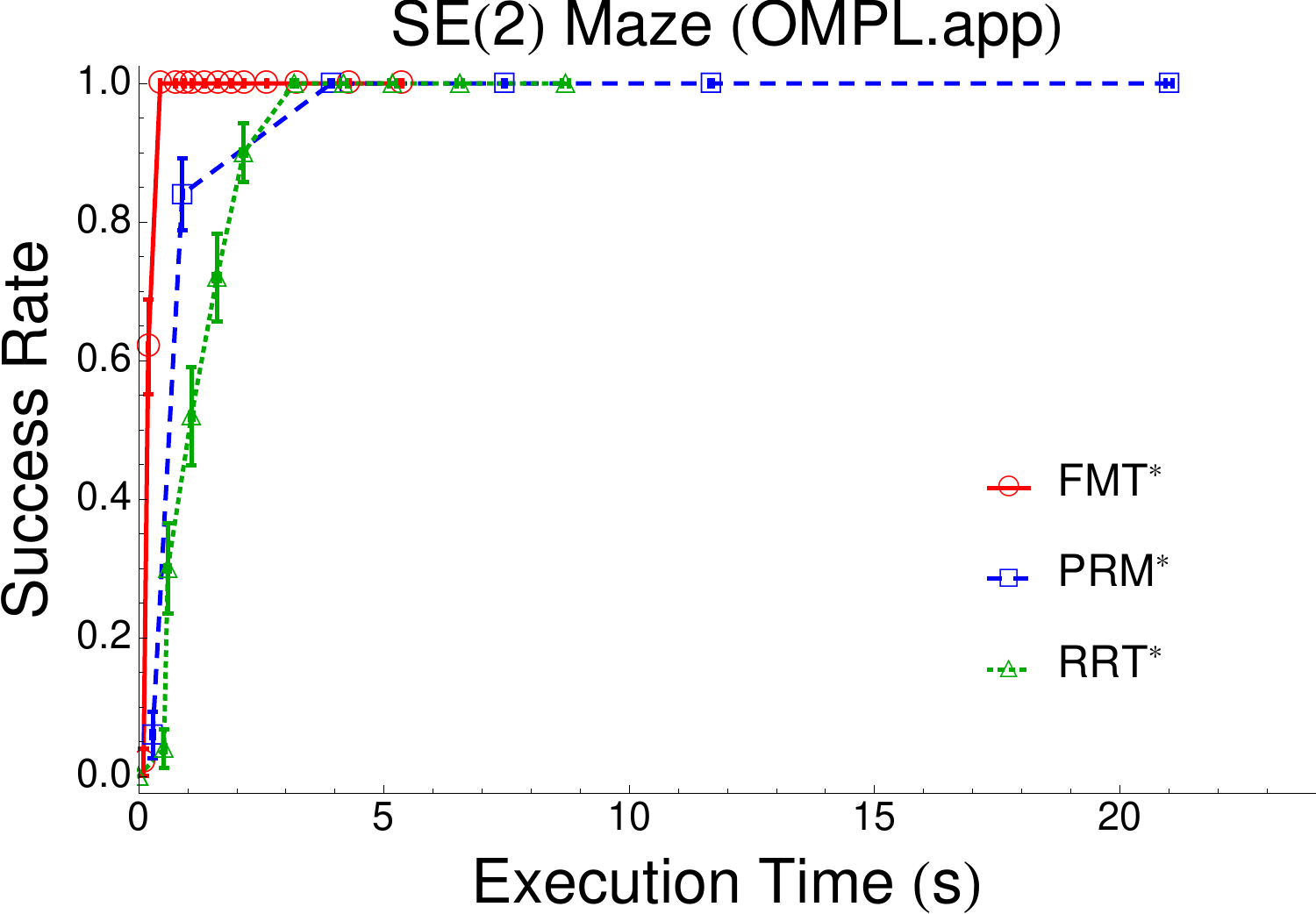}
  }
  \qquad \qquad
  \subfigure[]{
    \includegraphics[width=\figWidth\textwidth]{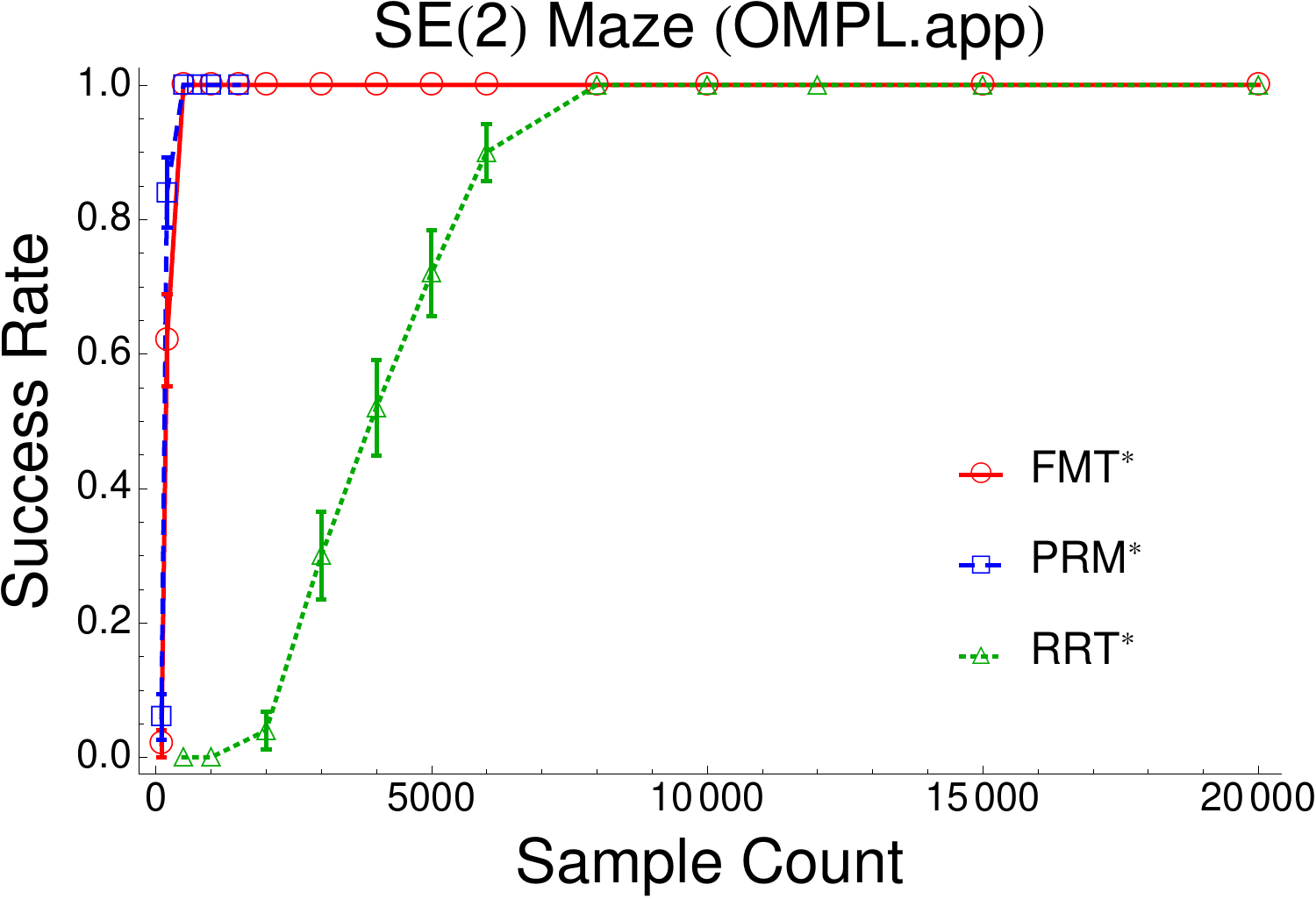}
  }
  \\
  \subfigure[]{
    \includegraphics[width=\figWidth\textwidth]{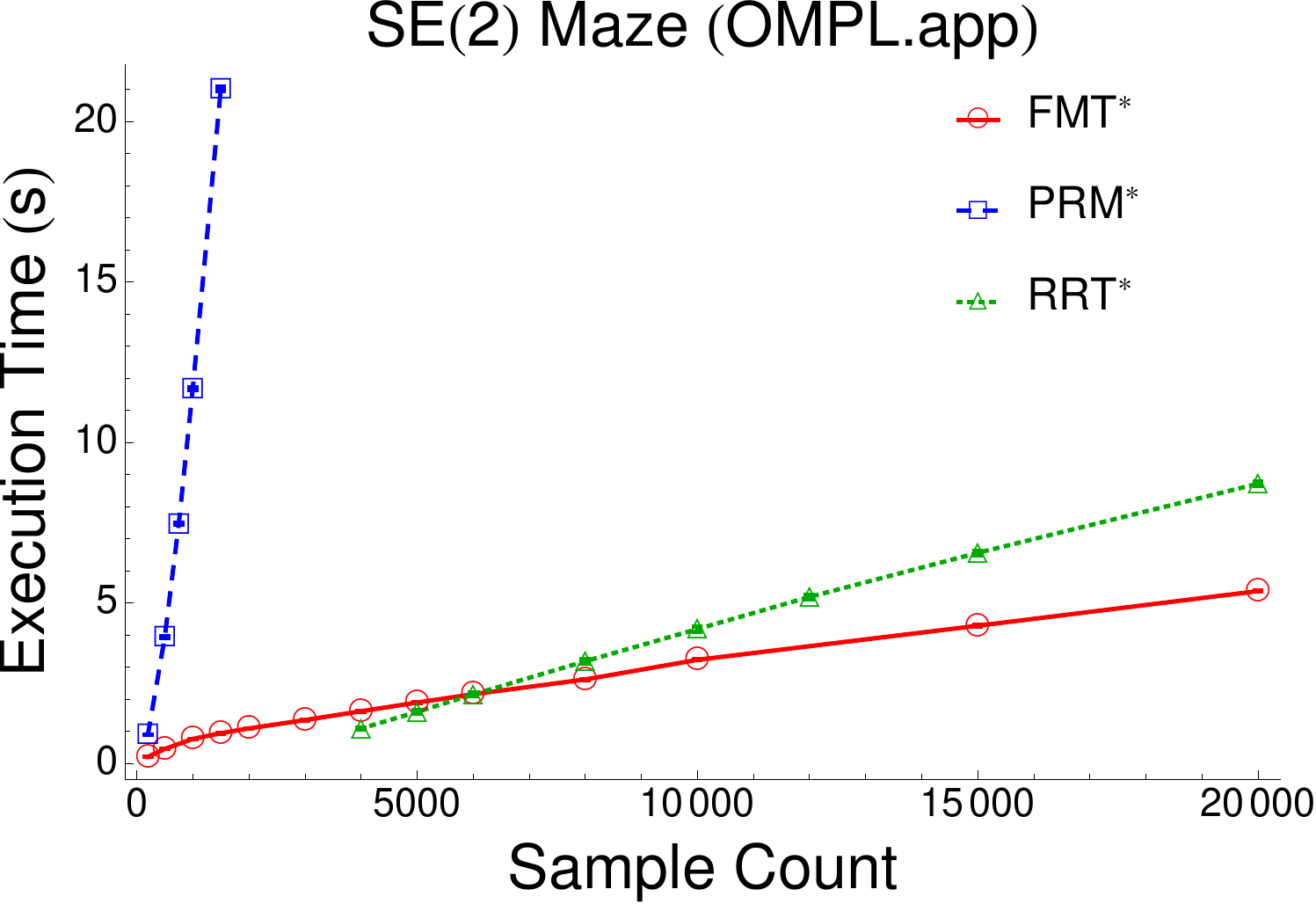}
  }
  \qquad \qquad
  \subfigure[]{
    \includegraphics[width=\figWidth\textwidth]{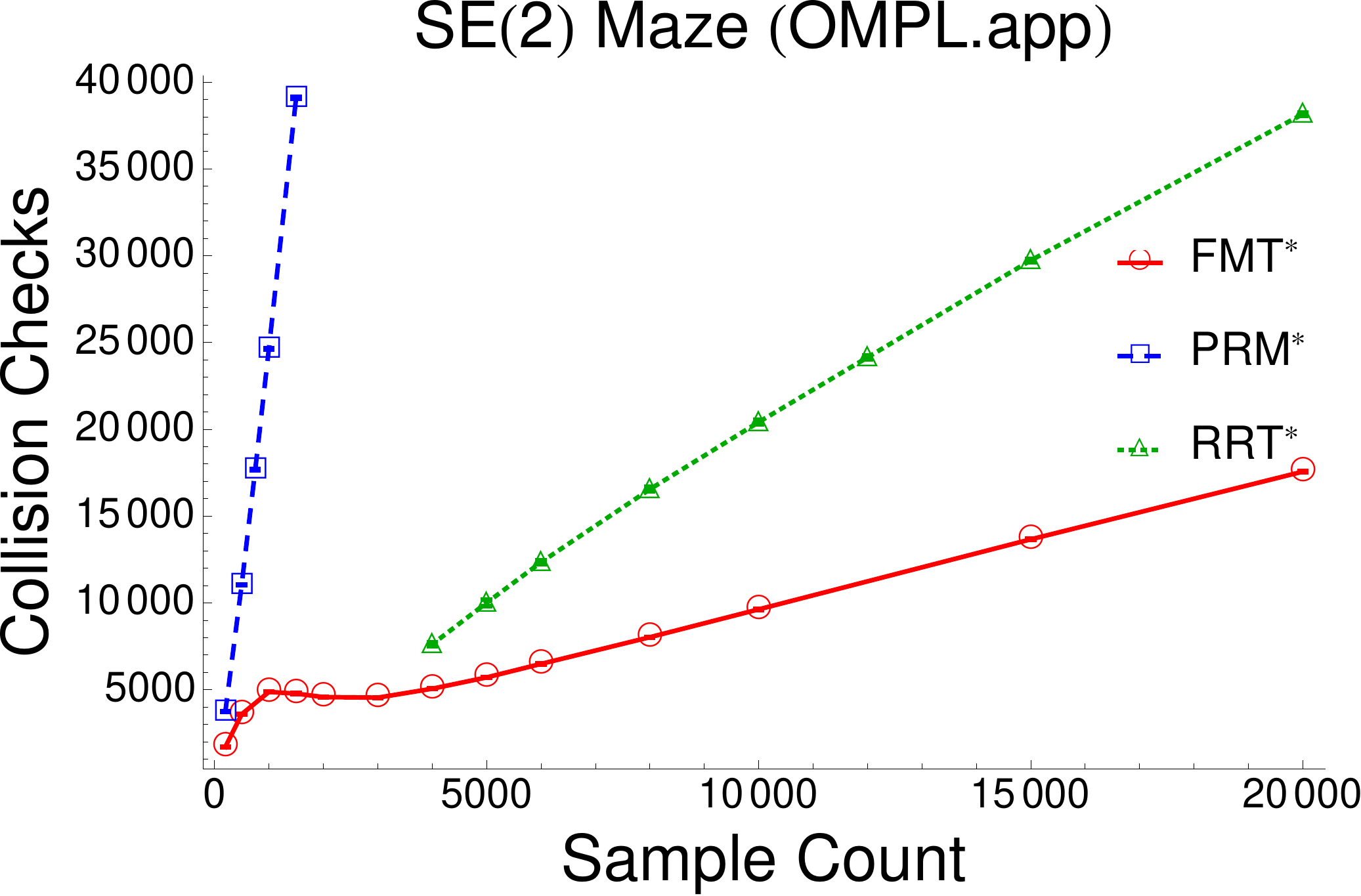}
  }
  \caption{Simulation results for a maze environment in 2D space.}
\label{fig:se2maze}
\end{figure}

\begin{figure}[!t]
  \centering
  \subfigure[]{
    \includegraphics[width=\figWidth\textwidth]{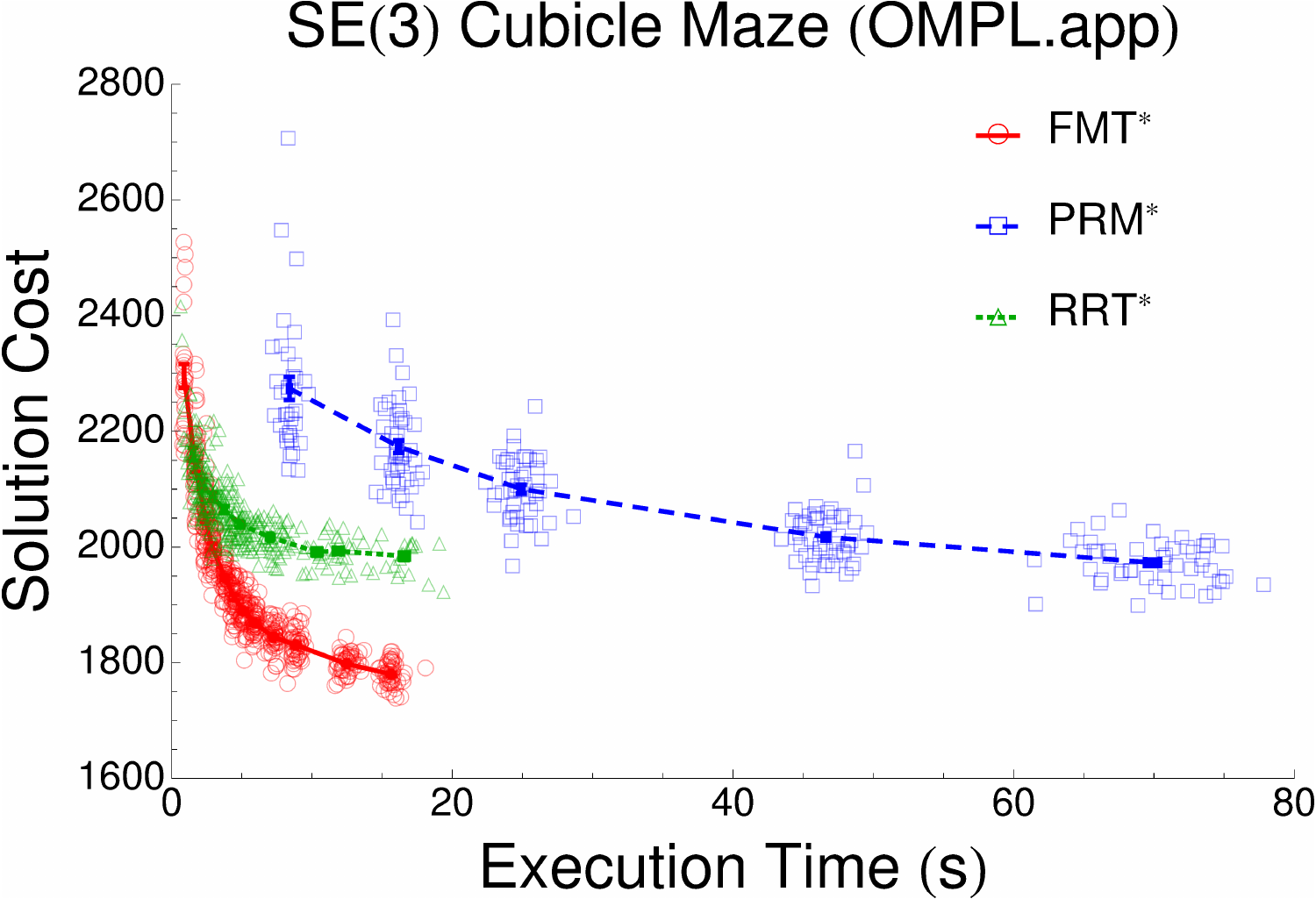}
  }
  \qquad\qquad
  \subfigure[]{
    \includegraphics[width=\figWidth\textwidth]{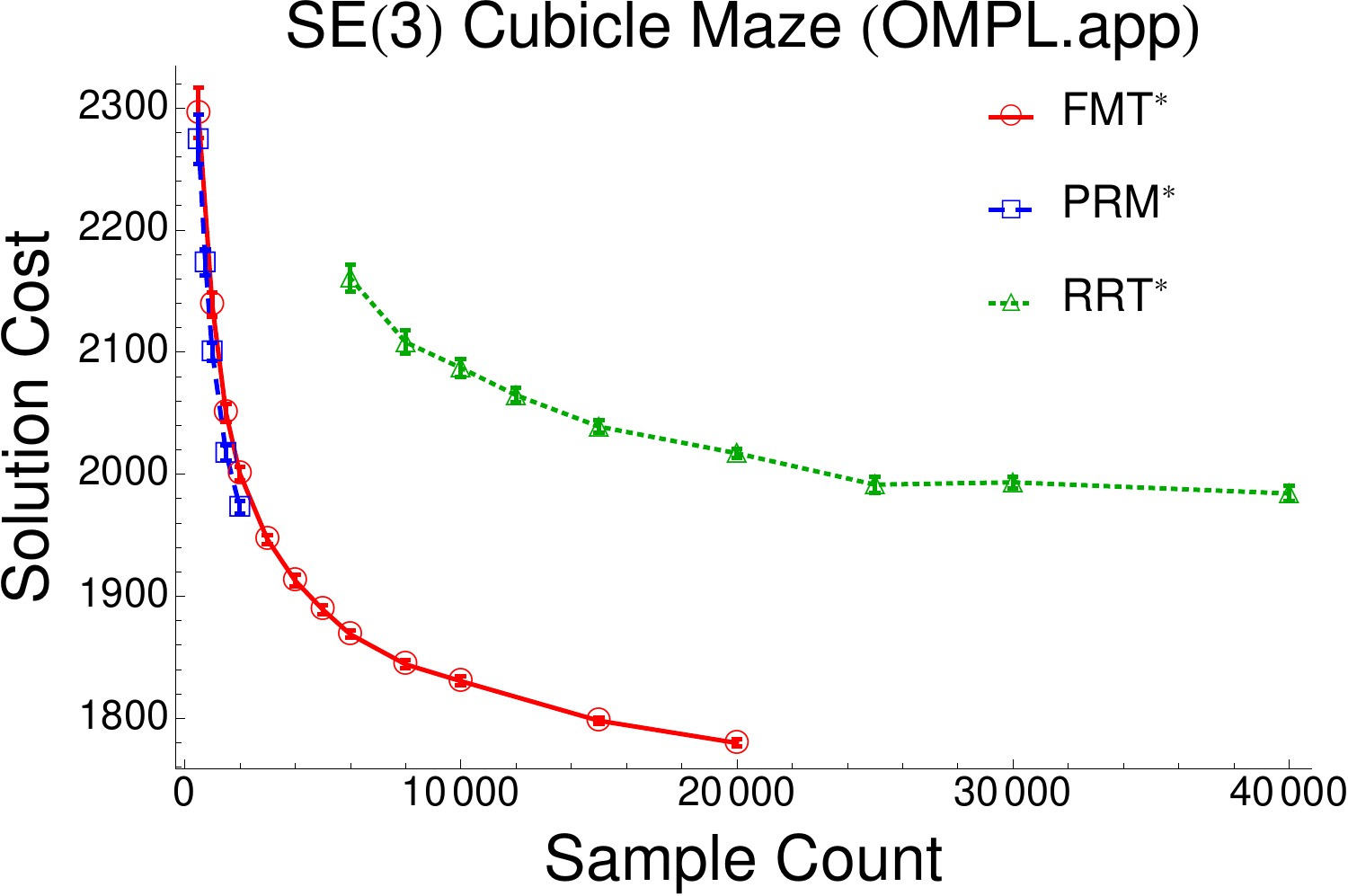}
  }
  \\
  \subfigure[]{
    \includegraphics[width=\figWidth\textwidth]{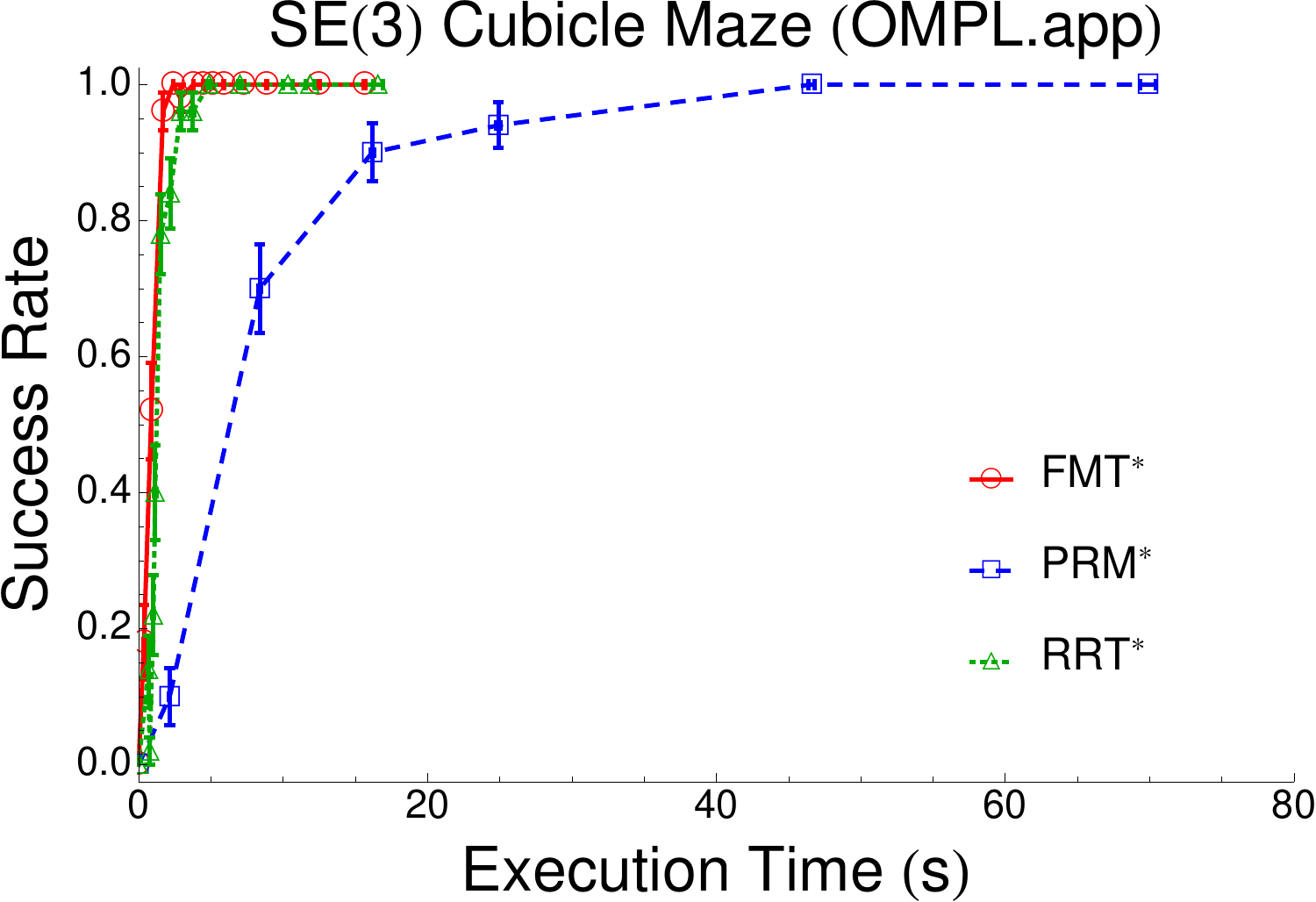}
  }
  \qquad\qquad
  \subfigure[]{
    \includegraphics[width=\figWidth\textwidth]{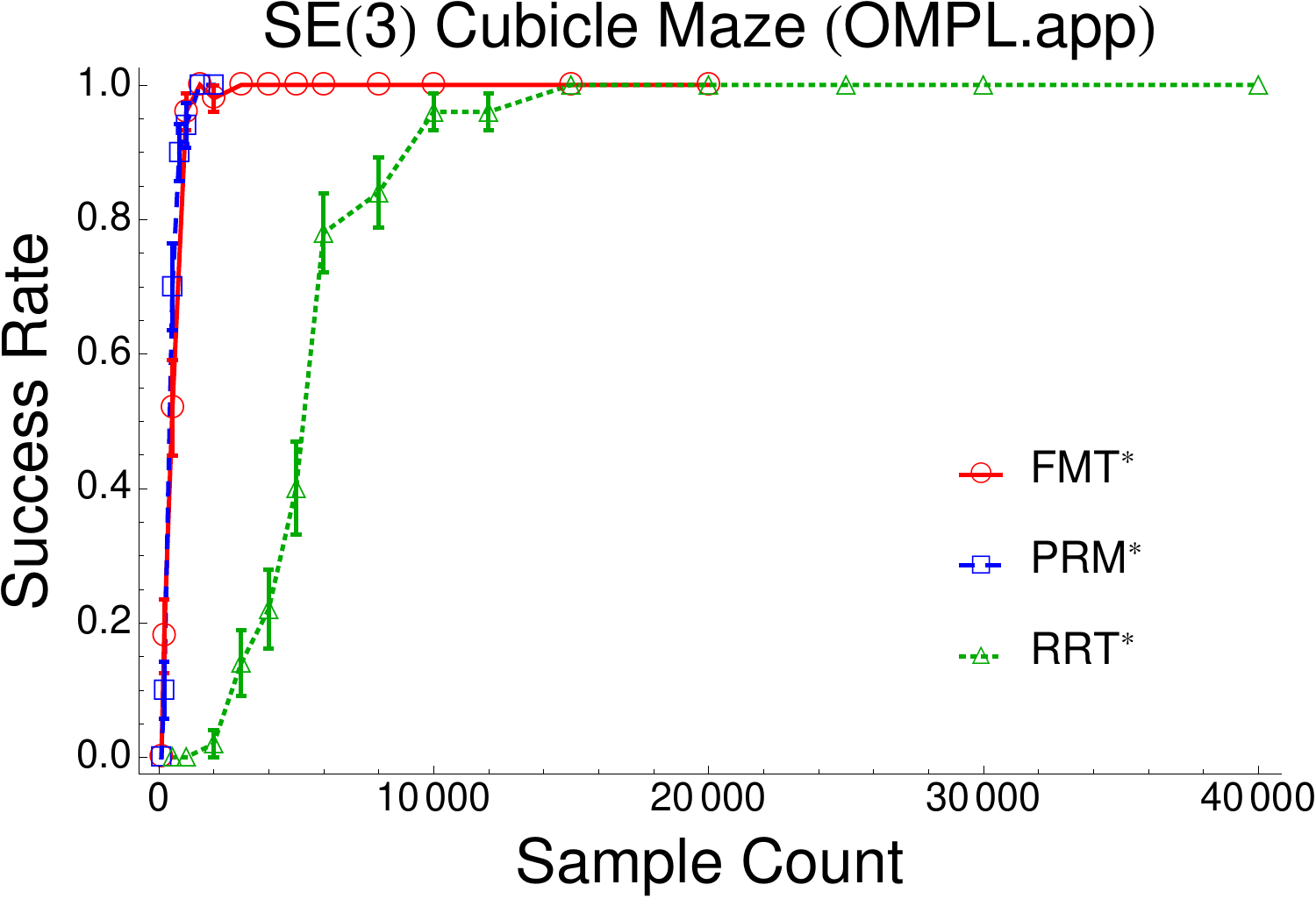}
  }
  \\
  \subfigure[]{
    \includegraphics[width=\figWidth\textwidth]{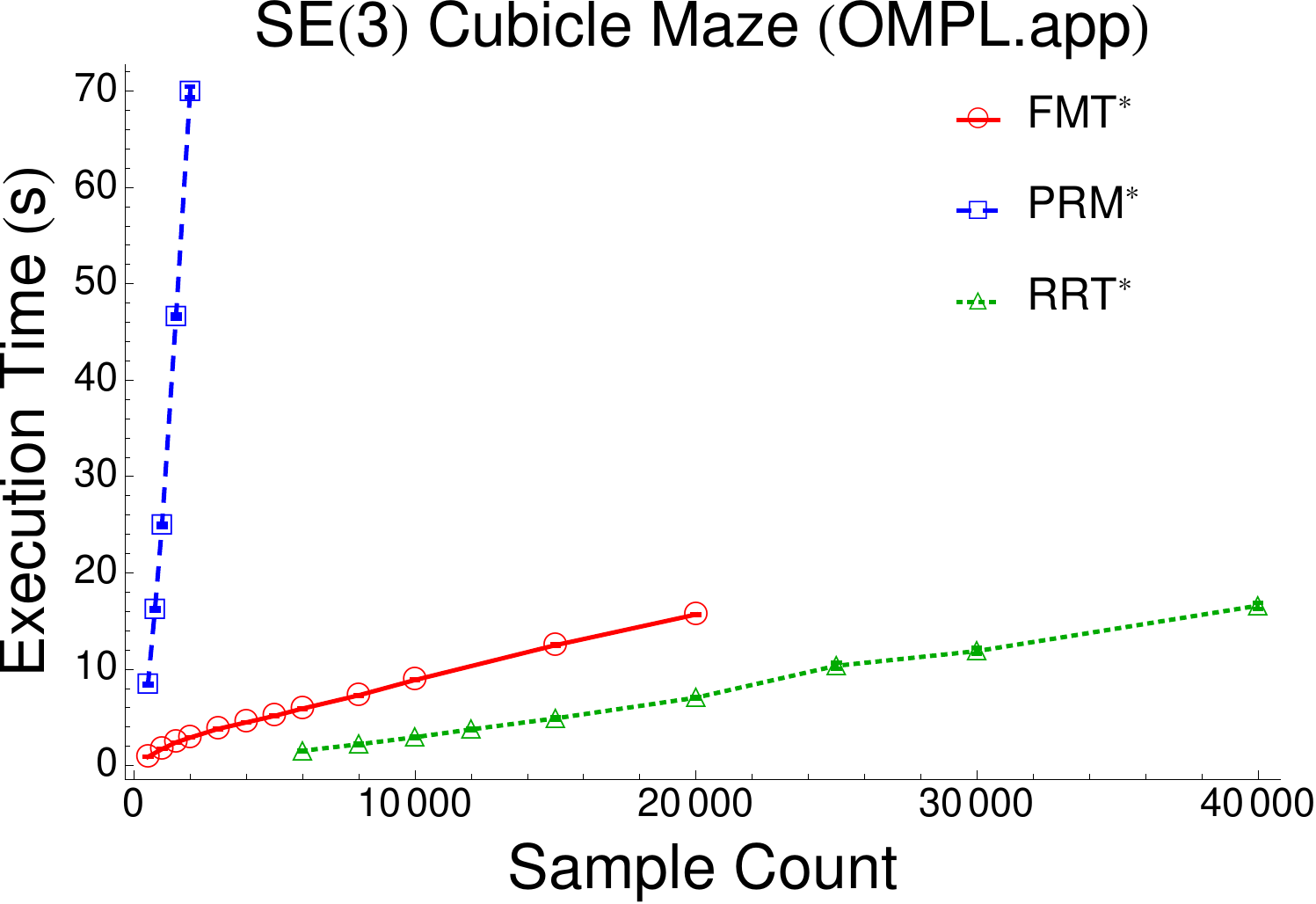}
  }
  \qquad\qquad
  \subfigure[]{
    \includegraphics[width=\figWidth\textwidth]{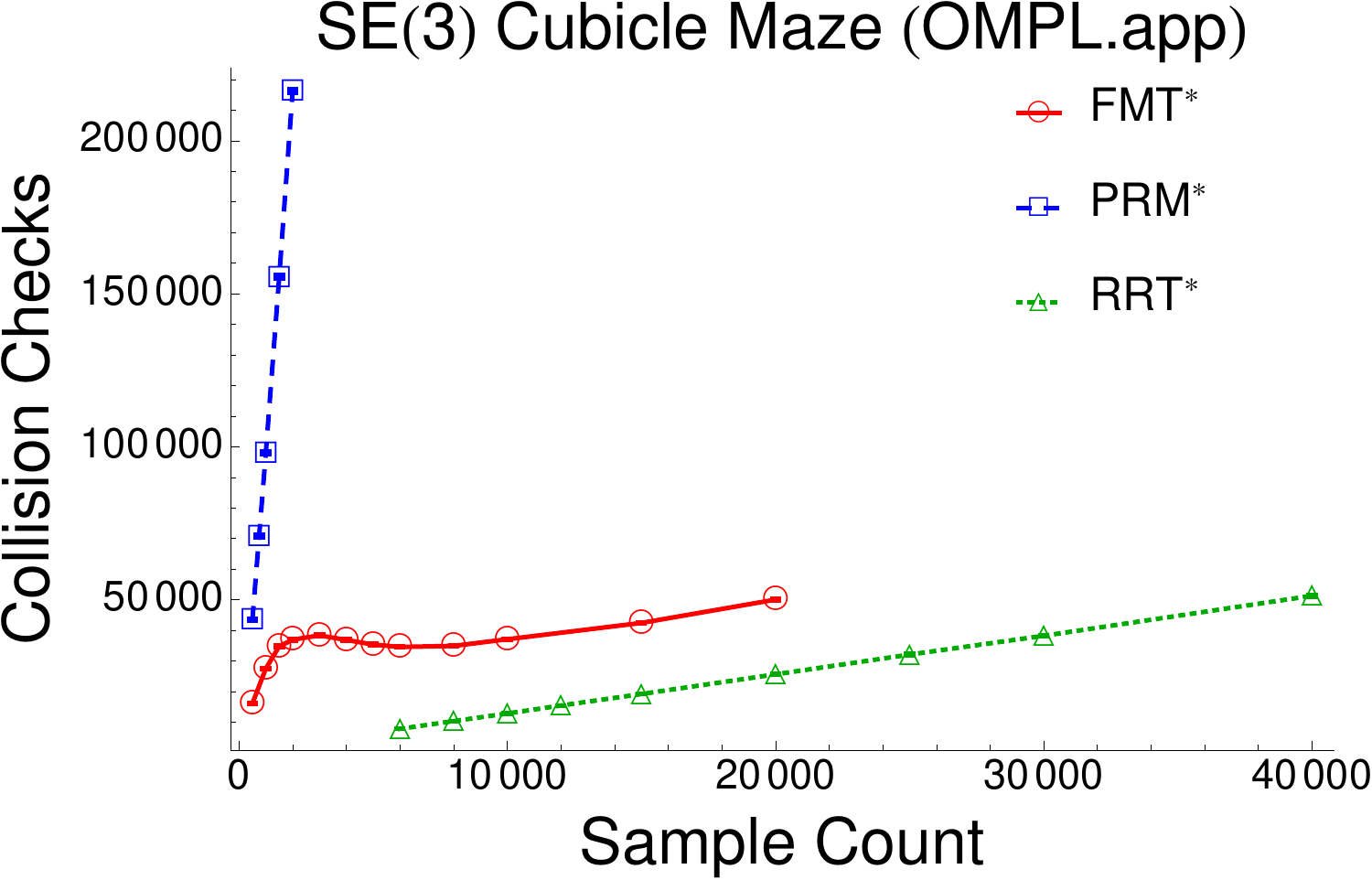}
  }
  \caption{Simulation results for a maze environment in 3D space.}
\label{fig:se3maze}
\end{figure}

\subsection{Comparison with Other AO Planning Algorithms}\label{advantages}
\subsubsection{Numerical Experiments in an $\text{SE}(2)$ Bug Trap}
\label{SE2bug}
The first test case is the classic bug trap problem in $\text{SE}(2)$ (Figure \ref{fig:bg2}), a prototypically challenging problem for sampling-based motion planners \citep{Lavalle:06}. The simulation results for this problem are depicted
graphically in Figure~\ref{fig:se2bug}. \FMT {\color{black}takes}
about half and one tenth the time to reach similar quality solutions
as \RRTstar and \PRMstar\!, respectively, on average. Note that \FMT
also is by far the quickest to reach high success rates, achieving
nearly 100\% in about one second, while \RRTstar takes about five
seconds and \PRMstar is still at 80\% success rate after 14
seconds. The plot of solution cost as a function of sample count shows
what we would expect: \FMT and \PRMstar return nearly identical-quality
solutions for the same number of samples, with \PRMstar very slightly
better, while \RRTstar\!, due to its greediness, suffers in
comparison. Similarly, \FMT and \PRMstar have similar success rates as
a function of sample count, both substantially higher than
\RRTstar. The reason that \RRTstar still beats \PRMstar in terms of cost 
versus time is explained by the plot of execution time versus sample
count: \RRTstar is much faster per sample than
\PRMstar. However, \RRTstar is still slightly slower per sample than
\FMT\!, as explained by the plot of collision-checks versus sample
count, which shows \FMT performing fewer collision-checks per sample
($O(1)$) than \RRTstar ($O(\log(n))$).

The lower success rate for \RRTstar may be explained as a consequence of its graph expansion process.
When iterating to escape the bug trap, the closest tree node to a new sample outside the mouth of the trap will nearly always lie in one of the ``dead end'' lips, and thus present an invalid steering connection.
Only when the new sample lies adjacent to the progress of the tree down the corridor will \RRTstar be able to advance. For \RRTstar to escape the bug trap, an \emph{ordered sequence} of samples must be obtained that lead the tree through the corridor.
\FMT and \PRMstar are not affected by this problem; their success rate is determined only by whether or not such a set of samples exists, not the order in which they are sampled by the algorithm.

\subsubsection{Numerical Experiments in an $\text{SE}(2)$ Maze}
Navigating a ``maze'' environment  is another prototypical benchmark for path planners \citep{Sucan.ea:RAM12}. This section, in particular, considers an $\text{SE}(2)$ maze (portrayed in Figure \ref{fig:s2m}). The plots for this environment, given in
Figure~\ref{fig:se2maze}, tell a very similar 
story to those of the $\text{SE}(2)$ bug trap. Again, \FMT reaches
given solution qualities faster
than \RRTstar and \PRMstar by factors of about 2 and 10, respectively{\color{black}.
A}lthough the success rates of all the algorithms go to 100\% quite
quickly{\color{black}, \FMT is} still the fastest. All other heuristic
relationships between algorithms in the other graphs remain the same
as in the case of the $\text{SE}(2)$ bug trap.

\subsubsection{Numerical Experiments in an $\text{SE}(3)$ Maze}
Figure~\ref{fig:se3maze} presents simulation results for a three-dimensional maze, specifically 
for the  maze in $\text{SE}(3)$ depicted in Figure \ref{fig:s3m}. These results show a few differences
from those in the previous two subsections. First of all, \FMT is an
even clearer winner in the cost versus time graph, with relative
speeds compared to \RRTstar and \PRMstar hard to compare due to the
fact that \FMT reaches an average solution quality in less than five
seconds that is below that achieved by \RRTstar and \PRMstar in about
20 seconds and 70 seconds, respectively. Furthermore, at 20 seconds,
the \FMT solution appears to still be improving faster than \RRTstar
after the same amount of time. The success rate as a function of time
for \RRTstar is much closer to{\color{black},} though still slightly below{\color{black},} \FMT than it was in
the previous two problem setups, with both algorithms reaching 100\%
completion rate in about three seconds.

A new feature of the $\text{SE}(3)$ maze is that
\RRTstar now runs faster per sample than \FMT\!, due to the fact that it
performs fewer collision-checks per sample than \FMT\!. The reason for
this has to do with the relative search radii of the two
algorithms. Since they work very differently, it is not unreasonable
to use different search radii, and although \FMT will perform fewer
collision-checks asymptotically, for finite sample sizes, the number
of collision-checks is mainly influenced by connection radius and
obstacle clutter. While \RRTstar\!'s radius has been smaller than
\FMT\!'s in all simulations up to this point, the previous two setups
had more clutter, forcing \RRTstar to frequently draw a sample,
collision-check its nearest-neighbor connection, and then remove it
when this check fails. As can be seen in Figure \ref{fig:s3m},
the $\text{SE}(3)$ maze is relatively open and contains fewer
traps as compared to the previous two problems, thereby utilizing more of 
the samples that it runs collision-checks for.

\subsubsection{Numerical Experiments for 3D, 5D, and 7D Recursive Maze}
\begin{figure}[!t]
  \centering
  \subfigure[2D recursive maze.]{
    \includegraphics[height=2in]{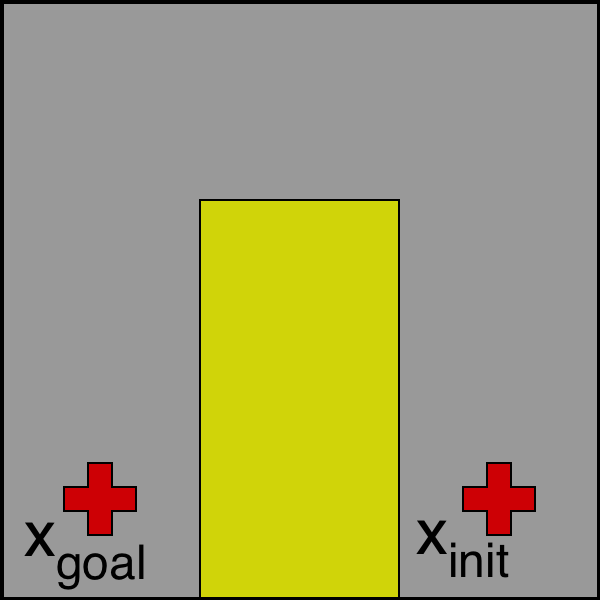}
  }
  \qquad\qquad
  \subfigure[3D recursive maze.]{
    \includegraphics[height=2in]{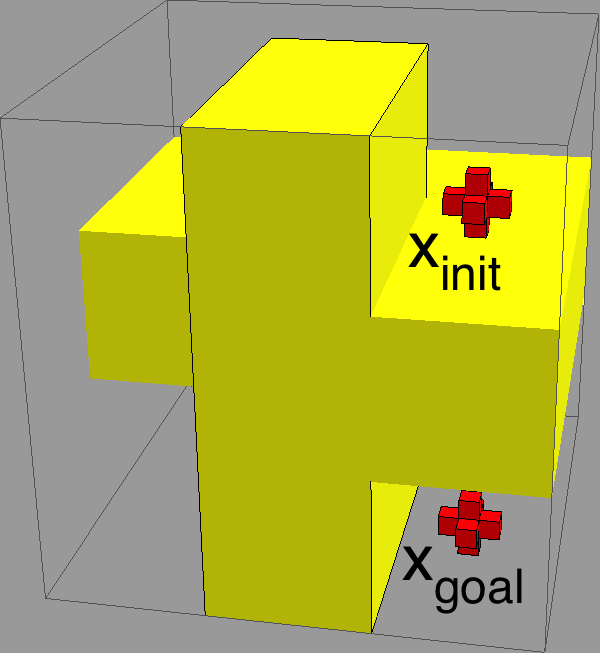}
  }
  \caption{Recursive maze environment.}
  \label{fig:recmazepics}
\end{figure}

\begin{figure}[!t]
  \centering
  \subfigure[]{
    \includegraphics[width=\figWidth\textwidth]{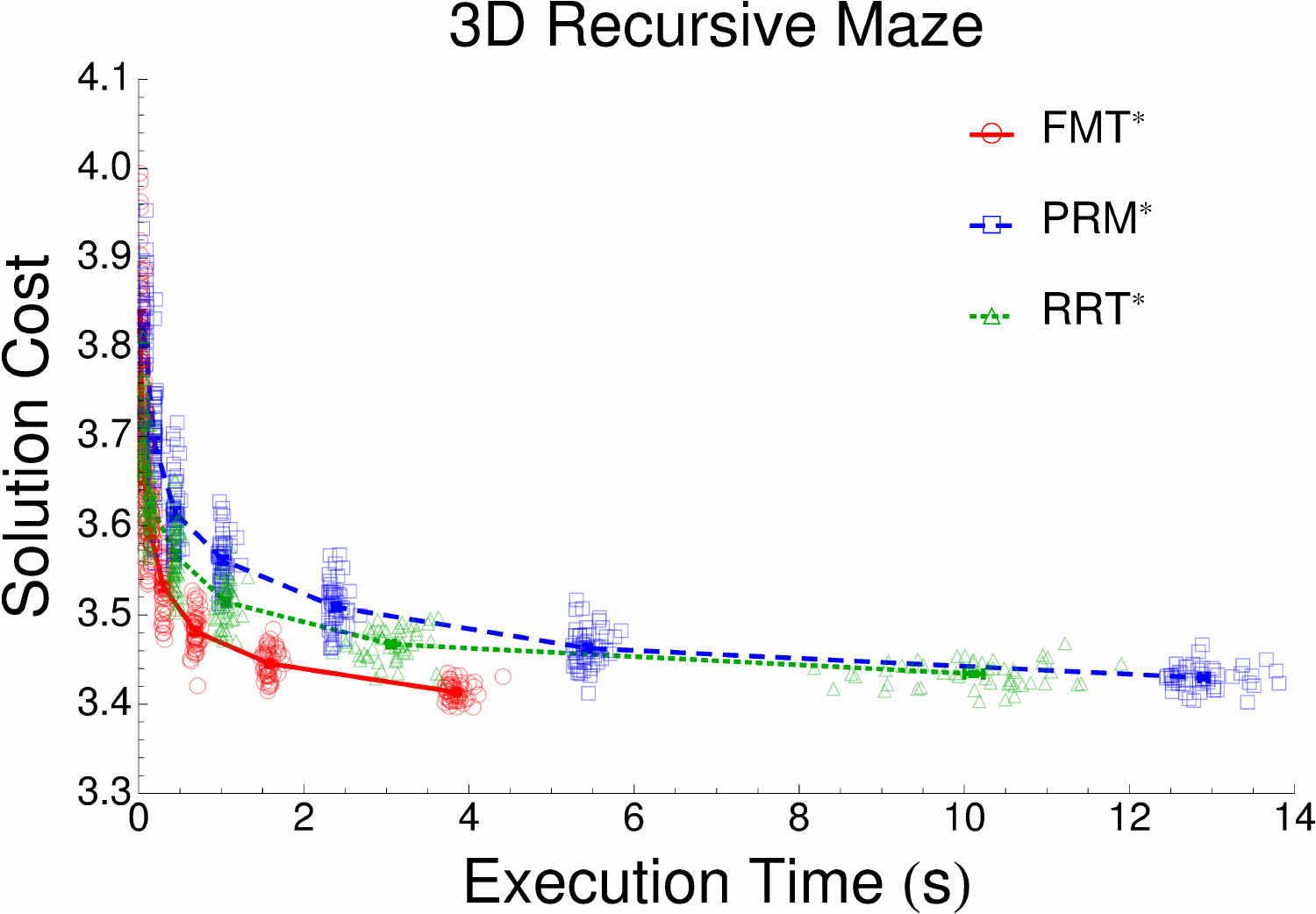}
  }
  \qquad  \qquad
  \subfigure[]{
    \includegraphics[width=\figWidth\textwidth]{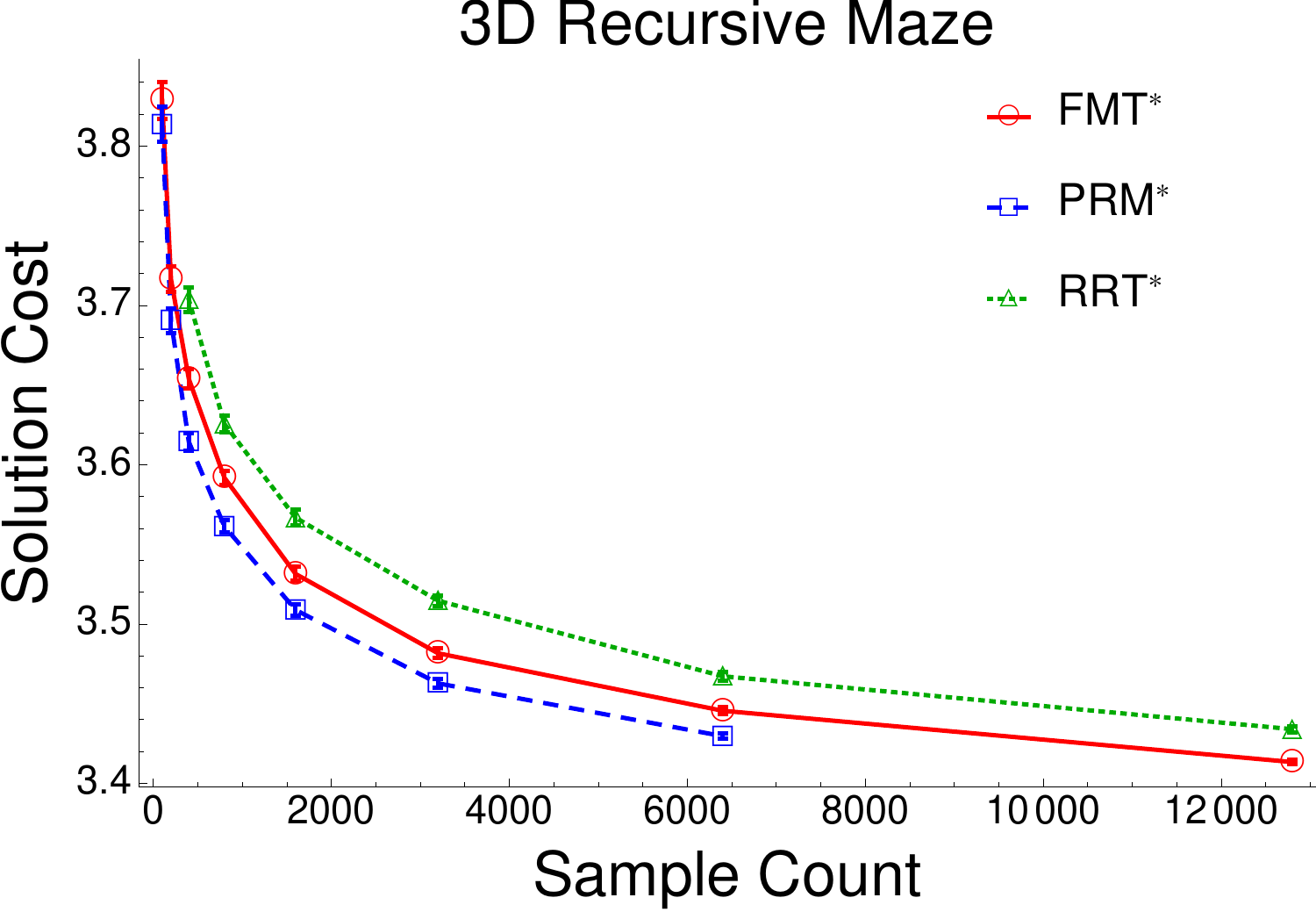}
  }
  \\
  \subfigure[]{
    \includegraphics[width=\figWidth\textwidth]{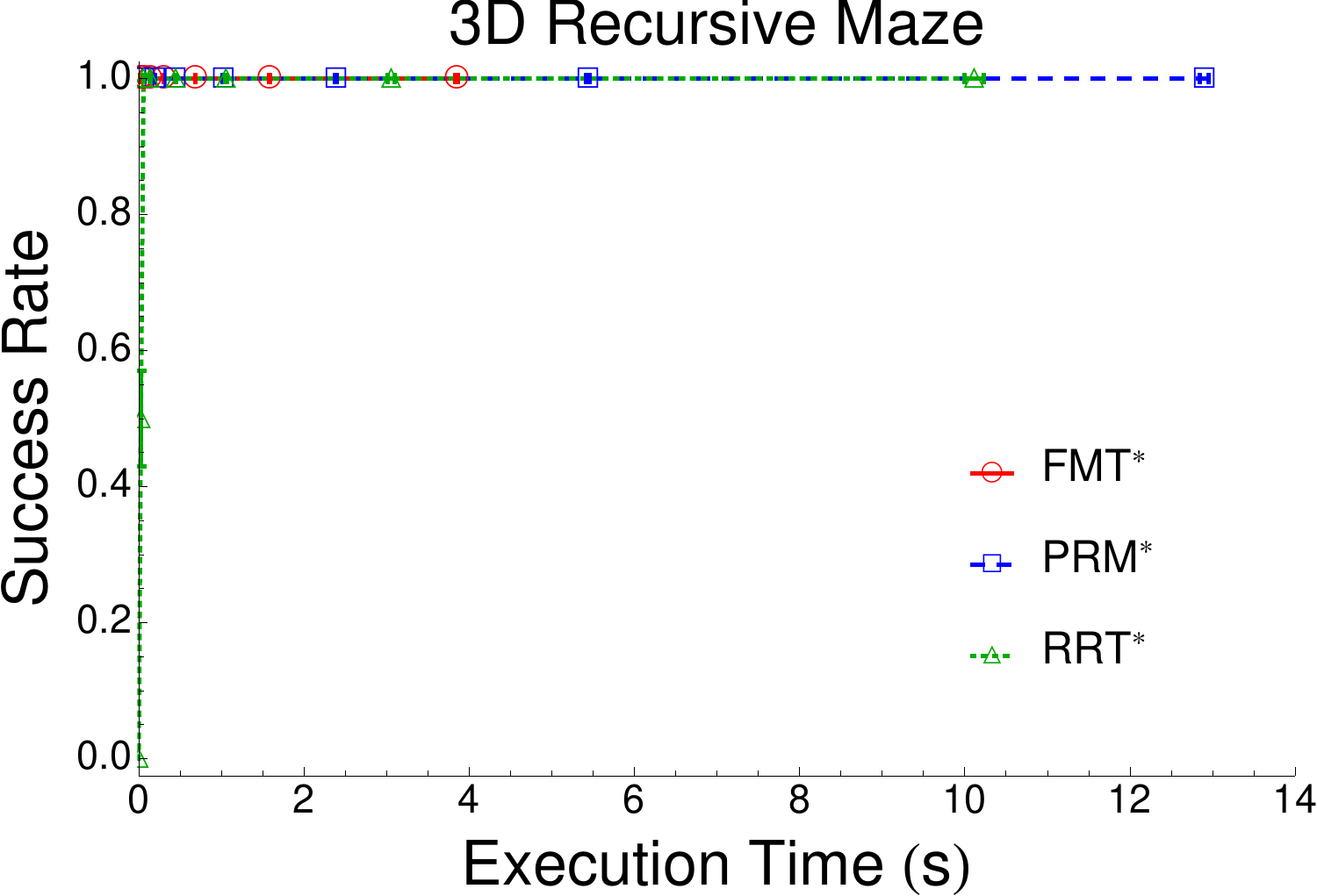}
  }
  \qquad  \qquad
  \subfigure[]{
    \includegraphics[width=\figWidth\textwidth]{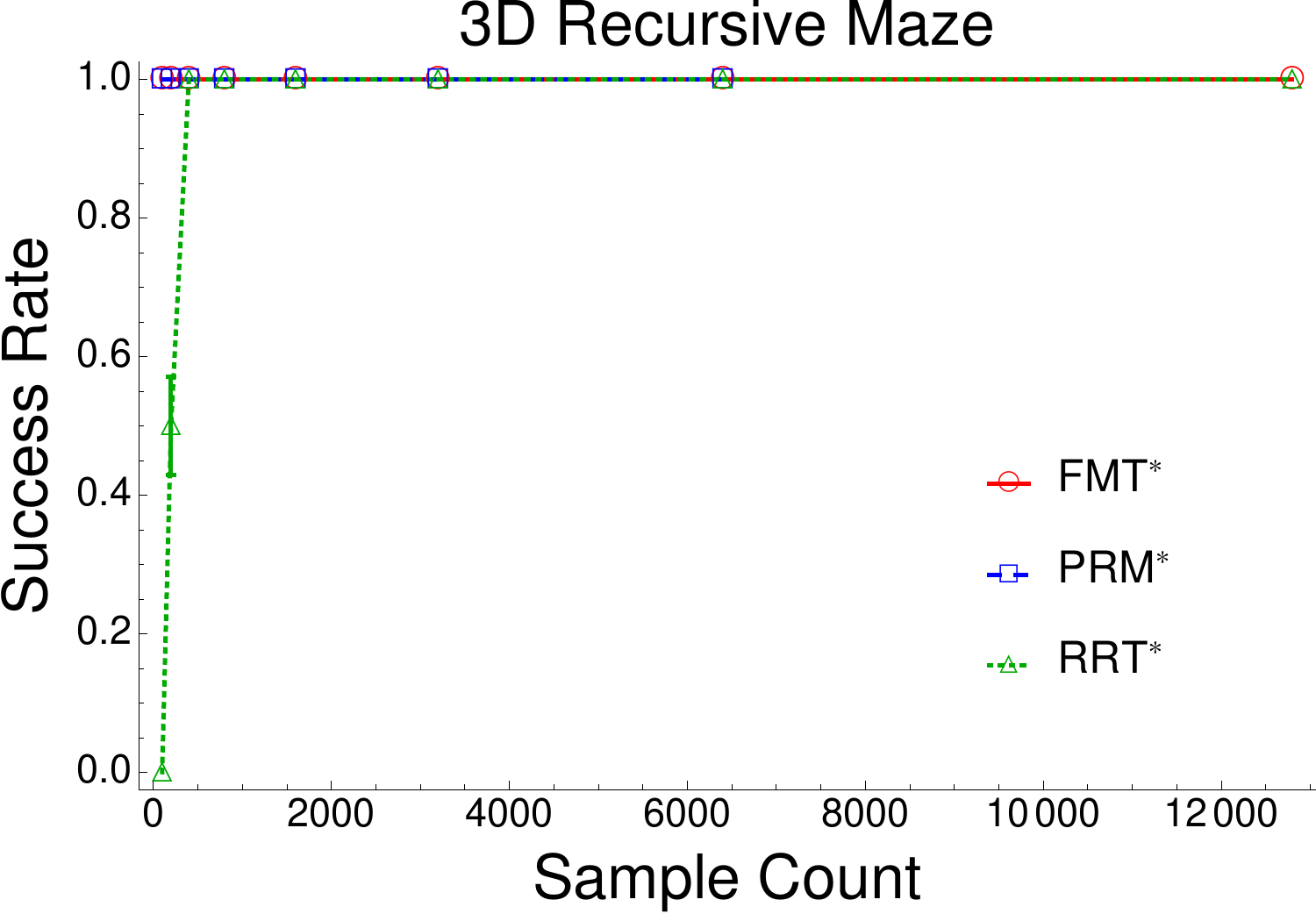}
  }
  \\
  \subfigure[]{
    \includegraphics[width=\figWidth\textwidth]{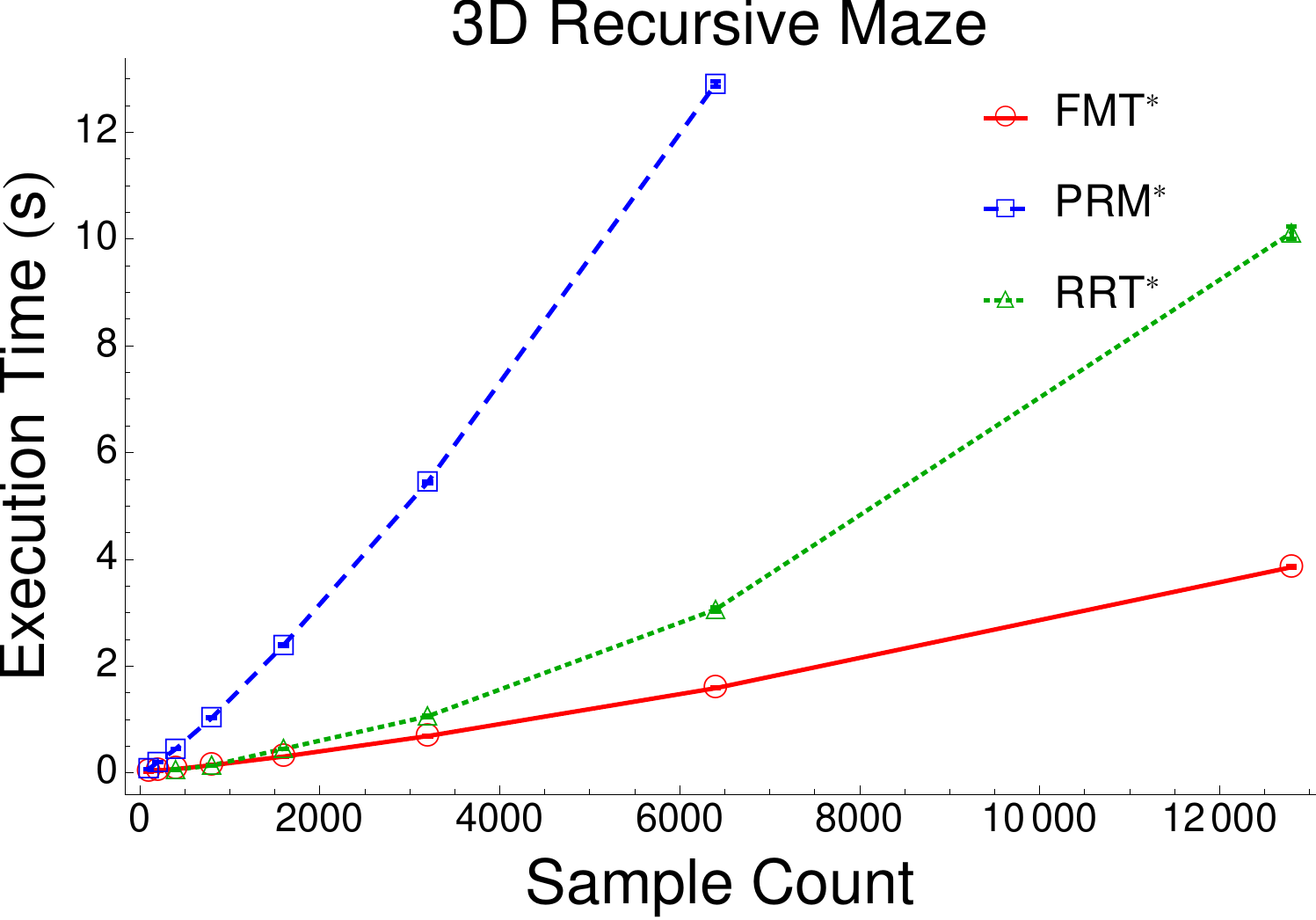}
  }
  \qquad  \qquad
  \subfigure[]{
    \includegraphics[width=\figWidth\textwidth]{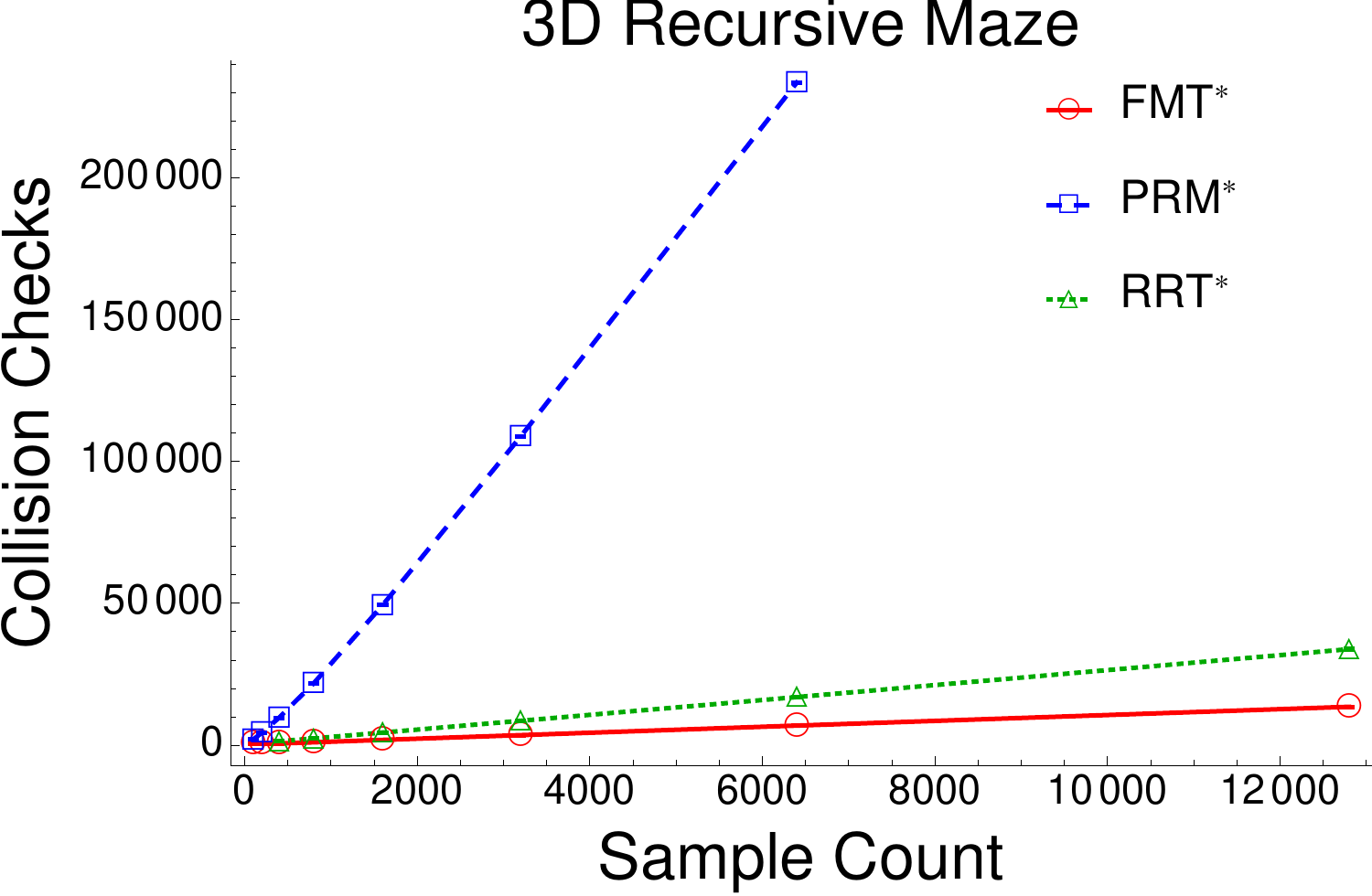}
  }
  \caption{Simulation results for a recursive maze environment in 3D.}
\label{fig:recmaze3}
\end{figure}

\begin{figure}[!t]
  \centering
  \subfigure{
    \includegraphics[width=\figWidth\textwidth]{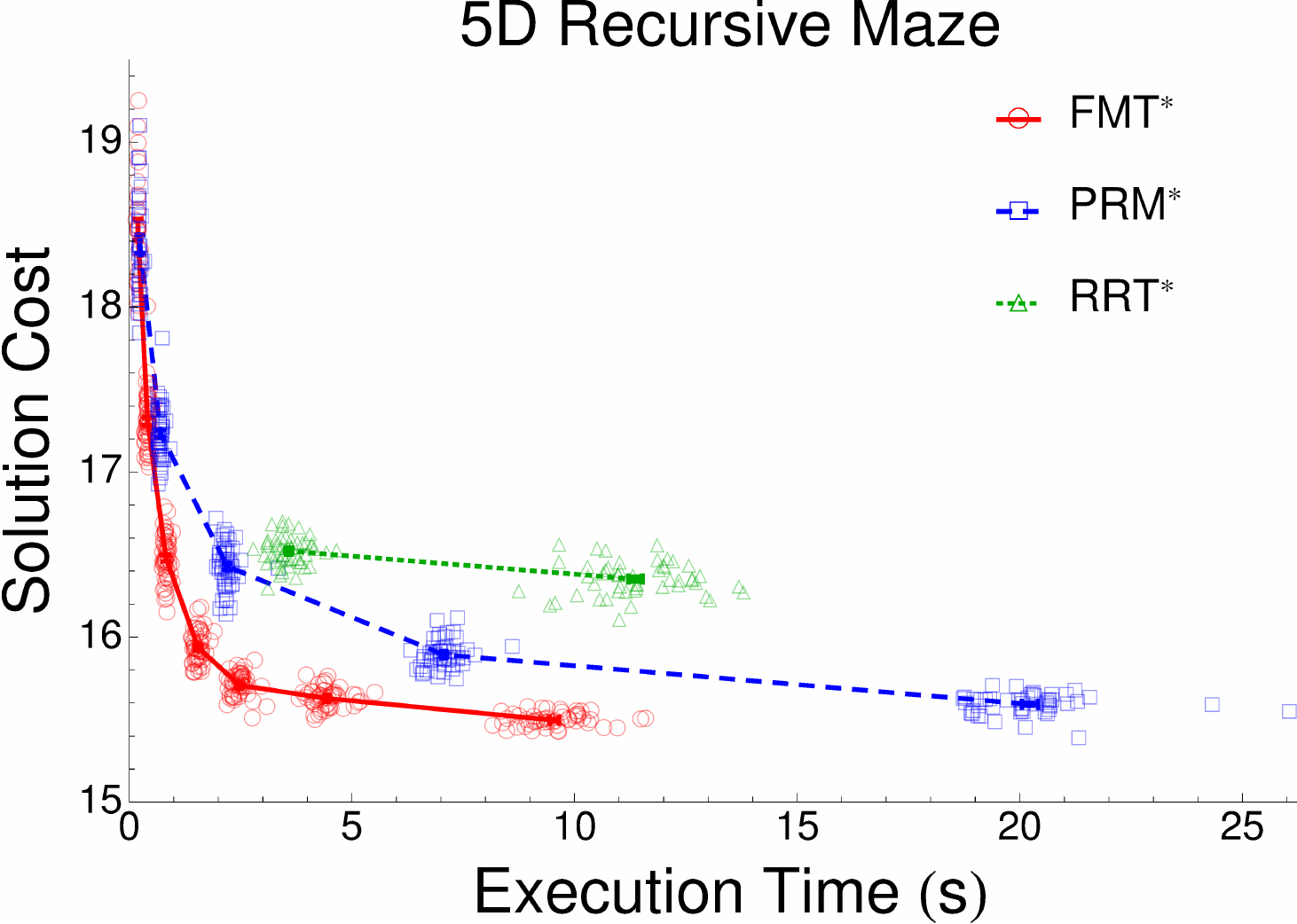}
  }
  \qquad \qquad
  \subfigure[]{
    \includegraphics[width=\figWidth\textwidth]{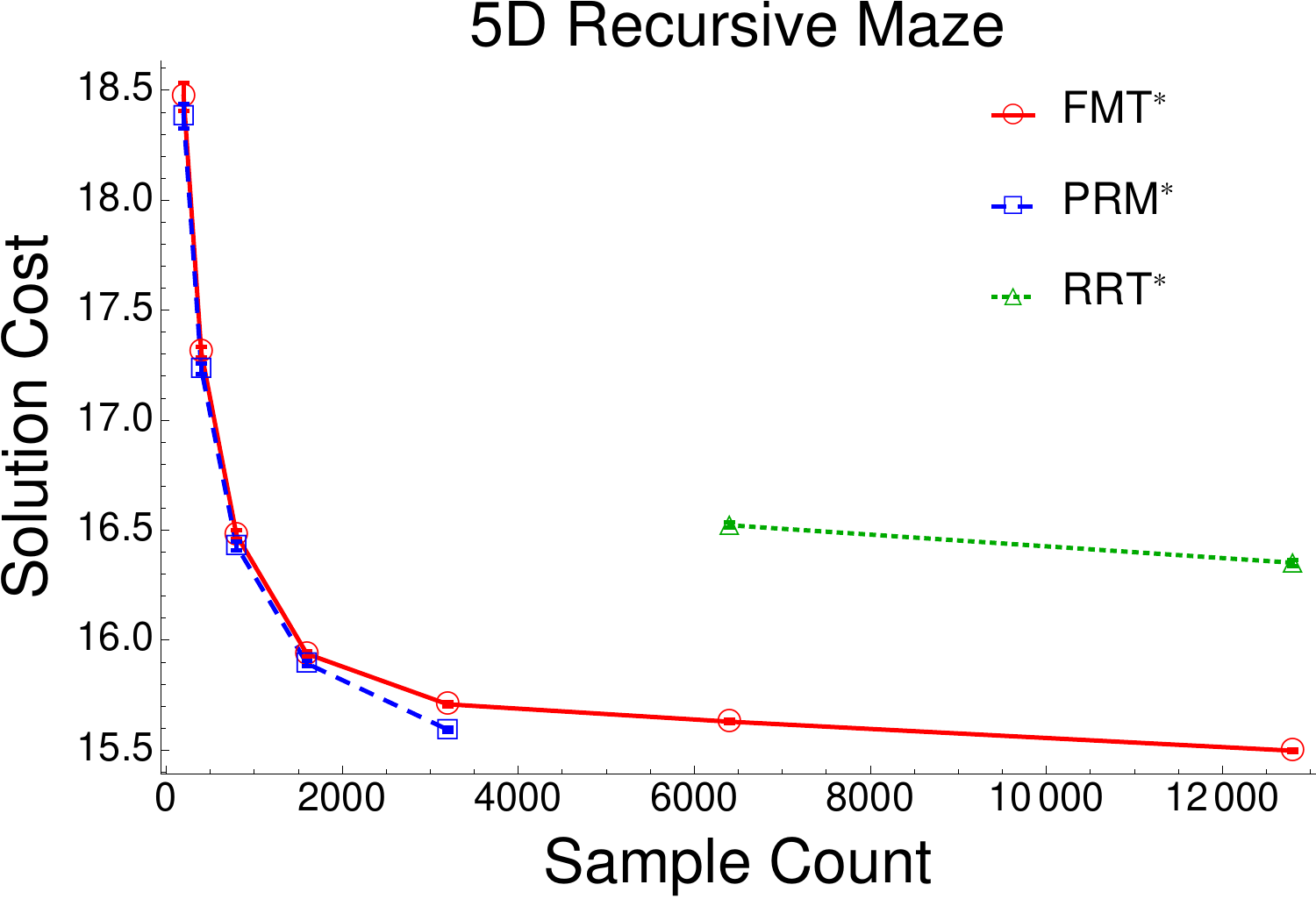}
  }
  \\
  \subfigure[]{
    \includegraphics[width=\figWidth\textwidth]{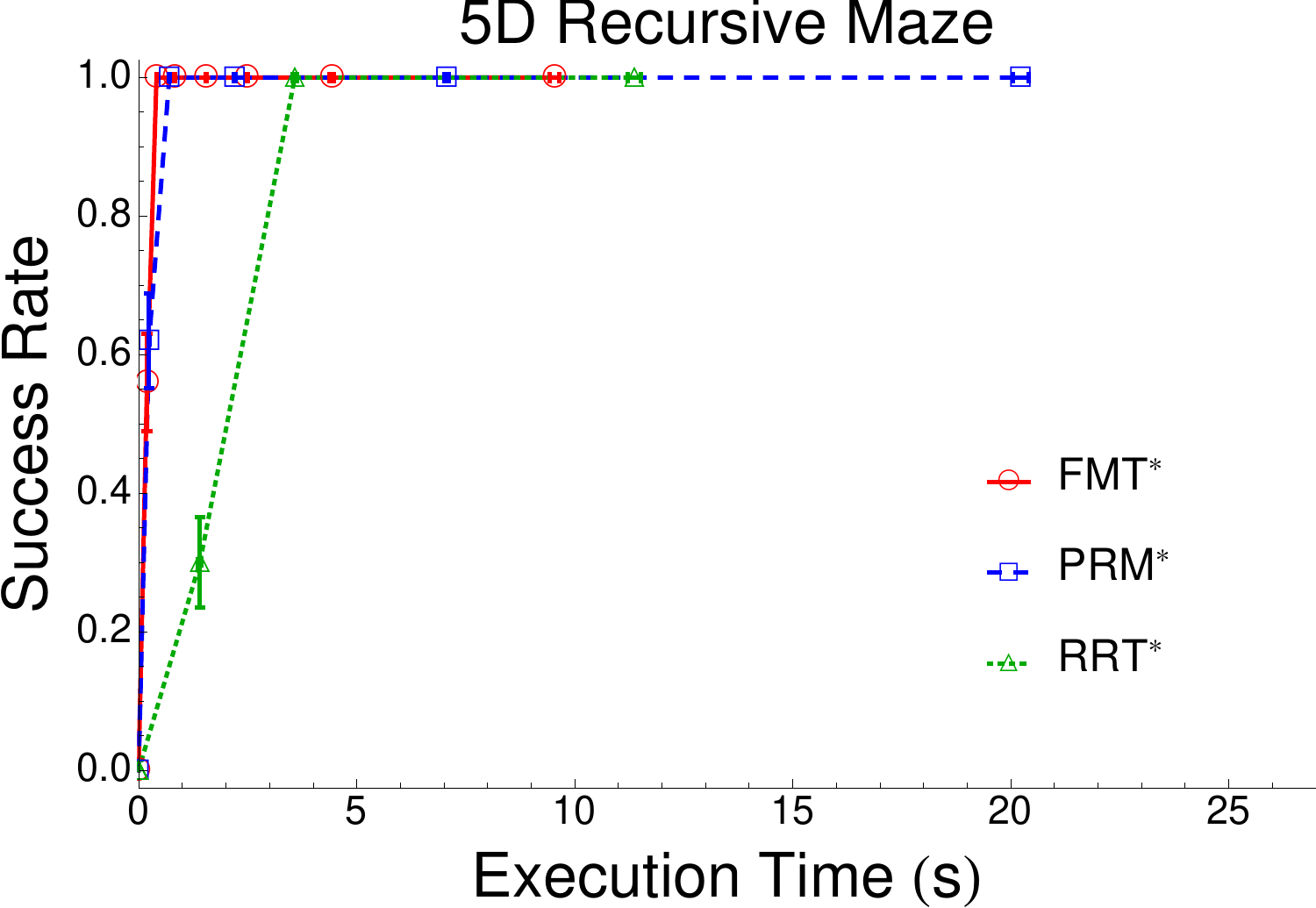}
  }
  \qquad \qquad
  \subfigure[]{
    \includegraphics[width=\figWidth\textwidth]{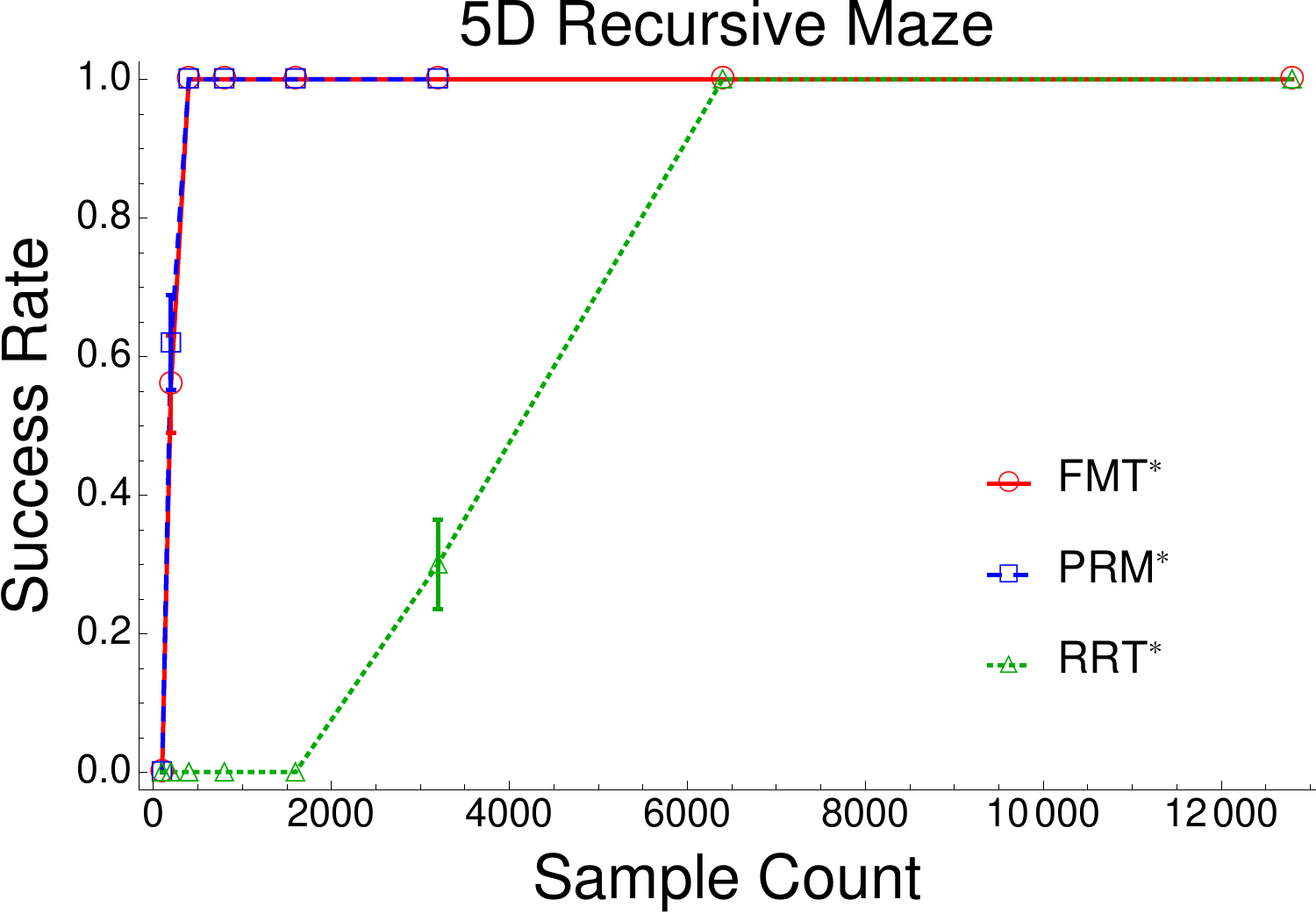}
  }
  \\
  \subfigure[]{
    \includegraphics[width=\figWidth\textwidth]{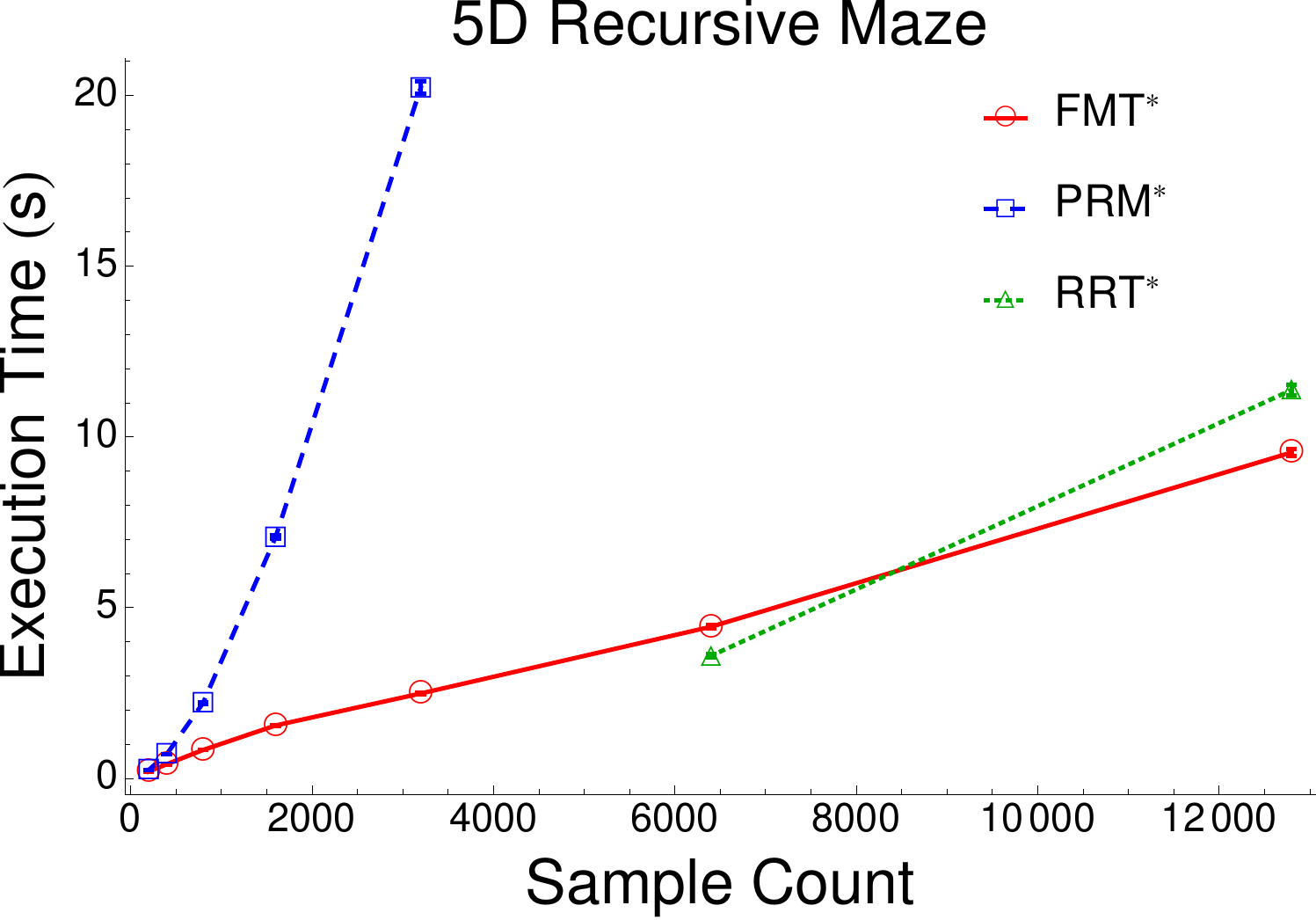}
  }
  \qquad \qquad
  \subfigure[]{
    \includegraphics[width=\figWidth\textwidth]{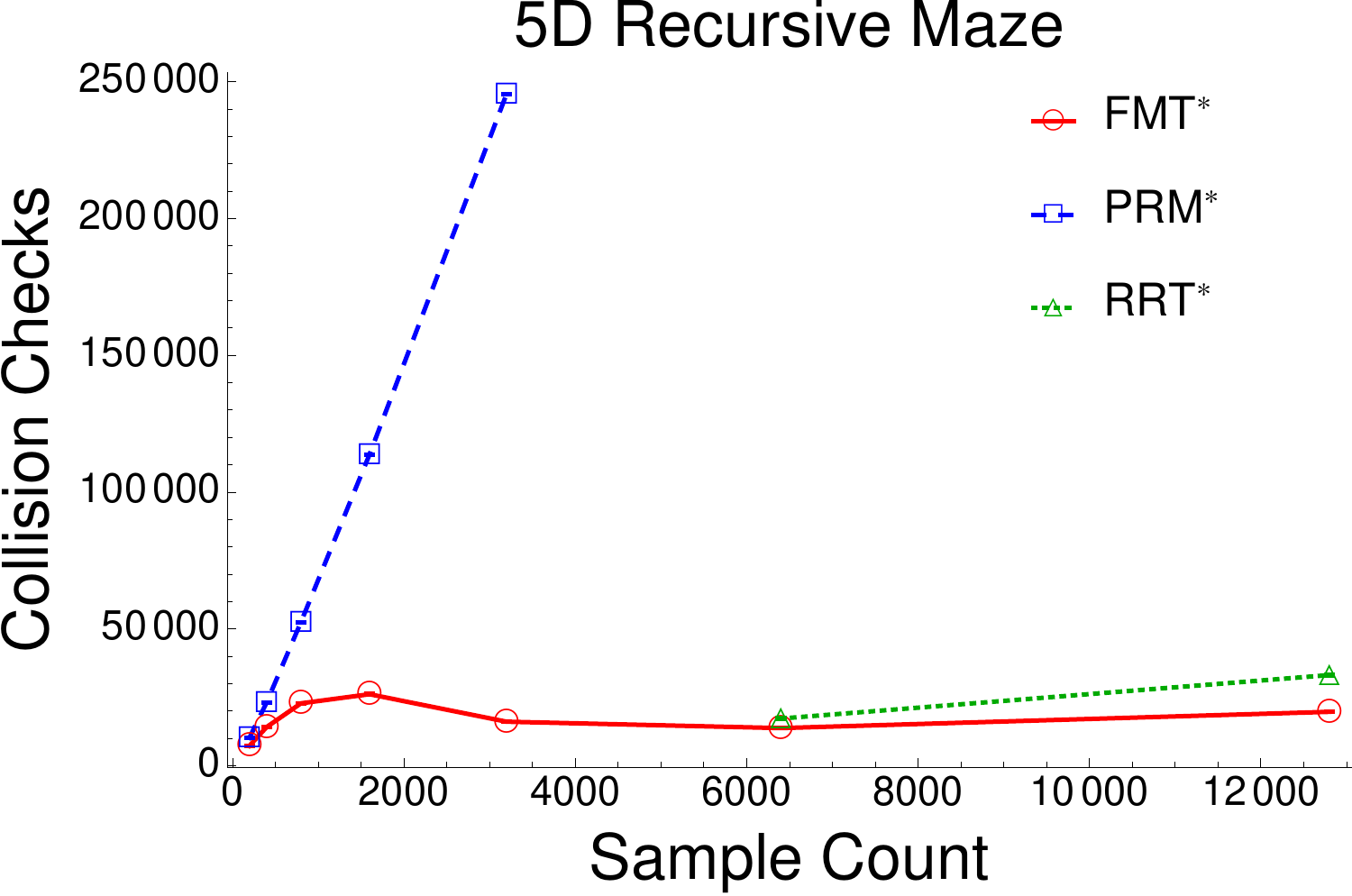}
  }
  \caption{Simulation results for a recursive maze environment in 5D.}
\label{fig:recmaze5}
\end{figure}

\begin{figure}[!t]
  \centering
  \subfigure{
    \includegraphics[width=\figWidth\textwidth]{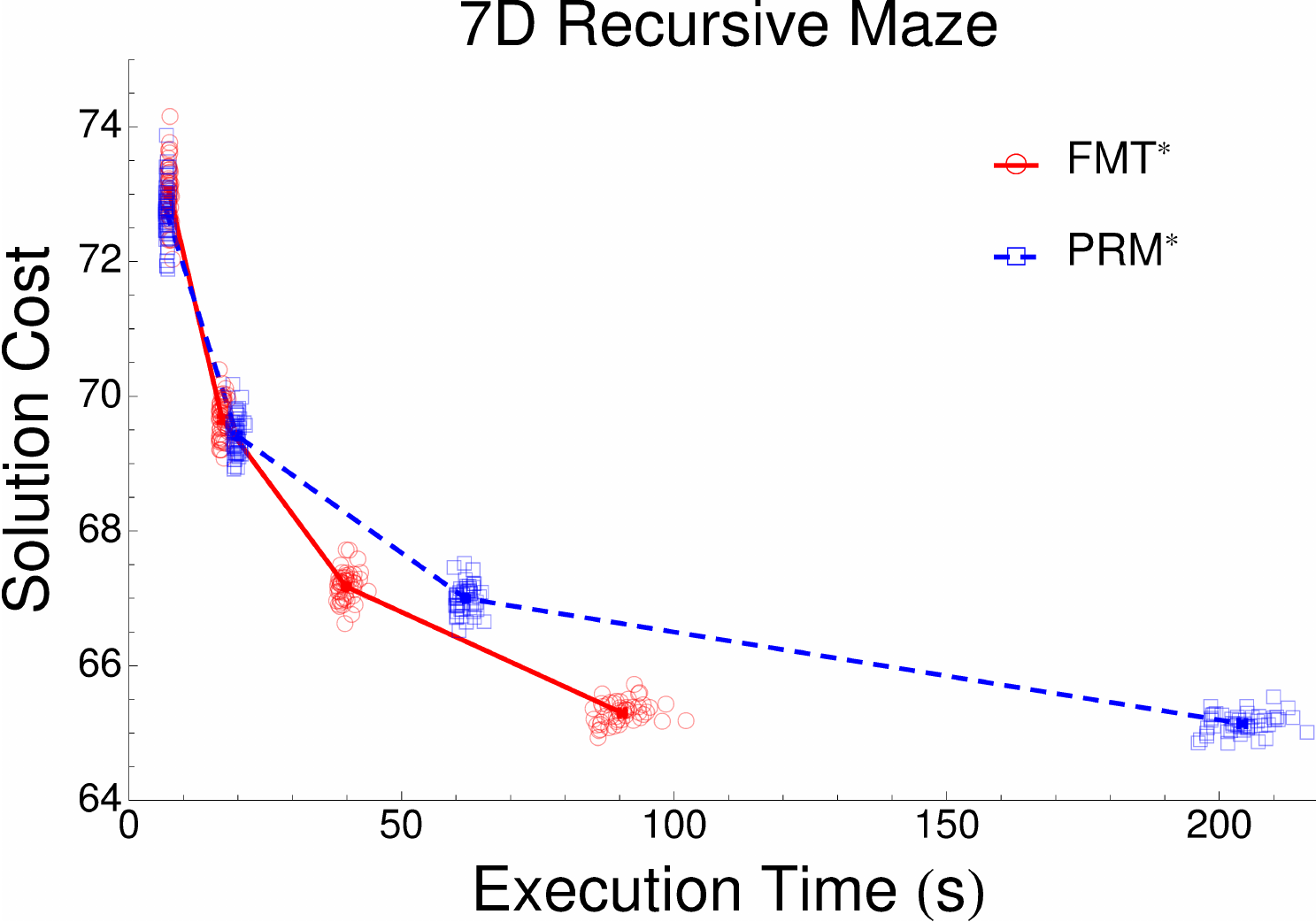}
  }
  \qquad \qquad
  \subfigure{
    \includegraphics[width=\figWidth\textwidth]{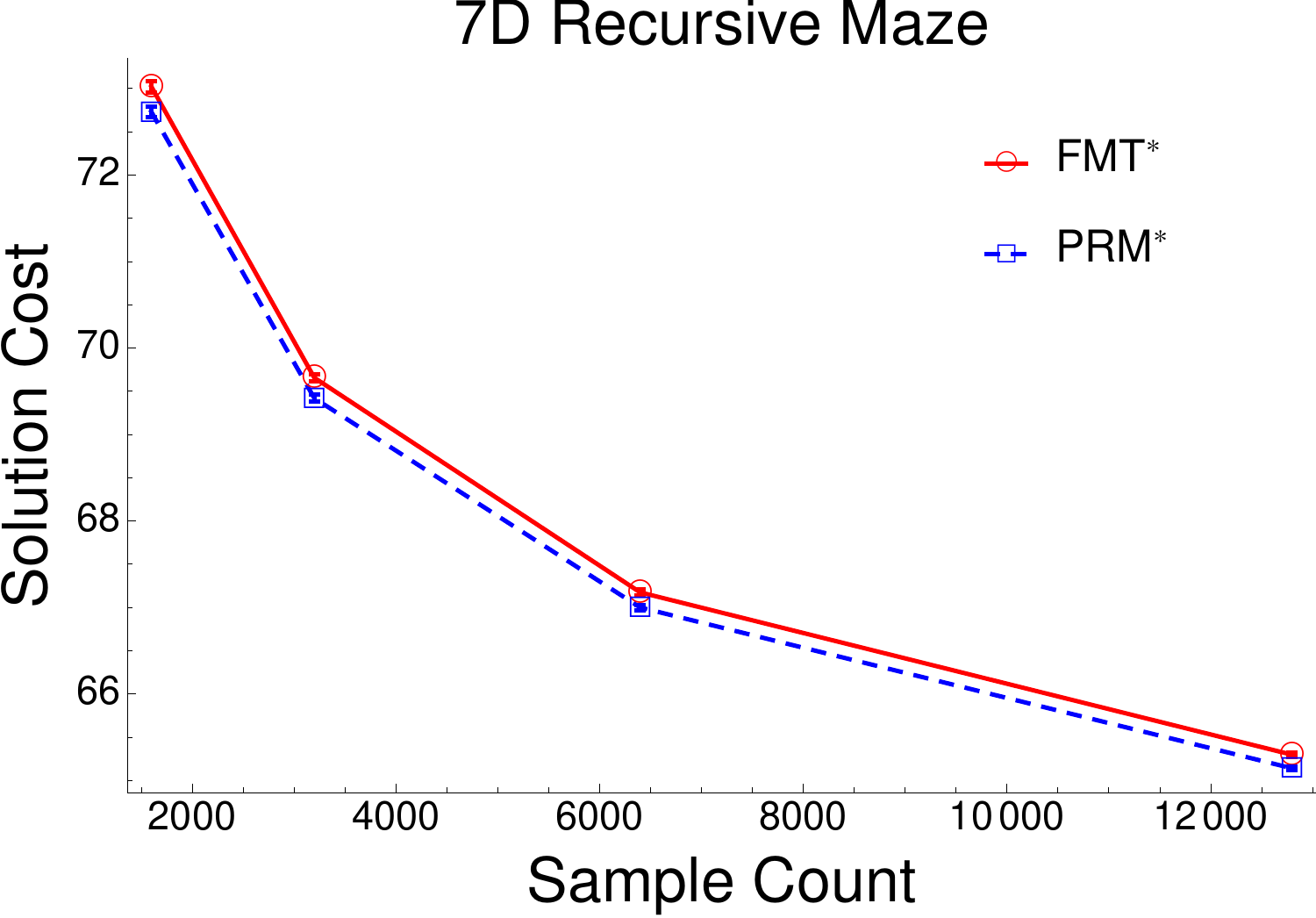}
  }
  \\
  \subfigure{
    \includegraphics[width=\figWidth\textwidth]{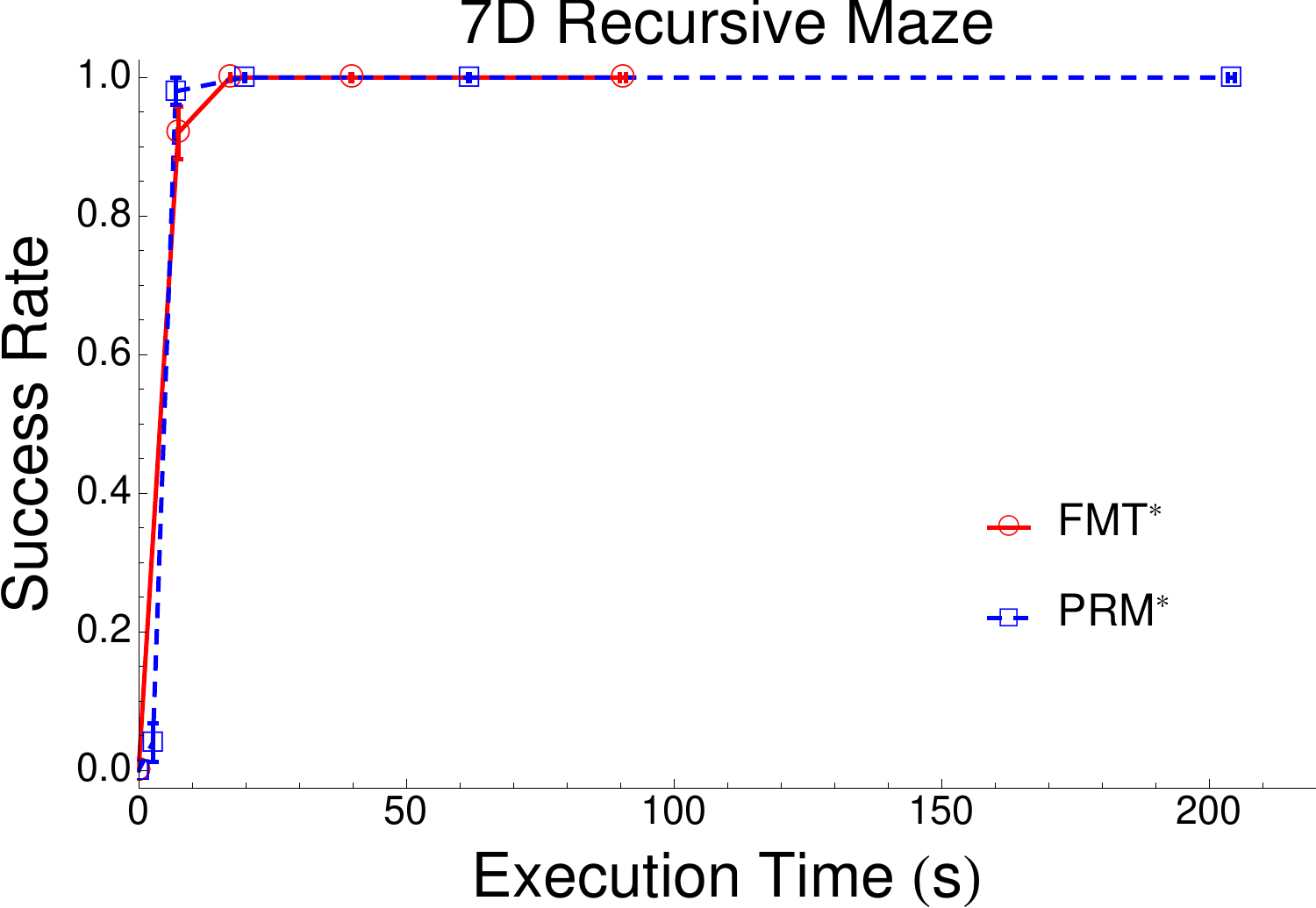}
  }
  \qquad \qquad
  \subfigure{
    \includegraphics[width=\figWidth\textwidth]{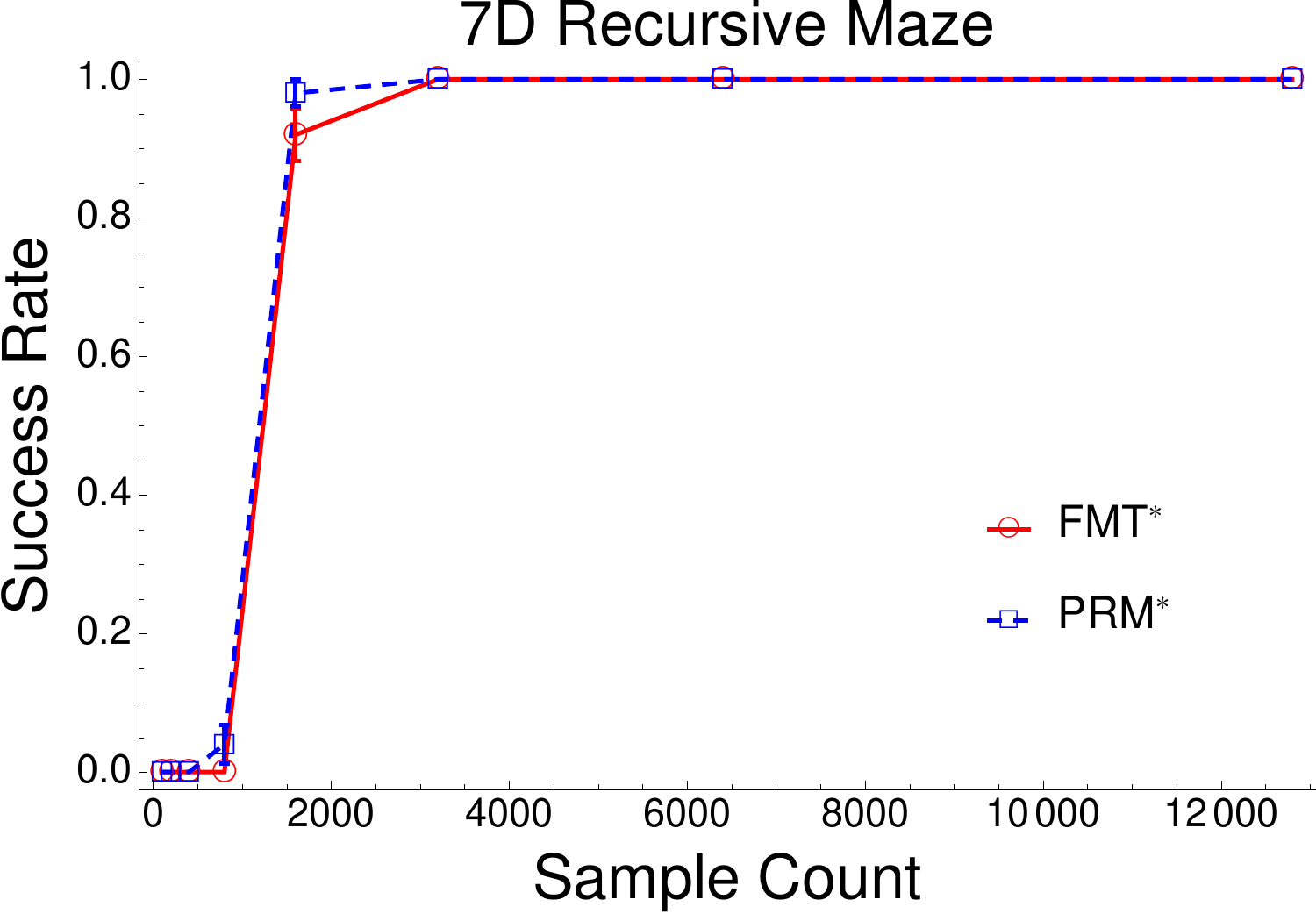}
  }
  \\
  \subfigure{
    \includegraphics[width=\figWidth\textwidth]{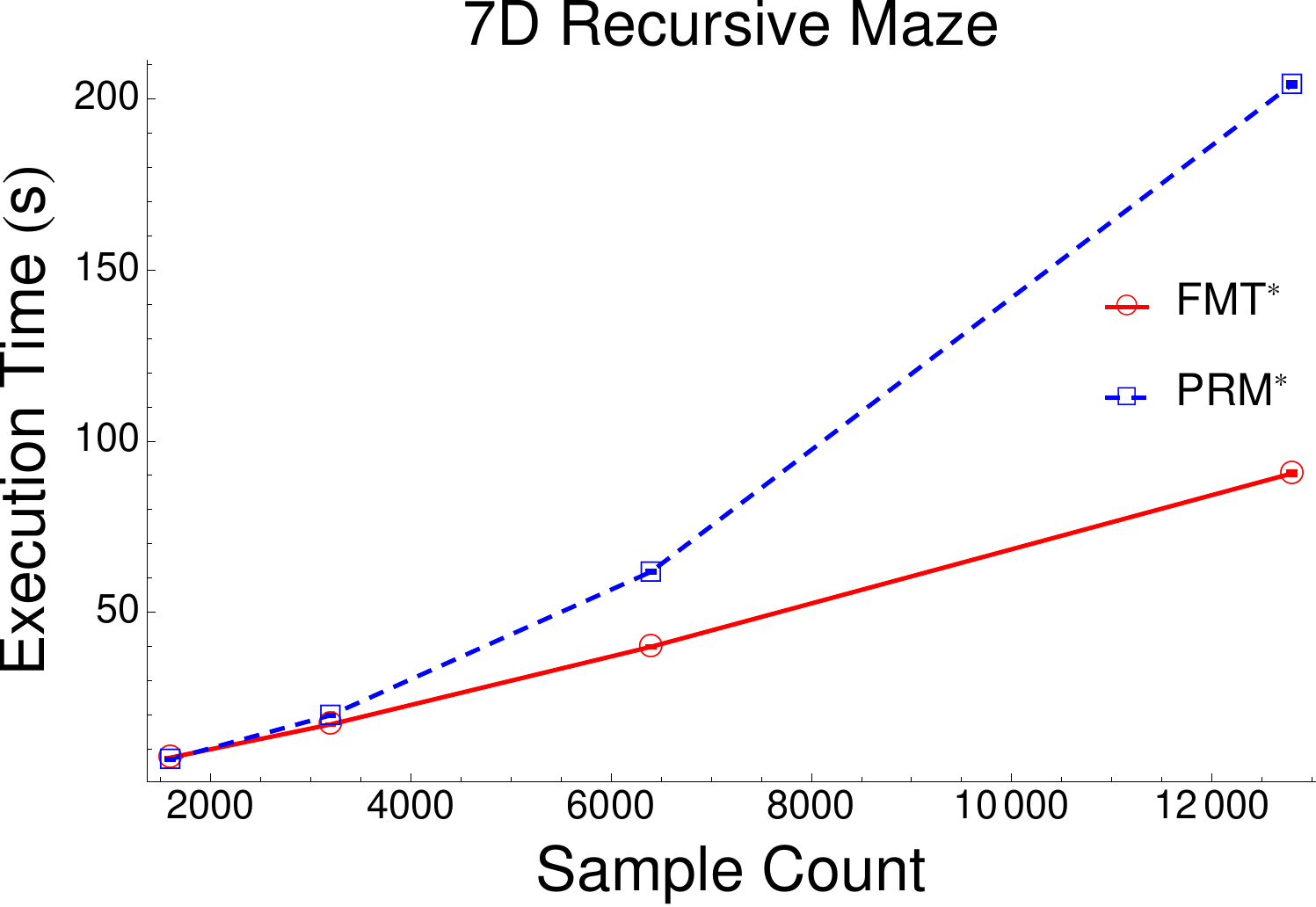}
  }
  \qquad \qquad
  \subfigure{
    \includegraphics[width=\figWidth\textwidth]{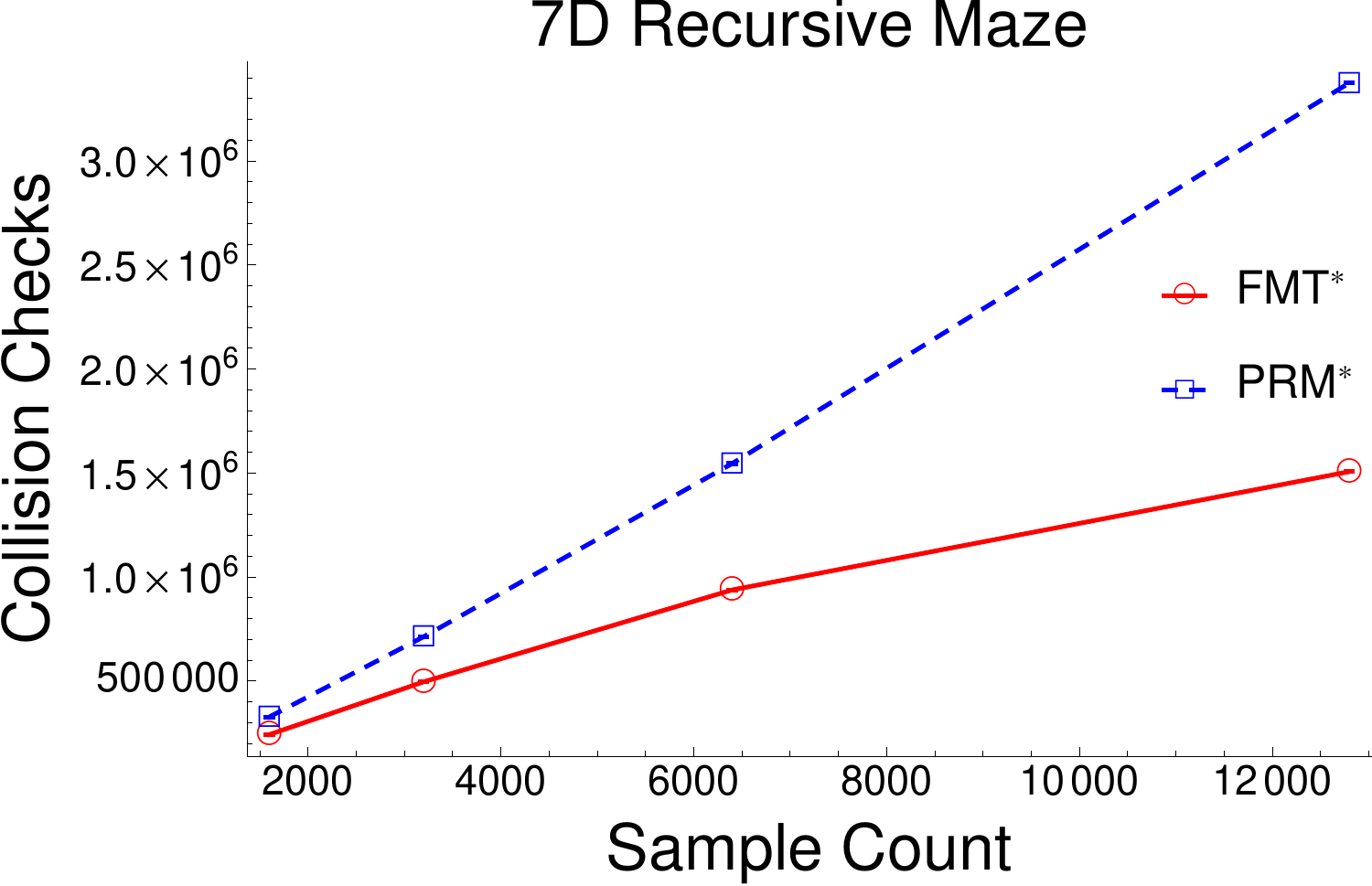}
  }
  \caption{Simulation results for a recursive maze environment in 7D.}
\label{fig:recmaze7}
\end{figure}

In order to illustrate a ``worst-case'' planning scenario in high dimensional space,
we constructed a recursive maze obstacle environment within the Euclidean unit hypercube. 
Essentially, each instance of the maze
consists of two copies of the maze in the previous dimension separated by a divider and connected through
the last dimension.
See Figure \ref{fig:recmazepics} for
the first two instances of the maze in two dimensions and three dimensions, respectively. 
This recursive nature has the effect of producing a problem environment with only one homotopy class of solutions,
any element of which is necessarily long and features sections that are spatially close, but far away from each other in terms
of their distance along the solution path. Our experiments investigated translating
a rigid body from one end of the maze to the other. 
The results of simulations in 3, 5, and 7 dimensional
recursive mazes are given in Figures \ref{fig:recmaze3},
\ref{fig:recmaze5}, and \ref{fig:recmaze7}. \FMT once again
reaches lower-cost solutions in less time than \RRTstar\!, with the
improvement increasing with dimension. The most notable trend between
\FMT and \RRTstar\!, however, is in success rate. While both algorithms
reach 100\% success rate almost instantly in 3D, \FMT reaches 100\% in
under a second, while \RRTstar takes closer to five seconds in 5D, and
most significantly \RRTstar was never able to find any solution in the
time alotted in 7D. This can be understood through the geometry of the
maze---the maze's complexity is exponentially increasing in dimension,
and in 7D, so much of free space is blocked off from every other part of
free space that \RRTstar is stuck between two bad options: it can use
a large steering radius, in which case nearly every sample fails to
connect to its nearest-neighbor and is thrown out, or it can use a
small steering radius, in which case connections are so short that the
algorithm has to figuratively crawl through the maze. Even if the steering parameter
were not an issue, the mere fact that \RRTstar operates on a steering graph-expansion
principle means that in order to traverse the maze, an ordered subsequence of $2^7$ nodes (corresponding to
each turn of the maze) must be in the sample sequence before a solution may be found. While this is an
extreme example, as the recursive maze is very complex in 7D (feasible solutions are at least 43
units long, and entirely contained in the unit cube), it accentuates
\FMT\!'s advantages in highly cluttered environments.

As compared to \PRMstar\!, \FMT still presents a substantial
improvement, but that improvement decreases with dimension. This can
be understood by noting that the two algorithms achieve nearly
identical costs for a given sample count, but \FMT is much faster due
to savings on collision-checks. However, as the plots show, the
relative decrease in collision-checks from \PRMstar to \FMT decreases
to only a factor of two once we reach 7D, and indeed we see
that, when both algorithms achieve low cost, \FMT does so in \edit{approximately}
half the time. This relative decrease in collision-checks comes from
the aforementioned extreme obstacle clutter in the configuration
space. \FMT makes big savings over \PRMstar when it connects many
samples on their first consideration, but when most samples are close
to obstacles, most samples will take multiple considerations to
finally be connected. Both algorithms achieve 100\% success rates in
\edit{approximately} the same amount of time.

\begin{figure}[!t]
  \centering
  \subfigure[]{\label{fig:alphaCvT}
    \includegraphics[width=\figWidth\textwidth]{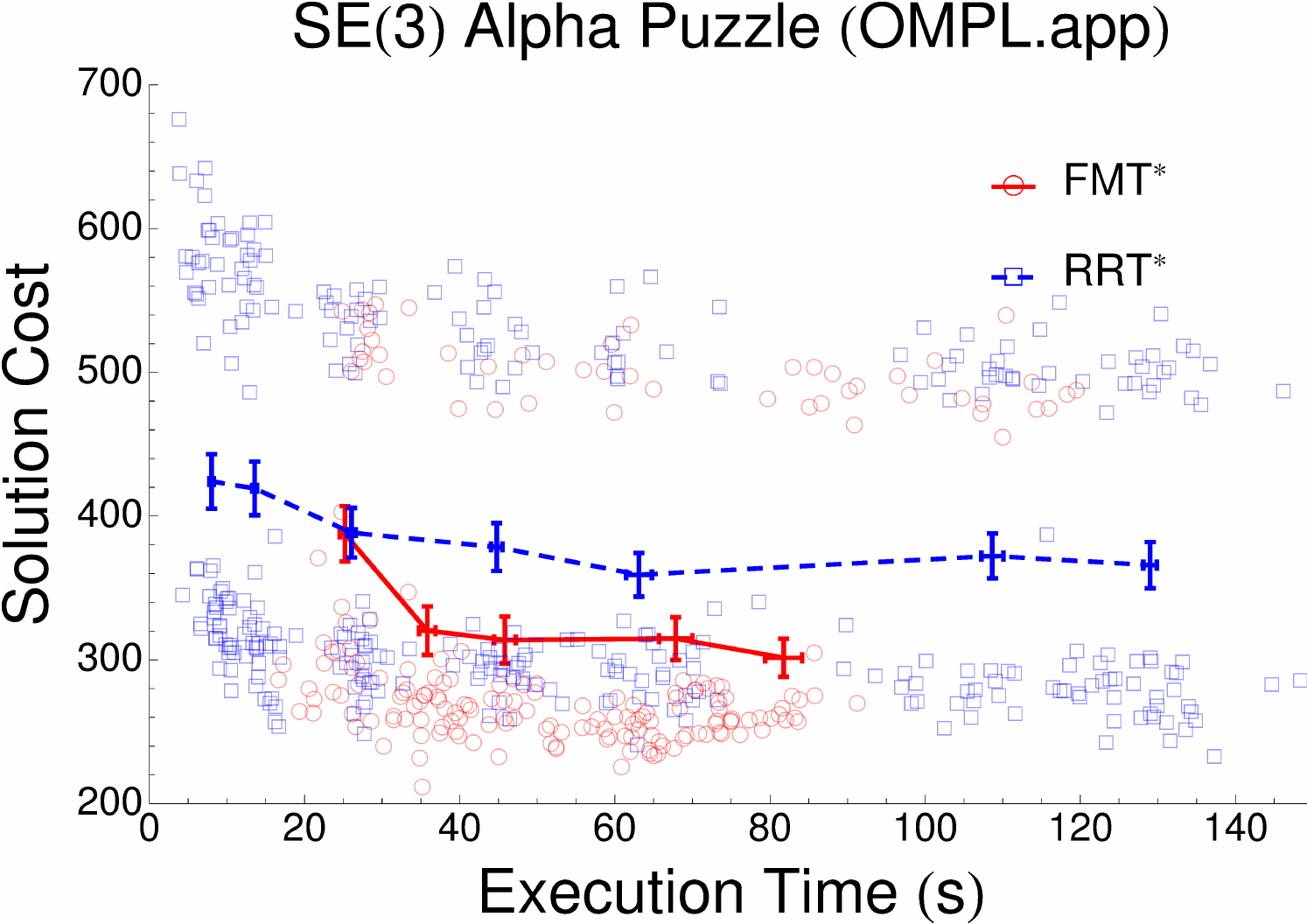}
  }
  \qquad
  \subfigure[]{\label{fig:alphaCvN}
    \includegraphics[width=\figWidth\textwidth]{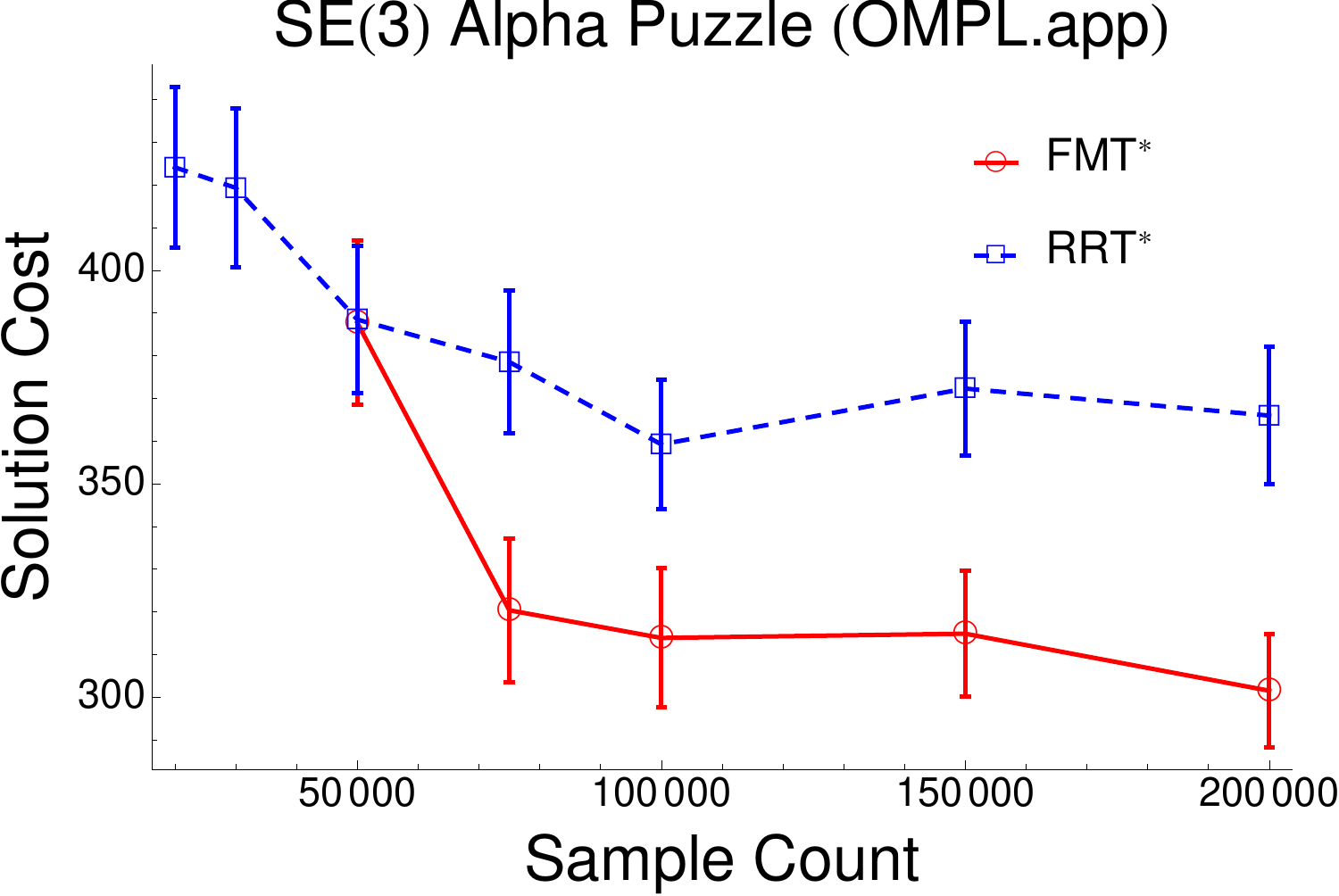}
  }
  \\
  \subfigure[]{\label{fig:alphaSRvT}
    \includegraphics[width=\figWidth\textwidth]{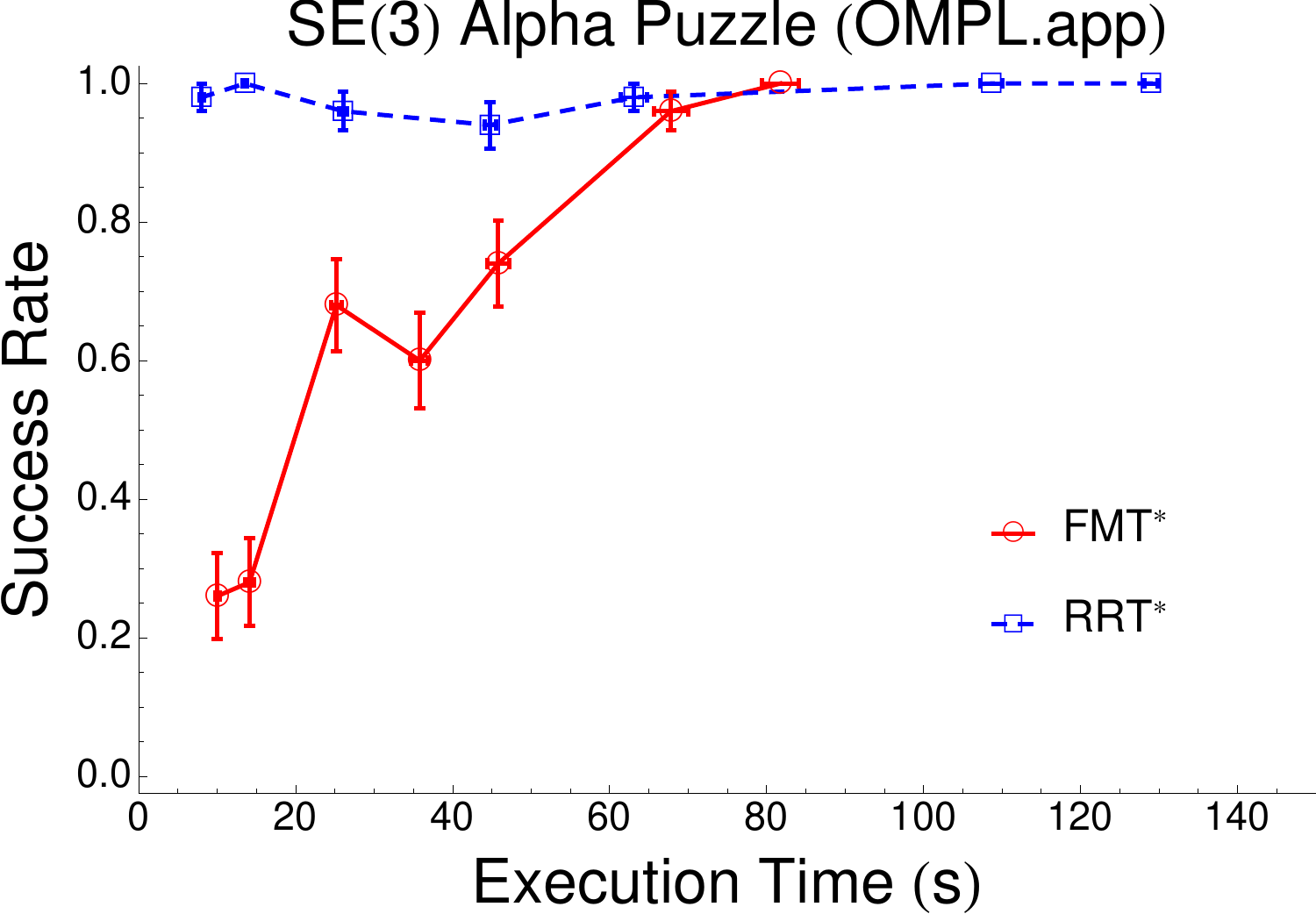}
  }
  \qquad
  \subfigure[]{
    \includegraphics[width=\figWidth\textwidth]{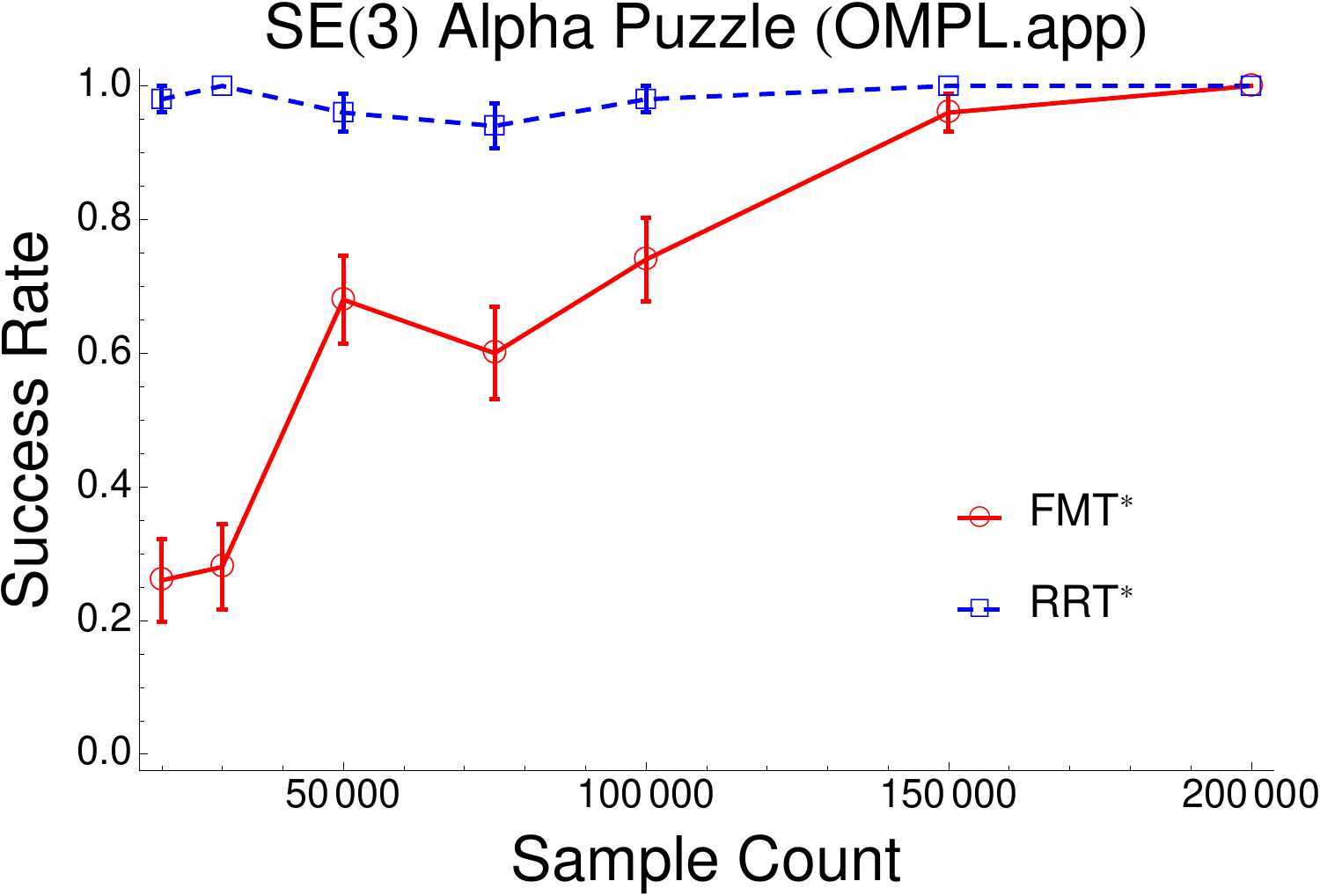}
  }
  \\
  \subfigure[]{
    \includegraphics[width=\figWidth\textwidth]{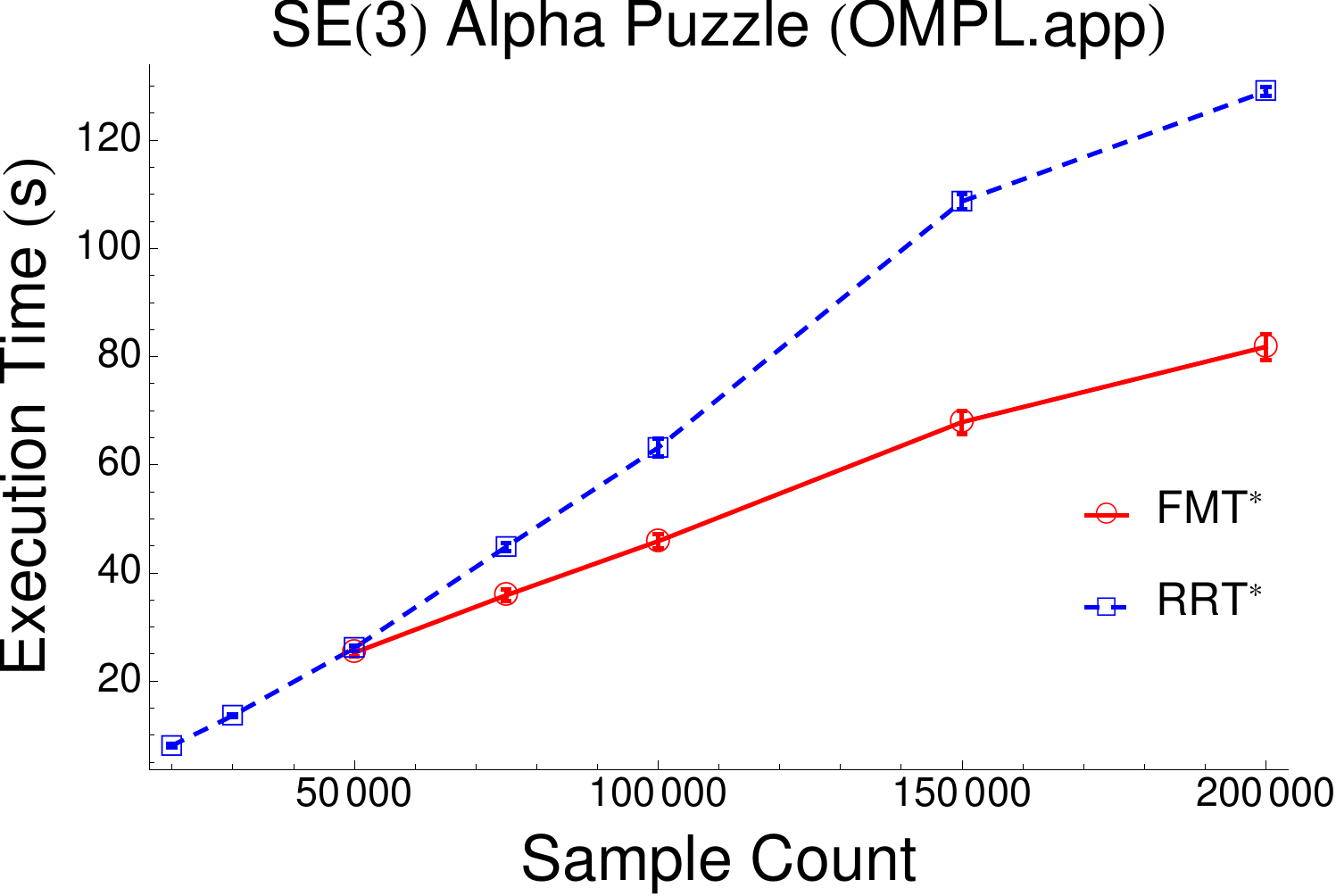}
  }
  \qquad
  \subfigure[]{
    \includegraphics[width=\figWidth\textwidth]{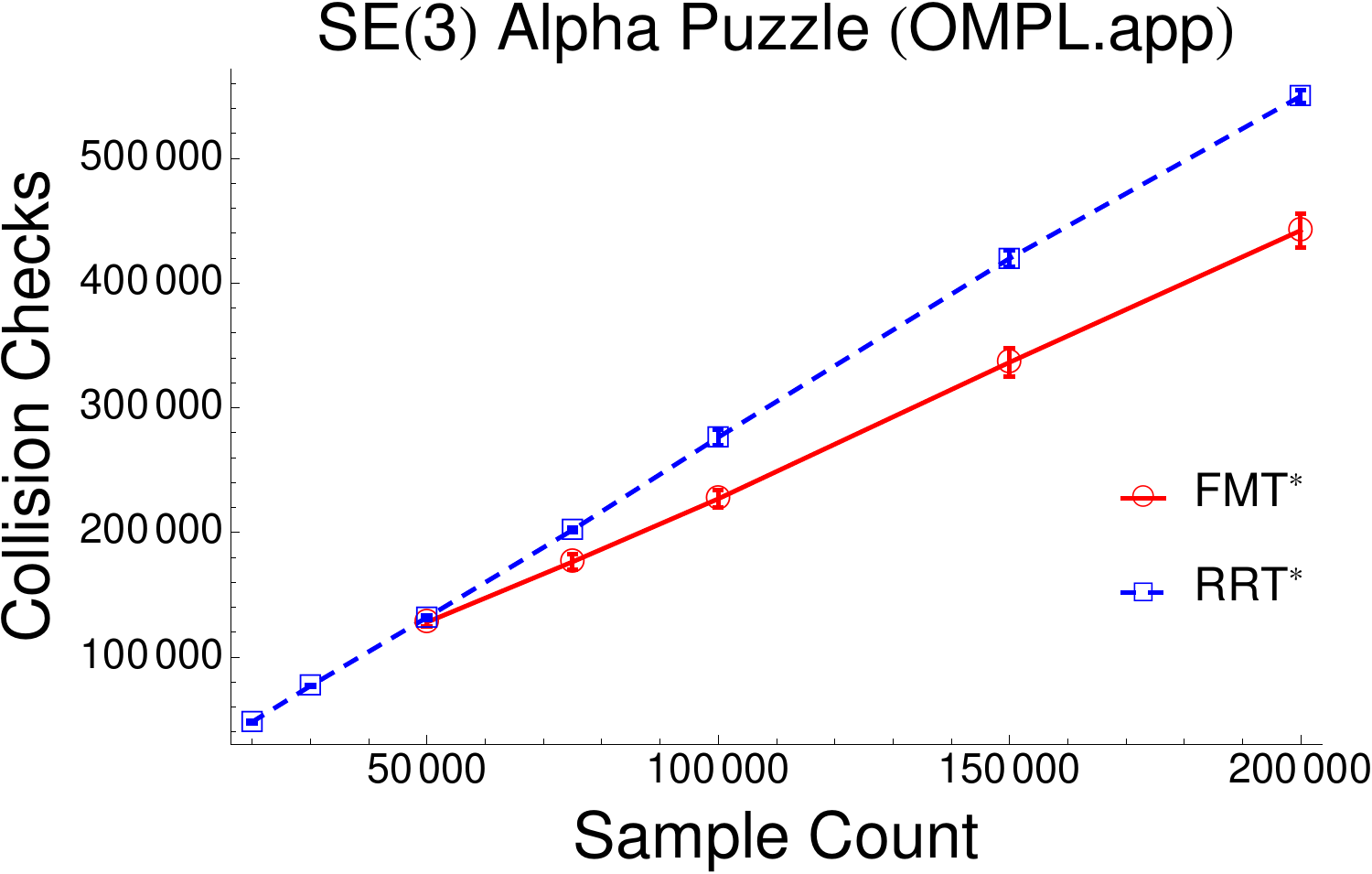}
  }
  \caption{Simulation results for a Alpha puzzle.}
  \label{fig:alpha}
\end{figure}

\subsubsection{Numerical Experiments for the $\text{SE}(3)$ Alpha Puzzle}\label{subsubsec:alpha}
Throughout our numerical evaluation of \FMT\!, we found only one planning problem where \FMT does not consistently outperform \RRTstar (\FMT outperformed \PRMstar in all of our numerical tests). The problem is the famous 
1.5 Alpha puzzle \citep{Amato.ea:98}, which consists of two tubes, each twisted in an $\alpha$ shape. The objective is to separate the intertwined tubes by a sequence of translations and rotations, which leads to extremely narrow
corridors in $\xfree$ through which the solution path must
pass (see Figure \ref{fig:alphaP}). Simulation results show that the problem presents two homotopy classes of paths (Figure \ref{fig:alphaCvT}). \FMT converges to a 100\% success rate more slowly  than \RRTstar  (Figure \ref{fig:alphaSRvT}), but when \FMT finds a solution{\color{black},} that solution tends to be in the ``right" homotopy class and of higher quality, see Figures  \ref{fig:alphaCvT} and \ref{fig:alphaCvN}. We note that in order to achieve this high success rate for \RRTstar\!, we adjusted the steering parameter to 1.5\% of the maximum extent of the configuration space{\color{black},} down from 20\%. Without this adjustment, \RRTstar was unable to find feasible solutions at the upper range of the sample counts considered.

This behavior can be intuitively explained as follows. The Alpha
puzzle presents ``narrow corridors" in $\xfree$
\citep{Amato.ea:98,Hsu.et.al:IJRR06}. When \FMT reaches their
entrance, if no sample is present in the corridors, \FMT\! stops its
expansion, while \RRTstar keeps trying to extend its branches through
the corridors{\color{black},} which explains its higher success rates at low sample counts. On the other hand, at high sample counts, samples are placed in the corridors with high probability, and when this happens the optimal (as opposed to greedy) way by which \FMT grows the tree usually leads to the discovery of a better homotopy class  and of a higher quality solution within it (Figure \ref{fig:alphaCvT}, execution times larger than $\sim 25$ seconds). As a result, \RRTstar outperforms \FMT for short execution times, while \FMT outperforms \RRTstar  in the complementary case. Finally, we note that the extremely narrow but short corridors in the Alpha puzzle present a different challenge to these algorithms than the directional corridor of the $\text{SE}(2)$ bug trap. As discussed in Section \ref{SE2bug}, the ordering of sampled points along the exit matters for \RRTstar in the bug trap configuration, while for the Alpha puzzle the fact that there are no bug-trap-like ``dead ends'' to present false steering connections means that a less intricate sequence of nodes is required for success.

On the one hand, allowing \FMT to sample new points around the leaves
of its tree whenever it fails to find a solution (i.e., when $\Hset$
becomes empty) might substantially improve its performance in the
presence of extremely narrow corridors. In a sense, such a
modification would introduce a notion of ``anytimeness"  and adaptive
sampling into \FMT\!, which would effectively leverage the steadily
outward direction by which the tree is constructed (see
\citep{JG-ES-SS-TB:14} for a conceptually related idea). This is a
promising area of future research (note that the theoretical
foundations for non-uniform sampling strategies are provided in
Section \ref{subsec:nonunif}). On the other hand, planning problems
with extremely narrow passages, such as the Alpha puzzle, do not
usually arise in robotics applications as, fortunately, they tend to
be \emph{expansive}, i.e., they enjoy ``good" visibility properties
\citep{Hsu.et.al:IJRR06}. Collectively, these considerations suggest
the superior performance of \FMT in {\color{black}most} practical settings.

\subsection{In-Depth Study of \FMT}\label{subsec:FMTStudy}

\subsubsection{Comparison Between \FMT and k{\color{black}-Nearest} \FMT}\label{subsubsec:r_v_k}
{\color{black} Since we are now comparing both versions of \FMT\!, we
  will explicitly} use radial-\FMT to denote {\color{black}the version
  of \FMT that uses a fixed Euclidean distance to determine neighbors,} {\color{black} and return to referring to \kFMT by its full name
throughout this section}. For this set of simulations, given in Figure
\ref{fig:kFMTvrFMT}, the formula for $k_n$ is still the same as in the
rest of the simulations, and for comparison, the radius $r_n$ of the
radial-\FMT implementation is chosen so that the expected number of
samples in a collision-free $r_n$-ball is exactly equal to
$k_n$. Finally, as a caveat, we point out that since
$k$-nearest-neighborhoods are fundamentally different from
$r$-radius-neighborhoods, the two algorithms depicted now use
\emph{different} primitive procedures. Since computing neighbors in
both algorithms takes a substantial fraction of the runtime, the cost
versus time plots should be interpreted with caution, since the
algorithms' relative runtimes could potentially change significantly
with a better implementation of one or both neighbor-finding primitive
procedures. With that said, we focus our attention more on the number
of collision-checks as a proxy for algorithm speed. Since this problem
has a relatively simple collision-checking module, we may expect that
for more complex problems in which collision-checking dominates
runtime, the number of collision-checks should approximate runtime well.

While the number of collision-checks in free space is the same between
the two algorithms, since all samples connect when they
are first considered, some interesting behavior is exhibited in the
same plot for the 5D maze. In particular, the number of collision
checks for {\color{black}\kFMT} increases quickly with sample count, then decreases
again and starts to grow more like the linear curve for
radial-\FMT\!. This hump in the curve corresponds to when the usual
connection distance for {\color{black}\kFMT} is greater than the width of the maze
wall, meaning that for many of the points, some of their
$k_n$-nearest-neighbors will be much farther along in the maze. Thus
{\color{black}\kFMT} tries to connect them to the tree, and fails because there is a
wall in between. The same problem doesn't occur for radial-\FMT
because its radius stays smaller than the width of the
maze wall. This is symptomatic of an advantage and disadvantage of
{\color{black}\kFMT\!}, namely that for samples near obstacles, connections may be
attempted to farther-away samples. This is an advantage because for
a point near an obstacle, there is locally less density around the point
and thus fewer nearby options for connection, making it harder for
radial-\FMT to find a connection, let alone a good one. For small sample sizes
relative to dimension however, this can cause a lot of extra
collision-checks by, as just described, having {\color{black}\kFMT} attempt
connections across walls. As this disadvantage goes away with enough
points, we still find that{\color{black}, although the difference in free space is very small,}
$k$-nearest \FMT outperforms radial-\FMT in both {\color{black}of the} settings shown, as the relative
advantage of {\color{black}\kFMT} in solution cost per sample is greater than the
relative disadvantage in number of collision-checks per sample.

\begin{figure}[!t]
  \centering
  \subfigure{
    \includegraphics[width=0.45\textwidth]{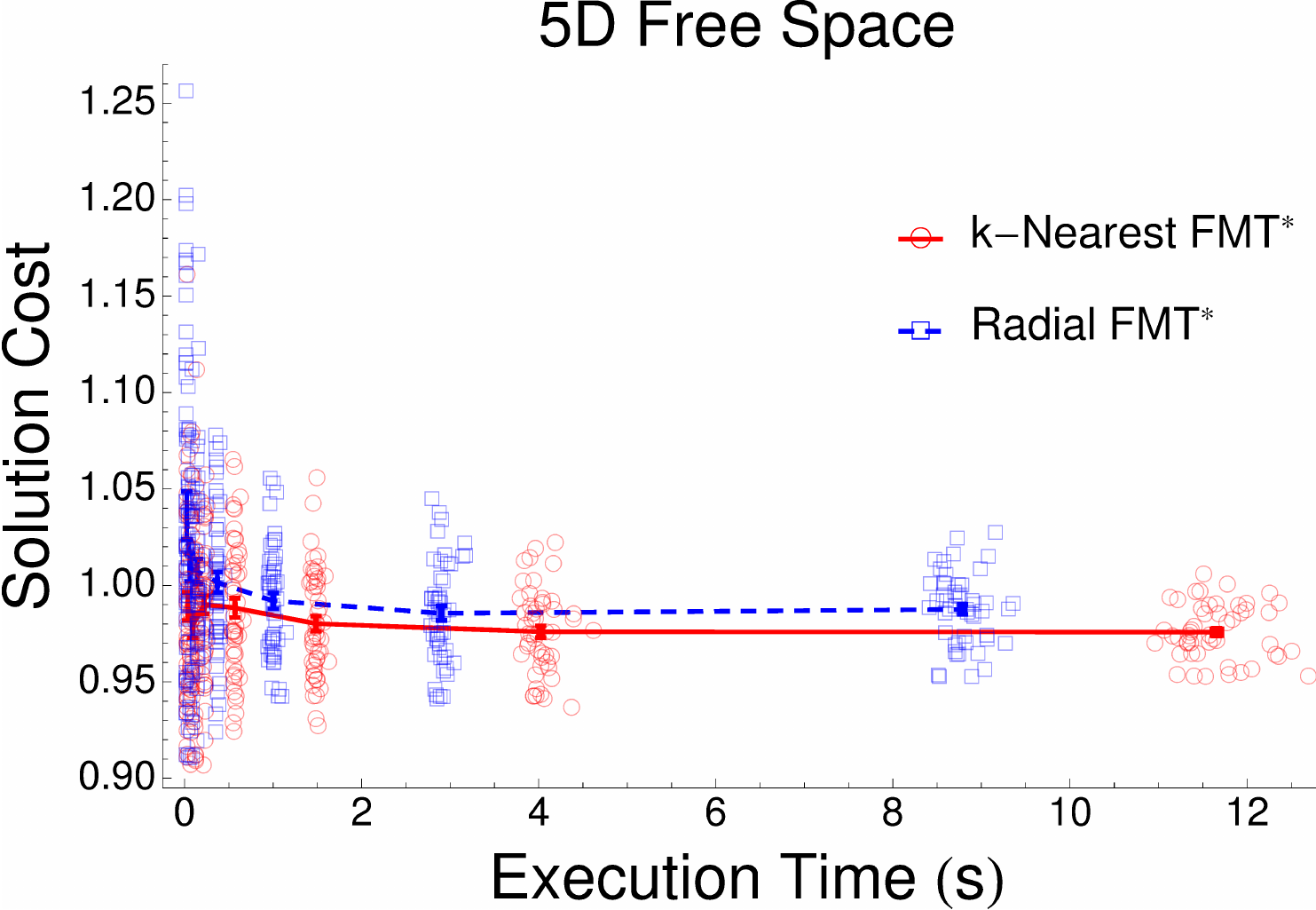}
  }
  \qquad
  \subfigure{
    \includegraphics[width=0.45\textwidth]{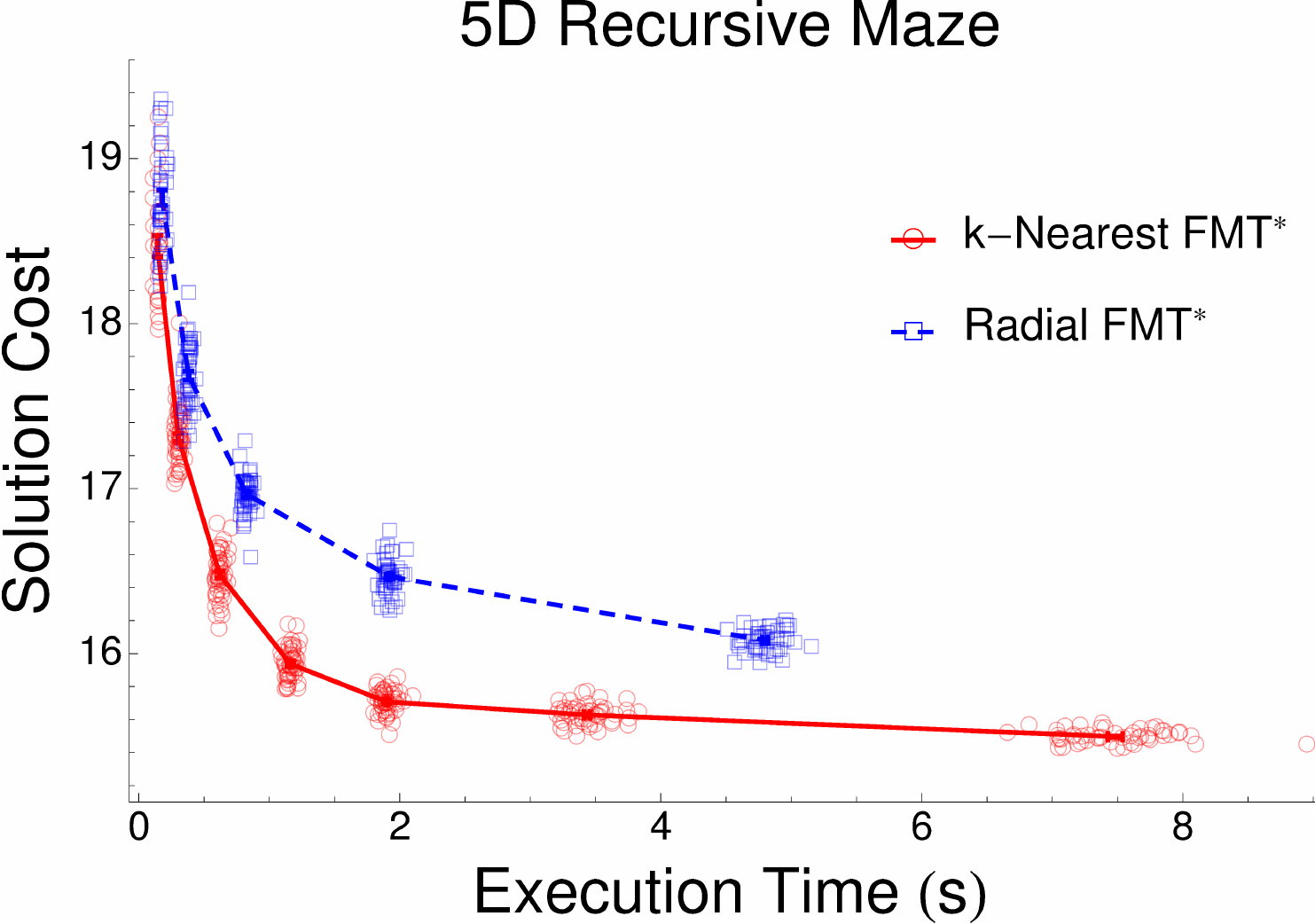}
  }
  \\
  \subfigure{
    \includegraphics[width=0.45\textwidth]{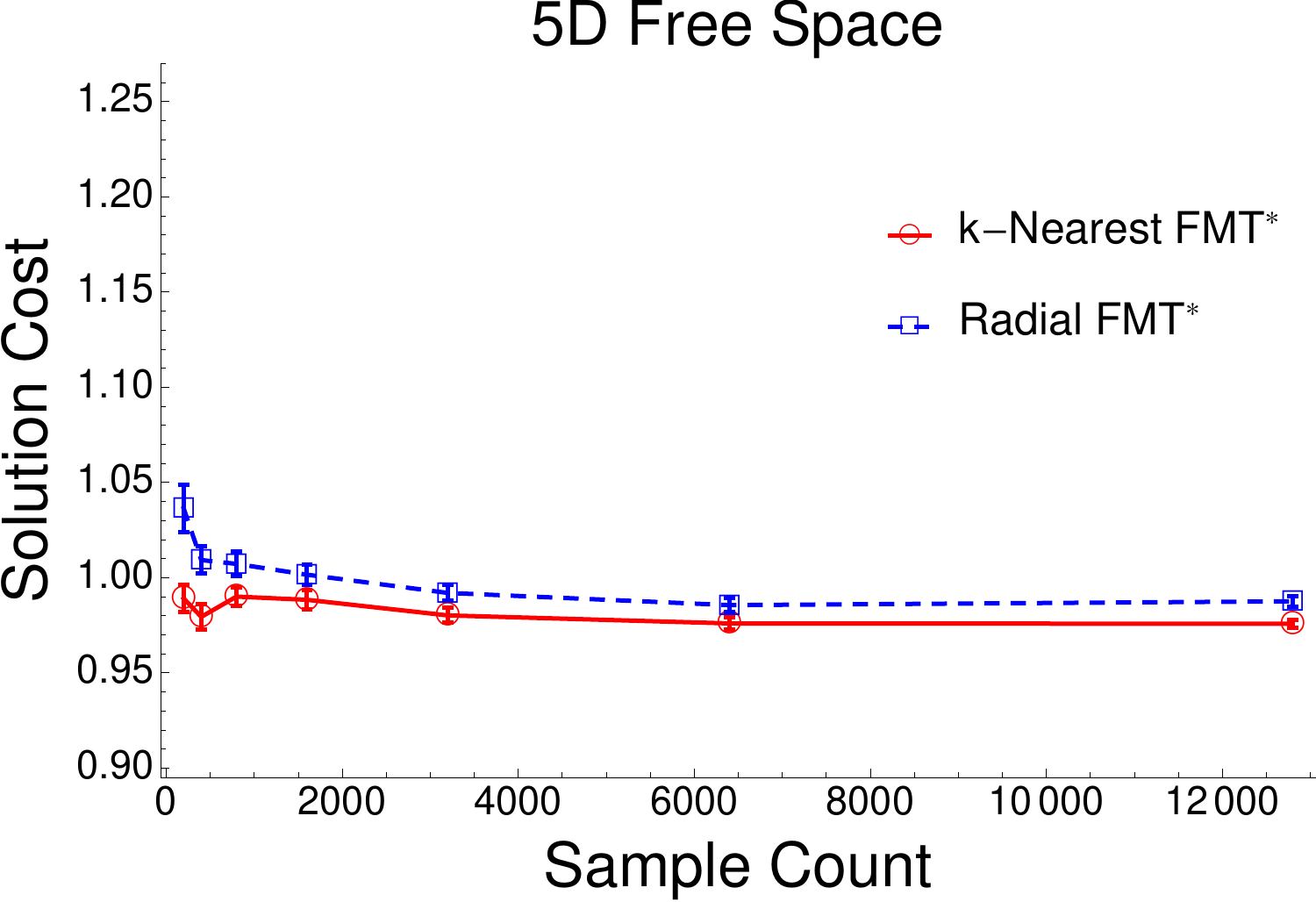}
  }
  \qquad
  \subfigure{
    \includegraphics[width=0.45\textwidth]{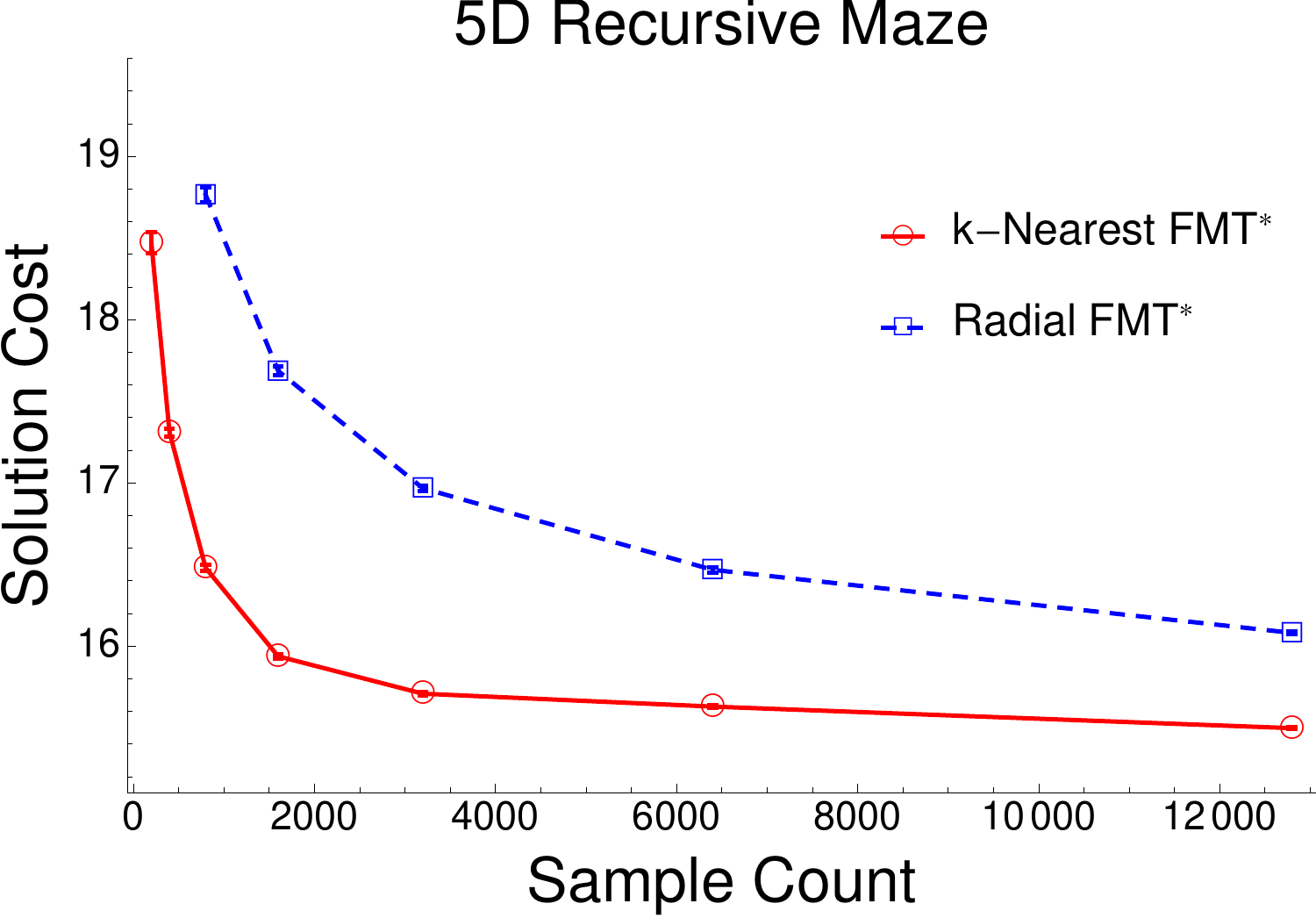}
  }
  \\
  \subfigure{
    \includegraphics[width=0.45\textwidth]{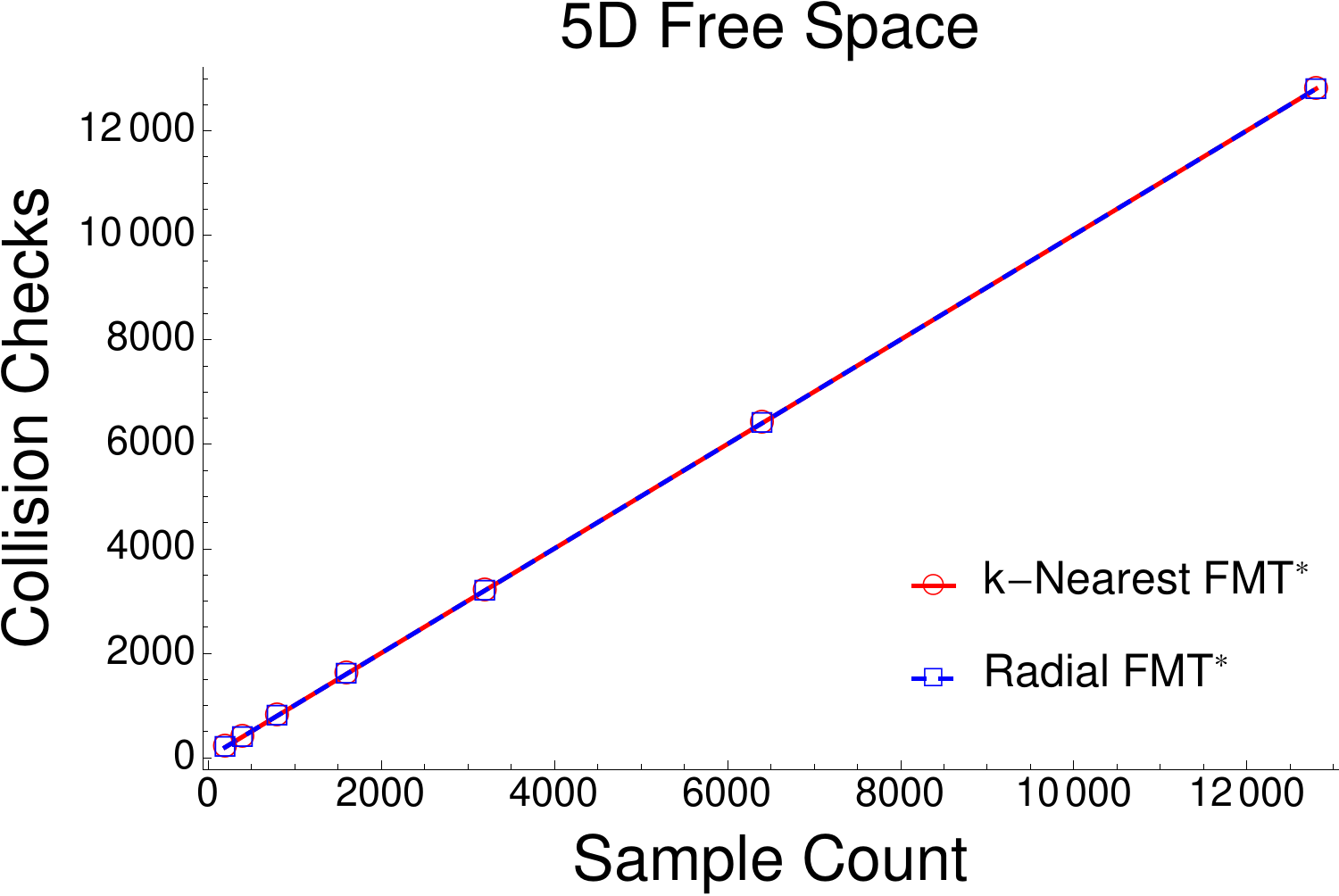}
  }
  \qquad
  \subfigure{
    \includegraphics[width=0.45\textwidth]{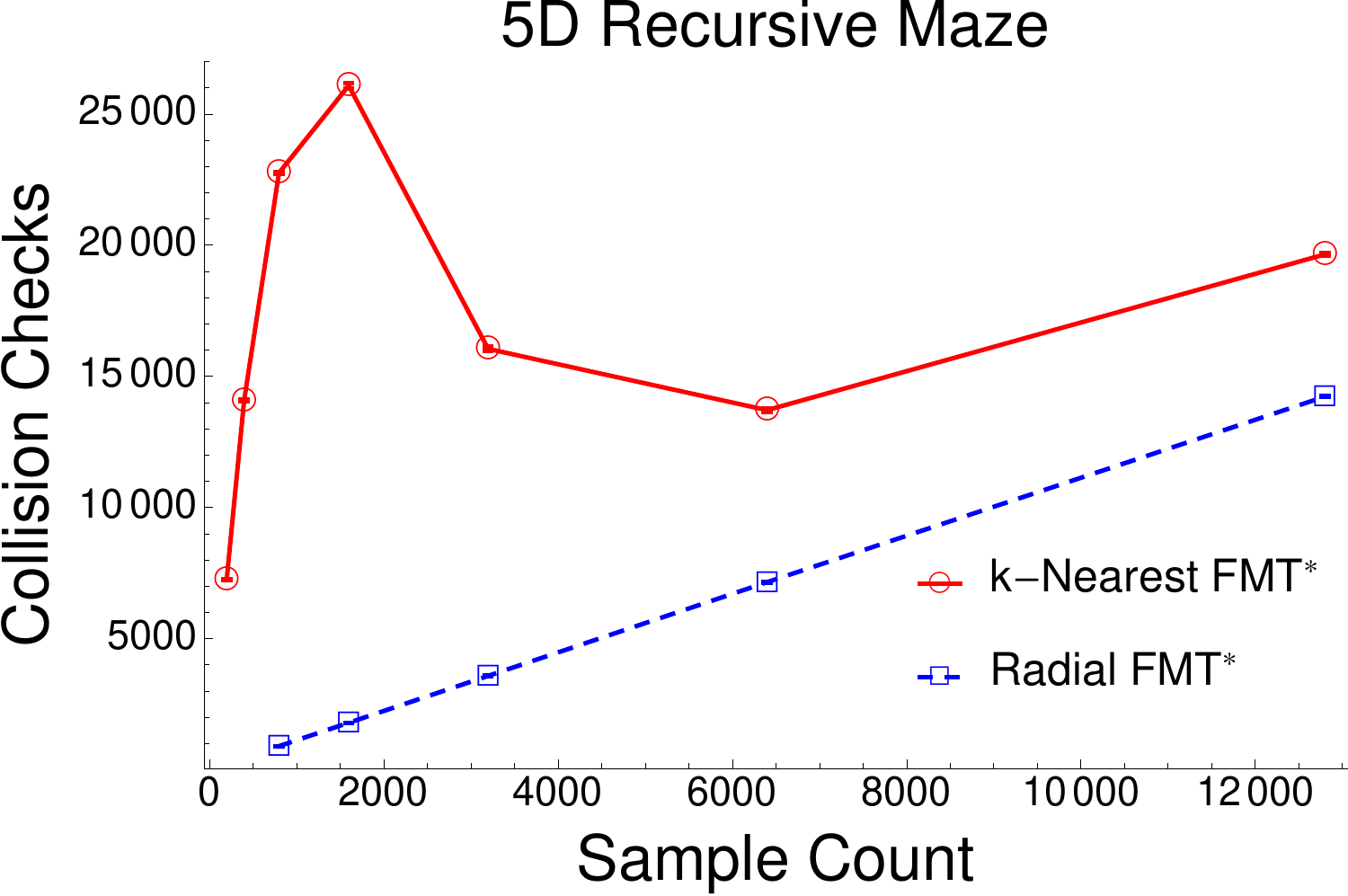}
  }
  \caption{Comparison between raidal-\FMT and {\color{black}\kFMT}.}
\label{fig:kFMTvrFMT}
\end{figure}

\subsubsection{Tuning the Radius Scale Factor}\label{subsubsec:rmtuning}
The choice of
tuning parameters is a challenging and pervasive problem in the sampling-based
motion planning literature. Throughout these numerical experiments, we
have used the same neighbor scaling factor, which we found empirically to work
well across a range of scenarios. In this section, we try to
understand the relationship of {\color{black}\kFMT} with this neighbor scaling
parameter, in the example of the  $\text{SE}(3)$ maze. The results of
running {\color{black}\kFMT} with a range 
of tuning parameters on this problem are shown in Figure
\ref{fig:tuning}. The values in the legend correspond to
{\color{black} a connection radius multiplier (RM)} of $k_{0,\text{\FMT}}$ as defined at the beginning of Section
\ref{sec:sims}, i.e., a value of {\color{black}$\text{RM} = 1$} corresponds to using exactly
$k_{0,\text{\FMT}}${\color{black}, and a value of $\text{RM} = 2$ corresponds to using $k_0 = 2^d\cdot k_{0,\text{\FMT}}$.} 
We point out that to reduce clutter, we have
omitted error bars from the plot, but note that they are small
compared to the differences between the curves.

This graph 
clearly shows the tradeoff in the scaling factor, namely that for small
values, {\color{black}\kFMT} rapidly reaches a fixed solution quality and then plateaus, while for larger values, the solution takes a while to reach 
lower costs, but continues to show improvement for longer, eventually
beating the solutions for small values. The fact that most of these
curves cross one another tells us that the choice of this tuning
parameter depends on available time and corresponding sample count. For this experimental setup, and for the other problems we tried, there appears
to be a sweet spot
around the value $\text{RM} = 1$. Indeed, this motivated our choice of
$k_{0,\text{\FMT}}$ in our simulations. We note that the curves for 0.7
through 0.9 start out at lower costs for very small execution times,
and it appears that the curve for 1.1 is going to start to
return better solutions than 1.0 before 35 seconds. That is, depending on the time/sample allowance,
there are at least four regimes in which different scaling factors outperform the others. For a different problem instance,
the optimal scaling profile may change, and for best performance some amount of manual tuning will be required. We note, however, that $\text{RM} = 1$ is never too far from the best in Figure~\ref{fig:tuning}, and should represent a safe default choice.

\begin{figure}[!t]
  \centering
  \subfigure{
    \includegraphics[width=0.75\textwidth]{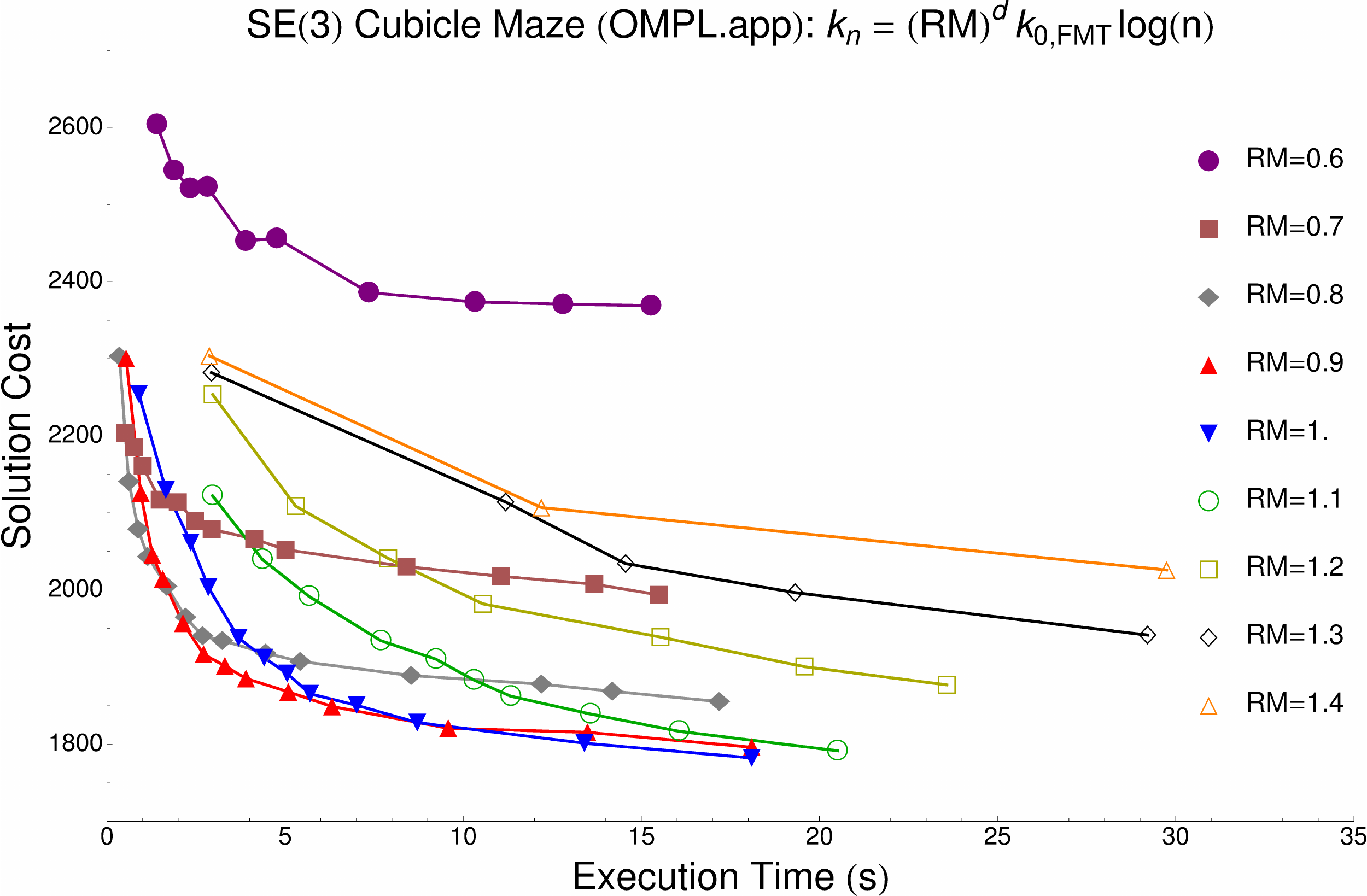}
  }\caption{Performance of {\color{black}\kFMT} for different values of the
    {\color{black} neighbor scaling parameter}.}
\label{fig:tuning}
\end{figure}

\subsubsection{Improvement on Convergence Rate with Simple Heuristics}
In any path planning problem, the optimal path tends to be quite
smooth, with only a few non-differentiable points. However, sampling-based
algorithms all locally-connect points with straight lines, resulting
in some level of {\color{black}``}jaggedness{\color{black}''} in the returned paths. A popular post-processing heuristic
for mitigating this problem is the \texttt{ADAPTIVE-SHORTCUT} smoothing heuristic described in \citep{DH:00}. In Figure \ref{fig:smoothing}, we show the effect of
applying the \texttt{ADAPTIVE-SHORTCUT} heuristic to {\color{black}\kFMT} solutions for the 5D recursive maze. We use a point
robot for this simulation as it allowed us to easily compute the true
optimal cost, and thus better place the improvement from the heuristic
in context. The improvement is substantial, and we see that we can obtain a solution
within 10\% of the optimal with fewer than 1000 samples in this
complicated 5D environment. Figure~\ref{fig:smoothing} also displays the fact that
adding the \texttt{ADAPTIVE-SHORTCUT} heuristic  only barely increases the number of
collision-checks. We place sample count on the $x$-axis because it is
more absolute than time{\color{black},} which is more system-dependent, and because the  \texttt{ADAPTIVE-SHORTCUT} heuristic
runs so \edit{quickly} compared to the overall algorithm that
sample count is able to act as an accurate proxy for time across the
two implementations of {\color{black}\kFMT}.

\begin{figure}[!t]
  \centering
  \subfigure{
    \includegraphics[width=0.45\textwidth]{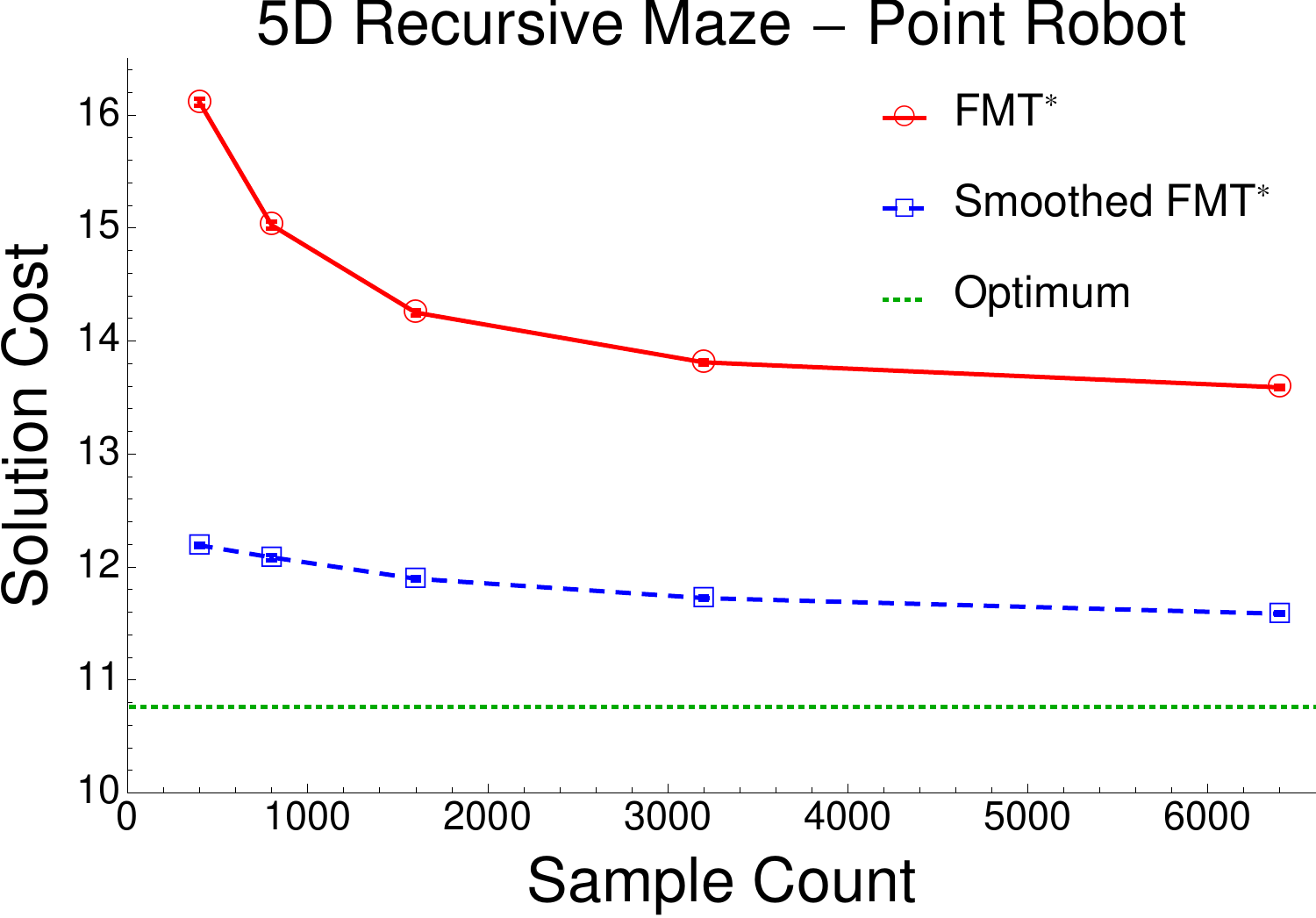}
  }
  \qquad
  \subfigure{
    \includegraphics[width=0.45\textwidth]{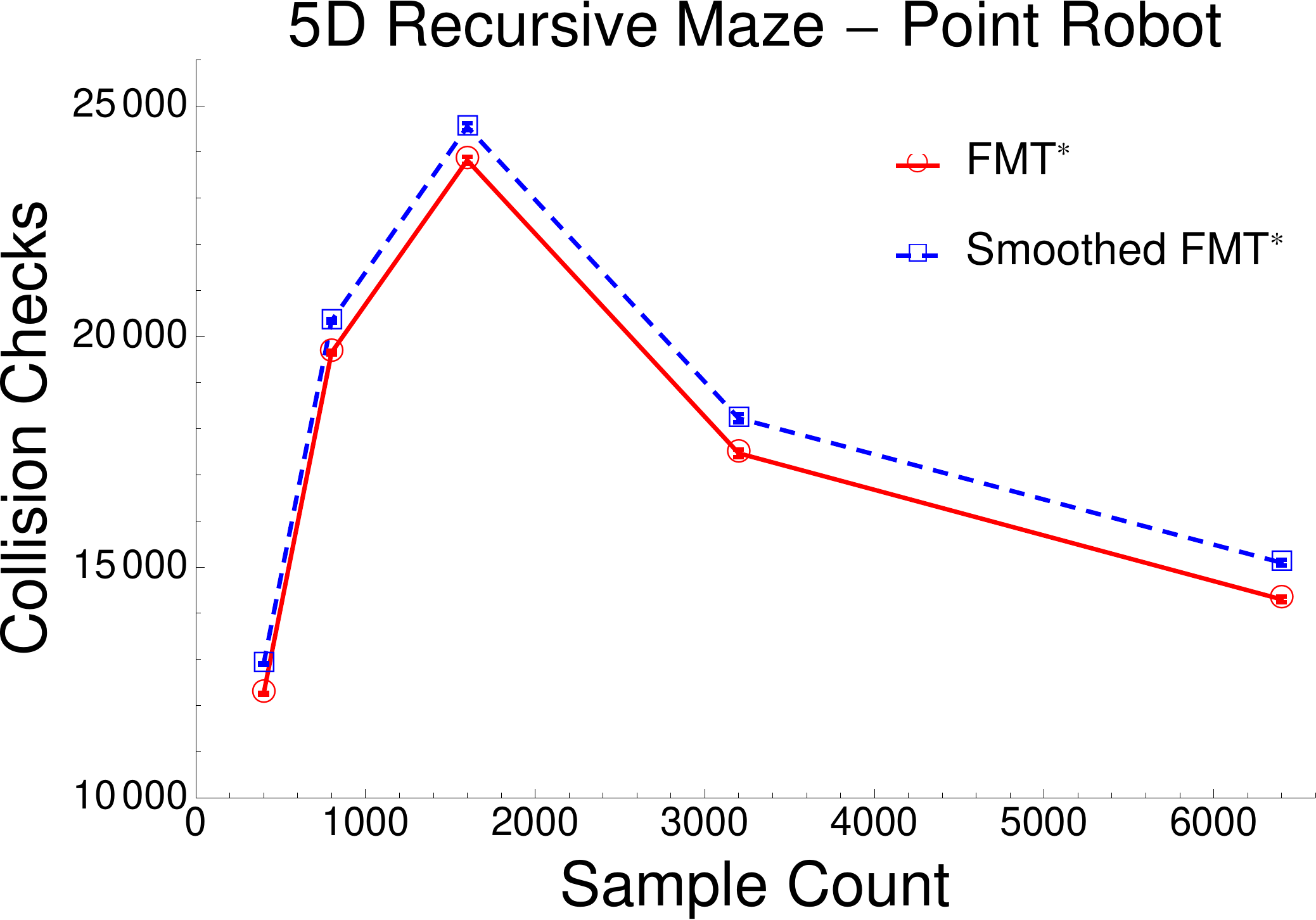}
  }
  \caption{Simulation results for a maze configuration in 2D space.}
\label{fig:smoothing}
\end{figure}

\subsubsection{Experiment With General Cost}
As a demonstration of the computationally-efficient {\color{black}\kFMT} implementation described in Section
\ref{subsubsec:internalFast}, we set up three environments with
non-constant cost-density over the configuration space. We have
kept them in two dimensions so that they can be considered visually,
see Figure \ref{fig:gencost}. In Figure \ref{fig:2x}, there is a
high-cost region near the root node and a low-cost region between it
and the goal region. {\color{black}\kFMT} correctly chooses the shorter path through
the high-cost region instead of going around it, as the extra distance
incurred by the latter option is greater than the extra cost incurred
in the former. In Figure \ref{fig:4x}, we have increased the cost
density of the high-cost region, and {\color{black}\kFMT} now correctly chooses to go
around it as much as possible. In Figure \ref{fig:2rad}, the cost density
function is inversely proportional to distance from the center, and {\color{black}\kFMT} smoothly makes its way
around the higher-cost center to reach the goal region. Note that in
all three plots, since cost-balls are used for considering
connections, the edges are shorter in higher-cost areas and longer
in lower-cost areas.

\begin{figure}[!t]
  \centering
  \subfigure[]{\label{fig:2x}
    \includegraphics[width=0.26\textwidth]{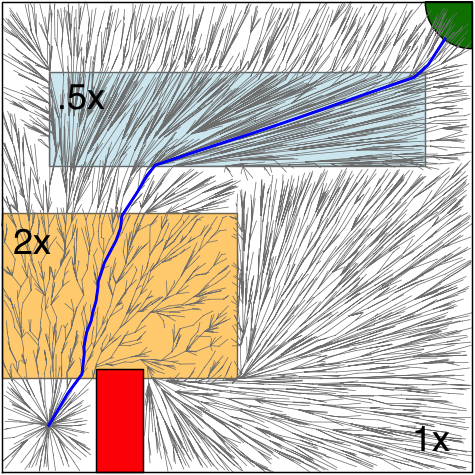}
  }
  \subfigure[]{\label{fig:4x}
    \includegraphics[width=0.26\textwidth]{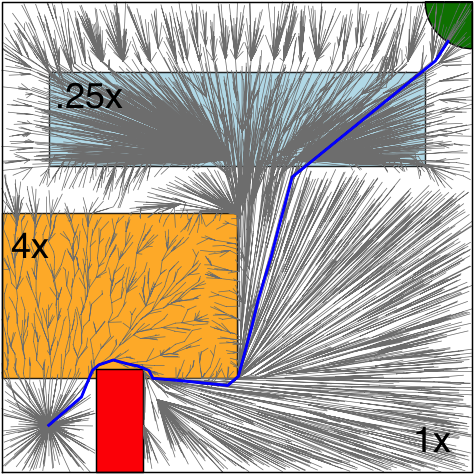}
  }
  \subfigure[]{\label{fig:2rad}
    \includegraphics[width=0.33\textwidth]{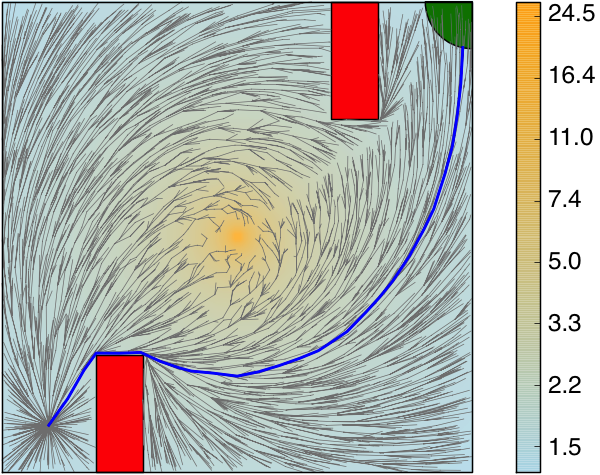}
  }
  \caption{Planning with general costs.}
\label{fig:gencost}
\end{figure}

\subsubsection{How to Best Use \FMT\!?}\label{subsubsec:howto}
\FMT relies on two parameters, namely the connection radius or number of neighbors, and the number of samples. As for the first parameter, numerical experiments showed that $k_{0, \text{\FMT}} = 2^{d}\,(e/d)$  represents an effective and fairly robust choice for the $k$-nearest version of \FMT---this is arguably the value that should be used in most planning problems. Correspondingly, for the radial version of \FMT\!, one should choose a connection radius as specified in the lower bound in Theorem \ref{thrm:AO} with $\eta = e^{1/d}-1$ (see Section \ref{subsec:simSet}). Selecting the number of samples is a more contentious issue, as it is very problem-dependent. A system designer should experiment with a variety of sample sizes for a variety of ``expected" obstacle configurations, and then choose the value that statistically performs the best within the available computational resources. Such a baseline choice could be adaptively adjusted via the resampling techniques discussed in \citep{OS-DH:14} or via the adaptive strategies discussed in \citep{JG-ES-SS-TB:14} and mentioned in Section \ref{subsubsec:alpha}.

For the problem environments and sample sizes considered in our
experiments, the extents of the neighbor sets ($k$-nearest or
radial) are macroscopic with respect to the obstacles. The decrease in
available connections for many samples when their radial neighborhoods significantly intersect the obstacle set seems to adversely affect algorithm performance (see Section \ref{subsubsec:r_v_k}).  The $k$-nearest version of \FMT avoids this issue by attempting connection to a fixed number of samples regardless of obstacle proximity. Thus $k$-nearest \FMT should be considered the default, especially for obstacle cluttered environments. If the application has mostly open space to plan through, however, radial \FMT may be worth testing and tuning.

In problems with a general cost function, \FMT provides a good standalone solution that provably converges to the optimum. In problems with a metric cost function, \FMT (as also \RRTstar and \PRMstar\!) should be considered as a backbone algorithm on top of which one should add a smoothing procedure such as \texttt{ADAPTIVE-SHORTCUT} \citep{DH:00}. In this regard, \FMT should be regarded as a fast ``homotopy finder{\color{black},}'' reflecting its quick initial convergence rate to a good homotopy class, which then needs to be assisted by a smoothing procedure to offset its typical plateauing behavior{\color{black},} i.e., slow convergence to an optimum solution \emph{within} a homotopy class. When combining \FMT with a smoothing procedure one should consider values for the connection radius or number of neighbors most likely equal to about 80\% or 90\% of the previously suggested values, so as to ensure very fast initial rates of convergence (see Section~\ref{subsubsec:rmtuning}). Additionally, non-uniform sampling strategies reflecting \emph{prior} knowledge about the problem may also improve the speed of finding the optimal homotopy class. Finally, a bidirectional implementation is usually preferable \citep{JS-ES-LJ-MP:14}.

\section{Conclusions}\label{sec:conc}
In this paper we have introduced and analyzed a novel probabilistic
sampling-based motion planning algorithm called the Fast Marching Tree
algorithm (\FMT$\!$). This algorithm is asymptotically optimal and
appears to converge \emph{significantly} faster then its
state-of-the-art counterparts for a wide range of challenging problem
instances. We used the weaker notion of convergence in probability, as
opposed to convergence almost surely, and showed that the extra
mathematical flexibility allowed us to compute convergence rate
bounds. Extensions (all retaining AO) to non-uniform sampling
strategies, general costs, and a $k$-nearest-neighbor implementation
were also presented.

This paper leaves numerous important extensions open for further
research. First, it is of interest to extend the \FMT algorithm to
address problems with differential motion constraints; the work
in \citep{ES-LJ-MP:14} and \citep{ES-LJ-MP:14b} presents preliminary
results in this direction (specifically, for systems with
driftless differential constraints, and with drift constraints and
linear affine dynamics, respectively). Second, we plan to explore
further the convergence rate bounds provided by the proof of AO given
here. Third, we plan to use this algorithm as the backbone for
scalable stochastic planning algorithms. Fourth, we plan to extend the
\FMT algorithm for solving the Eikonal equation{\color{black}, and more generally for} addressing problems characterized by partial differential equations. Fifth,
as discussed, \FMT requires the tuning of a scaling factor for either the search radius or the number of nearest-neighbors{\color{black},} and the selection of the number of samples. It is of interest to devise strategies whereby
these parameters are ``self regulating'' (see Section \ref{subsubsec:howto} for some possible strategies), thus effectively making the
algorithm parameter-free and anytime. Finally, we plan to test the performance of
\FMT on mobile ground robots operating in dynamic environments.

\section*{Acknowledgement}
The authors gratefully acknowledge the contributions of Wolfgang
Pointner and Brian Ichter to the implementation of \FMT and for help on the numerical experiments. This research was supported  by NASA under the Space Technology Research Grants Program, Grant NNX12AQ43G. 

\appendix

\titleformat{\section}
{\normalfont\Large\bfseries}
{Appendix \thesection:}{0.3em}{}

\section{Proofs for Lemmas \ref{lemma:claim_1}--\ref{lemma:claim_3}}\label{appA}
\begin{proof}[Proof of Lemma \ref{lemma:claim_1}]
To start, note that
$ \mathbb{P}(K^{\beta}_n \ge \alpha (M_n-1)) + \mathbb{P}(A_n^c)  \ge \mathbb{P}(\{K^{\beta}_n \ge \alpha (M_n-1)\} \cup A_n^c) 
 = 1 - \mathbb{P}(\{K^{\beta}_n < \alpha (M_n-1)\} \cap A_n)$, 
where the first inequality follows from the union bound and the second equality follows from De Morgan's laws. Note that the event $\{K^{\beta}_n < \alpha (M_n-1)\} \cap A_n$ is the event that each $B_{n,m}$ contains at least one node, and more than a $1-\alpha$ fraction of the $B^{\beta}_{n,m}$ balls also contains at least one node.  

When two nodes $x_i$ and $x_{i+1}$, $i \in \{1,\ldots,M_n-2\}$, are contained in adjacent balls $B_{n,i}$ and $B_{n,i+1}$, respectively, their distance apart $\|x_{i+1} - x_i\|$ can be upper bounded by,
\[ \left\{ \begin{array}{ll} \frac{\theta r_n}{2+\theta} + \frac{\beta r_n}{2+\theta} + \frac{\beta r_n}{2+\theta} & : \text{if}\, x_i \in B^{\beta}_{n,i} \text{ and } x_{i+1} \in B^{\beta}_{n,i+1} \\
\frac{\theta r_n}{2+\theta}  + \frac{\beta r_n}{2+\theta} + \frac{r_n}{2+\theta} & : \text{if}\, x_i \in B^{\beta}_{n,i} \text{ or } x_{i+1} \in B^{\beta}_{n,i+1} \\
\frac{\theta r_n}{2+\theta} + \frac{r_n}{2+\theta} + \frac{r_n}{2+\theta} & : \, \text{otherwise,} \end{array}\right. \]
where the three bounds have been suggestively divided into a term for the distance between ball centers and a term each for the radii of the two balls containing the nodes.  This bound also holds for $\|x_{M_n}-x_{M_n-1}\|$, although necessarily in one of the latter two bounds, since $B^{\beta}_{n,M_n}$ being undefined precludes the possibility of the first bound. Thus we can rewrite the above bound, for $i \in \{1,\ldots, M_n-1\}$, as $\|x_{i+1} - x_i\| \le \bar{c}(x_i) + \bar{c}(x_{i+1})$, 
where
\begin{equation}
\label{cbar}
\bar{c}(x_k) := \left\{ 
\begin{array}{ll}
\frac{\theta r_n}{2(2+\theta)} + \frac{\beta r_n}{2+\theta} & : \, x_k \in B^{\beta}_{n,k}, \\
\frac{\theta r_n}{2(2+\theta)} + \frac{r_n}{2+\theta} & : \, x_k \notin B^{\beta}_{n,k}. \\
\end{array} \right.
\end{equation}
Again, $\bar{c}(x_{M_n})$ is still well-defined, but always takes the second value in equation \eqref{cbar} above. Let $L_{n,\alpha,\beta}$ be the length of a path that sequentially connects a set of nodes $\{x_1 = x_{\text{init}}, x_2, \dots, x_{M_n}\}$, such that $x_m \in B_{n,m} \; \forall m \in \{1,\dots,M_n\}$, and more than a $(1-\alpha)$ fraction of the nodes $x_1,\dots,x_{M_n-1}$ are also contained in their respective $B^{\beta}_{n,m}$ balls. The length $L_{n,\alpha,\beta}$ can then be upper bounded as follows

\allowdisplaybreaks
\begin{align}
\label{pathL1}
L_{n,\alpha,\beta} & = \sum^{M_n-1}_{k = 1} \|x_{k+1} - x_k\|  \le \sum^{M_n-1}_{k = 1} 2\bar{c}(x_k) - \bar{c}(x_1) + \bar{c}(x_{M_n})\nonumber \\
& \le (M_n - 1)\frac{\theta r_n}{2+\theta} + \lceil (1-\alpha)(M_n-1) \rceil \frac{2\beta r_n}{2+\theta} + \lfloor \alpha (M_n-1) \rfloor \frac{2 r_n}{2+\theta} + \frac{(1-\beta) r_n}{2+\theta}\nonumber \\
& \le (M_n - 1) \, r_n \, \frac{\theta + 2\alpha + 2(1-\alpha)\beta}{2+\theta} + \frac{(1-\beta)r_n}{2+\theta}\displaybreak[3]\nonumber \\
& \le M_n \,r_n \frac{\theta + 2\alpha + 2\beta}{2+\theta} + \frac{r_n}{2+\theta}.
\end{align}

In equation \ref{pathL1}, $\lceil x \rceil$ denotes the smallest integer not less than $x$, while $\lfloor x \rfloor$ denotes the  largest integer not greater than $x$.
Furthermore, we can upper bound $M_n$ as follows,
\begin{align}
\label{Mbound}
c(\sigma_n') & \ge\! \!\sum_{k = 1}^{M_n-2} \|\sigma_n(\tau_{k+1}) \!- \!\sigma_n(\tau_k)\| \!\!+\!\! \|\sigma_n'(1) \!-\! \sigma_n(\tau_{M_n-1})\|  \ge (M_n\! -\! 2) \frac{\theta r_n}{2+\theta} \!+\! \frac{r_n}{2(2+\theta)}\nonumber \\
& = M_n \frac{\theta r_n}{2+\theta} + \Bigl (\frac{1}{2} - 2\theta \Bigr) \frac{r_n}{2+\theta}  \ge M_n \frac{\theta r_n}{2+\theta},
\end{align}
where the last inequality follows from the assumption that $\theta < 1/4$. Combining equations (\ref{pathL1}) and (\ref{Mbound}) gives
\begin{equation}
\label{pathL2}
\begin{split}
L_{n,\alpha,\beta} & \le c(\sigma_n')\, \biggl (1+\frac{2\alpha + 2\beta}{\theta} \biggr) + \frac{r_n}{2+\theta}  = \kappa(\alpha,\beta,\theta)\, c(\sigma_n') + \frac{r_n}{2+\theta}.
\end{split}
\end{equation}

We will now show that when $A_n$ occurs, $c_n$ is no \edit{greater} than the
length of the path connecting any sequence of $M_n$ nodes tracing through the balls $B_{n,1},\dots,B_{n,M_n}$ (this
inequality of course also implies $c_n < \infty$). Coupling this fact with equation \eqref{pathL2}, {\color{black}we} can then conclude that the event $\{K^{\beta}_n < \alpha (M_n-1)\} \cap A_n$ implies that $c_n \leq \kappa(\alpha,\beta,\theta)\, c(\sigma_n') + \frac{r_n}{2+\theta}$, which, in turn, would prove the lemma.

Let $x_1 = x_{\text{init}}$, $x_2 \in B_{n,2}$, $\dots$, $x_{M_n} \in B_{n,M_n} \subseteq \mathcal{X}_{\text{goal}}$. Note that the $x_i$'s need not all be distinct.  The following property holds for all $m \in \{2, \dots, M_n-1\}$:
\begin{equation*}
\begin{split}
\|x_m - x_{m-1}\| & \le \|x_m - \sigma_n(\tau_m)\| + \|\sigma_n(\tau_m) - \sigma_n(\tau_{m-1})\|  + \|\sigma_n(\tau_{m-1}) - x_{m-1}\| \\& \le \frac{r_n}{2+\theta} + \frac{\theta r_n}{2+\theta} + \frac{r_n}{2+\theta} = r_n.
\end{split}
\end{equation*}
Similarly, {\color{black}we} can write $\|x_{M_n} - x_{M_n-1}\| \le \frac{r_n}{2+\theta} + \frac{(\theta + 1/2)r_n}{2+\theta} + \frac{r_n}{2(2+\theta)} = r_n$.  Furthermore, we can lower bound the distance to the nearest obstacle for $m \in \{2, \dots, M_n-1\}$ by:
\begin{equation*}
\begin{split}
\inf_{w \in X_{\text{obs}}}\|x_m - w\| & \ge \inf_{w \in X_{\text{obs}}}\|\sigma_n(\tau_m) - w\| - \|x_m - \sigma_n(\tau_m)\|  \ge \frac{3+\theta}{2+\theta}r_n - \frac{r_n}{2+\theta} = r_n,
\end{split}
\end{equation*}
where the second inequality follows from the assumed
$\delta_n$-clearance of the path $\sigma_n$.  Again, similarly, {\color{black}we}
can write $\inf_{w \in X_{\text{obs}}}\|x_{M_n} - w\| \ge \inf_{w \in
  X_{\text{obs}}}||x_m - \sigma_n(1)\| - \|\sigma_n(1) - w\| \ge
\frac{3+\theta}{2+\theta}r_n - \frac{r_n}{2+\theta} = r_n$.  Together,
these two properties imply that, for $m \in \{2,\dots, M_n\}$, when a
connection is attempted for $x_m$, $x_{m-1}$ will be in the search
radius and there will be no obstacles in that search radius.  In
particular, this fact implies that either the algorithm will return a feasible path before considering $x_{M_n}$, or it will consider $x_{M_n}$ and connect it.  Therefore, \FMT is guaranteed to return a feasible solution when the event $A_n$ occurs.  Since the remainder of this proof assumes that $A_n$ occurs, we will also assume $c_n < \infty$.

Finally, assuming $x_m$ is contained in an edge, let $c(x_m)$ denote the unique cost-to-{\color{black}arrive} of $x_m$ in the graph generated by \FMT at the end of the algorithm, just before the path is returned. If $x_m$ is not contained in an edge, we set $c(x_m) = \infty$.  Note that $c(\cdot)$ is well-defined, since if $x_m$ is contained in any edge, it must be connected through a unique path to $x_{\text{init}}$. We claim that for all $m \in \{2,\dots, M_n\}$, either
$ c_n \le \sum_{k = 1}^{m-1} \|x_{k+1} - x_k\|$, 
or
$c(x_m) \le \sum_{k = 1}^{m-1} \|x_{k+1} - x_k\|$. 
In particular, taking $m = M_n$, this inequality would imply that $c_n \le \min\{c(x_{M_n}),\sum_{k = 1}^{M_n-1} \|x_{k+1} - x_k\|\} \le \sum_{k = 1}^{M_n-1} \|x_{k+1} - x_k\|$, which, as argued before, would imply the claim.

The claim is proved by induction on $m$. The case of $m = 1$ is trivial, since the first step in the \FMT algorithm is to make every collision-free connection between $x_{\text{init}} = x_1$ and the nodes contained in $B(x_{\text{init}}; r_n)$, which will include $x_2$ and, thus, $c(x_2) = \|x_2 - x_1\|$.  Now suppose the claim is true for $m-1$.  There are four exhaustive cases to consider:
\begin{itemize}
\itemsep0.5em
\item[1.] $c_n \le \sum_{k = 1}^{m-2}\|x_{k+1} - x_k\|$,
\item[2.] $c(x_{m-1}) \le \sum_{k = 1}^{m-2}\|x_{k+1} - x_k\|$ and \FMT {\color{black}terminates} before considering $x_m$,
\item[3.] $c(x_{m-1}) \le \sum_{k = 1}^{m-2}\|x_{k+1} - x_k\|$ and $x_{m-1} \in \Hset$ when $x_m$ is first considered,
\item[4.] $c(x_{m-1}) \le \sum_{k = 1}^{m-2}\|x_{k+1} - x_k\|$ and $x_{m-1} \notin \Hset$ when $x_m$ is first considered.
\end{itemize}

Case 1: $c_n \le \sum_{k = 1}^{m-2}\|x_{k+1} - x_k\| \le \sum_{k =
  1}^{m-1}\|x_{k+1} - x_k\|$, thus the claim is true for $m$. Without
loss of generality, for cases 2--4 we assume that case 1 does not occur.

Case 2: $c(x_{m-1}) < \infty$ implies that $x_{m-1}$ enters $\Hset$ at some point during \FMT\!. However, if $x_{m-1}$ were ever the minimum-cost element of $\Hset$, $x_m$ would have been considered, and thus \FMT must have returned a feasible solution before $x_{m-1}$ was ever the minimum-cost element of $\Hset$.  Since the end-node of the solution returned must have been the minimum-cost element of $\Hset$, $c_n \le c(x_{m-1}) \le \sum_{k = 1}^{m-2}\|x_{k+1} - x_k\| \le \sum_{k = 1}^{m-1}\|x_{k+1} - x_k\|$, thus the claim is true for $m$.

Case 3: $x_{m-1} \in \Hset$ when $x_m$ is first considered, $\|x_m - x_{m-1}\| \le r_n$, and there are no obstacles in $B(x_m; r_n)$. Therefore, $x_m$ must be connected to some parent when it is first considered, and $c(x_m) \le c(x_{m-1}) + \|x_m - x_{m-1}\| \le \sum_{k = 1}^{m-1} \|x_{k+1} - x_k\|$, thus the claim is true for $m$.

Case 4: When $x_m$ is first considered, there must exist $z \in B(x_m; r_n)$ such that $z$ is the minimum-cost element of $\Hset$, while $x_{m-1}$ has not even entered $\Hset$ yet.  Note that again, since $B(x_m; r_n)$ intersects no obstacles and contains at least one node in $\Hset$, $x_m$ must be connected to some parent when it is first considered. Since $c(x_{m-1}) < \infty$, there is a well-defined path $\mathcal{P} = \{v_1, \dots, v_q\}$ from $x_{\text{init}} = v_1$ to $x_{m-1} = v_q$ for some $q \in \mathbb{N}$.  Let $w = v_j$, where $j = \max_{i \in \{1,\dots,q\}} \{i : v_i \in \Hset \text{ when } x_m \text{ is first considered}\}$. Then there are two subcases, either $w \in B(x_m; r_n)$ or $w \notin B(x_m; r_n)$.  If $w \in B(x_m; r_n)$, then,
\[ \begin{split}
c(x_m) & \le c(w) + \|x_m - w\| \le c(w) + \|x_{m-1} - w\| + \|x_m - x_{m-1}\| \\
& \le c(x_{m-1}) + \|x_m - x_{m-1}\|  \le \sum_{k = 1}^{m-1} \|x_{k+1} - x_k\|,
\end{split} \]
thus the claim is true for $m$ (the second and third inequalities follow from the triangle inequality).  If $w \notin B(x_m; r_n)$, then,
\[ \begin{split}
c(x_m) & \le c(z) + \|x_m - z\|  \le c(w) + r_n  \le c(x_{m-1}) + \|x_m - x_{m-1}\|  \le \sum_{k = 1}^{m-1} \|x_{k+1} - x_k\|,
\end{split} \]
where the third inequality follows from the fact that $w \notin B(x_m, r_n)$, which means that any path through $w$ to $x_m$, in particular the path $\mathcal{P} \cup {x_m}$, must traverse a distance of at least $r_n$ between $w$ and $x_m$.  Thus, in the final subcase of the final case, the claim is true for $m$.

Hence, we can conclude that $c_n \le \sum_{k = 1}^{M_n-1} \|x_{k+1} - x_k\|$. As argued before, coupling this fact with equation \eqref{pathL2}, {\color{black}we} can conclude that the event $\{K^{\beta}_n < \alpha (M_n-1)\} \cap A_n$ implies that $c_n \leq \kappa(\alpha,\beta,\theta)\, c(\sigma_n') + \frac{r_n}{2+\theta}$, and the claim follows.
\end{proof}

\begin{proof}[Proof of Lemma \ref{lemma:claim_2}]
The proof relies on a Poissonization argument.  For $\nu \in (0,1)$,
let $\tilde{n}$ {\color{black}be} a random variable drawn from a Poisson distribution
with parameter $\nu\, n$ (denoted as Poisson$(\nu \,n)$). Consider the
set of nodes $\widetilde{V} := \texttt{SampleFree}(\tilde{n})$, and
for the remainder of the proof, ignore $x_{\text{init}}$ (adding back
$x_{\text{init}}$ only decreases the probability in question, which we
are showing goes to zero anyway).  Then the locations of the nodes in
$\widetilde{V}$ are distributed as a spatial Poisson process with
intensity $\nu n / \mu(\mathcal{X}_{\text{free}})$. Therefore, for a Lebesgue-measurable region $R \subseteq \mathcal{X}_{\text{free}}$, the number of nodes in $R$ is distributed as a Poisson random variable with distribution Poisson$\Bigl (\nu \, n \, \mu(R) / \mu(\mathcal{X}_{\text{free}})\Bigr)$, independent of the number of nodes in any region disjoint with $R$ \citep[Lemma 11]{Karaman.Frazzoli:IJRR2011}. 

Let $\widetilde{K}_n^{\beta}$ be the Poissonized analogue of $K_n^{\beta}$, namely $\widetilde{K}_n^{\beta} := \card\Bigl \{m \in \{1, \dots, M_n-1\}: B^{\beta}_{n,m} \cap \widetilde{V} = \emptyset \Bigr \}$.  Note that only the distribution of node locations has changed through Poissonization, while the balls $B^{\beta}_{n,m}$ remain the same. From the definition of $\widetilde{V}$, we can see that $\p{K_n^{\beta} \ge \alpha (M_n-1))} = \probcond{\widetilde{K}_n^{\beta} \ge \alpha (M_n-1)}{\tilde{n} = n}$.  Thus, we have
\begin{equation}
\label{poissonization}
\begin{split}
  \p{\widetilde{K}_n^{\beta} \ge \alpha (M_n-1))}  &= \sum_{j = 0}^{\infty} \probcond{\widetilde{K}_n^{\beta} \ge \alpha (M_n-1)}{\tilde{n} = j} \cdot \p{\tilde{n} = j} \\
&  \ge \sum_{j = 0}^{n} \probcond{\widetilde{K}_n^{\beta} \ge \alpha (M_n-1)}{  \tilde{n} = j}\,\p{\tilde{n} = j} \\
&  \ge \sum_{j = 0}^{n} \probcond{\widetilde{K}_n^{\beta} \ge \alpha (M_n-1)}{ \tilde{n} = n} \, \p{\tilde{n} = j} \\
&  = \probcond{\widetilde{K}_n^{\beta} \ge \alpha (M_n-1)}{ \tilde{n} = n} \, \p{\tilde{n} \le n} \\
& = \p{K_n^{\beta} \ge \alpha (M_n-1)} \, \p{\tilde{n} \le n} \\
& \ge (1-e^{-a_{\nu}n}) \p{K_n^{\beta} \ge \alpha (M_n-1)},
\end{split}
\end{equation}
where $a_{\nu}$ is a positive constant that depends only on $\nu$.  The third line follows from the fact that $\mathbb{P}(\widetilde{K}_n^{\beta} \ge \alpha (M_n-1) | \tilde{n} = j)$ is nonincreasing in $j$, and the last line follows from a tail approximation of the Poisson distribution \citep[p. 17]{Penrose:03} and the fact that $\mathbb{E}[\tilde{n}] < n$.  Thus, since $\lim_{n \rightarrow \infty}(1-e^{-a_{\nu}n}) = 1$ for any fixed $\nu \in (0,1)$, it suffices to show that $\lim_{n \rightarrow \infty}\mathbb{P}(\widetilde{K}_n^{\beta} \ge \alpha (M_n-1)) = 0$ to prove the statement of the lemma.

Since by assumption $\beta < \theta/2$, $B^{\beta}_{n,1},\dots,
B^{\beta}_{n,M_n-1}$ are all disjoint.  This disjointness means that
{\color{black}for fixed $n$,} the number of the Poissonized nodes that fall in each {\color{black}$B^{\beta}_{n,m}$} is independent of the others and identically distributed as a Poisson random variable with mean equal to
\[ \frac{\mu(B^{\beta}_{n,1})}{\mu(\mathcal{X}_{\text{free}})}\nu n = \frac{\zeta_d \bigl (\frac{\beta r_n}{2+\theta} \bigr )^d}{\mu(\mathcal{X}_{\text{free}})}\nu n = \frac{\nu \zeta_d \beta^d\gamma^d\log(n)}{(2+\theta)\mu(\mathcal{X}_{\text{free}})} := \lambda_{\beta,\nu}\log(n),\]
where $\lambda_{\beta,\nu}$ is positive and does not depend on $n$.
From this equation we get that for $m \in \{1,\dots,M_n-1\}$,
\[ \mathbb{P}(B^{\beta}_{n,m} \cap \widetilde{V} = \emptyset) = e^{-\lambda_{\beta,\nu} \log(n)} = n^{-\lambda_{\beta,\nu}}. \]
Therefore, $\widetilde{K}_n^{\beta}$ is distributed according to a
binomial distribution, in particular according to {\color{black}the}
Binomial($M_n-1$, $n^{-\lambda_{\beta, \nu}}$) {\color{black}distribution}. Then for $n > (e^{-2}\alpha)^{-\frac{1}{\lambda_{\beta,\nu}}}$, $e^2\mathbb{E}[\widetilde{K}_n^{\beta}] < \alpha (M_n-1)$, so from a tail approximation to the Binomial distribution \citep[p. 16]{Penrose:03}, 
\begin{equation}
\label{binomialbound}
\mathbb{P}(\widetilde{K}_n^{\beta} \ge \alpha(M_n-1)) \le e^{-\alpha
  (M_n-1)}.
\end{equation}
Finally, since \edit{by assumption} $\xinit \notin \xgoal$, the optimal
cost is positive, i.e., $c^* > 0$; this positivity implies that there is a lower-bound on feasible path length.  Since the ball radii decrease to 0, it must be that $\lim_{n \rightarrow \infty} M_n = \infty$ in order to cover the paths, and the lemma is proved.
\end{proof}

\begin{proof}[Proof of Lemma \ref{lemma:claim_3}]
Let $c_{\text{max}} := \max_{n \in \mathbb{N}} c(\sigma_n')$;  the convergence of $c(\sigma_n')$ to a limiting value that is also a lower bound implies that $c_{\text{max}}$ exists and is finite. Then we have,
\begin{equation}
\label{ballintersections}
\begin{split}
\p{A_{n,\theta}^c} & \le \sum_{m = 1}^{M_n} \p{B_{n,m} \cap V = \emptyset}  = \sum_{m = 1}^{M_n} \, \biggl (1-\frac{\mu(B_{n,m})}{\mu(\mathcal{X}_{\text{free}})} \biggr)^n  = \sum_{m = 1}^{M_n-1}\biggl  (1- \frac{\zeta_d (\frac{r_n}{2+\theta})^d}{\mu(\mathcal{X}_{\text{free}})} \biggr)^n \\
&\qquad \qquad + \biggl (1- \frac{\zeta_d (\frac{r_n}{2(2+\theta)})^d}{\mu(\mathcal{X}_{\text{free}})}\biggr)^n \\
& \le M_n \biggl (1- \frac{\zeta_d \gamma^d \log(n)}{n(2+\theta)^d \mu(\mathcal{X}_{\text{free}})} \biggr)^n   +  \biggl (1- \frac{\zeta_d \gamma^d \log(n)}{n(4+2\theta)^d \mu(\mathcal{X}_{\text{free}})} \biggr)^n \\
& \le M_n e^{-\frac{\zeta_d \gamma^d \log(n)}{(2+\theta)^d \mu(\mathcal{X}_{\text{free}})}} + e^{-\frac{\zeta_d \gamma^d \log(n)}{(4+2\theta)^d \mu(\mathcal{X}_{\text{free}})}} \\
& \le \frac{(2+\theta)c(\sigma_n')}{\theta r_n} n^{-\frac{\zeta_d \gamma^d}{(2+\theta)^d \mu(\mathcal{X}_{\text{free}})}} + n^{-\frac{\zeta_d \gamma^d}{(4+2\theta)^d \mu(\mathcal{X}_{\text{free}})}} \\
& \le \frac{(2+\theta)c_{\text{max}}}{\theta \gamma}\log(n)^{-\frac{1}{d}} n^{\frac{1}{d}-\frac{\zeta_d \gamma^d}{(2+\theta)^d \mu(\mathcal{X}_{\text{free}})}} + n^{-\frac{\zeta_d \gamma^d}{(4+2\theta)^d \mu(\mathcal{X}_{\text{free}})}}, \\
\end{split}
\end{equation}
where the third inequality follows from the inequality $(1-\frac{1}{x})^n \leq e^{-\frac{n}{x}}$, and the fourth inequality follows from the bound on $M_n$ obtained in the proof of Lemma \ref{lemma:claim_1}. As $n \rightarrow \infty$, the second term goes to zero for any $\gamma > 0$, while the first term goes to zero for any $\gamma > (2+\theta)\Bigl  (\mu(\mathcal{X}_{\text{free}})/(d \zeta_d) \Bigr)^{1/d}$, which is satisfied by $\theta < 2\eta$. Thus $\p{A_{n,\theta}^c} \rightarrow 0$ and the lemma is proved.
\end{proof}

\section{Proof of Convergence Rate Bound}
\label{appD}
\begin{proof}[Proof of Theorem \ref{thrm:convrate}]
We proceed by first proving the tightest bound possible,
  carrying through all terms and constants, and then we make
  approximations to get to the final simplified result. Let $\varepsilon > 0$, $\theta \in (0, \min(2\eta,\, 1/4) )$, $\alpha, \beta \in (0, ,
\min(1, \, \varepsilon)\, \theta/8)$, and $\nu \in (0,1)$. Let $H(a) =
1+ a(\log(a) - 1)$, and $\gamma = 2(1+\eta)\Bigl (\frac{1}{d
  \zeta_d}\Bigr)^{1/d}$ so that $r_n = \gamma
\Bigl(\frac{\log(n)}{n}\Bigr)^{1/d}$. Letting $n_0 >
\Bigl(\alpha/e^2\Bigr)^{-\frac{(2+\theta)}{\nu \zeta_d \beta^d
    \gamma^d}}$ and such that 
    \[
    r_{n_0} < \min \biggl\{ 2\, \xi
(2+\theta), \frac{2+\theta}{3+\theta} \delta,
\frac{\varepsilon(2+\theta)}{8} c^* \biggr\},\] 
then for $n \ge n_0$, we
claim that\footnote{Note that the
    convergence rate bound is slightly different from that presented in the
    conference version of this paper, reflecting a corrected
    typographical error.},
\begin{equation}
\label{longrate}
\begin{split}
\mathbb{P}(c_n > (1+\varepsilon)c^*) & < \frac{1}{1 - e^{-\nu n
    H(\frac{n+1}{\nu n})}} e^{-\frac{\alpha}{2}\Bigl \lfloor
  \frac{2+\theta}{\theta r_n} c^* \Bigr\rfloor \Bigl(\log\Bigl(\alpha \Bigl \lfloor \frac{2+\theta}{\theta r_n} c^* \Bigr\rfloor\Bigr) + \zeta_d \Bigl(\frac{\beta r_n}{2+\theta}\Bigr)^d \nu n \Bigr)}\\
& \quad \,+\biggl \lfloor \frac{2+\theta}{\theta r_n} c^* \biggr\rfloor \Bigl(1 - \zeta_d \Bigl (\frac{r_n}{2+\theta}\Bigr)^d\Bigr)^n  + \Bigl(1 - \zeta_d \Bigl(\frac{r_n}{2(2+\theta)}\Bigr)^d\Bigr)^n.
\end{split}
\end{equation}
To prove equation~\eqref{longrate}, note that from the proof of Theorem~\ref{thrm:AO}, equation~\eqref{opt1} and
Lemma~\ref{lemma:claim_1} combine to give (using the same notation),
\[ \mathbb{P}(c_n > (1+\varepsilon)c^*) \le  \mathbb{P}(K^{\beta}_n
\ge \alpha (M_n-1)) + \mathbb{P}(A_{n,\theta}^c). \]
From Equation~\eqref{poissonization} in the proof of
Lemma~\ref{lemma:claim_2}, and a more precise tail bound
\citep[page 17]{Penrose:03} relying on the
assumptions of $n_0$,
\[ \mathbb{P}(c_n > (1+\varepsilon)c^*) \le  \left(\frac{1}{1-e^{\nu n
    H(\frac{n+1}{\nu n})}}\right)\mathbb{P}(\tilde{K}^{\beta}_n
\ge \alpha (M_n-1)) + \mathbb{P}(A_{n,\theta}^c). \]
By the same arguments that led to equation~\eqref{binomialbound}, but
again applied with slightly more precise tail bounds
\citep[page 16]{Penrose:03}, we get,
\[ \mathbb{P}(c_n > (1+\varepsilon)c^*) \le  \left(\frac{1}{1-e^{\nu n
    H(\frac{n+1}{\nu
      n})}}\right)e^{-\frac{\alpha(M_n-1)}{2}\left(\log\big(\alpha(M_n-1)\big) +
  \frac{\nu \zeta_d \beta^d
    \gamma^d}{(2+\theta)\mu(\mathcal{X}_{\text{free}})}
  \log(n)\right)} + \mathbb{P}(A_{n,\theta}^c). \]
By the first three lines of equation~\eqref{ballintersections} from the
proof of Lemma~\ref{lemma:claim_3} (there we upper-bounded $M_n-1$ by
$M_n$ for simplicity, here we carry through the whole term),
\begin{equation*}
\begin{split}
\mathbb{P}(c_n > (1+\varepsilon)c^*) \le & \left(\frac{1}{1-e^{\nu n
    H(\frac{n+1}{\nu
      n})}}\right)e^{-\frac{\alpha(M_n-1)}{2}\left(\log\big(\alpha(M_n-1)\big) +
  \frac{\nu \zeta_d \beta^d
    \gamma^d}{(2+\theta)\mu(\mathcal{X}_{\text{free}})}
  \log(n)\right)} \\
& + (M_n-1) \biggl (1- \frac{\zeta_d \gamma^d \log(n)}{n(2+\theta)^d
  \mu(\mathcal{X}_{\text{free}})} \biggr)^n   +  \biggl (1-
\frac{\zeta_d \gamma^d \log(n)}{n(4+2\theta)^d
  \mu(\mathcal{X}_{\text{free}})} \biggr)^n. \\
\end{split}
\end{equation*}
Finally, in this simplified configuration space, we can set all the
approximating paths $\sigma_n$ from the proof of Theorem~\ref{thrm:AO}
to just be the optimal path, allowing us to compute $M_n-1 = \Bigl \lfloor
\frac{2+\theta}{\theta r_n} c^*\Bigr \rfloor$. Plugging this formula in, noting
that $\mu(\mathcal{X}_{\text{free}}) \le 1$, and simplifying by
collecting terms into factors of $r_n$ gives equation~\eqref{longrate}.

Grouping together constants in equation~\eqref{longrate} into positive
superconstants $A$, $B$, $C$, $D$, and $E$ for simplicity and dropping
the factor of $\frac{1}{1 - e^{-\nu n H(\frac{n+1}{\nu n})}}$ (which goes to 1 as $n \rightarrow \infty$) from the first term, the
bound becomes,
\begin{equation}
\begin{split}
\mathbb{P}(c_n > (1+\varepsilon)c^*) & <
e^{-A\left(\frac{n}{\log(n)}\right)^{1/d}\left(\log(B)+\frac{1}{d}\log\left(\frac{n}{\log(n)}\right)+C\log(n)\right)} \\
& \qquad  + \left(\frac{n}{\log(n)}\right)^{1/d}\cdot\left(1-D\frac{\log(n)}{n}\right)^n
  + \left(1-E\frac{\log(n)}{n}\right)^n, \\
& \le
B^{-A\left(\frac{n}{\log(n)}\right)^{1/d}}\cdot\left(\frac{n}{\log(n)}\right)^{-\frac{A}{d}\left(\frac{n}{\log(n)}\right)^{1/d}}\cdot
n^{-A\,C\left(\frac{n}{\log(n)}\right)^{1/d}}
\\
& \qquad + \left(\frac{n}{\log(n)}\right)^{1/d}\cdot n^{-D} + n^{-E},
\end{split}
\end{equation}
where the second inequality is just a rearrangement of the first term,
and uses the inequality $(1-x/n)^n \le e^{-x}$ for the last two
terms. As both $n$ and $\frac{n}{\log(n)}$ approach $\infty$, the
first term must become negligible compared to the last two terms, no
matter the values of the superconstants. Now noting that $E =
\frac{D}{2^d}$, we can write the asymptotic upper bound for
$\mathbb{P}(c_n > (1+\varepsilon)c^*)$ as,
\begin{equation*}
\left(\log(n)\right)^{-\frac{1}{d}} n^{\frac{1}{d} - D} + n^{-\frac{D}{2^d}}.
\end{equation*}
Therefore the deciding factor in which term asymptotically dominates
is whether or not $\frac{1}{d} - D \le -\frac{D}{2^d}$. Plugging in for
the actual constants composing $D$, we get,
\begin{equation*}
\frac{1}{d} \le \left(1-\frac{1}{2^d}\right)\frac{1}{d}\left(\frac{2(1+\eta)}{2+\theta}\right)^d,
\end{equation*}
or, equivalently,
\begin{equation}
\label{etacond}
\eta \ge \frac{2+\theta}{(2^d-1)^{1/d}} - 1.
\end{equation}
However, since $\theta$ is a proof parameter that can be taken
arbitrarily small (and doing so improves the asymptotic rate), if $\eta > \frac{2}{(2^d-1)^{1/d}} - 1$, then
$\theta$ can always be chosen small enough so that
equation~\eqref{etacond} holds. Finally, we are left with,
\begin{equation}
\mathbb{P}(c_n > (1+\varepsilon)c^*) \in \left\{\begin{array}{lcl}
     O\left(\left(\log(n)\right)^{-\frac{1}{d}}n^{\frac{1}{d}\left(1-\left((1+\eta)\frac{2}{2+\theta}\right)^d\right)}\right)
    & \text{ if } & \eta \le \frac{2}{(2^d-1)^{1/d}} - 1,\\
    O\left(n^{-\frac{1}{d}\left(\frac{1+\eta}{2+\theta}\right)^d}\right)
    & \text{ if } & \eta > \frac{2}{(2^d-1)^{1/d}} - 1, \\
    \end{array} \right.
\end{equation}
for arbitrarily small $\theta$. By replacing $\theta$ by an
arbitrarily small parameter $\rho$ that is additive in the exponent,
the final result is proved.
\end{proof}

\section{Proof of Computational Complexity}
\label{appB}
\begin{proof}[Proof of Theorem \ref{thrm:CC}]
We first prove two results that are not immediately obvious from the
description of the algorithm, about the number of computations of edge
cost and how many times a node is considered for connection.
\begin{lemma}[Edge-Cost Computations]
\label{costcalls}
Consider the setup of Theorem \ref{thrm:CC}. Let $M^{(1)}_{\text{\FMT}}$ be the
number of computations of edge cost
when \FMT is run on $V$ using $r_n$. Similarly, let $M^{(1)}_{\text{PRM}^*}$
be the number of computations of edge cost when \PRMstar is run on $V$
using $r_n$. Then in expectation,
\[ M^{(1)}_{\text{\FMT}} \le M^{(1)}_{\text{PRM}^*} \in O(n\log(n)). \]

\begin{proof}
\PRMstar computes the cost of every edge in its graph.  For a given node, edges are only created between that node and nodes in the $r_n$-ball around it.  The expected number of nodes in an $r_n$-ball is less than or equal to $(n/\mu(\mathcal{X}_{\text{free}})) \zeta_d r_n^d = (\zeta_d/\mu(\mathcal{X}_{\text{free}})) \gamma \log(n)$, and since there are $n$ nodes, the number of edges in the \PRMstar graph is $O(n\log(n))$.  Therefore, $M^{(1)}_{\text{PRM}^*}$ is $O(n\log(n))$.

For each node $x \in V$, \FMT saves the associated set $N_x$ of
$r_n$-neighbors.  Instead of just saving a reference for
each node $y \in N_x$, $N_x$ can also allocate memory for the real
value $\texttt{Cost}(y,x)$. Saving this value whenever it is
first computed guarantees that \FMT will never compute it more than
once for a given pair of nodes.  Since the only pairs of nodes
considered are exactly those considered in \PRMstar\!, it is guaranteed that $M^{(1)}_{\text{\FMT}} \le M^{(1)}_{\text{PRM}^*}$.  Note that computation of the cost-to-{\color{black}arrive} of a node already connected in the \FMT graph was not factored in here, because it is just a sum of edge costs which have already been computed.
\end{proof}
\end{lemma}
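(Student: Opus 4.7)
The plan is to prove the two claims separately: first the upper bound $M^{(1)}_{\text{PRM}^*}\in O(n\log(n))$, then the pointwise domination $M^{(1)}_{\text{\FMT}}\le M^{(1)}_{\text{PRM}^*}$ (from which the statement in expectation follows trivially by monotonicity of expectation).

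For \PRMstar\!, every edge whose cost is computed lies in the $r_n$-disk graph on $V$, so it suffices to bound the expected number of such edges. Fix $u\in V$; the remaining $n$ samples are i.i.d.\ uniform on $\xfree$, so the expected number of neighbors within distance $r_n$ is at most $n\cdot\zeta_d r_n^d/\mu(\xfree)$. Substituting $r_n = \gamma(\log(n)/n)^{1/d}$ collapses this to $\zeta_d\gamma^d\log(n)/\mu(\xfree)$, which is $O(\log(n))$. Summing over the $n$ choices of $u$ and using linearity of expectation yields $\mathbb{E}[M^{(1)}_{\text{PRM}^*}] \in O(n\log(n))$.

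For the domination $M^{(1)}_{\text{\FMT}}\le M^{(1)}_{\text{PRM}^*}$, I would argue sample-path by sample-path that every pair on which \FMT invokes $\texttt{Cost}$ is also invoked by \PRMstar\!. Inspection of Algorithm \ref{prtalg} shows that the only calls to $\texttt{Cost}(u,v)$ arise in the parent-minimization step (line \ref{line:ymin}), where the candidate set $Y_{\text{near}}\subseteq N_x$ consists of $r_n$-neighbors of $x$; hence every queried pair satisfies $\|u-v\|<r_n$, which is exactly the edge set that \PRMstar evaluates. Combined with the caching convention — namely, that the implementation stores $\texttt{Cost}(u,v)$ alongside the saved neighbor set $N_v$ the first time it is computed, via the $\texttt{Save}$/$\texttt{Near}$ primitives of Section \ref{subsec:detail_des} — no pair is evaluated more than once by \FMT\!. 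Therefore $M^{(1)}_{\text{\FMT}}\le M^{(1)}_{\text{PRM}^*}$ deterministically.

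The main subtlety, and the one worth emphasizing in the proof, is the caching step: without it \FMT could redundantly invoke $\texttt{Cost}(u,v)$ when processing $u$ (as one of the $y$ candidates for $x=v$) and again when later processing $v$, so the $\texttt{Save}$ mechanism is essential to the reduction. A secondary remark is that \FMT\!'s running update $c(y)+\texttt{Cost}(y,x)$ does not create additional edge-cost computations beyond the $\texttt{Cost}$ evaluations already counted — each $c(y)$ is stored and maintained incrementally as a running sum along the unique tree path to $y$ — so cost-to-arrive bookkeeping contributes nothing to $M^{(1)}_{\text{\FMT}}$.
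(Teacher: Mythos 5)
Your proposal is correct and follows essentially the same route as the paper's proof: the same expected-neighbor-count calculation (at most $(n/\mu(\xfree))\zeta_d r_n^d \in O(\log(n))$ per node, summed over $n$ nodes) for the \PRMstar bound, and the same two observations for the domination — every pair \FMT queries lies within $r_n$ and hence in the \PRMstar edge set, and the $\texttt{Save}$-based caching ensures each pair is evaluated at most once. Your closing remark about cost-to-arrive bookkeeping being a sum of already-computed edge costs also mirrors the paper's final sentence.
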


The following Lemma shows that lines 10--18 in Algorithm
  2 are only run $O(n)$ times, despite being contained in the loop at line
  6, which runs $O(n)$ times, and the loop at line 9, which runs
  $O(\log(n))$ times, which would seem to suggest that lines 10--18 are
  run $O(n\log(n))$ times.

\begin{lemma}[Node Considerations]
\label{considerations}
Consider the setup of Theorem \ref{thrm:CC}. We say that a node is
`considered' when it has played the role of $x \in
X_{\text{near}}$ in line \ref{line:forXnear} of Algorithm
\ref{prtalg}. Let $M^{(2)}_{\text{\FMT}}$ be
the number of node considerations when \FMT is run on $V$ using
$r_n${\color{black},} including multiple considerations of the same node. Then in expectation,
\[ M^{(1)}_{\text{\FMT}} \in O(n). \]

\begin{proof}
Note that $X_{\text{near}} \subset \Wset$ and nodes are permanently removed from
$\Wset$ as soon as they are connected to a parent. Furthermore, if there
are no obstacles within $r_n$ of a given node, then it must be
connected to a parent when it is \emph{first} considered. Clearly
then, considerations involving these nodes account for at most $n$
considerations, so it suffices to show that the number of considerations
involving nodes within $r_n$ of an obstacle (denote this value by
$M_{obs}$) is $O(n)$ in expectation.

Any node can only be considered as many times as it has neighbors,
which is $O(\log(n))$ in expectation. Furthermore, as $n \rightarrow
\infty$, the expected number of nodes within $r_n$ of an obstacle can be
approximated arbitrarily well by $n \cdot S_{obs}\cdot r_n$, where
$S_{obs}$ is the constant surface area of the obstacles. This
equation is just the
density of points{\color{black},} $n${\color{black},} times the volume formula for a thin shell
around the obstacles, which will hold in the large $n$ limit, since
$r_n \rightarrow 0$. Since $r_n \in O((\log(n)/n)^{1/d})$, these
combine to give,
\[ M_{obs} \in O(n (\log(n)/n)^{1/d} \log(n)) =
O((\log(n))^{1+1/d}n^{1-1/d}) \in O(n) \]
in expectation, proving the lemma.
\end{proof}
\end{lemma}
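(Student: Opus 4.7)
The plan is to partition the nodes in $V$ according to whether they are within distance $r_n$ of an obstacle, and to bound the contribution of each class to $M^{(2)}_{\text{\FMT}}$ separately. First I would observe that if $x \in V$ satisfies $\mathrm{dist}(x, \xobs) \ge r_n$, then $B(x; r_n) \subseteq \xfree$, so the first time $x$ plays the role of a sample in $X_{\text{near}}$ the locally optimal neighbor $y_{\text{min}} \in \Hset$ selected in line \ref{line:ymin} is within $r_n$ of $x$, and the straight-line segment $\overline{y_{\text{min}} x}$ lies entirely in the obstacle-free ball $B(x; r_n)$. Thus the collision check in line \ref{line:collisionfreecheck} passes, $x$ is removed from $\Wset$, and it is never considered again. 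This class therefore contributes at most $n$ to $M^{(2)}_{\text{\FMT}}$.

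For the remaining nodes, define the random set $V_{\text{obs}} := \{x \in V : \mathrm{dist}(x, \xobs) < r_n\}$. Each such $x$ is considered only when the current $z \in \Hset$ being processed is one of its neighbors in $V$ and $x$ is still in $\Wset$; hence the number of times $x$ is considered is upper-bounded by its neighbor count $N_x := |\{v \in V \setminus \{x\} : \|v-x\| < r_n\}|$. Combining the two bounds,
\[
\mathbb{E}[M^{(2)}_{\text{\FMT}}] \le n + \mathbb{E}\Bigl[\sum_{x \in V_{\text{obs}}} N_x\Bigr],
\]
so it suffices to show the second term is $o(n)$.

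By exchangeability of the $n$ iid samples and linearity of expectation, $\mathbb{E}[\sum_{x \in V_{\text{obs}}} N_x] = n \cdot \mathbb{E}[\mathbf{1}\{x_1 \in V_{\text{obs}}\} \cdot N_{x_1}]$. Conditioning on the location of $x_1$ gives $\mathbb{E}[N_{x_1} \mid x_1] \le (n-1)\zeta_d r_n^d / \mu(\xfree) \in O(\log n)$, uniformly in $x_1$, so pulling this bound out of the expectation yields
\[
\mathbb{E}\Bigl[\sum_{x \in V_{\text{obs}}} N_x\Bigr] \le O(\log n) \cdot n \cdot \mathbb{P}\bigl(\mathrm{dist}(x_1, \xobs) < r_n\bigr).
\]
Assuming the obstacle boundary has finite $(d-1)$-dimensional surface area $S_{\text{obs}}$ (as suggested by the $\delta$-clearance assumptions), the Lebesgue measure of the $r_n$-shell around $\xobs$ is $S_{\text{obs}} \cdot r_n + O(r_n^2)$ as $r_n \to 0$, giving $\mathbb{P}(\mathrm{dist}(x_1, \xobs) < r_n) = O(r_n) = O\bigl((\log n / n)^{1/d}\bigr)$. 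Multiplying everything together produces $\mathbb{E}[\sum_{x \in V_{\text{obs}}} N_x] = O\bigl(n^{1-1/d} (\log n)^{1+1/d}\bigr)$, which is $o(n)$, closing the proof.

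The main obstacle is the thin-shell volume estimate $\mu(\{x : \mathrm{dist}(x, \xobs) < r_n\}) = O(r_n)$. This is intuitively clear but requires some regularity of $\partial \xobs$ (finite Minkowski $(d-1)$-content, or piecewise smoothness), which the paper appeals to informally through the surface-area constant $S_{\text{obs}}$. The remaining steps — the ``first-consideration-suffices'' observation for nodes away from obstacles, the neighbor-count upper bound on repeat considerations, and the expected-neighbor-count computation $O(\log n)$ — are all immediate from the structure of Algorithm \ref{prtalg} and the definition of $r_n$.
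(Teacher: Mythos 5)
Your proposal is correct and follows essentially the same route as the paper's proof: split nodes by whether they lie within $r_n$ of an obstacle, charge the far-from-obstacle nodes one consideration each (since their first connection attempt must succeed), bound repeat considerations of near-obstacle nodes by their neighbor count $O(\log n)$, and use the thin-shell volume estimate $O(r_n)$ to count the near-obstacle nodes. Your treatment is in fact slightly more careful than the paper's, which multiplies the two expectations directly, whereas you decouple the product $\mathbf{1}\{x_1 \in V_{\text{obs}}\}\cdot N_{x_1}$ by conditioning on $x_1$ and using a uniform bound on the conditional expected neighbor count; you also correctly flag that the surface-area constant $S_{\text{obs}}$ implicitly requires regularity of $\partial \xobs$, which the paper invokes informally.
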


We are now ready to show that the computational complexity of \FMT is
$O(n \log(n))$ in expectation. As already pointed out in Lemma
\ref{costcalls}, the number of calls to \texttt{Cost} is
$O(n\log(n))$. By Lemma \ref{considerations} and the fact that
\texttt{CollisionFree} is called if and only if a node is under
consideration, the number of calls to \texttt{CollisionFree} is
$O(n)$. The number of calls to \texttt{Near} {\color{black}in which any
  computation is done, as opposed to just loading a set from memory,}
is bounded by $n$, since {\color{black}neighbor sets are} saved and thus
{\color{black}are} never {\color{black}computed} more than once for
each node.  Since \texttt{Near} {\color{black}computation} can be implemented to arbitrarily
close approximation in $O(\log(n))$ time \citep{Arya.Mount:ACM95}, the
calls to \texttt{Near} also account for $O(n\log(n))$ time complexity.
Since each node can have at most one parent in the graph $T$, $E$ can
only have at most $n$ elements and since edges are only added, never
subtracted, from $E$, the time complexity of building $E$ is $O(n)$.
Similarly, $\Wset$ only ever has nodes subtracted and starts with $n$
nodes, so subtracting from $\Wset$ takes a total of $O(n)$ time. 

Operations on $\Hset$ can be done in $O(n\log(n))$ time if $\Hset$ is
implemented as {\color{black}a} binary min heap.  As pointed out in Theorem
\ref{termination}, there are at most $n$ additions to $\Hset$, each taking
$O(\log(\text{card } \Hset))$, and since card $\Hset \le n$, these additions
take $O(n\log(n))$ time.  Finding and deleting the minimum element of
$\Hset$ also happens at most $n$ times and also takes $O(\log(\text{card}
\Hset))$ time, again multiplying to $O(n\log(n))$ time.  There are also
the {\color{black} intersections}.  Using hash maps, intersection can be
implemented in time linear in the size of the smaller of the two
sets \citep{BD-ACK:11}.
Both intersections, in lines \ref{line:intersectW} and
\ref{line:intersect}, have $N_x$ as one of the sets, which will have
size $O(\log(n))$.  Since the {\color{black}intersection} in line
\ref{line:intersectW} happens once per while loop iteration, it
happens at most $n$ times, taking a total of $O(n \log(n))$ run
time. Also, the {\color{black}intersection} at line \ref{line:intersect} is
called exactly once per consideration, so again by Lemma
\ref{considerations}, this operation takes a total of $O(n \log(n))$
time.  Finally, each computation of $y_{\text{min}}$ in line
\ref{line:ymin} happens once per consideration and takes time linear
in card $Y_{\text{near}} = O(\log(n))$ (note that computing $y_{\text{min}}$ does not require
  \emph{sorting} $Y_{\text{near}}$, just finding its minimum, and that
  computations of cost have 
already been accounted for), leading to $O(n \log(n))$ in total for
this operation. Note that the solution is returned upon algorithm
completion, so there is no ``query" phase.  In addition, $V$, $\Hset$,
$E$, and $\Wset$ all have maximum size of $n$, while saving $N_x$ for up
to $n$ nodes requires $O(n\log(n))$ space, so \FMT has space
complexity $O(n \log(n))$.
\end{proof}

\section{AO of \FMT with Non-Uniform
  Sampling}
\label{ao:nonunif}

Imagine sampling from $\varphi$ by decomposing it into a 
mixture density as follows. With probability $\ell$, draw a sample
from the uniform density, and with probability $1-\ell$, draw a sample
from a second distribution {\color{black}with} probability density function
$(\varphi-\ell)/(1-\ell\mu(\mathcal{X}_{\text{free}}))$. If \FMT is
run on only the (approximately $n\ell$) 
nodes that were drawn from the uniform distribution, the entire proof
of asymptotic optimality in Theorem \ref{thrm:AO} goes through after adjusting up the
connection radius $r_n$ by a factor of $(1/\ell)^{1/d}$. This fact can
be seen by observing that the
proof only relies on the expected value of the number of nodes in a $r_n$-ball,
and the lower density and larger ball radius 
cancel out in this expectation, leaving the expected value the same as in the original
proof. This cancellation formalizes the intuition that
  sparsely-sampled regions require searching \edit{wider} to
  make good connections.
Finally, note that adding samples before running \FMT
(while holding all parameters of \FMT fixed, in particular acting as
if $n$ were the number of original samples for the purposes of
computing $r_n$) can only improve the paths in the tree which do not
come within a radius of the obstacles. Since the proof of \FMT\!'s AO only employs
approximating paths that are bounded away from the obstacles by at
least $r_n$, the cost of these paths can only decrease if more points
are added, and thus their costs must still converge to the optimal
cost in probability. Thus when the (approximately $n(1-\ell)$) nodes that were drawn
from the second distribution are added back to the sample space, thus
returning to the original non-uniform sampling distribution,
asymptotic optimality still holds. 

In {\color{black}our discussion of non-uniform sampling,} we have repeatedly characterized a sampling
distribution by its probability density function $\varphi$. We note for
mathematical completeness that probability density functions are only
defined up to an arbitrary set of Lebesgue measure 0. Thus all conditions
stated in this {\color{black}discussion} can be slightly relaxed in that they only
have to hold on a set of Lebesgue measure $\mu(\mathcal{X}_{\text{free}})$.

\section{AO of \FMT for General Costs}
\label{ao:cost}
\noindent {\bf Asymptotic optimality of \FMT for metric costs}:
To make the proof of AO go through, we do have the additional
requirement that the cost be such that $\zeta_d$, the measure of the
unit cost-ball, be contained in $(0, \infty)$. Such
a cost-ball must automatically be contained in a Euclidean
ball of the same center; denote the radius of this ball by
$r_{\text{outer}}$. Then just three more things need to be adjusted in
the proof of Theorem \ref{thrm:AO}: First, condition (1) in the third paragraph of the proof needs to
change to $\frac{r_{\text{outer}}r_n}{2(2+\theta)} < \xi$. Second, condition
(2) right after it needs to be changed to
$\frac{3+\theta}{2+\theta}r_{\text{outer}}\, r_n < \delta$. Finally, every
time that length is mentioned, excepting cases when distance to the
edge of obstacles or the goal region is being considered, length
should be considered to mean cost instead. These replacements include
the radii of the covering balls (so they {\color{black}are} covering
cost-balls), the $\|\cdot\|$ function in the definition of $\Gamma_m$,
and the definition of $L_{n,\alpha,\beta}$, for instance. The first
two changes ensure that a sample is still drawn in the goal region so
that the returned path is feasible, and ensure that the covering
cost-balls remain collision-free. The third change is \edit{only}
notational, and the rest of the proof follows, since the triangle
inequality still holds.

\vspace{1 mm}
\noindent{\bf Asymptotic optimality of \FMT for line integral costs
  with optimal-path connections}:
Because of the bounds on the cost
density, the resulting new cost-balls with cost-radius $r$ contain, and
are contained in, Euclidean balls of radius
$r/f_{\text{upper}}$ and $r/f_{\text{lower}}$, respectively. Thus by
adjusting constants for obstacle clearance and
covering-cost-ball-radius, {\color{black}we} can still ensure that the covering
cost-balls have sufficient points sampled within them, and that they
are sufficiently far from the obstacles. Furthermore, by only
considering optimal connections, \emph{{\color{black}we are} back to having a triangle
inequality}, since the cost of the optimal path connecting $u$ to
$v$ is no greater than the sum of the costs of the optimal paths
connecting $u$ to $w$ and $w$ to $v$, for any $w$. Therefore
we are again in a situation where the AO proof in Theorem \ref{thrm:AO} holds
nearly unchanged.

\vspace{1 mm}
\noindent{\bf Asymptotic optimality of \FMT for line integral costs
  with straight-line connections}:
Assume that we can partition all of $\mathcal{X}$ except some set of
Lebesgue measure zero into finitely many connected, open regions, such
that on each such region $f$ is Lipschitz. Assume further that each of
the optimum-approximating paths (from the definition of
$\delta$-robust feasibility) can be chosen such that it contains only
finitely many points on the 
boundary of all these open regions. Note that this property does \emph{not}
have to hold for the optimal path itself, indeed the optimal path may
run along a region's boundary and still be arbitrarily approximated by
paths that do not. Since the Lipschitz regions are 
open, each approximating path $\sigma_n$ can be chosen such that
there exist two sequences of strictly positive constants
$\{\phi_{n,i}\}_{i=1}^{\infty}$ and $\{\psi_{n,i}\}_{i=1}^{\infty}$
such that: (a) $\phi_{n,i} \stackrel{i \rightarrow \infty}{\longrightarrow}
0$, and (b) for each $i$, for any point $x$ on $\sigma_n$ that is more than a
distance $\phi_{n,i}$ from any of the finitely many points on
$\sigma_n$ that lie on the boundary of a Lipschitz region, the
$\psi_{n,i}$-ball around $x$ is entirely contained in a single
Lipschitz region. This condition essentially requires that nearly all
of each approximating path is bounded away from the edge of any of the
Lipschitz regions. 
Taken together, these conditions allow for very general cost functions,
including the common setting of $f$ piecewise constant on
finitely-many regions in $\mathcal{X}$. To see how these conditions
help prove AO, we examine the two reasons that the lack of a triangle
inequality hinders the proof of AO for \FMT\!.

The first is
that, even if \FMT returned a path that is optimal with respect to the
straight-line \PRMstar graph (this is the graph with nodes
  $V$ and edges connecting every pair of samples {\color{black} that
    have a} straight-line
  connection {\color{black}that} is collision-free and has cost less than $r_n$), there would be an extra cost associated with
each edge (in the straight-line \PRMstar graph too) for being the suboptimal path between its endpoints, and
this is not accounted for in the proof. The second reason is that to
ensure that each sample that is sufficiently far from the obstacles is
optimally connected to the existing \FMT tree, the triangle inequality
is used only in the first subcase of case 4 at the end of the proof of
Lemma \ref{lemma:claim_1}, where it is shown that the path returned by \FMT is at
least as good as any path $\mathcal{P}$ that traces through samples in
the covering balls in a particular way. This subcase is for when, at
the time when a given sample $x \in \mathcal{P}$ is added to \FMT\!,
$x$'s parent in $\mathcal{P}$ (denoted $u$) is not in $\Hset$, but one of $x$'s
ancestors in $\mathcal{P}$ is in $\Hset$. If the triangle inequality
fails, then it is possible that connecting $x$ to the \FMT tree
through a path that is entirely contained in $x$'s search radius and
runs through $u$ (which \FMT cannot do, since $u \notin \Hset$) would have given $x$ a
better cost-to-{\color{black}arrive} than what it ends up with in the \FMT
solution. Therefore, for the proof to go through, either the triangle
inequality needs to hold within all of the search balls of points
contained in the covering balls (or on a sequence of balls
$B^{\text{search}}_{n,m}$ centered at the covering balls but with an
$r_n$-larger radius), or the
triangle inequality needs to hold approximately such that this
approximation, summed over all the $B^{\text{search}}_{n,m}$, goes to zero as $n
\rightarrow \infty$. We venture to show that the latter case holds, using
the fact that, on each of the portions of $\mathcal{X}$ on which $f$ is
Lipschitz, we have an approximate triangle inequality, and the
approximation goes to zero quickly as $n \rightarrow \infty$. 

In particular, for a given optimum-approximating path, there are
$O(1/r_n)$ of the $B^{\text{search}}_{n,m}$, {\color{black}with} radii $O(r_n)$, and we can forget
about the $B^{\text{search}}_{n,m}$ containing points on the boundary
of a Lipschitz region. Let $i_n = \min\{i: \phi_{i,n} > \text{ the
  radius of } B^{\text{search}}_{n,m}\}$. Note that $\sigma_n$ can be
taken to converge to the optimal path slowly enough that $\phi_{i_n,n}
\stackrel{n \rightarrow \infty}{\longrightarrow} 0$ and
$r_n/\psi_{i_n,n} \stackrel{n \rightarrow \infty}{\longrightarrow}
0$. The boundary-containing
$B^{\text{search}}_{n,m}$ can be ignored because $\phi_{i_n,n}
\stackrel{n \rightarrow \infty}{\longrightarrow} 0$ ensures that the
boundary-containing balls cover an asymptotically negligible length
of the $\sigma_n$'s, and thus connections within them contribute negligibly to the 
cost of the \FMT solution as $r_n \rightarrow 0$. Furthermore, since
$r_n/\psi_{i_n,n} \stackrel{n \rightarrow \infty}{\longrightarrow} 0$, we are left with
$O(1/r_n)$ balls which, for $r_n$ small enough, are each entirely
inside a Lipschitz region, of which there are only
finitely many, and thus there exists a global Lipschitz constant $L$
that applies to all those balls, and does not change as $r_n
\rightarrow 0$. The suboptimality of a straight line contained in a
ball of radius $r$ on a $L$-Lipschitz region is upper-bounded by its
length ($2r$) times the maximal cost-differential on the ball
($2Lr$). Thus the total cost penalty on \FMT over all the
$B^{\text{search}}_{n,m}$ of interest is $O(r_n^2/r_n) = O(r_n)$, and
$r_n \rightarrow 0$, so we expect straight-line \FMT to return a
solution that is asymptotically no worse than that produced by the
``optimal-path" \FMT in Section \ref{subsubsec:internalSlow}, and is
therefore AO.

\section{AO of \kFMT}
\label{ao:knn}

Henceforth, we will call
mutual-$k_n$-nearest \PRMstar the \PRMstar\!-like algorithm in which the
graph is constructed by placing edges only between \emph{mutual}
$k_n$-nearest-neighbors. 
%
Three key
facts ensure AO of \kFMT\!, namely: (1) the mutual-$k_n$-nearest
\PRMstar graph arbitrarily approximates (in bounded variation norm) any path in
$\mathcal{X}_{\text{free}}$ for $k_n = k_0
\log(n)$, $k_0 > 3^de(1+1/d)$, (2) \knFMT 
returns at least as good a solution as any feasible path in the mutual-$k_n$-nearest
\PRMstar graph for which no node in the path has an obstacle between it and one of its
$k_n$-nearest-neighbors, and (3) for any fixed positive clearance
$\Upsilon$ and $k_n = k_0 \log(n)$, $k_0 > 3^de(1+1/d)$, the length of
the longest edge containing a $\Upsilon$-clear node in the
$k_n$-nearest-neighbor graph (not mutual,  this time) goes to zero in
probability. Paralleling the terminology adopted in Section \ref{prtintro}, we refer to samples in the mutual-$k_n$-nearest
\PRMstar graph as nodes.
Leveraging these facts, {\color{black}we} can readily show that \knFMT with $k _n =
k_0 \log(n)$, $k_0 > 3^de(1+1/d)$ arbitrarily approximate an optimal solution with
arbitrarily high probability as $n \rightarrow \infty$. Specifically, because the
problem is $\delta$-robustly feasible, {\color{black}we} can take an
arbitrarily-well-approximating path $\sigma$ that still has positive obstacle
clearance, and arbitrarily approximate that path in the mutual-$k_n$-nearest
\PRMstar graph by (1). By taking $n$ larger and larger, since $\sigma$'s clearance is
positive and fixed, the best approximating path in the
mutual-$k_n$-nearest \PRMstar graph will eventually have some positive
clearance with arbitrarily high probability. Then by (3), the length of the longest
edge containing a point in the approximating path goes to zero in
probability, and thus the probability that any node in the best
approximating path in the mutual-$k_n$-nearest \PRMstar graph will have
one of its $k_n$-nearest-neighbors be farther away than the nearest
obstacle goes to zero. Therefore by (2), $k_n$-nearest \FMT on the same samples
will find at least as good a solution as that approximating path with
arbitrarily high probability as $n \rightarrow \infty$, and the result
follows.

{\bf Proof of fact (1)}: To see why fact (1) holds, we need to adapt
the proof of Theorem 35 from \cite{Karaman.Frazzoli:IJRR2011}, which
establishes AO of $k$-nearest \PRMstar\!. Since nearly all of the arguments are
the same, we will not recreate it in its entirety here, but only point
out the relevant differences, of which there are three. (a) We
consider a slightly different geometric construction (with explanation
why), which adds a factor of
$3^d$ to their $k_n$ lower bound, (b) we adjust the proof for
mutual-$k_n$-nearest \PRMstar\!, as opposed to regular $k_n$-nearest
\PRMstar\!, and (c) we generalize to show that there exist paths in the
mutual-$k_n$-nearest \PRMstar graph that arbitrarily approximate
\emph{any} path in $\mathcal{X}_{\text{free}}$, as opposed to just the
optimal path. 

To explain difference (a), where the radius of the $B'_{n,m}$
was equal to $\delta_n$ (defined at the beginning of Appendix D.2 in
\cite{Karaman.Frazzoli:IJRR2011}), it should instead be given by,
\begin{equation}
\min \left\{
  \delta,3(1+\theta_1)\left(\frac{(1+1/d+\theta_2)\mu(\mathcal{X}_{\text{free}})}{\zeta_d}\right)^{1/d}\left(\frac{\log(n)}{n}\right)^{1/d}\right\},
\end{equation}
with the salient difference being an extra factor of 3 in the second
element of the $\min$ as compared to $\delta_n$. Note we are \emph{not}
redefining $\delta_n$, which is used to construct the smaller balls
$B_{n,m}$ as well as to determine the separation between ball centers
for both sets of balls. Thus this change leaves the $B_{n,m}$ ball
unchanged, and the centers of the $B'_{n,m}$ balls unchanged, while
asymptotically tripling the radius of the $B'_{n,m}$ balls. Note that this
changes the picture given in \cite[Figure
26]{Karaman.Frazzoli:IJRR2011}, in that the outer circle should have
triple the
radius. This change is needed because in the second sentence in the
paragraph after the proof of their Lemma 59, which says ``Hence, whenever
the balls $B_{n,m}$ and $B_{n,m+1}$ contain at least one node each,
and $B'_{n,m}$ contains at most $k(n)$ vertices, the $k$-nearest \PRMstar
algorithm attempts to connect all vertices in $B_{n,m}$ and
$B_{n,m+1}$ with one another'' might not hold in some cases. With the
definition of $B'_{n,m}$ given there, for $\theta_1$ arbitrarily small
(which it may need to be), $B'_{n,m}$ is just barely wider than
$B_{n,m}$ (although it does still contain it and $B_{n,m+1}$, since
their centers get arbitrarily close as well). Then the point on the
edge of $B_{n,m}$ farthest from the center of $B_{n,m+1}$ is exactly
$\delta_n - \frac{\delta_n}{1+\theta_1} =
\frac{\theta_1\delta_n}{1+\theta_1}$ (the difference in radii of
$B'_{n,m}$ and $B_{n,m}$) from the nearest point on the
edge of $B'_{n,m+1}$, while it is $\frac{2+\theta_1}{1+\theta}\delta_n$ (the
sum of the radii of $B_{n,m}$ and $B_{n,m+1}$ and the distance between
their centers) from the farthest point in 
$B_{n,m+1}$. Therefore, there may be a sample $x_m \in B_{n,m}$ and a sample
in $x_{m+1} \in B_{n,m+1}$ that are much farther apart from one
another than $x_m$ is from some points which are just outside
$B'_{n,m}$, and therefore $x_{m+1}$ may not be one of $x_m$'s
$k$-nearest-neighbors, no matter how few samples fall in
$B'_{n,m}$. However, for $n$ large enough, our proposed radius for
$B'_{n,m}$ is exactly $3\delta_n$, which results in the point on the
edge of $B_{n,m}$ farthest from the center of $B_{n,m+1}$ being
$3\delta_n - \frac{\delta_n}{1+\theta_1} =
\frac{2+3\theta_1}{1+\theta_1}\delta_n$ (the difference in radii of
$B'_{n,m}$ and $B_{n,m}$) from the nearest point on the
edge of $B'_{n,m+1}$, while it is
$\frac{2+\theta_1}{1+\theta}\delta_n$ (the sum of the radii of $B_{n,m}$ and
$B_{n,m+1}$ and the distance between their centers) from the farthest point in 
$B_{n,m+1}$ (See Figure~\ref{fig:kfmt}). 
\begin{figure}
  \centering
  \includegraphics[width=100mm]{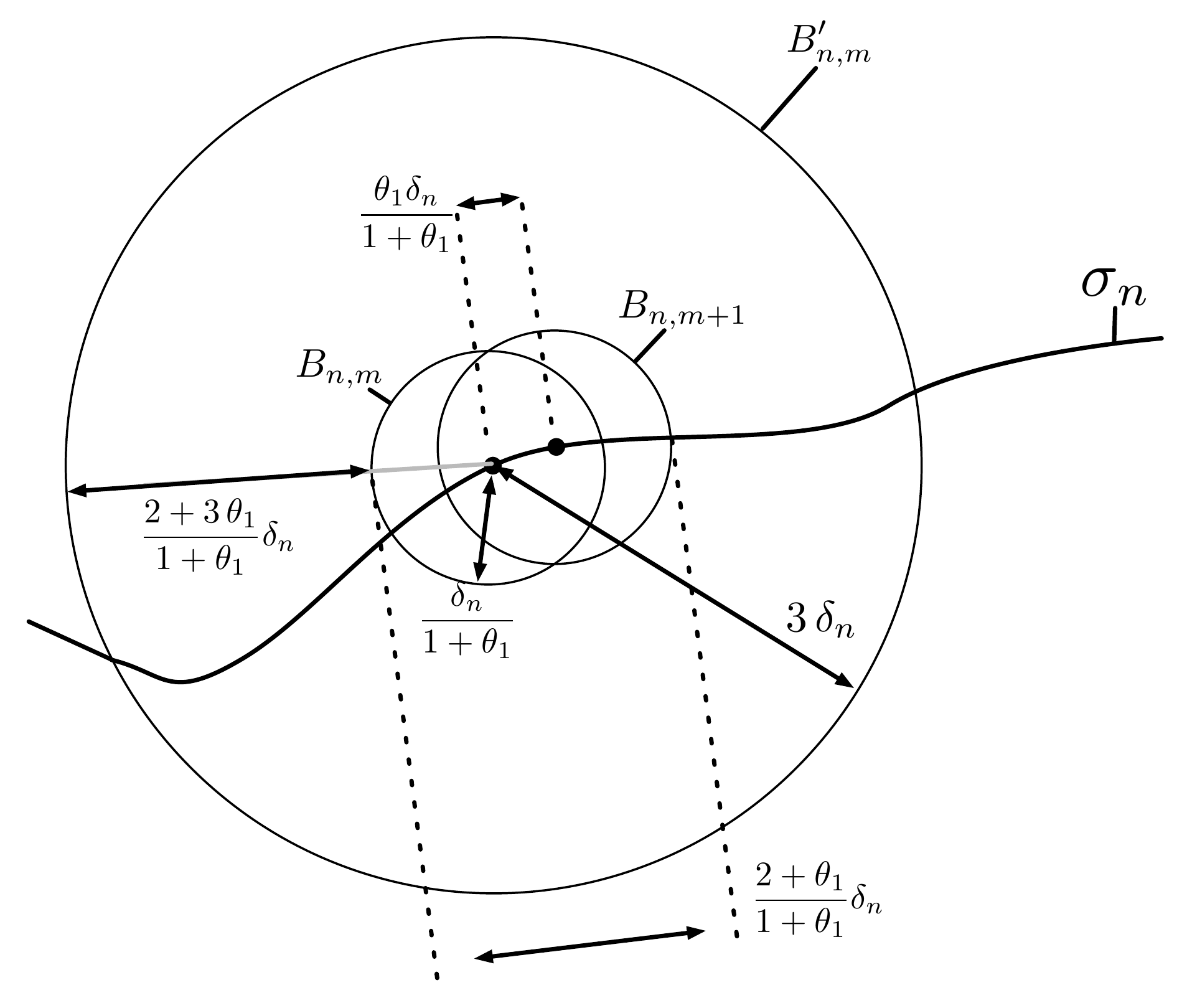}
  \caption{An illustration of $B_{n,m}$ and $B'_{n,m}$ in
    the proof of AO of $k$-nearest \edit{\FMT}.}
  \label{fig:kfmt}
\end{figure}
Therefore, $\frac{2+3\theta_1}{1+\theta_1}\delta_n >
\frac{2+\theta_1}{1+\theta}\delta_n$ implies that any point in
$B_{n,m}$ is closer to \emph{every} point in $B_{n,m+1}$ than it is to
\emph{any} point outside $B'_{n,m}$. This fact implies that if there are at
most $k$ samples in $B'_{n,m}$, at least one of which $x_{m+1}$ is in
$B_{n,m+1}$ and one of which $x_m$ is in $B_{n,m}$ (assume $x_{m+1}
\neq x_{m}$ or they are trivially connected), then any point that is
closer to $x_m$ than $x_{m+1}$ must be inside $B'_{n,m}$, of which
there are only $k$ in total, and thus $x_{m+1}$ must be one of $x_m$'s
$k$-nearest-neighbors. We have increased the volume of the $B'_{n,m}$
by a factor of $3^d$, making it necessary to increase the
$k_{\text{PRM}}$ lower-bound (used in their Lemmas 58 and 59) by the
same factor of $3^d$. This factor allows for the crucial part of the proof
whereby it is shown that no more than $k_n$ samples fall in each of
the $B'_{n,m}$. On the subject of changing the $k_{\text{PRM}}$
lower-bound, we note that it may be possible to reduce $k_{\text{FMT}}
:= 3^de(1+1/d)$ to $3^de/d$ by the same ideas used in our Theorem \ref{thrm:AO},
since for this proof we only need convergence in probability, while
\cite{Karaman.Frazzoli:IJRR2011} prove the stronger convergence almost surely.

For difference (b), note that the proof of
\cite{Karaman.Frazzoli:IJRR2011} states that when the event $A'_n$
holds, all samples in $B_{n,m+1}$ must be in the
$k_n$-nearest-neighbor sets of any samples in $B_{n,m}$. However a
symmetrical argument shows that all
samples in $B_{n,m}$ must also be in the $k$-nearest-neighbor sets of
any samples in $B_{n,m+1}$, and thus all samples in both balls must be
mutual-$k$-nearest-neighbors. Since this argument is the only place in their
proof that uses connectedness between samples, the entire proof holds just as well for
mutual-$k_n$-nearest \PRMstar as it does for $k_n$-nearest
\PRMstar\!. For difference (c), there is nothing to prove, as the
exposition in  \cite{Karaman.Frazzoli:IJRR2011} does not use anything about the cost
of the path being approximated until the last paragraph of their
Appendix D. Up until then, a path (call it $\sigma$) is chosen
and it is shown that the path in the $k_n$-nearest \PRMstar graph that is
closest to $\sigma$ in bounded variation norm converges to
$\sigma$ in the same norm.

{\bf Proof of fact (2)}: To see why fact (2) holds, consider the nodes
along a feasible path
$\mathcal{P}$ in the mutual-$k_n$-nearest
\PRMstar graph, such that all of the nodes are farther from any obstacle
than they are from any of their $k_n$-nearest-neighbors. We will show
that for any point $x$ along $\mathcal{P}$, with parent in
$\mathcal{P}$ denoted by $u$, if \knFMT is
run through all the samples (i.e., it ignores the stopping condition
of $z \in \mathcal{X}_{\text{goal}}$ in line \ref{stoppingcond}), then
the cost-to-{\color{black}arrive} of $x$ in the 
solution path is no worse than the cost-to-{\color{black}arrive} of $x$ in
$\mathcal{P}$, assuming the same is true for all of $x$'s ancestors in
$\mathcal{P}$. By feasibility, the endpoint of $\mathcal{P}$ is in
$\mathcal{X}_{\text{free}}${\color{black},} and then induction on the nodes in
$\mathcal{P}$ implies that this endpoint either is the end of a \knFMT
solution path with cost
no greater than that of $\mathcal{P}$, or \knFMT stopped
before the endpoint of $\mathcal{P}$ was considered, in which case \knFMT
returned an even lower-cost solution than the path that would have
eventually ended at the endpoint of $\mathcal{P}$. Note that we are
not restricting the edges in $k$-nearest \FMT to be drawn from those
in the mutual-$k$-nearest \PRMstar graph, indeed \knFMT can now potentially
return a solution strictly better than \emph{any} feasible path through the
mutual-$k$-nearest \PRMstar graph.

We now show that $x$'s cost-to-{\color{black}arrive} in the $k_n$-nearest \FMT solution is at
least as good as $x$'s cost-to-{\color{black}arrive} in $\mathcal{P}$, given that the
same is true of all of $x$'s ancestors in $\mathcal{P}$. Recall that
by assumption, all connections in $\mathcal{P}$ are to
\emph{mutual}-$k_n$-nearest-neighbors, and that for all nodes $x$ in
$\mathcal{P}$, \emph{all} of $x$'s (not-necessarily-mutual)
$k_n$-nearest-neighbors are closer to $x$ tha{\color{black}n} the nearest obstacle is
to $x${\color{black},} and thus the line connecting $x$ to any of its
$k_n$-nearest-neighbors must be collision-free. Note also that $x$'s
parent in $\mathcal{P}$, denoted by $u$, has finite cost in the
\knFMT tree by assumption, which means it must enter
$\Hset$ at some point in the algorithm. Since we are not stopping early, $u$ must also be the
minimum-cost node in $\Hset$ at some point, at which point $x$ will be
considered for addition to $\Hset$ if it had not been already. Now
consider the following four exhaustive cases for when $x$ is
\emph{first} considered (i.e., $x$'s first iteration in the for loop
at line \ref{line:forXnear} of Algorithm \ref{prtalg}). (a) $u \in \Hset$: then $u \in
Y_{\text{near}}$ and $\overline{ux}$ is collision-free, so when $x$ is
connected, its cost-to-{\color{black}arrive} is less than that of $u$ added 
to $\texttt{Cost}(u,x)$, which in turn is less than the
cost-to-{\color{black}arrive} of $x$ in $\mathcal{P}$ (by the triangle inequality). (b)
$u$ had already entered and was 
removed from $\Hset$: this case is impossible, since $u$ and $x$ are
both among one anothers' $k_n$-nearest-neighbors, and thus $x$ must
have been considered at the latest when $u$ was the lowest-cost-to-{\color{black}arrive} node in
$\Hset$, just before it was removed. (c) $u \in V_{\text{unvisited}}$
and $x$'s closest ancestor in $\Hset$, denoted $w$, is a
$k_n$-nearest-neighbor of $x$: by assumption, $\overline{wx}$
is collision-free, so when $x$ is connected, its cost-to-{\color{black}arrive} is no
more than that of $w$ added to $\texttt{Cost}(w,x)$, which in
turn is less than that of the cost-to-{\color{black}arrive} of $x$ in $\mathcal{P}$
(again, by the triangle inequality). (d) $u \in \Wset$
and $w$ (defined as in the previous case) is not a
$k_n$-nearest-neighbor of $x$: denoting the current
lowest-cost-to-{\color{black}arrive} node in $\Hset$ by $z$, we know that the
cost-to-{\color{black}arrive} of $z$ is no more than that of $w$, and since $x$ is a
mutual-$k_n$-nearest-neighbor of $z$, $z$ is also a
$k_n$-nearest-neighbor of $x$. Furthermore, we know
that since $w$ is \emph{not} a $k_n$-nearest-neighbor of $x$,
$\texttt{Cost}(w,x) \ge
\texttt{Cost}(z,x)$. Together, these facts give us that when $x$
is connected, its cost-to-{\color{black}arrive} is no more than that of $z$ added to
$\texttt{Cost}(z,x)$, which is no more than that of $w$
added to $\texttt{Cost}(w,x)$, which in turn is no more than
the cost-to-{\color{black}arrive} of $x$ in $\mathcal{P}$ (again, by the triangle
inequality). 

{\bf Proof of fact (3)}: To see why fact (3) holds, denote the longest edge in the
$k_n$-nearest-neighbor graph by $\hat{e}_n^{\text{max}}$, let 
\[
e_n :=
\biggl(\frac{e \, k_0 \, \mu(\mathcal{X}_{\text{free}})\log(n)}{\zeta_d\,  (n-1)}\biggr)^{1/d},
\]
and note that
\begin{equation}
\begin{split}
\mathbb{P}(\hat{e}_n^{\text{max}} > e_n) \le & \, \mathbb{P}(\text{any }
e_n\text{-ball around a sample  contains fewer than } k_n \text{ neighbors}) \\
\le & \, n \, \mathbb{P}(\text{the }
e_n\text{-ball around }v\text{ contains fewer than } k_n \text{
  neighbors}), \\
\end{split}
\end{equation}
where $v$ is some arbitrary sample. Finally, observe that
$e_n \stackrel{n \rightarrow \infty}{\longrightarrow} 0$ and for $e_n
< \Upsilon$, the number of neighbors in the $e_n$-ball around any sample is a
binomial random variable with parameters $n-1$ and
$\frac{e k_n}{n-1}$, so we can use the bounds in
\cite[page 16]{Penrose:03} to obtain, 
\begin{equation}
\begin{split}
\mathbb{P}(\hat{e}_n^{\text{max}} > e_n) \le & \, n
e^{-e k_n H(\frac{k_n-1}{k_n}e)} \\
\le & \, n^{1-e k_0 H(\frac{k_n-1}{k_n}e)} \\
\le & \, n^{-16} \qquad \text{for } n \ge 2, \\
\end{split}
\end{equation}
where $H(a) = 1+a-a\log(a)$. Thus since $n^{-16} \stackrel{n\rightarrow
  \infty}{\longrightarrow} 0$ and $e_n \stackrel{n\rightarrow
  \infty}{\longrightarrow} 0$, we have the result.

\bibliographystyle{plainnat}

\bibliography{references}

\end{document}